\documentclass{article}
\usepackage[margin=1in]{geometry}
\usepackage{amsmath, amssymb, amsfonts, amsbsy, latexsym, epsfig}
\usepackage{hyperref}
\usepackage{cleveref}
\crefname{equation}{}{}
\crefname{enumi}{}{} 
\crefname{assumption}{Assumption}{Assumptions}
\Crefname{assumption}{Assumption}{Assumptions}
\DeclareMathOperator{\Tr}{tr}
\usepackage{amsthm}
\usepackage{MnSymbol}
\usepackage{url}
\usepackage{xcolor}
\usepackage{tikz}
\usepackage{graphicx} 
\usepackage{natbib}
\usepackage{booktabs}
\usepackage{bbm}
\usepackage{tablefootnote}
\usepackage{comment}
\usepackage[ruled,vlined]{algorithm2e}
\usepackage{mathtools}
\usepackage{enumitem}
\setlist{topsep=0pt, leftmargin=*}
\allowdisplaybreaks

\newcommand{\ip}[2]{\left\langle #1, #2 \right\rangle}

\newcommand{\cO}{\mathcal{O}}
\newcommand{\cN}{\mathcal{N}}

\newcommand{\cL}{\mathcal{L}}

\newcommand{\cM}{\mathcal{M}}
\newcommand{\cB}{\mathcal{B}}
\newcommand{\cA}{\mathcal{A}}

\newcommand{\PP}{\mathbb{P}}
\newcommand{\EE}{\mathbb{E}}
\newcommand{\RR}{\mathbb{R}}

\newcommand{\bbone}{\mathbbm{1}}

\newcommand{\cLwd}{\mathcal{L}_{\lambda}}
\newcommand{\bzero}{\textbf{0}}

\newcommand{\Leb}{\mathrm{Leb}}

\newcommand{\ve}{\mathrm{vec}}
\newcommand{\var}{\mathrm{Var}}

\newcommand{\diam}{\mathrm{diam}}

\newtheorem{lemma}{Lemma}
\newtheorem{definition}{Definition}
\newtheorem{theorem}{Theorem}

\newtheorem{assumption}{Assumption}

\newtheorem{corollary}{Corollary}
\newtheorem{proposition}{Proposition}

\title{A Theoretical Analysis of Generalization Dynamics in Neural Networks under Gradient Descent with Weight Decay
}
\author{Yuqing Wang\\[0.2em]
Johns Hopkins University\\[0.2em]
\texttt{ywan1050@jh.edu}
\and
Ioannis G. Kevrekidis\\[0.2em]
Johns Hopkins University\\[0.2em]
\texttt{yannisk@jhu.edu}
\and
Mikhail Belkin\\[0.2em]
University of California San Diego\\[0.2em]
\texttt{mbelkin@ucsd.edu}}
\date{}

\begin{document}

\maketitle

\begin{abstract}
Understanding generalization remains a central challenge in machine learning because it requires jointly considering data, architecture, and training dynamics. In this paper, we develop a theoretical framework that characterizes how these factors jointly shape generalization performance throughout training. More precisely, we study a broad class of neural networks trained under the $\ell^2$ loss by gradient descent (GD) with weight decay, and prove the convergence of GD to a neighbourhood of the global minimizers of the empirical loss. By partitioning the space based on the input data, we then decompose the population error into data error, optimization error, and prediction variation error, and bound them separately. In particular, for the prediction variation error, which measures the oscillations of the learned function, we propose (local) approximate homogeneity and derive explicit cellwise and layerwise bounds for its evolution along the training trajectory. These bounds yield two important implications: a necessary condition of improved generalization explains differences in layerwise generalization behavior; a sufficient condition describes delayed generalization and provides a theoretical characterization of grokking.

\end{abstract}

\begingroup
\tableofcontents
\endgroup
\clearpage

\section{Introduction}

Generalization refers to the ability of a learned model to perform well on unseen data and is a central goal in machine learning. Classical theories study generalization through the complexity of hypothesis classes, including uniform convergence, Rademacher complexity, norm-based capacity measures, and algorithmic stability \citep{vapnik1971uniform,dudley1967sizes,bartlett2002rademacher,bartlett1998sample,neyshabur2015norm,bartlett2017spectrally,zhang2017understanding,nagarajan2019uniform,bousquet2002stability,hardt2016train,Bou02,Gol17}. Other major perspectives include PAC-Bayes and compression bounds \citep{mcallester1999pac,dziugaite2017computing,Ney17b,Aro18,Per20}, neural tangent kernel and kernel-regime analyses \citep{jacot2018neural,du2019gradient,Bie19,Bor20,Cao19b,arora2019fine,lee2019wide}, mean-field and random-feature theories \citep{Mei18,Sir18,Chi18,Mei19b,rahimi2008uniform,Mei19}, information-theoretic perspective \citep{Tis15,Shw17,Xu17}, and analytical approaches \citep{kawaguchi2018generalization}. In modern overparameterized learning, however, the hypothesis class alone is often insufficient to distinguish solutions with different test performance. This has shifted attention toward the interaction among the data, the selected solution, and more importantly, the training process that produces it.

Training dynamics is closely tied to generalization performance, and many empirical findings demonstrate its diverse effects. One of the best known phenomena is grokking, which exhibits a prolonged period of poor test performance after the model has fit the training data, followed by a delayed and often abrupt improvement in generalization \citep{power2022grokking,chughtai2023toy,tan2023understanding,notsawo2023predicting,fan2024deep,blanc2020implicit,humayun2024deep,merrill2023tale,zhu2024critical,wang2024grokking,xulet}. Beyond such temporal behavior at the network level, generalization performance can also differ across layers. In particular, neural networks evolve unevenly across depth: shallower layers may stabilize earlier and learn simpler structures, whereas deeper layers may evolve over longer time scales towards more complex structures \citep{raghu2017svcca,chen2023which}. These observations motivate a finer analysis of generalization that incorporates both the training trajectory and the architecture, rather than focusing solely on the final learned model.

Much theoretical work seeks to connect generalization with training dynamics. Several studies characterize grokking in simplified models or specific regimes \citep{lyu2023dichotomy,mohamadi2024you,kumar2024grokking,xu2023benign,boursier2025theoretical,xu2026grok}. Stability-based analysis links learning rates during training to the generalization of two-layer ReLU networks \citep{qiao2024stable}. Finite-step dynamics have also been studied for single-index models from a high-dimensional perspective \citep{ba2022high}. Nevertheless, these results typically rely on specialized architectures, data models, or limiting descriptions of training, leaving the evolution of the population error along a general neural-network trajectory largely unresolved. This motivates our first question:
\begin{center}
    \emph{How can the dynamics of generalization performance, including its layerwise behavior, be characterized\\ for a broad class of neural networks and tasks?}
\end{center}

While the first question concerns how generalization evolves during training, a complementary line of research asks which properties of the data, architecture, and learned solution make good generalization possible. One insight comes from benign overfitting, which demonstrates that a model can interpolate the training data perfectly while still achieving strong test performance \citep{belkin2019reconciling,nakkiran2020deep,bartlett2020benign,tsigler2023benign,hastie2022surprises}. 

The geometry of the learned solution provides another prominent perspective. Flat interpolators are often associated with strong test performance \citep{hochreiter,keskar2017on,Jiang*2020Fantastic,chiang2023loss,DBLP:conf/iclr/ForetKMN21}. However, flatness alone cannot characterize generalization: flat interpolators may still perform poorly on test data, while sharp minimizers may generalize well \citep{pmlr-v70-dinh17b,pmlr-v202-andriushchenko23a,wen2023how,schliserman2025flatminimageneralizationinsights}.

Architectural properties add another layer to this picture. For single-index and multi-index learning problems, quantities such as information exponent \citep{arous2021online}, generative exponent \citep{damian2024computational}, and noise sensitivity exponent \citep{defilippis2026noise} describe how the activation function influences statistical or computational learnability. These results demonstrate that architectural design can directly affect generalization, but a systematic understanding of how architecture interacts with data quality and training dynamics remains missing. This leads to our second question:
\begin{center}
    \emph{What properties can guarantee good generalization?}
\end{center}

Motivated by these questions, we develop a theoretical framework for neural network generalization dynamics that integrates the roles of data, architecture, and optimization. The framework shows that good generalization requires the architecture and data to induce sufficiently strong (local) approximate homogeneity, the optimization trajectory to enter a neighbourhood of the set of global minimizers, and training to remain in this region for sufficiently long. We consider the broad family of neural networks studied in \citet{wang2026convergencegradientdescentgeneral}, trained under the $\ell^2$ loss by gradient descent with weight decay. Our main contributions are as follows.
\begin{itemize}
    \item For the optimization dynamics, we establish that the regularized objective decreases along gradient descent under mild assumptions (Theorem~\ref{thm:general_convergence_loss_decay}). We further show that gradient descent enters a neighbourhood of the set of global minimizers of the empirical loss, where both the empirical loss and its gradient are explicitly controlled by the learning rate (Theorem~\ref{thm:convergence_to_global_min}).
    \item For the generalization dynamics, we introduce a partition of the input domain adapted to the training samples (Definition~\ref{def:partition_pi_null}) and decompose the population error into three components:
    \begin{itemize}
        \item Data error $E_{\rm data}$: The effect of data is represented by the data error $E_{\rm data}$, which is bounded by a stochastic term determined by label noise and population cell masses, and a deterministic term measuring how well the sampled inputs approximate the ground truth function within the partition cells (Theorem~\ref{thm:gen_data_error}).
        \item Optimization error $E_{\rm opt}$: The optimization error $E_{\rm opt}$ connects the empirical loss guarantee with a population-weighted error bound (Corollary~\ref{cor:gen_optimization_error}). Its dependence on the cell masses also identifies when loss reweighting can remove the imbalance between empirical and population weights.
        \item Prediction variation error $E_{\rm PV}$: The effects of architecture and training dynamics are captured by the prediction variation error $E_{\rm PV}$. This error is controlled by layerwise quantities whose evolution depends on (local) approximate homogeneity (Definitions~\ref{def:approx_homogeneity} and~\ref{def:local_approx_homogeneity}), the gradient of empirical loss, and the regularized objective (Theorem~\ref{thm:gen_NN_error}). The corresponding cellwise and layerwise prediction variations provide a further characterization of the dynamics of each layer (Lemma~\ref{lem:V_decay}).
    \end{itemize}
    \item This framework yields the following layerwise and temporal consequences for generalization dynamics:
    \begin{itemize}
    \item A necessary condition for the layerwise bound to guarantee a relative decay helps explain why different layers may begin to generalize at different times and learn components of the target function with different levels of complexity (Corollary~\ref{cor:necessary_condi_generalization}).
    \item A sufficient condition characterizes delayed generalization as a regime in which the optimization error becomes small before the prediction variation has sufficient time to decay. It also interprets a mechanism of grokking through a small approximate homogeneity error, a small empirical gradient and a small regularized objective in the neighbourhood of global minimizers, and a sufficiently long residence time in that region
     (Corollary~\ref{cor:sufficient_condi_generalization}). 

    \end{itemize}
\end{itemize}

\paragraph{Notations.} We use $\|\cdot\|$ for the Euclidean norm of vectors. For a function $\psi:\RR^m\to\RR^n$, $\nabla\psi$ denotes
the Jacobian with respect to all variables, and $\nabla_w\psi$ denotes the
Jacobian with respect to the variable $w$. For a set $A$, we write $A^o$ for
its interior, $\bar A$ for its closure, $\diam(A)$ for its diameter, and
$\bbone_A$ for its indicator function. We denote by $B_w(r)$ the open Euclidean
ball centered at $w$ with radius $r$. For a measure $\nu$ and a measure $\mu$,
the notation $\nu\ll\mu$ means that $\nu$ is absolutely continuous with respect
to $\mu$, and $\Leb_d$ denotes Lebesgue measure on $\RR^d$. If
$\pi(x)dx$ is a probability measure on $\Omega$, then
$\pi(A)=\int_A\pi(x)dx$ for measurable $A\subseteq\Omega$. For layer-indexed parameters, $\theta_{\ell-1:0}$ denotes the collection $(\theta_0,\cdots,\theta_{\ell-1})$ of parameter blocks before the $\ell$th layer. We use $\cO(\cdot)$, $\Omega(\cdot)$, and $\Theta(\cdot)$ in their
standard asymptotic senses.

\section{Optimization Theory}
\label{sec:optimization_theory}

In this section, we develop the optimization theory underlying our generalization analysis. We first introduce the setting, including the neural network model and the GD dynamics with weight decay, and then recall the general convergence mechanism of \citet{wang2026convergencegradientdescentgeneral}, which we adapt to the regularized objective (Section~\ref{subsec:optimization_preliminaries}). We next establish decay of the regularized objective (Section~\ref{subsec:general_loss_decay}) and show that GD enters a neighbourhood of the set of global minimizers of the empirical loss (Section~\ref{subsec:convergence_global_min}), thus providing the loss and gradient controls used in Section~\ref{sec:generalization_theory}.

We consider the following family of neural networks studied by \citet{wang2026convergencegradientdescentgeneral}:
\begin{align}
\label{eqn:NN}
    u_{0,i}&=x_i,\notag\\
    u_{\ell+1,i}&=u_{\ell,i}+{\varphi}_\ell(\theta_\ell;u_{\ell,i}),\qquad \ell=0,\cdots,L-1,\\
    f(\theta;x_i)&={\varphi}_L(\theta_L;u_{L,i}).\notag
\end{align}
Here $\theta=(\theta_0,\cdots,\theta_L)$ denotes the collection of trainable parameters. For each sample $(x_i,y_i)$, define the $\ell^2$ loss by
\begin{align*}
    l(\theta;x_i,y_i):=\frac{1}{2}\|y_i-f(\theta;x_i)\|^2.
\end{align*}
Then the empirical loss is
\begin{align*}
    \cL(\theta):=\cL(\theta;\{x_i,y_i\}_{i=1}^N)=\frac{1}{N}\sum_{i=1}^N l(\theta;x_i,y_i).
\end{align*}

GD applied to the empirical loss does not generally guarantee convergence to a global minimizer, although such convergence is often needed to obtain strong generalization guarantees. Therefore, we consider the empirical loss with the weight decay term $\frac{\lambda}{2}\|\theta\|^2$ and study GD applied to the resulting regularized objective
\begin{align*}
    \cL_{\lambda}(\theta):=\cL(\theta)+\frac{\lambda}{2}\|\theta\|^2=\frac{1}{N}\sum_{i=1}^N l(\theta;x_i,y_i)+\frac{\lambda}{2}\|\theta\|^2,
\end{align*}
where the weight decay parameter $\lambda> 0$.

The GD update is defined as follows
\begin{align*}
    \theta^{k+1}=\theta^k-\eta\nabla_\theta\cLwd(\theta^k).
\end{align*}
Note that the weight decay term enters the training objective, but is not included in the population error used to evaluate generalization. The generalization error is still computed from the original network output $f(\theta;x)$ and the ground truth function $g$.

\subsection{Preliminaries from general GD convergence}
\label{subsec:optimization_preliminaries}

Before introducing our convergence results, we briefly recall two notions from \citet{wang2026convergencegradientdescentgeneral} that will be adapted to the regularized objective $\cL_\lambda$. These notions are used to establish Lipschitz smoothness along the GD trajectory. The first is \citet[Definition~3]{wang2026convergencegradientdescentgeneral}.

\begin{definition}[Poly-smoothness]
A function $\psi:\Omega\subseteq\RR^n\to\RR^d$ in $C^1$ satisfies polynomial generalized smoothness if
\begin{align*}
    \|\nabla \psi(w)-\nabla \psi(w')\|\le S(\|w_{\max,1}\|,\cdots,\|w_{\max,n_w}\|)\|w-w'\|,\qquad \forall w,w'\in\Omega,
\end{align*}
where $\|w_{\max,i}\|=\max\{\|w_i\|,\|w_i'\|\}$ and $S(\cdot)$ is a polynomial with positive coefficients.
\end{definition}

This property is naturally satisfied by many activation and normalization functions used in neural network architectures, is preserved under basic operations, and is straightforward to verify along discrete GD dynamics.

The second one is \citet[Definition~5]{wang2026convergencegradientdescentgeneral}.

\begin{definition}[$(w,w^*,r,\rho,\epsilon)$-dissipativity]
A function $\psi\in C^1$ satisfies the $(w,w^*,r,\rho,\epsilon)$-dissipative condition near $w$ if there exist a stationary point $w^*$ and a constant $r>\|w-w^*\|$ such that, for every $\widetilde w\in B_{w^*}(r)\backslash\{v:\|\nabla\psi(v)\|\le\epsilon\}$,
\begin{align*}
    \nabla\psi(\widetilde w)^\top(\widetilde w-w^*)\ge-\rho\|\nabla\psi(\widetilde w)\|^2,
\end{align*}
where $\epsilon\ge0$ and $\rho\in\RR$ may depend on $w,w^*,\epsilon$ but is independent of $\widetilde w$.
\end{definition}
This condition ensures that, outside certain region, the gradient points in a direction that prevents the trajectory from escaping to infinity and is satisfied by all $C^1$ functions with a finite $\rho$; see \citet[Proposition~2]{wang2026convergencegradientdescentgeneral}. Together with the descent lemma, it keeps the GD trajectory inside a ball, on which poly-smoothness function can then be bounded.

The convergence proof in \citet{wang2026convergencegradientdescentgeneral} combines two components. First, the dissipative condition, together with poly-smoothness yields a uniform Lipschitz smoothness constant along the trajectory; see \citet[Lemma~6]{wang2026convergencegradientdescentgeneral}. Second, analyticity of the neural network ensures the nondegeneracy of the network Jacobian almost everywhere, yielding an iterate-dependent PL-type inequality with a strictly positive, but not necessarily uniform, lower frame bound; see \citet[Lemma~7]{wang2026convergencegradientdescentgeneral}. Measure-zero arguments then connect these components by excluding degenerate parameter values, data configurations, and learning rates. This argument applies to broad classes of neural network architectures and absolutely continuous input distributions. However, it does not by itself ensure convergence to a global minimizer. This limitation motivates our analysis of GD dynamics with weight decay below.

\subsection{General loss decay of the regularized objective}
\label{subsec:general_loss_decay}

In this section, we adapt the convergence framework of \citet{wang2026convergencegradientdescentgeneral} to establish decay of the regularized objective $\cL_\lambda$ along the GD dynamics. Our assumptions are imposed on the empirical loss $\cL$ and the network architecture, whereas the effect of the quadratic regularizer is treated explicitly through the dynamics.

We begin with a mild absolute-continuity condition on the joint distribution of the training inputs.
\begin{assumption}[Input data]
\label{assump:data}
    Let $\pi(x)dx$ be the population input distribution. Let $x_1,\cdots,x_N\in\RR^d$ be the training inputs and let $y_1,\cdots,y_N\in\RR^d$ be the corresponding outputs. Assume
    $$(x_1,\cdots,x_N)\,{\sim}\,\pi_N(z_1,\cdots,z_N)dz_1\cdots dz_N,$$
    where $\pi_N(z_1,\cdots,z_N)\in L^\infty$ is supported on an open set $\Omega^N\subseteq\RR^{Nd}$, and $\pi_N\ll \Leb_{Nd}$. 
\end{assumption}
The marginals of the joint training input distribution $\pi_N(X)dX$ are allowed to be different from the population distribution $\pi(x)dx$ used later for generalization. As in \citet{wang2026convergencegradientdescentgeneral}, the inputs are not required to be i.i.d.; the convergence argument only uses absolute continuity to exclude degenerate inputs. The optimization results in this section do not impose additional assumptions on the output vectors $y_i$.

We next require the center of the quadratic regularizer to be compatible with the empirical-loss landscape.
\begin{assumption}
\label{assump:0_stationary_point}
    Assume $\bzero$ is a stationary point of the empirical loss $\cL(\theta)$.
\end{assumption}
The point $\bzero$ is the minimizer of the quadratic regularization term. This assumption aligns the center of weight decay with a stationary point of the empirical loss and can be easily satisfied by neural network architectures whose activation functions vanish at $\bzero$. It will later be used in the dissipativity argument to control the direction of $\nabla\cLwd(\theta)$ and $\theta$.

In addition to the preceding assumptions, we collect the remaining model assumptions as follows.
\begin{assumption}[Model assumptions \citep{wang2026convergencegradientdescentgeneral}]
\label{assump:model_assumptions}
    Assume the model satisfies the following conditions:
    \begin{enumerate}
        \item \textbf{Real-analyticity.} Each block $\varphi_\ell(\theta_\ell;u)$ is real-analytic in its parameters and input variable on the relevant domain.
        \item \textbf{Poly-smoothness.} Each block $\varphi_\ell$, together with $\nabla_{\theta_\ell}\varphi_\ell$ and $\nabla_u\varphi_\ell$, is polynomially bounded, and satisfies polynomial generalized continuity and polynomial generalized smoothness in $(\theta_\ell,u)$.
        \item \textbf{Nonlinear architecture.} There exists a layer $\bar{\ell}\in\{0,\cdots,L-1\}$ with $\dim(\theta_{\bar{\ell}})>Nd$ such that, one can find a collection of hidden states $(u_1,\cdots,u_N)$ for which the stacked Jacobian
        $\begin{pmatrix}
            \nabla_{\theta_{\bar{\ell}}}\tilde f_{\bar{\ell}}(\theta;u_1)\\
            \vdots\\
            \nabla_{\theta_{\bar{\ell}}}\tilde f_{\bar{\ell}}(\theta;u_N)
        \end{pmatrix}$
        has full row rank $Nd$, where $\tilde f_{\bar{\ell}}$ denotes the network output map starting from layer $\bar{\ell}$.
    \end{enumerate}
\end{assumption}
This assumption contributes to the two components used in the convergence proof: poly-smoothness provides a finite smoothness constant on bounded parameter regions, while real-analyticity and the nonlinear architecture condition imply the generic full-rank property behind the iterate-dependent PL-type inequality. See Section~\ref{subsec:optimization_preliminaries} and \cite{wang2026convergencegradientdescentgeneral}.

With these data, stationary-point, and model assumptions in place, we can state the loss decay result for the regularized objective.

\begin{theorem}
    \label{thm:general_convergence_loss_decay}
    Suppose GD is applied to the neural network~\eqref{eqn:NN} under the following requirements, with probability 1 over the joint distribution $\pi_N$ of the input data:
\begin{enumerate}
    \item \textbf{Data and Model assumptions.} Assumption~\ref{assump:data}, Assumption~\ref{assump:0_stationary_point}, and Assumption~\ref{assump:model_assumptions} hold.
    \item \textbf{Initialization.} The initialization $\theta^0$ belongs to $\RR^{\dim \theta}$ except for a measure-zero set.
    \item \textbf{Dissipative condition.} The loss $\cL(\theta)$ satisfies the $(\theta^0,\bzero,R_\lambda,\rho,\epsilon_\cL)$ dissipative condition with $\theta^*=\bzero$
\begin{align*}
   i.e.,\quad\ip{\nabla_\theta\cL(\theta)}{\theta}\ge -\rho\|\nabla_\theta \cL(\theta)\|^2
\end{align*}
where $\theta\in B_\bzero (R_{\lambda})\backslash\{\theta:\|\nabla\cL\|\le\epsilon_\cL\}$ and $\rho>0$, with $\rho_{\lambda}=\frac{\rho}{1-2\rho\lambda}$, $0<\delta<2$, and
\begin{align*}
    R_{\lambda}=\sqrt{ \|\theta^0\|^2+\frac{2(2\rho_{\lambda}+1)}{\delta+\lambda(2\rho_{\lambda}+1)}\cL(\theta^0)}.
\end{align*}
\item \textbf{Weight decay.} The weight decay parameter $0<\lambda< \frac{1}{2\rho}.$ 
    \item \textbf{Learning rate.} The learning rate $\eta$ is chosen from the interval
    $$
        0<\eta\le \min\left\{\frac{2-\delta}{\mathsf{L}_\lambda},\frac{1}{\lambda},1\right\},
    $$
    excluding at most finitely many exceptional values,
    where
    $$\mathsf{L}_\lambda=S\left(\cdots,R_{\lambda}+\sup_{\theta\in B_\bzero(R_{\lambda})}\|\nabla_{\theta_i}\cLwd(\theta)\|,\cdots\right)+\lambda$$
and $S(\cdot)$ is the polynomial corresponding to the poly-smoothness of $\cL(\theta)$.

\end{enumerate}
Then, under some arbitrarily small adjustment on the scale of $\varphi_\ell$ for $\ell=0,\cdots,L-1$, we have
$$
    \cLwd(\theta^k)\le \prod_{s=0}^{k-1}\left(1-\eta\left(1-\frac{\eta\,\mathsf{L}_\lambda}{2}\right)\frac{2\mu_{\lambda,s,X}}{N}\right)\cLwd(\theta^0), \text{ for all }k\ s.t.\ \|\theta^k\|\ge \frac{2\epsilon_\cL}{\lambda}
$$
where $\mu_{\lambda,s,X}>0$ depends on $\theta^s$ and $x_1,\cdots,x_N$.
\end{theorem}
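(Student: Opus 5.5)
Following the template of \citet{wang2026convergencegradientdescentgeneral}, I would prove the theorem in two stages. First I show that the GD trajectory stays in the ball $B_\bzero(R_\lambda)$, so that poly-smoothness gives the uniform smoothness constant $\mathsf{L}_\lambda$. Second I prove a PL-type inequality for $\cLwd$ with an iterate-dependent constant $\mu_{\lambda,s,X}$. The descent lemma then gives the product bound.

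\textbf{Step 1 (dissipativity for $\cLwd$).} I transfer the dissipative condition from $\cL$ to $\cLwd$. Write $\nabla\cLwd=\nabla\cL+\lambda\theta$. Outside $\{\|\nabla\cL\|\le\epsilon_\cL\}$,
$$\ip{\nabla\cLwd(\theta)}{\theta}=\ip{\nabla\cL(\theta)}{\theta}+\lambda\|\theta\|^2\ge-\rho\|\nabla\cL\|^2+\lambda\|\theta\|^2 .$$
Substituting $\|\nabla\cL\|^2\le 2\|\nabla\cLwd\|^2+2\lambda^2\|\theta\|^2$, or more carefully expanding $\|\nabla\cL\|^2=\|\nabla\cLwd-\lambda\theta\|^2$, gives
$$(1-2\rho\lambda)\ip{\nabla\cLwd}{\theta}\ge-\rho\|\nabla\cLwd\|^2+\lambda(1-\rho\lambda)\|\theta\|^2 .$$
Hence $\ip{\nabla\cLwd}{\theta}\ge-\rho_\lambda\|\nabla\cLwd\|^2$, which uses $\lambda<1/(2\rho)$. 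Assumption~\ref{assump:0_stationary_point} makes $\bzero$ a stationary point of $\cLwd$.

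\textbf{Step 2 (boundedness by induction).} I expand
$$\|\theta^{k+1}\|^2=\|\theta^k\|^2-2\eta\ip{\nabla\cLwd}{\theta^k}+\eta^2\|\nabla\cLwd\|^2\le\|\theta^k\|^2+\eta(2\rho_\lambda+\eta)\|\nabla\cLwd\|^2 .$$
The descent lemma with $\eta\le(2-\delta)/\mathsf{L}_\lambda$ gives $\cLwd(\theta^{k+1})\le\cLwd(\theta^k)-\frac{\delta\eta}{2}\|\nabla\cLwd\|^2$. Telescoping and using $\eta\le1$, I get
$$\|\theta^k\|^2\le\|\theta^0\|^2+\tfrac{2(2\rho_\lambda+1)}{\delta}\big(\cLwd(\theta^0)-\cLwd(\theta^k)\big).$$
To reach the sharper constant $\delta+\lambda(2\rho_\lambda+1)$ in $R_\lambda$, I use $\cLwd(\theta^k)\ge\frac{\lambda}{2}\|\theta^k\|^2$ and absorb this term into the left side. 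The remaining $\frac\lambda2\|\theta^0\|^2$ part of $\cLwd(\theta^0)$ I combine with $\|\theta^0\|^2$, aiming for exactly $R_\lambda^2$.

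The induction hypothesis $\theta^k\in B_\bzero(R_\lambda)$ justifies using $\mathsf{L}_\lambda$, because the segment between iterates lies within $R_\lambda+\eta\sup\|\nabla\cLwd\|$; this is why that sum appears inside $S$. At iterates with $\|\nabla\cL\|\le\epsilon_\cL$, dissipativity fails. There I instead use $\ip{\nabla\cLwd}{\theta}\ge\lambda\|\theta\|^2-\epsilon_\cL\|\theta\|\ge0$, which holds when $\|\theta\|\ge\epsilon_\cL/\lambda$. This is consistent with the restriction $\|\theta^k\|\ge 2\epsilon_\cL/\lambda$ in the conclusion.

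\textbf{Step 3 (PL inequality).} By \citet[Lemma~7]{wang2026convergencegradientdescentgeneral}, Assumption~\ref{assump:model_assumptions} gives, for a.e.\ data, a.e.\ initialization, and all but finitely many learning rates, a full-rank stacked Jacobian $J(\theta^s)$ at every iterate. The small rescaling of $\varphi_\ell$ is the device used there to avoid degenerate iterates. Then
$$\|\nabla\cL\|^2\ge\frac{2\sigma_{\min}(J)^2}{N}\cL .$$
For the regularized objective, $\|\nabla\cL+\lambda\theta\|^2$ must be bounded below by a multiple of $\cL+\frac\lambda2\|\theta\|^2$.

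This is the main obstacle, because the two gradient parts can cancel. I would try splitting into cases. If $\|\nabla\cL\|$ is small relative to $\lambda\|\theta\|$, the $\lambda^2\|\theta\|^2$ term dominates, and $\cL$ is controlled by $\|\nabla\cL\|^2/\sigma_{\min}^2$. Otherwise I use the Jacobian bound directly. In both cases I would use $\|\theta\|\ge2\epsilon_\cL/\lambda$ to prevent exact cancellation, giving some $\mu_{\lambda,s,X}>0$. Finally I combine this with the descent lemma,
$$\cLwd(\theta^{s+1})\le\cLwd(\theta^s)-\eta(1-\tfrac{\eta\mathsf{L}_\lambda}{2})\|\nabla\cLwd(\theta^s)\|^2,$$
and iterate to obtain the product bound.
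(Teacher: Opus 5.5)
Your Steps 1 and 2 follow the paper's structure and are sound. In Step 2 you drop the $\lambda\|\theta\|^2$ contribution instead of keeping the paper's $(1-\eta\lambda)$ factor. Absorbing $\frac{\lambda}{2}\|\theta^k\|^2$ into the left side still gives exactly $R_\lambda^2$, and it does not need $\eta\le 1/\lambda$, so that is a harmless simplification.

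The gap is in Step 3. In your second case, where $\|\nabla\cL\|$ is not small relative to $\lambda\|\theta\|$, the Jacobian bound controls $\|\nabla\cL\|$ but says nothing about $\|\nabla\cL+\lambda\theta\|$. Neither full rank of $J$ nor the restriction $\|\theta\|\ge 2\epsilon_\cL/\lambda$ rules out cancellation. Take a point with $\nabla\cL(\theta)=-\lambda\theta$ and $\|\theta\|\ge 2\epsilon_\cL/\lambda$. It has $\|\nabla\cL(\theta)\|=\lambda\|\theta\|\ge 2\epsilon_\cL$, so it lies in the large-gradient region. It is also a stationary point of $\cLwd$ with $\cLwd(\theta)>0$, so no inequality $\|\nabla\cLwd\|^2\ge\frac{2\mu}{N}\cLwd$ with $\mu>0$ can hold there. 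The norm restriction only does work in the small-gradient region, where $\|\nabla\cL\|\le\epsilon_\cL\le\frac{\lambda}{2}\|\theta\|$ places you in your first case.

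What excludes cancellation in the large-gradient region is the dissipative condition of $\cL$, used a second time on the cross term. For $\|\nabla\cL(\theta)\|>\epsilon_\cL$,
\begin{align*}
    \|\nabla\cLwd(\theta)\|^2=\|\nabla\cL(\theta)\|^2+2\lambda\ip{\nabla\cL(\theta)}{\theta}+\lambda^2\|\theta\|^2\ge(1-2\rho\lambda)\|\nabla\cL(\theta)\|^2+\lambda^2\|\theta\|^2 .
\end{align*}
This is exactly where $\lambda<\frac{1}{2\rho}$ is needed: at a cancellation point one would have $\ip{\nabla\cL}{\theta}=-\frac{1}{\lambda}\|\nabla\cL\|^2<-\rho\|\nabla\cL\|^2$. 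Apply the PL-type bound for $\cL$ (Lemma~12 of \citet{wang2026convergencegradientdescentgeneral}) to the first term, and write $\lambda^2\|\theta\|^2=2\lambda\cdot\frac{\lambda}{2}\|\theta\|^2$. This gives $\|\nabla\cLwd\|^2\ge\frac{2\mu_{\lambda,\theta,X}}{N}\cLwd$ with $\mu_{\lambda,\theta,X}=\min\{(1-2\rho\lambda)\mu_{\mathrm{low},\theta,X},\lambda N\}$, which is the paper's argument.

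In the small-gradient region the paper uses $\ip{\nabla\cLwd(\theta)}{\theta}\ge\frac{\lambda}{2}\|\theta\|^2$ and Cauchy--Schwarz to get $\|\nabla\cLwd(\theta)\|\ge\frac{\lambda}{2}\|\theta\|\ge\epsilon_\cL$. Hence $\|\nabla\cLwd(\theta)\|^2\ge\epsilon_\cL^2\,\cLwd(\theta)/\sup_{B_\bzero(R_\lambda)}\cLwd$. Your first-case argument would also work there, but it additionally needs $\sigma_{\min}(J(\theta^s))>0$ at those iterates, which this cruder bound avoids.
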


The above theorem is the loss decay estimate for the GD dynamics with weight decay. The quantity $\mu_{\lambda,s,X}$ is the analogue of the PL constant: it turns the current value of $\cL_\lambda$ into a descent rate at the current iterate. The constant $\mathsf{L}_\lambda$ is the local smoothness constant on the controlled ball $B_\bzero(R_\lambda)$, and the learning-rate restriction is exactly based on the corresponding descent condition. The remaining technical details in the statement, including the measure-zero exceptional set of initializations, the exclusion of finitely many learning rates, the probability-one statement over the input distribution, and the arbitrarily small adjustment of block scales, are inherited from the convergence framework of \citet{wang2026convergencegradientdescentgeneral}, where the underlying genericity and measure-zero arguments are discussed in detail.

The stopping region in Theorem~\ref{thm:general_convergence_loss_decay} differs from that in \citet{wang2026convergencegradientdescentgeneral}. Their result establishes loss decay when $\|\nabla\cL(\theta^k)\|>\epsilon_\cL$ and proves that GD eventually enters the small-gradient region $\{\theta:\|\nabla\cL(\theta)\|\le\epsilon_\cL\}$. In contrast, Theorem~\ref{thm:general_convergence_loss_decay} establishes decay of the regularized objective whenever $\|\theta^k\|\ge 2\epsilon_\cL/\lambda$, namely, while the trajectory remains outside the neighbourhood $B_\bzero(2\epsilon_\cL/\lambda)$ of $\bzero$. By Assumption~\ref{assump:0_stationary_point}, $\bzero$ is a stationary point of $\cL$ and hence also of $\cLwd$. The radius of this neighbourhood can be made small by choosing $\epsilon_\cL$ sufficiently small relative to $\lambda$. Consequently, the theorem guarantees continued decay of the regularized objective while the trajectory remains outside this neighbourhood. However, it does not guarantee decay of the empirical loss $\cL$. Convergence to a neighbourhood of the set of global minimizers of $\cL$ is established separately in Subsection~\ref{subsec:convergence_global_min}, particularly in Theorem~\ref{thm:convergence_to_global_min}.

To complete the loss decay argument, we next show how dissipativity of the empirical loss yields the control required for the regularized objective.

\subsubsection{Transferring dissipativity from \texorpdfstring{$\cL$}{L} to \texorpdfstring{$\cL_\lambda$}{L lambda}}

Theorem~\ref{thm:general_convergence_loss_decay} assumes dissipativity of the empirical loss $\cL$, not of the regularized loss $\cL_\lambda$. This choice is deliberate. The empirical loss is the object determined by the data and the architecture, so its dissipativity can be checked or interpreted more directly in practice. In contrast, assuming dissipativity directly for $\cL_\lambda$ would mix a property of the data and model landscape with the particular algorithmic choice of weight decay.

The consequence is that, under the GD dynamics with weight decay, the trajectory may enter the small-gradient region of $\cL$, where this dissipative condition is not assumed. In that region, the quadratic term still provides radial control as long as the parameter norm is not too small. The next lemma combines the dissipative region of $\cL$ with this large-norm, small-gradient regime.
\begin{lemma}[Dissipativity of the regularized objective]
    \label{lem:ip_nabla_cLwd_theta_lower_bound_mainbody}
    Suppose Assumption~\ref{assump:0_stationary_point} holds and $\cL$ satisfies the dissipative condition in Theorem~\ref{thm:general_convergence_loss_decay} with constant $\rho>0$. Let $0<\lambda<\frac{1}{2\rho}$ and $\rho_\lambda=\frac{\rho}{1-2\rho\lambda}$.
    Then, for all $\theta\in B_\bzero(R_{\lambda})\backslash B_\bzero(2\epsilon_\cL/\lambda)$, we have
    \begin{align*}
     \ip{\nabla_\theta\cLwd(\theta)}{\theta}\ge -\rho_{\lambda}\|\nabla_\theta \cLwd(\theta)\|^2+\frac{\lambda}{2}\|\theta\|^2.
\end{align*}
\end{lemma}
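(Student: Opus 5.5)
The plan is to write $h:=\nabla_\theta\cLwd(\theta)=\nabla_\theta\cL(\theta)+\lambda\theta$ and $g:=\nabla_\theta\cL(\theta)$, so that
\begin{align*}
\ip{h}{\theta}=\ip{g}{\theta}+\lambda\|\theta\|^2 .
\end{align*}
I then split $B_\bzero(R_\lambda)\backslash B_\bzero(2\epsilon_\cL/\lambda)$ into two cases. In the first, $\|g\|>\epsilon_\cL$, and the dissipative condition of Theorem~\ref{thm:general_convergence_loss_decay} for $\cL$ applies. In the second, $\|g\|\le\epsilon_\cL$, and the quadratic term alone supplies the radial control. Assumption~\ref{assump:0_stationary_point} is only used to justify taking $\bzero$ as the reference stationary point in the dissipative condition.

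\emph{Case $\|g\|>\epsilon_\cL$.} Here $\theta\in B_\bzero(R_\lambda)$ lies outside the small-gradient set, so $\ip{g}{\theta}\ge-\rho\|g\|^2$. I will re-express $\|g\|^2$ through $h$ by substituting $g=h-\lambda\theta$:
\begin{align*}
\|g\|^2=\|h\|^2-2\lambda\ip{h}{\theta}+\lambda^2\|\theta\|^2 .
\end{align*}
Inserting this gives
\begin{align*}
\ip{h}{\theta}\ge-\rho\|h\|^2+2\rho\lambda\ip{h}{\theta}-\rho\lambda^2\|\theta\|^2+\lambda\|\theta\|^2 .
\end{align*}
Moving the $\ip{h}{\theta}$ term to the left and dividing by $1-2\rho\lambda>0$, which holds since $\lambda<\frac{1}{2\rho}$, yields
\begin{align*}
\ip{h}{\theta}\ge-\rho_\lambda\|h\|^2+\lambda\,\frac{1-\rho\lambda}{1-2\rho\lambda}\|\theta\|^2 .
\end{align*}
Since $\frac{1-\rho\lambda}{1-2\rho\lambda}\ge1\ge\frac12$, the claimed bound follows. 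This is also exactly where $\rho_\lambda=\frac{\rho}{1-2\rho\lambda}$ comes from.

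\emph{Case $\|g\|\le\epsilon_\cL$.} By Cauchy--Schwarz, $\ip{g}{\theta}\ge-\epsilon_\cL\|\theta\|$. Because $\|\theta\|\ge 2\epsilon_\cL/\lambda$, we have $\epsilon_\cL\le\frac{\lambda}{2}\|\theta\|$, hence $\ip{g}{\theta}\ge-\frac{\lambda}{2}\|\theta\|^2$. Therefore
\begin{align*}
\ip{h}{\theta}\ge\frac{\lambda}{2}\|\theta\|^2\ge-\rho_\lambda\|h\|^2+\frac{\lambda}{2}\|\theta\|^2,
\end{align*}
where the last step uses $\rho_\lambda>0$.

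The main point requiring care is the first case, where the dissipative inequality is stated in terms of $\nabla\cL$ while the conclusion is in terms of $\nabla\cLwd$. The change of variables $g=h-\lambda\theta$ absorbs the cross term back into the left-hand side. This works only because $1-2\rho\lambda>0$, which is precisely the weight-decay restriction in the statement. Everything else is elementary.
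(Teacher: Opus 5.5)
Your proof is correct and follows the paper's argument essentially step for step. The paper also splits into the case $\|\nabla\cL(\theta)\|>\epsilon_\cL$, handled in Lemma~\ref{lem:Lwd_dissipative} by the same rewriting of $\|\nabla\cL\|^2$ through $\nabla\cLwd$, which yields the coefficient $\frac{1-\rho\lambda}{1-2\rho\lambda}\lambda$, and the case $\|\nabla\cL(\theta)\|\le\epsilon_\cL$, handled in Lemma~\ref{lem:complement_dissipativity} by the same Cauchy--Schwarz bound; it then merges the two cases using that coefficient being at least $\lambda/2$ and $\rho_\lambda\ge0$.
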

This lemma provides the dissipativity estimate used in the convergence analysis of the GD dynamics with weight decay. In particular, inside the small-gradient region, the condition $\|\theta\|\ge 2\epsilon_\cL/\lambda$ ensures that the weight-decay drift dominates the possible inward component of $\nabla\cL$.

\subsection{Convergence to a neighbourhood of global minimizers of empirical loss}
\label{subsec:convergence_global_min}

In this section, based on Theorem~\ref{thm:general_convergence_loss_decay}, we translate the decay of the regularized objective $\cL_\lambda$ into convergence to a neighbourhood of the set of global minimizers of the empirical loss $\cL$. To this end, we use the geometry of the set of global minimizers inside the controlled ball. Define the set
\begin{align*}
    \cM=\{\theta\,|\,\cL(\theta)=0,\theta\in B_\bzero(R_{\lambda})\}.
\end{align*}
As is discussed in Section~\ref{subsec:general_loss_decay}, the decay of $\cL_\lambda$ alone does not imply that the empirical loss $\cL$ becomes small, since the decrease may instead come from the quadratic regularization term. To recover control of $\cL$, we impose a geometric condition requiring part of the set of global minimizers to separate the initialization from the small-norm region where the decay estimate in Theorem~\ref{thm:general_convergence_loss_decay} stops applying. Consequently, a trajectory moving from the initialization toward this region must enter a neighbourhood of the set of global minimizers of $\cL$.

To capture the discrete nature of GD, we define the maximal one-step displacement for a learning rate $\eta$
\begin{align*}
    \epsilon_\eta= \eta \sup_{\theta\in B_\bzero(R_{\lambda})}\|\nabla\cLwd(\theta)\|.
\end{align*}
Then every update within the controlled ball satisfies
\begin{align*}
    \|\theta^{k+1}-\theta^k\|=\eta\|\nabla\cLwd(\theta^k)\|\le\epsilon_\eta,
\end{align*}
namely, $\epsilon_\eta$ is a uniform upper bound on the displacement of one GD step.

With this one-step control, we impose the following separation condition.
\begin{assumption}[Global minimum and initialization]
    \label{assump:global_min}
    Assume the initialization $\theta^0$ satisfies $$\|\theta^0\|>2\epsilon_\cL/\lambda+\epsilon_\eta.$$ 
    Assume that there exists a nonempty compact subset $\cM_0\subseteq\cM$ such that $\cM_0$ separates $\theta^0$ from $B_\bzero(2\epsilon_\cL/\lambda+\epsilon_\eta)$ inside $ B_\bzero(R_{\lambda})$. More precisely, there exist two disjoint nonempty sets $\cB_1,\cB_2\subseteq  B_\bzero(R_{\lambda})\backslash\cM_0$ that are relatively open in $B_\bzero(R_{\lambda})\backslash\cM_0$ such that
    $$ B_\bzero(R_{\lambda})\backslash\cM_0=\cB_1\cup \cB_2,\qquad
    \theta^0\in \cB_1,\qquad
    B_\bzero(2\epsilon_\cL/\lambda+\epsilon_\eta)\subseteq \cB_2.$$
\end{assumption}
Assumption~\ref{assump:global_min} is a topological separation condition. Under this condition, the set $\cB_1$ is the region containing the initialization, while $\cB_2$ contains the enlarged small-norm region. Removing $\cM_0$ separates these two regions, so any continuous path between them must intersect $\cM_0$. Since GD is discrete, the enlargement by $\epsilon_\eta$ and the one-step bound above ensure that the trajectory cannot pass directly from $\cB_1$ to $\cB_2$ without entering an $\epsilon_\eta$-neighbourhood of $\cM_0$.

To translate this separation geometry into a descent estimate, we next introduce a constant relating the norm of $\theta$ to its distance from the set of global minimizers. Define this distance by
\[
    \operatorname{dist}(\theta,\cM)=\inf_{\theta^*\in\cM}\|\theta-\theta^*\|
\]
and then define
\begin{align*}
    \zeta=\zeta(\lambda,\epsilon_\cL,R_{\lambda})
    =\sqrt{\frac{2}{\lambda}}\max_{\frac{2\epsilon_\cL}{\lambda}\le \|\theta\|\le R_{\lambda}} \frac{\operatorname{dist}(\theta,\cM)}{\|\theta\|}.
\end{align*}
The constant $\zeta$ measures how far a parameter can be from $\cM$, relative to its norm, outside the small ball $B_\bzero(2\epsilon_\cL/\lambda)$. It enters the gradient lower bound that drives the descent estimate below.

\begin{theorem}[Convergence to a neighbourhood of the global minimizer set]
    \label{thm:convergence_to_global_min}
    Consider the same assumptions in Theorem~\ref{thm:general_convergence_loss_decay}, and Assumption~\ref{assump:global_min}. Assume additionally
    \begin{align*}
        \eta<\frac{\|\theta^0
        \|-2\epsilon_\cL/\lambda}{\sup_{\theta\in B_\bzero(R_{\lambda})}\|\nabla\cLwd(\theta)\|}.
    \end{align*}
    There exist $K_1\in\mathbb{N}$ and $K_2\in\mathbb{N}\cup\{+\infty\}$ with $K_1\le K_2$ such that, for all $k\le K_2$, we have
    \begin{align*}
        \cLwd(\theta^k)\le\left(1- \eta\left(1-\frac{\eta\, \mathsf{L}_\lambda}{2}\right)\frac{\lambda}{\mathsf{L}_\cM \zeta^2+2}\right)^{k}\cLwd(\theta^{0}).
    \end{align*}

    Moreover, for all $K_1\le k\le K_2$ we have
    \begin{align*}
        &\cL(\theta^k)\le  \frac{1}{2}\mathsf{L}_{\cM_0} \epsilon_\eta^2=\cO(\eta^2),\\
   & \|\nabla\cL(\theta^k)\|\le \mathsf{L}_{\cM_0}\epsilon_\eta=\cO(\eta),
    \end{align*}
    where
    \begin{align*}
        \mathsf{L}_{\cM}
        &=\sup_{\theta\in B_\bzero(R_{\lambda})}\sup_{\theta^*\in\cM}
        S(\cdots,\max\{\|\theta_i\|,\|\theta^*_i\|\},\cdots),\\
        \mathsf{L}_{\cM_0}
        &=\sup_{\theta\in B_\bzero(R_{\lambda})}\sup_{\theta^*\in\cM_0}
        S(\cdots,\max\{\|\theta_i\|,\|\theta^*_i\|\},\cdots).
    \end{align*}
\end{theorem}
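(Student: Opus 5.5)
We combine the trajectory control from Theorem~\ref{thm:general_convergence_loss_decay} with a PL-type inequality for $\cLwd$ built from the distance to $\cM$, and then use the separation in Assumption~\ref{assump:global_min} to locate the iterates near $\cM_0$.

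\textbf{Trajectory control and the two stopping times.} As in the proof of Theorem~\ref{thm:general_convergence_loss_decay}, Lemma~\ref{lem:ip_nabla_cLwd_theta_lower_bound_mainbody} and the descent lemma with constant $\mathsf{L}_\lambda$ keep the iterates in $B_\bzero(R_\lambda)$ and make $\cLwd(\theta^k)$ nonincreasing. This holds as long as $\|\theta^k\|\ge 2\epsilon_\cL/\lambda$. We then define the two times:
\begin{itemize}
\item $K_2$ is the first index with $\|\theta^k\|<2\epsilon_\cL/\lambda$, or $+\infty$ if there is none.
\item $K_1$ is the first index at which $\theta^k$ lies in the $\epsilon_\eta$-neighbourhood of $\cM_0$.
\end{itemize}
The extra condition on $\eta$ gives $\epsilon_\eta<\|\theta^0\|-2\epsilon_\cL/\lambda$, so at least one step is taken before the small ball can be reached, and $K_2\ge1$.

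To show $K_1\le K_2$, note that $\theta^0\in\cB_1$ and $\theta^{K_2}\in B_\bzero(2\epsilon_\cL/\lambda)\subseteq\cB_2$. Hence there is a first index $j$ with $\theta^j\notin\cB_1$. If $\theta^j\in\cM_0$, we are done. Otherwise $\theta^j\in\cB_2$, and the segment $[\theta^{j-1},\theta^j]$ lies in the convex ball $B_\bzero(R_\lambda)$. This connected segment meets both $\cB_1$ and $\cB_2$, so by the separation it must hit $\cM_0$. The segment has length at most $\epsilon_\eta$, so $\operatorname{dist}(\theta^{j-1},\cM_0)\le\epsilon_\eta$ or $\operatorname{dist}(\theta^j,\cM_0)\le\epsilon_\eta$. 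This gives $K_1\le K_2$.

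If $K_2=+\infty$, we need the additional argument that the geometric decay below forces $\cLwd\to0$. That would force $\theta^k\to\bzero$, so the small ball is eventually reached, a contradiction. Hence $K_1<\infty$ in all cases.

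\textbf{PL inequality via $\zeta$.} For $2\epsilon_\cL/\lambda\le\|\theta\|\le R_\lambda$, choose $\theta^*\in\cM$ nearly attaining $\operatorname{dist}(\theta,\cM)$. Since $\nabla\cL(\theta^*)=0$, the local smoothness along the segment (with poly-smoothness constant bounded by $\mathsf{L}_\cM$) gives
\[
\cL(\theta)\le \tfrac{\mathsf{L}_\cM}{2}\operatorname{dist}(\theta,\cM)^2\le \tfrac{\mathsf{L}_\cM\lambda\zeta^2}{4}\|\theta\|^2 .
\]
Therefore $\cLwd(\theta)\le\frac{\lambda}{4}(\mathsf{L}_\cM\zeta^2+2)\|\theta\|^2$.

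Next we lower-bound the gradient. Lemma~\ref{lem:ip_nabla_cLwd_theta_lower_bound_mainbody} gives
\[
\|\nabla\cLwd\|\,\|\theta\|\ge\ip{\nabla\cLwd}{\theta}\ge -\rho_\lambda\|\nabla\cLwd\|^2+\tfrac{\lambda}{2}\|\theta\|^2 .
\]
Solving this quadratic inequality in $\|\nabla\cLwd\|/\|\theta\|$ gives $\|\nabla\cLwd\|\ge c\,\lambda\|\theta\|$. Combining the two bounds yields $\|\nabla\cLwd\|^2\ge\frac{2\lambda}{\mathsf{L}_\cM\zeta^2+2}\cLwd$, possibly after absorbing constants. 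Plugging this into the descent lemma
\[
\cLwd(\theta^{k+1})\le\cLwd(\theta^k)-\eta\bigl(1-\tfrac{\eta\mathsf{L}_\lambda}{2}\bigr)\|\nabla\cLwd(\theta^k)\|^2
\]
gives the stated contraction for every $k\le K_2$.

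\textbf{Loss and gradient bounds near $\cM_0$.} For $K_1\le k\le K_2$ we need $\operatorname{dist}(\theta^k,\cM_0)\le\epsilon_\eta$. At $k=K_1$ this holds by definition. For later $k$, we need the trajectory to remain within $\epsilon_\eta$ of $\cM_0$. If this persistence cannot be shown, the fallback is to apply the bounds at each crossing time of $\cM_0$ and define $K_2$ as the last such time. With $\operatorname{dist}(\theta^k,\cM_0)\le\epsilon_\eta$ and $\theta^*\in\cM_0$ nearly closest, we have $\cL(\theta^*)=0$ and $\nabla\cL(\theta^*)=0$. Smoothness with constant $\mathsf{L}_{\cM_0}$ then gives
\[
\cL(\theta^k)\le\tfrac12\mathsf{L}_{\cM_0}\epsilon_\eta^2,\qquad \|\nabla\cL(\theta^k)\|\le\mathsf{L}_{\cM_0}\epsilon_\eta .
\]
Since $\epsilon_\eta$ is linear in $\eta$, these bounds are $\cO(\eta^2)$ and $\cO(\eta)$ respectively.

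\textbf{Main obstacle.} The hardest step is the discrete crossing argument. It must produce an iterate, not merely a point on a segment, within $\epsilon_\eta$ of $\cM_0$. It must also handle the fact that $\cB_1$ and $\cB_2$ are only relatively open in $B_\bzero(R_\lambda)\setminus\cM_0$.
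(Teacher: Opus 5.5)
Your skeleton (trajectory control, a $\zeta$-based PL inequality, a discrete crossing argument) matches the paper's. The crossing step you flag as the main obstacle is fine as written: the segment $[\theta^{j-1},\theta^j]\subseteq B_\bzero(R_\lambda)$ is connected and meets both $\cB_1$ and $\cB_2$, so it hits $\cM_0$ at some point $p$. Both endpoints then lie within $\epsilon_\eta$ of $p$, and relative openness is exactly what this connectedness argument needs. Your proof that $K_1<\infty$ (otherwise the contraction would drive $\theta^k\to\bzero$) is also a valid alternative to the paper's $M_{\rm out}$ argument.

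\textbf{First gap: the PL constant.} Solving $\rho_\lambda t^2+t-\lambda/2\ge0$ for $t=\|\nabla\cLwd(\theta)\|/\|\theta\|$ only gives $t\ge\lambda/(1+\sqrt{1+2\rho_\lambda\lambda})$, which is strictly less than $\lambda/2$. Combined with your correct bound $\cLwd(\theta)\le\frac{\lambda}{4}(\mathsf{L}_\cM\zeta^2+2)\|\theta\|^2$, this yields
\[
\|\nabla\cLwd(\theta)\|^2\ge\frac{4}{\bigl(1+\sqrt{1+2\rho_\lambda\lambda}\bigr)^2}\cdot\frac{\lambda}{\mathsf{L}_\cM\zeta^2+2}\,\cLwd(\theta).
\]
Because $\rho_\lambda>0$, the prefactor is strictly below $1$. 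So you do not get $\frac{2\lambda}{\mathsf{L}_\cM\zeta^2+2}$, nor even the stated $\frac{\lambda}{\mathsf{L}_\cM\zeta^2+2}$, and with an explicit rate in the statement there is nothing to absorb. The loss comes from routing through Lemma~\ref{lem:ip_nabla_cLwd_theta_lower_bound_mainbody}, whose $-\rho_\lambda\|\nabla\cLwd\|^2$ term costs you.

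The paper (Lemma~\ref{lem:lower_bound_gradient} and Corollary~\ref{cor:near_global_min_grad_lower_bound}) instead bounds $\|\nabla\cLwd\|^2$ directly in two cases:
\begin{itemize}
\item If $\|\nabla\cL(\theta)\|>\epsilon_\cL$, expand $\|\nabla\cL+\lambda\theta\|^2$ and use the dissipativity of $\cL$ itself. This gives $\|\nabla\cLwd\|^2\ge(1-2\rho\lambda)\|\nabla\cL\|^2+\lambda^2\|\theta\|^2\ge\lambda^2\|\theta\|^2$.
\item If $\|\nabla\cL(\theta)\|\le\epsilon_\cL\le\frac{\lambda}{2}\|\theta\|$, then $\|\nabla\cLwd\|\ge\lambda\|\theta\|-\epsilon_\cL\ge\frac{\lambda}{2}\|\theta\|$.
\end{itemize}
The resulting uniform bound $\|\nabla\cLwd\|^2\ge\frac{\lambda^2}{4}\|\theta\|^2$ gives exactly $\frac{\lambda}{\mathsf{L}_\cM\zeta^2+2}$.

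\textbf{Second gap: your choice of $K_2$.} You take $K_2$ to be the first entry time into $B_\bzero(2\epsilon_\cL/\lambda)$. Then the bounds $\cL(\theta^k)\le\frac12\mathsf{L}_{\cM_0}\epsilon_\eta^2$ and $\|\nabla\cL(\theta^k)\|\le\mathsf{L}_{\cM_0}\epsilon_\eta$ on all of $[K_1,K_2]$ would require the trajectory to stay $\epsilon_\eta$-close to $\cM_0$ after $K_1$. Nothing in the hypotheses guarantees this, since the iterates may leave and wander before reaching the small ball. Your fallback of taking $K_2$ as the last crossing time does not help, because iterates between crossings remain uncontrolled.

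No persistence is needed. Take, as the paper does, $K_2$ to be the last index of the maximal run of consecutive iterates in $\cM_{0,\epsilon_\eta}=\{\theta\in B_\bzero(R_\lambda):\operatorname{dist}(\theta,\cM_0)\le\epsilon_\eta\}$ beginning at $K_1$; possibly $K_2=K_1$ or $K_2=+\infty$. The loss and gradient bounds on $[K_1,K_2]$ then hold by construction. The contraction still holds for every $k\le K_2$, because every relevant iterate has norm at least $2\epsilon_\cL/\lambda$:
\begin{itemize}
\item iterates before $K_1$ satisfy this by your crossing argument;
\item every point of $\cM_{0,\epsilon_\eta}$ satisfies it, because Assumption~\ref{assump:global_min} forces $\cM_0\cap B_\bzero(2\epsilon_\cL/\lambda+\epsilon_\eta)=\emptyset$.
\end{itemize}
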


In the above theorem, the indices $K_1$ and $K_2$ mark the time interval during which the trajectory stays inside the $\epsilon_\eta$-neighbourhood of the subset $\cM_0$ of global minimizers. The decay factor has two parts: $\eta(1-\eta\mathsf{L}_\lambda/2)$ is the standard GD descent factor coming from local smoothness on $B_\bzero(R_\lambda)$, while $\lambda/(\mathsf{L}_\cM\zeta^2+2)$ is the lower-bound constant for $\|\nabla\cL_\lambda\|^2$ near the set of global minimizers.

Consequently, GD is guaranteed to reach a neighbourhood of the set of global minimizers. Without additional training strategies, such as a learning-rate schedule or injected stochasticity, the theorem does not claim convergence to an arbitrarily small neighbourhood. Rather, for the fixed learning rate $\eta$, it gives $\cL(\theta^k)=\cO(\eta^2)$ and $\|\nabla\cL(\theta^k)\|=\cO(\eta)$ in this time interval.

\section{Generalization Theory}
\label{sec:generalization_theory}
We now turn from optimization to generalization through the analysis of the population loss. The main idea is to compare the network and the ground truth through cellwise representatives determined by the training data. This produces three terms: a data error (Section~\ref{subsec:data_error}), an optimization error (Section~\ref{subsec:optimization_error}), and a prediction variation error (Section~\ref{subsec:prediction_variation_error}). 

The construction starts by assigning each training input to a local cell in the input domain. This requires the training inputs to be distinct. Under the assumption that their joint distribution is absolutely continuous, this holds almost surely, as stated in the following lemma.
\begin{lemma}
    \label{lem:non_overlapping_input}
    Under Assumption~\ref{assump:data}, with probability 1 over the joint distribution $\pi_N$, $x_i\ne x_j$ for all $i\ne j$. 
\end{lemma}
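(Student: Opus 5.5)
The plan is to show that the collision set has Lebesgue measure zero in $\RR^{Nd}$, and then to use absolute continuity of $\pi_N$ from Assumption~\ref{assump:data}.

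Define the collision set
\[
D=\bigcup_{1\le i<j\le N} D_{ij},\qquad D_{ij}=\{(z_1,\cdots,z_N)\in\RR^{Nd}: z_i=z_j\}.
\]
We need $\PP\big((x_1,\cdots,x_N)\in D\big)=0$. Since $D$ is a finite union of $\binom N2$ sets, it suffices to show $\Leb_{Nd}(D_{ij})=0$ for each pair $i<j$.

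Each $D_{ij}$ is a linear subspace of $\RR^{Nd}$. It is the kernel of the surjective linear map $(z_1,\cdots,z_N)\mapsto z_i-z_j\in\RR^d$. Its dimension is therefore $(N-1)d<Nd$, since $d\ge1$. A proper linear subspace has Lebesgue measure zero. A quick way to see this is Fubini: for fixed $(z_k)_{k\ne j}$, the $z_j$-section of $D_{ij}$ is the single point $\{z_i\}$, which has $\Leb_d$-measure zero, and integrating over the remaining coordinates gives $\Leb_{Nd}(D_{ij})=0$.

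By Assumption~\ref{assump:data}, $\pi_N\ll\Leb_{Nd}$, and the law of $(x_1,\cdots,x_N)$ is $\pi_N(z)\,dz$. Hence
\[
\PP\big((x_1,\cdots,x_N)\in D\big)=\int_D \pi_N(z)\,dz=0.
\]
Concretely, this integral vanishes because $D$ is Lebesgue-null. The bound $\pi_N\in L^\infty$ also gives it directly, since it is at most $\|\pi_N\|_\infty\Leb_{Nd}(D)=0$. Taking the complement, with probability $1$ we have $x_i\ne x_j$ for all $i\ne j$.

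No step is a real obstacle. The only point needing care is the measure-zero claim for $D_{ij}$, and the Fubini section argument settles it. Neither the support condition on $\Omega^N$ nor any independence of the inputs is used.
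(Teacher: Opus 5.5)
Your proof is correct and follows the same route as the paper. The paper also notes that each collision set $D_{ij}$ lies in a lower-dimensional affine subspace, hence is Lebesgue-null, uses $\pi_N\ll\Leb_{Nd}$ to get $\pi_N(D_{ij})=0$, and concludes with a union bound over the $\binom{N}{2}$ pairs; your Fubini section argument just makes explicit the measure-zero step that the paper asserts without proof.
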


On this probability-one event, each sample can serve as the representative of one cell. More precisely, we introduce the following notion of a partition.

\begin{definition}[Partition up to $\pi$-null sets]
    \label{def:partition_pi_null}
    Given pairwise distinct training inputs $x_1,\cdots,x_N\in\Omega$, we say that $\{\Omega_i\}_{i=1}^N$ is a partition of $\Omega$ up to $\pi$-null sets adapted to $\{x_i\}_{i=1}^N$ if each $\Omega_i\subseteq\Omega$ is measurable and the following conditions hold:
    \begin{enumerate}
        \item $\Omega=\bigcup_{i=1}^N\Omega_i$;
        \item $x_i\in\Omega_i^o$ and $\pi(\Omega_i)>0$ for every $i=1,\cdots,N$;
        \item $\pi(\Omega_i\cap\Omega_j)=0$ for every $i\ne j$.
    \end{enumerate}
\end{definition}
Since all population quantities below are integrated with respect to $\pi(x)dx$, it is enough to require the intersections of distinct cells to have $\pi$-measure zero, so common boundaries do not affect these quantities. For the remainder of this section, we fix such a partition $\{\Omega_i\}_{i=1}^N$, while the theory is stated for any choice satisfying Definition~\ref{def:partition_pi_null}. One canonical example is the Voronoi partition generated by $\{x_i\}_{i=1}^N$, although other choices may better reflect the geometry of the input distribution or the regularity of the target function; see Section~\ref{subsec:data_error}.

\begin{figure}
    \centering
    \begin{tikzpicture}[scale=1.1]

	    \draw[->] (-0.5,0) -- (7.5,0) node[right] {$x$};

			    \foreach \x in {-0.35,1.0,2.6,3.8,5.9,7.35}
			        \draw[dashed, gray] (\x,-0.12) -- (\x,2.85);
			    \node[below] at (0.325,-0.12) {$\Omega_1$};
			    \node[below] at (1.8,-0.12) {$\Omega_2$};
			    \node[below] at (3.2,-0.12) {$\Omega_3$};
			    \node[below] at (4.85,-0.12) {$\Omega_4$};
			    \node[below] at (6.625,-0.12) {$\Omega_5$};
		    \draw[dashed, black] (1.6,0) -- (1.6,1.9);

		    \coordinate (P1) at (0,0.6);
		    \coordinate (P2) at (1.6,1.5);
			    \coordinate (P3) at (3.2,0.85);
			    \coordinate (P4) at (4.7,2.35);
			    \coordinate (P5) at (6.8,1.1);
		    \coordinate (Q1) at (0,0.42);
		    \coordinate (Q2) at (1.6,1.70);
		    \coordinate (Q3) at (3.2,1.08);
		    \coordinate (Q4) at (4.7,1.95);
		    \coordinate (Q5) at (6.8,1.30);
		    \coordinate (R1) at (0,0.32);
		    \coordinate (R2) at (1.6,1.80);
		    \coordinate (R3) at (3.2,0.965);
		    \coordinate (R4) at (4.7,2.15);
		    \coordinate (R5) at (6.8,0.95);

	    \draw[blue, thick, smooth] plot coordinates {
		        (P1) (P2) (P3) (P4) (P5)
		    };
	    \draw[red, thick, smooth] plot coordinates {
		        (Q1) (Q2) (Q3) (Q4) (Q5)
		    };

					    \foreach \xleft/\xright/\y in {
					        -0.35/1.0/0.6,
					        1.0/2.6/1.5,
						        2.6/3.8/0.85,
					        3.8/5.9/2.35,
				        5.9/7.35/1.1
					    }
						        \draw[blue, thick] (\xleft,\y) -- (\xright,\y);

					    \foreach \xleft/\xright/\y in {
					        -0.35/1.0/0.32,
					        1.0/2.6/1.80,
						        2.6/3.8/0.965,
						        3.8/5.9/2.15,
					        5.9/7.35/0.95
					    }
						        \draw[green!60!black, thick] (\xleft,\y) -- (\xright,\y);

					    \foreach \r in {R1,R2,R3,R4,R5}
					        \fill[green!60!black] (\r) circle (2pt);
					    \foreach \q in {Q1,Q2,Q3,Q4,Q5}
					        \fill[red] (\q) circle (2pt);
					    \foreach \p in {P1,P2,P3,P4,P5}
					        \fill[blue] (\p) circle (2pt);
	                        \node[blue, above right] at (3.9,0.48) {$f(\theta;x_3)$};
	                         \node[blue, above right] at (2.8,1.6) {$f(\theta;x)$};
							    \node[red, below right] at (4.9,1.85) {$g(x)$};
							    \node[green!60!black, above] at (0.8,1.60) {$y_2$};
							    \node[black, below] at (1.9,0.35) {$x_2$};

\end{tikzpicture}
\caption{A schematic illustration of the population error decomposition. The blue and red smooth curves represent the network output $f(\theta;x)$ and the ground truth $g(x)$, respectively. The blue and green piecewise-constant functions represent $\bar f_\theta(x)$ and $\bar y(x)$, respectively; on each cell $\Omega_i$, they take the values $f(\theta;x_i)$ and $y_i$.}
\label{fig:population_error_decomposition}
\end{figure}

We now use the fixed partition to decompose the population error. For each cell $\Omega_i$, the training point $x_i$ serves as the representative input. Then, we define the corresponding piecewise constant cellwise network and label representative functions by
\begin{align*}
    \bar f_\theta(x):=\sum_{i=1}^N f(\theta;x_i)\bbone_{x\in\Omega_i},
    \qquad
    \bar y(x):=\sum_{i=1}^N y_i\bbone_{x\in\Omega_i}.
\end{align*}
In Figure~\ref{fig:population_error_decomposition}, $\bar f_\theta$ is shown by the blue horizontal pieces and $\bar y$ by the green horizontal pieces. The population error is then compared along the chain $f(\theta;\cdot)\to \bar f_\theta\to \bar y\to g$. 
Therefore, by Jensen's inequality, the population loss decomposes into three errors:
\begin{align*}
    \EE_{x\sim\pi}\|f(\theta;x)-g(x)\|^2
    &\le
    \underbrace{3\,\EE_{x\sim\pi}
    \left\|f(\theta;x)-\bar f_\theta(x)\right\|^2}_{E_{\rm PV}(\theta)}
    +\underbrace{3\,\EE_{x\sim\pi}
    \left\|\bar f_\theta(x)-\bar y(x)\right\|^2}_{E_{\rm opt}(\theta)}
    +\underbrace{3\,\EE_{x\sim\pi}
    \left\|\bar y(x)-g(x)\right\|^2}_{E_{\rm data}}.
\end{align*}
The three terms are the squared distances between consecutive functions in the chain above. The data error $E_{\rm data}$ is independent of training once the samples and partition are fixed. The optimization error $E_{\rm opt}$ is a weighted empirical error evaluated at the training data, which can be seen more clearly from the following cellwise decomposition. The prediction variation error $E_{\rm PV}$ measures how much the trained network varies within each cell and is the key term for characterizing generalization performance through architecture and optimization dynamics.

Since the cells form a partition up to $\pi$-null sets, the three errors can equivalently be decomposed into local cell contributions:
\begin{align*}
    E_{\rm PV}(\theta)
    &=3 \sum_{i=1}^N\int_{\Omega_i}\pi(x)dx\cdot
    \EE_{\Omega_i}\|f(\theta;x)-f(\theta;x_i)\|^2,\\
    E_{\rm opt}(\theta)
    &=3\sum_{i=1}^N\int_{\Omega_i}\pi(x)dx\cdot\|f(\theta;x_i)-y_i\|^2,\\
    E_{\rm data}
    &=3\sum_{i=1}^N\int_{\Omega_i}\pi(x)dx\cdot \EE_{\Omega_i}\|y_i-g(x)\|^2,
\end{align*}
where $\EE_{\Omega_i}[\cdot]:=\EE_{x\sim\pi}[\cdot\mid x\in\Omega_i]$ is the expectation under the population input distribution conditioned on the event $x\in\Omega_i$. The weights $\int_{\Omega_i}\pi(x)dx$ are the population masses of the cells. They depend both on the partition and on the true input distribution. Hence the same local error can contribute differently to the population loss depending on where it occurs: errors on high-mass cells are emphasized, while errors on low-density cells are downweighted. Note that this partition is not introduced only for theoretical purposes; it also represents how the training inputs are sampled relative to the true distribution. This is especially useful when the sampling distribution differs from the true input distribution. For instance, if the input data are sampled from a multi-modal distribution different from the true distribution, then a partition with balanced masses may not exist.

In the next three subsections, we bound $E_{\rm data}$, $E_{\rm opt}$, and $E_{\rm PV}$ separately.

\subsection{Data error}
\label{subsec:data_error}

In this section, we study the data error $E_{\rm data}$, which is controlled by the noise and the shape of ground truth $g$. The data error is independent of the training dynamics and the network architecture. Once the data and partition are fixed, it measures how well the observed label at each input $x_i$ approximates the ground truth function throughout $\Omega_i$.
To distinguish label noise from within-cell data error, we impose the following standard additive noise assumption.

\begin{assumption}[Output data]
    \label{assump:output_data}
    Assume the output data $$y_i=g(x_i)+\sigma_{\rm data}\xi_i\in\RR^d,$$ where $\sigma_{\rm data}\ge 0$, $\xi_i\overset{i.i.d.}{\sim}\cN(0,I)$ are independent of the training inputs, and $g$ is the ground truth function. 
\end{assumption}

This assumption is used only to make the contributions of ground truth data and noise explicit; it is not required for the population error decomposition. We impose no smoothness, parametric form, or regularity condition on $g$, apart from the basic measurability and integrability. Without this additive noise model, one may instead bound $E_{\rm data}$ directly from the label distribution or other problem-specific information. 

Under the stated model, the two contributions are separated by the following theorem.

\begin{theorem}
    \label{thm:gen_data_error}
Under Assumption~\ref{assump:data} and~\ref{assump:output_data}, with probability $1$ over the joint input distribution $\pi_N$, and with probability at least $1-\delta$ over the output noise,
\begin{align*}
   E_{\rm data}\le \underbrace{6\,\sigma_{\rm data}^2 \left(d+ 2 \sqrt{d\sum_{i=1}^N\left(\int_{\Omega_i}\pi(x)dx\right)^2\log\frac{1}{\delta}}+ 2\log\frac{1}{\delta}\max_{1\le i\le N}\int_{\Omega_i}\pi(x)dx \right)}_{E_{\rm data,I}}+\underbrace{6\,\EE_{x\sim\pi(x)}\left\|\sum_{i=1}^N g(x_i) \bbone_{x\in\Omega_i}-g(x)\right\|^2}_{E_{\rm data,II}}.
\end{align*}
    
\end{theorem}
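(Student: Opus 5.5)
On the probability-one event of Lemma~\ref{lem:non_overlapping_input}, the training inputs are distinct, so the partition $\{\Omega_i\}_{i=1}^N$ of Definition~\ref{def:partition_pi_null} exists and is fixed. I will work conditionally on the inputs, and the only randomness left is the noise $\xi_i$. Under Assumption~\ref{assump:output_data}, on each cell we have $y_i-g(x)=(g(x_i)-g(x))+\sigma_{\rm data}\xi_i$. The elementary inequality $\|a+b\|^2\le 2\|a\|^2+2\|b\|^2$ then splits $E_{\rm data}$ into two pieces. The first is
\[
6\sum_{i=1}^N p_i\,\EE_{\Omega_i}\|g(x_i)-g(x)\|^2=6\,\EE_{x\sim\pi}\Big\|\sum_{i=1}^N g(x_i)\bbone_{x\in\Omega_i}-g(x)\Big\|^2,
\]
where $p_i=\int_{\Omega_i}\pi(x)dx$. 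This equality holds because the cells overlap only on $\pi$-null sets, and it is exactly $E_{\rm data,II}$. The second piece is $6\sigma_{\rm data}^2\sum_i p_i\|\xi_i\|^2$, which is deterministic except for the $\xi_i$.

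The remaining task is a one-sided concentration bound for $Z=\sum_{i=1}^N p_i\|\xi_i\|^2=\sum_{i=1}^N\sum_{j=1}^d p_i\,\xi_{ij}^2$, a weighted sum of $Nd$ independent $\chi^2_1$ variables with nonnegative weights. Since the $p_i$ come from a partition up to $\pi$-null sets, $\sum_i p_i=1$, so $\EE Z=d$. I will apply the Laurent--Massart inequality. For weights $a_k\ge0$ it gives $\PP\big(\sum_k a_k(\xi_k^2-1)\ge 2\|a\|_2\sqrt{t}+2\|a\|_\infty t\big)\le e^{-t}$. Each weight $p_i$ is repeated $d$ times, so $\|a\|_2^2=d\sum_i p_i^2$ and $\|a\|_\infty=\max_i p_i$. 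Taking $t=\log(1/\delta)$ yields, with probability at least $1-\delta$,
\[
Z\le d+2\sqrt{d\sum_{i=1}^N p_i^2\log\tfrac1\delta}+2\log\tfrac1\delta\,\max_i p_i .
\]
Multiplying by $6\sigma_{\rm data}^2$ gives $E_{\rm data,I}$.

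There are two points to check. First, the noise is independent of the inputs, and hence of the input-dependent partition and the weights $p_i$, so the conditional application of Laurent--Massart is valid. Second, I need $\sum_i p_i=1$, which uses $\Omega=\bigcup_i\Omega_i$ together with the fact that $\pi$ is supported in $\Omega$ (or else one gets the bound $\le 1$, which is also enough). The concentration step is the only nontrivial part. The main care is matching the constants $2\sqrt{\cdot}$ and $2t$ to the weighted Laurent--Massart form, and this follows directly from their Lemma~1.
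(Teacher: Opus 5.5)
Your proposal is correct and follows essentially the same route as the paper's proof: you condition on the probability-one distinct-input event, apply the split $\|a+b\|^2\le 2\|a\|^2+2\|b\|^2$ cellwise to isolate $E_{\rm data,II}$ and $6\sigma_{\rm data}^2\sum_i p_i\|\xi_i\|^2$, and then use the Laurent--Massart bound with weights $p_i$ repeated $d$ times and $t=\log(1/\delta)$. Your extra remarks, that the noise is independent of the input-dependent weights and that $\sum_i p_i\le 1$ would already suffice, are correct and only make explicit what the paper's proof uses implicitly.
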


The first term $E_{\rm data,I}$ captures the Gaussian label noise: the output noise enters through $\sigma_{\rm data}$, while the input distribution and partition enter through the cell masses $\int_{\Omega_i}\pi(x)dx$. Among nonnegative masses with total mass one, the balanced choice minimizes this term:
\begin{align*}
    \min_{\{\Omega_i\}}E_{\rm data,I}
    =6\,\sigma_{\rm data}^2 \left(d+ 2 \sqrt{\frac{d}{N}\log\frac{1}{\delta}}+ 2\frac{1}{N}\log\frac{1}{\delta} \right)
\end{align*}
when $\int_{\Omega_i}\pi(x)dx=\int_{\Omega_j}\pi(x)dx$ for all $i,j=1,\cdots,N$. This minimization concerns only the noise contribution. The deterministic term $E_{\rm data,II}$ measures how well the piecewise constant function $\sum_{i=1}^N g(x_i)\bbone_{x\in\Omega_i}$ approximates $g$ under the population distribution; it depends jointly on the inputs, the partition, and the within-cell variation of $g$. Thus, the partition creates a tradeoff: balanced cell masses reduce the first term, whereas geometry-adapted cells reduce the deterministic term; additionally, this tradeoff should be considered under the full population error, since the same partition also enters $E_{\rm opt}$ and $E_{\rm PV}$.

\subsection{Optimization error}
\label{subsec:optimization_error}

In this section, we bound the optimization error $E_{\rm opt}$ through the optimization dynamics and the population cell masses. The optimization dynamics have already been analyzed in Section~\ref{sec:optimization_theory}. In particular, Theorem~\ref{thm:convergence_to_global_min} bounds the empirical loss once the GD dynamics with weight decay reach a neighbourhood of the global minimizer set of $\cL$. Here we translate that training guarantee into a bound on $E_{\rm opt}$, which weights the residual at each data point by its population cell mass. This requires only an additional control by the largest cell mass and gives the following estimate.

\begin{corollary}
    \label{cor:gen_optimization_error}
    Under the same assumptions as Theorem~\ref{thm:convergence_to_global_min}, for every $K_1\le k\le K_2$,
     \begin{align*}
    E_{{\rm opt}}(\theta^k)\le \left(3 N \max_{1\le i\le N}\int_{\Omega_i}\pi(x)dx\right)\mathsf{L}_{\cM_{0}} \epsilon_\eta^2.
\end{align*}
 \end{corollary}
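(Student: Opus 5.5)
The plan is to start from the cellwise form of the optimization error and pull out the largest cell mass. Since $\{\Omega_i\}$ is a partition up to $\pi$-null sets, we have
\begin{align*}
    E_{\rm opt}(\theta^k)=3\sum_{i=1}^N\int_{\Omega_i}\pi(x)dx\cdot\|f(\theta^k;x_i)-y_i\|^2
    \le 3\left(\max_{1\le i\le N}\int_{\Omega_i}\pi(x)dx\right)\sum_{i=1}^N\|f(\theta^k;x_i)-y_i\|^2 .
\end{align*}
By the definitions of $l$ and $\cL$, the remaining sum equals $2N\,\cL(\theta^k)$. This reduces the corollary entirely to the empirical-loss bound of Theorem~\ref{thm:convergence_to_global_min}.

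Because the corollary assumes everything in Theorem~\ref{thm:convergence_to_global_min}, that theorem applies. It gives $\cL(\theta^k)\le\frac12\mathsf{L}_{\cM_0}\epsilon_\eta^2$ for every $K_1\le k\le K_2$. Substituting,
\begin{align*}
    E_{\rm opt}(\theta^k)\le 3\left(\max_i\int_{\Omega_i}\pi\right)\cdot 2N\cdot\tfrac12\mathsf{L}_{\cM_0}\epsilon_\eta^2
    =\left(3N\max_i\int_{\Omega_i}\pi(x)dx\right)\mathsf{L}_{\cM_0}\epsilon_\eta^2 .
\end{align*}
The factors $2$ and $\frac12$ cancel, which is exactly the stated bound.

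The only point that needs care is that the partition must exist. This requires the $x_i$ to be pairwise distinct, which holds with probability one by Lemma~\ref{lem:non_overlapping_input}. That almost-sure qualifier is already part of the probability-one statement in Theorem~\ref{thm:convergence_to_global_min}, so no new exceptional set arises. Beyond this, the argument is a direct substitution, and I do not expect any real obstacle.
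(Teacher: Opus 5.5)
Your proposal is correct and follows essentially the same route as the paper's proof. Both write $E_{\rm opt}$ in its cellwise form, bound it by $6N\max_i\int_{\Omega_i}\pi(x)dx\cdot\cL(\theta^k)$, and apply the bound $\cL(\theta^k)\le\frac12\mathsf{L}_{\cM_0}\epsilon_\eta^2$ from Theorem~\ref{thm:convergence_to_global_min}, with the distinct-inputs event from Lemma~\ref{lem:non_overlapping_input} absorbed into that theorem's probability-one event.
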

This bound separates the effects of training and population weighting. The training contribution $\frac{1}{2}\mathsf{L}_{\cM_{0}} \epsilon_\eta^2$ comes from Theorem~\ref{thm:convergence_to_global_min} and quantifies the empirical loss achieved by the GD dynamics with weight decay. The cell-mass factor $6N \max_{1\le i\le N}\int_{\Omega_i}\pi(x)dx$ is the additional control needed to convert the equally weighted empirical loss into the population-weighted optimization error; it depends on the partition and the population input distribution rather than on training. This separation motivates a population-aligned training objective through loss reweighting.

\subsubsection{Optimization error and loss reweighting}
The preceding bound in Corollary~\ref{cor:gen_optimization_error} suggests that optimizing the empirical loss alone may not be enough to control the population-weighted optimization error; the conversion from empirical optimization to population error depends on how the population mass is distributed across the partition cells. Next, we compare the balanced and imbalanced cell masses, and discuss the insights of loss reweighting.

We first consider the balanced case. Since the cell masses sum to one, the imbalance factor satisfies
$$N \max_{1\le i\le N}\int_{\Omega_i}\pi(x)dx\ge 1.$$
Equality holds when there exists a partition up to $\pi$-null sets (Definition~\ref{def:partition_pi_null}) such that $\int_{\Omega_i}\pi(x)dx=\int_{\Omega_j}\pi(x)dx$ for all $i,j=1,\cdots,N$. In this case, the population and empirical weights coincide:
\begin{align*}
    E_{{\rm opt}}(\theta^k)=6\cL(\theta^k).
\end{align*}
Thus the standard empirical loss is already aligned with $E_{\rm opt}$, and Theorem~\ref{thm:convergence_to_global_min} gives $E_{{\rm opt},K}\le 3\mathsf{L}_{\cM_{0}}\epsilon_\eta^2$. 

However, balancing the cell masses may increase the deterministic data error $E_{\rm data,II}$ by producing cells poorly adapted to the geometry of $g$. Therefore, even when such a balanced partition exists, it need not be optimal for the total population error. A partition that achieves a smaller total error may therefore be imbalanced. A data point from a high-mass cell describes a larger portion of the population but can still receive weight $1/N$ in the standard empirical loss. This mismatch motivates the reweighted empirical loss
\begin{align*}
    \cL_{\rm re}(\theta^k)=\frac{1} {2}\sum_{i=1}^N\int_{\Omega_i}\pi(x)dx\cdot \|f(\theta^k;x_i)-y_i\|^2.
\end{align*}
With this definition, the optimization term is exactly
\begin{align*}
         E_{{\rm opt}}(\theta^k)= 6\cL_{\rm re}(\theta^k).
    \end{align*}
Thus minimizing $\cL_{\rm re}$ directly controls the weighted residuals in the population error decomposition. If a training strategy reduces $\cL_{\rm re}$ to the same scale as the standard empirical loss, then the optimization contribution avoids the imbalance factor $N\max_{1\le i\le N}\int_{\Omega_i}\pi(x)dx$. Consequently, reweighting aligns the training objective more directly with the population error, which is consistent with its use in imbalanced learning \citep{cui2019class,lin2017focal,park2021influence,cao2019learning,menon2020long,li2021autobalance}.

Loss reweighting is not the only way to reduce this contribution. Even without it, sufficiently long training of the standard loss can sometimes compensate for the imbalance factor by driving $\cL(\theta^k)$ to a smaller value. This is compatible with observations that, in some regimes, the standard loss can approach the performance of reweighted losses when optimized for a sufficiently long time \citep{byrd2019effect,sagawa2020investigation,kini2021label,xu2021understanding}.

\subsection{Prediction variation error}
\label{subsec:prediction_variation_error}

In this section, we study the remaining term $E_{\rm PV}$, which is the most important term in generalization. This term measures how much the network prediction varies within each partition cell relative to its value at the training input, and captures the effects of architecture and training dynamics that are not reflected in $E_{\rm data}$ or $E_{\rm opt}$. We first introduce (local) approximate homogeneity to quantify the behavior of nonlinear blocks under weight decay (Section~\ref{subsubsec:approximate_homogeneity}). Next, we establish the evolution of prediction variation error $E_{\rm PV}$ under the GD dynamics with weight decay, separating contraction from the approximate homogeneity, empirical gradient, and regularized objective contributions (Section~\ref{subsubsec:prediction_variation_dynamics}). We then refine this analysis to cellwise and layerwise prediction variations (Section~\ref{subsubsec:layerwise_prediction_variation}). Based on this refinement, we derive a necessary condition for relative layerwise error decay, which characterizes differences in learning behavior across layers (Section~\ref{subsubsec:necessary_generalization}). Finally, we establish a sufficient condition showing that a small approximate homogeneity error, a small empirical gradient and regularized objective in the neighbourhood of global minimizers, and a sufficiently long time in that region guarantee layerwise decay, providing a mechanism for delayed generalization and grokking (Section~\ref{subsubsec:sufficient_generalization}).

\subsubsection{Approximate homogeneity}
\label{subsubsec:approximate_homogeneity}

Homogeneity records how a function changes under rescaling. In this section, we introduce basic definitions and properties of homogeneous functions, including exact homogeneity, Euler's homogeneous function theorem, approximate homogeneity, and local approximate homogeneity. These notions allow us to quantify when and how a nonlinear block of a neural network behaves, either globally or locally, like a function with a well-defined scaling degree.

We first recall the standard notion of homogeneity.

\begin{definition}[Homogeneity]
    \label{def:homogeneity}
    A function $\psi:\Omega\subseteq\RR^m\to\RR^n$ is homogeneous of degree $s$ if 
    \begin{align*}
        \psi(tx)=t^s\psi(x),\text{ for }x,tx\in\Omega,\ t\in\RR.
    \end{align*}
    If the above equation holds for $t\ge 0$, then $\psi$ is positively homogeneous of degree $s$.
\end{definition}

This definition says that, when the input is scaled by $t$, then the output is scaled by $t^s$. The following theorem connects this property to a differential equation.

\begin{theorem}[Euler's homogeneous function theorem]
    If a function $\psi:\Omega\subseteq\RR^m\to\RR^n$ is positively homogeneous of degree $s$ and $\psi$ is $C^1(\Omega^\mathrm{o})$, then it satisfies Euler's equation
    \begin{align*}
        s\, \psi(x)=\nabla \psi(x)\, x,\quad\text{ for all }x\in\Omega^{\mathrm{o}}.
    \end{align*}
\end{theorem}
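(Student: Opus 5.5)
The plan is the classical argument: differentiate the scaling identity in the scaling parameter and evaluate at $t=1$. Fix $x\in\Omega^{\mathrm o}$. Since $\Omega^{\mathrm o}$ is open, there is $\varepsilon>0$ such that $tx\in\Omega^{\mathrm o}$ for all $t\in(1-\varepsilon,1+\varepsilon)$. Shrinking $\varepsilon$ below $1$, these $t$ are positive. Define the curve $\gamma(t)=tx$ and the auxiliary map $h(t):=\psi(\gamma(t))=\psi(tx)$ on this interval.

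We will compute $h'(1)$ in two ways. First, because $\psi\in C^1(\Omega^{\mathrm o})$ and $\gamma$ is smooth with $\gamma'(t)=x$, the chain rule gives $h'(t)=\nabla\psi(tx)\,x$, so $h'(1)=\nabla\psi(x)\,x$. Second, positive homogeneity of degree $s$ gives $h(t)=t^s\psi(x)$ for all $t$ in the interval. This uses only $t>0$ together with $x,tx\in\Omega$. Differentiating this scalar-times-constant-vector expression in $t$ yields $h'(t)=s\,t^{s-1}\psi(x)$, hence $h'(1)=s\,\psi(x)$. Equating the two expressions gives $s\,\psi(x)=\nabla\psi(x)\,x$, and since $x\in\Omega^{\mathrm o}$ was arbitrary the identity holds on all of $\Omega^{\mathrm o}$. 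Componentwise, the same argument applies to each of the $n$ output coordinates, so the vector-valued case needs no extra work.

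The only step requiring care is ensuring the scaling relation is available on a full neighbourhood of $t=1$, so that the derivative of $t\mapsto t^s\psi(x)$ can legitimately be identified with $h'(1)$. This is exactly where openness of $\Omega^{\mathrm o}$ and the restriction to $t$ near $1$, hence positive, are used. No differentiability of $\psi$ at boundary points or at $0$ is needed. The case $x=0$, if $0\in\Omega^{\mathrm o}$, is covered by the same computation: both sides equal $0$ when $s\neq 0$, and the argument is uniform in any case. I do not expect any genuine obstacle beyond this bookkeeping.
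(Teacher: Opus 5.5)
Your proof is correct and is the argument the paper has in mind: the paper gives no formal proof, only the remark that Euler's equation follows by differentiating $\psi(tx)=t^s\psi(x)$ in $t$ at $t=1$. Your proposal fills in the details — openness of $\Omega^{\mathrm o}$ keeps $tx$ in the domain for $t$ near $1$, the chain rule is applied to $t\mapsto\psi(tx)$, and $t\mapsto t^s\psi(x)$ is differentiated for $t>0$.
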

Euler's equation can be viewed as an infinitesimal statement of homogeneity. It is obtained by differentiating the scaling identity $\psi(tx)=t^s\psi(x)$ with respect to $t$ at $t=1$. The vector $\nabla\psi(x)x$ consists of the directional derivatives of the components of $\psi$ in the direction $x$, so the theorem says that exact homogeneity is equivalent to these directional derivatives being exactly $s$ times the corresponding output components. 

In practice, however, modern neural network blocks are often not exactly homogeneous. Inspired by \citet{cimpean2011hyers}, we introduce an approximate version of homogeneity by measuring the residual $s\psi(x)-\nabla\psi(x)x$ from Euler's equation.
\begin{definition}[Approximate homogeneity]
    \label{def:approx_homogeneity}
    Let $\psi(\theta,u):\Omega_\theta\times\Omega_u\subseteq \mathbb R^k\times\mathbb R^m\to \mathbb R^n$ be $C^1$. We say that
         $\psi(\theta,u)$ is $\epsilon$-approximately homogeneous of degree $s$ with respect to $u$ if 
         \begin{align*}
             \sup_{(\theta,u)\in \Omega_\theta\times\Omega_u}\|s\, \psi(\theta,u)-\nabla_u \psi(\theta,u)\, u\|\le \epsilon.
         \end{align*}
\end{definition}
This definition requires the Euler residual with respect to the input variable $u$ to be uniformly bounded over the region $\Omega_\theta\times\Omega_u$, rather than requiring the exact scaling relation $\psi(\theta,tu)=t^s\psi(\theta,u)$. The parameter $\epsilon$ quantifies how far the block is from being homogeneous of degree $s$ in this componentwise directional-derivative sense. When $\epsilon=0$, the definition recovers the exact homogeneity with respect to $u$.

In some situations, a single global degree may be too restrictive. Hence, we localize the preceding condition by allowing the effective degree to depend on the region where the function is evaluated.

\begin{definition}[$(\epsilon,D_\theta,D_u)$-local approximate homogeneity]
    \label{def:local_approx_homogeneity}
Let $\psi(\theta,u):\Omega_\theta\times\Omega_u\subseteq \mathbb R^k\times\mathbb R^m\to \mathbb R^n$ be $C^1$. We say that 
$\psi(\theta,u)$ is $(\epsilon,D_\theta,D_u)$-locally approximately homogeneous with respect to $\theta$ if, for every product subset $\widetilde\Omega_\theta\times\widetilde\Omega_u$ satisfying $\diam(\widetilde\Omega_\theta)\le D_\theta$ and $\diam(\widetilde\Omega_u)\le D_u$, there exists $s_{\widetilde\Omega_\theta,\widetilde\Omega_u}\ge 0$ such that
\[
\sup_{(\theta,u)\in \widetilde\Omega_\theta\times\widetilde\Omega_u}
\left\|
s_{\widetilde\Omega_\theta,\widetilde\Omega_u}\psi(\theta,u)
-
\nabla_\theta\psi(\theta,u)\theta
\right\|
\le \epsilon .
\]
\end{definition}
The local version imposes a uniform Euler residual condition on every product region whose $\theta$-diameter is at most $D_\theta$ and whose $u$-diameter is at most $D_u$. The degree $s_{\widetilde\Omega_\theta,\widetilde\Omega_u}$ is allowed to vary from region to region, so the definition measures whether the function has a locally consistent scaling behavior rather than a single global degree. Smaller values of $D_\theta$ and $D_u$ make the condition more local; smaller values of $\epsilon$ make the local approximation of homogeneity more accurate.

\subsubsection{Dynamics of the prediction variation error}
\label{subsubsec:prediction_variation_dynamics}

In this section, we bound the change in the prediction variation error $E_{\rm PV}$ along the GD dynamics. After the data error and the optimization error are controlled, i.e., after the empirical loss reaches its global minimum, good generalization requires the learned network to have small within-cell oscillation: when the data are good enough to represent the ground truth function, for most $x\in\Omega_i$, the prediction $f(\theta;x)$ should not be too far from the representative prediction $f(\theta;x_i)$. Thus, when the input partition and the output data are sufficiently accurate, the remaining prediction variation is governed by how much the network varies inside each cell. The goal of this section is to show that this oscillation can be reduced along the trajectory generated by the GD dynamics with weight decay when the architecture is (locally) approximately homogeneous and the training is sufficient.

To make the contribution of each layer explicit, we introduce additional structure into the neural network.
\begin{assumption}[Architecture]
    \label{assump:architecture_generalization}
    Assume 
    \begin{align*}
        \varphi_L(\theta_L;u)=\theta_Lu\quad\text{ and }\quad\varphi_\ell(\theta_\ell;u)=\theta_{\ell,\rm out}\sigma_\ell(\theta_{\ell,\rm in};\tau_\ell(u)),
    \end{align*}
    where $\theta_{\ell,\rm out}\in\RR^{d\times m_\ell}$, $\theta_{\ell,\rm in}\in\RR^{m_\ell\times d}$; $\sigma_\ell(\cdot)$ and $\tau_\ell(\cdot)$ are analytic.
\end{assumption}
The assumption of a linear final layer is necessary for studying layerwise behavior. The block structure separating the inner and outer weights is commonly used in practice. It will be used to identify their distinct roles in generalization but is not necessary for quantifying the generalization error.

Before studying the dynamics, we first decompose the prediction variation into layerwise contributions in the following lemma.
\begin{lemma}[Layerwise decomposition of prediction variation error]
    \label{lem:prediction_variation_layer_decomposition}
    Under Assumption~\ref{assump:architecture_generalization}, the network output satisfies
    \begin{align*}
        f(\theta;x)=\theta_Lx+\sum_{\ell=0}^{L-1}\theta_L\varphi_\ell(\theta_\ell;u_\ell(\theta_{\ell-1:0};x)).
    \end{align*}
    Consequently, the prediction variation satisfies
    \begin{align*}
        E_{\rm PV}(\theta)
        &\le E_{{\rm PV,layer}}(\theta):= 3(L+1)\left(V_{\rm pre}(\theta)+\sum_{\ell=0}^{L-1}V_\ell(\theta)\right),
    \end{align*}
    where the pre-layer and layerwise prediction variations are, respectively,
    \begin{align*}
        V_{\rm pre}(\theta)&:=\sum_{i=1}^N\left(\int_{\Omega_i}\pi(x)dx\right)\,\EE_{\Omega_i}\|\theta_Lx-\theta_Lx_i\|^2,\\
        V_\ell(\theta)&:=\sum_{i=1}^N\left(\int_{\Omega_i}\pi(x)dx\right)\,\EE_{\Omega_i}\|\theta_L\varphi_\ell(\theta_\ell;u_\ell(\theta_{\ell-1:0};x))-\theta_L\varphi_\ell(\theta_\ell;u_\ell(\theta_{\ell-1:0};x_i))\|^2,
    \end{align*}
    for $\ell=0,\cdots,L-1$.
\end{lemma}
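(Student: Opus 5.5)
The plan is to unroll the residual recursion~\eqref{eqn:NN} and then apply a Cauchy--Schwarz (Jensen) inequality to the resulting sum of $L+1$ terms.

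\textbf{Step 1: the output formula.} Writing $u_\ell(\theta_{\ell-1:0};x)$ for the hidden state at layer $\ell$, it depends only on $\theta_0,\dots,\theta_{\ell-1}$ and $x$. Induction on the recursion $u_{\ell+1}=u_\ell+\varphi_\ell(\theta_\ell;u_\ell)$ with $u_0=x$ gives
\[
u_L(\theta_{L-1:0};x)=x+\sum_{\ell=0}^{L-1}\varphi_\ell(\theta_\ell;u_\ell(\theta_{\ell-1:0};x)).
\]
By Assumption~\ref{assump:architecture_generalization}, $f(\theta;x)=\varphi_L(\theta_L;u_L)=\theta_L u_L$ is linear in $u_L$. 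Distributing $\theta_L$ over the sum yields the stated formula. Only the linearity of the last layer is used here; the inner/outer block structure of $\varphi_\ell$ plays no role.

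\textbf{Step 2: the within-cell difference.} Fix a cell $\Omega_i$ and a point $x\in\Omega_i$. Apply the formula at $x$ and at $x_i$ and subtract. The difference $f(\theta;x)-f(\theta;x_i)$ splits into $L+1$ vectors:
\begin{itemize}
\item the pre-layer term $a_{\rm pre}=\theta_Lx-\theta_Lx_i$;
\item for each $\ell=0,\dots,L-1$, the term $a_\ell=\theta_L\varphi_\ell(\theta_\ell;u_\ell(\theta_{\ell-1:0};x))-\theta_L\varphi_\ell(\theta_\ell;u_\ell(\theta_{\ell-1:0};x_i))$.
\end{itemize}

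\textbf{Step 3: Cauchy--Schwarz.} For any $L+1$ vectors,
\[
\Big\|\sum_{j=1}^{L+1}a_j\Big\|^2\le (L+1)\sum_{j=1}^{L+1}\|a_j\|^2 .
\]
This bounds $\|f(\theta;x)-f(\theta;x_i)\|^2$ pointwise by $(L+1)$ times the sum of the squared layer differences.

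\textbf{Step 4: integrate over cells.} Take the conditional expectation $\EE_{\Omega_i}$, multiply by the cell mass $\int_{\Omega_i}\pi(x)dx$, and sum over $i$. Linearity of the expectation and of the cell sum turns the left side into $E_{\rm PV}(\theta)/3$, using the cellwise expression of $E_{\rm PV}$ derived after Definition~\ref{def:partition_pi_null}. The right side becomes $(L+1)\big(V_{\rm pre}(\theta)+\sum_\ell V_\ell(\theta)\big)$. Multiplying by $3$ gives the claim.

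\textbf{Main obstacle.} There is no real difficulty; the point needing care is bookkeeping. The argument must keep $u_\ell$ as a function of the earlier parameters only, so that each $V_\ell$ matches the statement. Integrability of each term is assumed implicitly, as is already done in the decomposition of the population error.
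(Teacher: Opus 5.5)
Your proposal is correct and follows the paper's proof. You unroll the residual recursion and use the linear last layer to get the output identity, then bound the squared sum of the $L+1$ within-cell differences by $(L+1)$ times the sum of squares (the Jensen/Cauchy--Schwarz step the paper invokes) and weight by the cell masses, spelling out bookkeeping the paper leaves implicit.
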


When the last layer is linear, the neural network can be decomposed into a sum of the contributions from individual layers. As with $\bar f_\theta$ and $\bar y$, each layerwise variation can equivalently be represented using a piecewise-constant function that takes the value of the corresponding layer contribution at $x_i$ on $\Omega_i$. Thus, $V_{\rm pre}$ and $V_\ell$ measure the mean squared changes in these contributions when an input in a cell is replaced by its representative. Lemma~\ref{lem:prediction_variation_layer_decomposition} then shows that controlling every layerwise variation is sufficient to control $E_{\rm PV}$. 

Next, we introduce a structural condition that allows these layerwise quantities to be bounded along the GD dynamics with weight decay. In particular, we impose (local) approximate homogeneity at the parameter values and hidden states encountered within each partition cell.
\begin{assumption}[Approximate homogeneity]
    \label{assump:approximate_homogeneity}
    Assume that, for $\ell=0,\cdots,L-1$:
    \begin{enumerate}
        \item $\varphi_\ell(\theta_\ell;u)$ is $\epsilon_{\rm ah}$-approximately homogeneous of degree $1$ with respect to $u$ in $B_\bzero(R_\lambda)\times\Omega$.
        \item $\varphi_\ell(\theta_\ell;u)$ is $(\epsilon_{\rm ah},D_\theta,D_u)$-locally approximately homogeneous with respect to $\theta_{\ell,\rm in}$ and $\theta_{\ell,\rm out}$, where
        \begin{align*}
            D_u\ge\max_{1\le i\le N}\sup_{\theta\in B_\bzero(R_\lambda)}\diam(u_\ell(\theta;\{x\in\Omega_i\})).
        \end{align*}
    \end{enumerate}
\end{assumption}
This assumption is essential for quantifying the dynamics of the prediction variation error. One may also impose finer assumptions under which the error parameter $\epsilon_{\rm ah}$ varies across variables and layers. The first condition controls the response of a block to its hidden input, while the second controls its local response to parameter rescaling. The quantity $\epsilon_{\rm ah}$ measures the resulting approximation error and depends on both the architecture and the data partition. The requirement on $D_u$ links data geometry to architecture: smaller diameters of the hidden nodes inside the cells generally lead to smaller approximate homogeneity errors, while more heterogeneous data require blocks with stronger local homogeneity.

Under these conditions, the layerwise prediction variation contracts up to three interpretable residual terms.

\begin{theorem}
    \label{thm:gen_NN_error}
    Under the assumptions of Theorem~\ref{thm:general_convergence_loss_decay}, Assumption~\ref{assump:architecture_generalization}, and Assumption~\ref{assump:approximate_homogeneity}, for $k\ge0$,
    \begin{align*}
        E_{\rm PV,layer}(\theta^{K_0+k})
        \le&\left(1-2\eta\lambda\right)^kE_{\rm PV,layer}(\theta^{K_0})
        +\underbrace{3L(L+1)(L+3)C_{\rm ah}\epsilon_{\rm ah}}_{E_{\rm PV,I}}
        +\underbrace{\frac{3(L+1)^2S_V}{2-\eta\mathsf{L}_\lambda}\cLwd(\theta^{K_0})}_{E_{\rm PV,II}}\\
        &+\underbrace{\frac{3(L+1)C_{\rm grad}}{\lambda}\left(\frac{1}{2}+\sum_{\ell=0}^{L-1}\frac{1}{\ell+2}\right)
        \max_{K_0\le j\le K_0+k}\|\nabla\cL(\theta^j)\|}_{E_{\rm PV,III}},
    \end{align*}
    and
    \begin{align*}
        E_{\rm PV}(\theta^{K_0+k})\le E_{\rm PV,layer}(\theta^{K_0+k}),
    \end{align*}
    where $C_{\rm ah},C_{\rm grad},S_V>0$ are independent of $k,K_0$.
\end{theorem}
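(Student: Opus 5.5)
The second inequality, $E_{\rm PV}\le E_{\rm PV,layer}$, is exactly Lemma~\ref{lem:prediction_variation_layer_decomposition}. The work is therefore in the contraction estimate for $E_{\rm PV,layer}$. I would obtain it layer by layer: derive a one-step recursion of the form $V(\theta^{j+1})\le(1-2\eta\lambda)V(\theta^j)+\text{residual}_j$ for each of $V_{\rm pre}$ and $V_\ell$, unroll it over $j=K_0,\dots,K_0+k-1$, and sum the results with weight $3(L+1)$.

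The mechanism is that weight decay rescales every parameter block by $(1-\eta\lambda)$ at each step, $\theta^{j+1}=(1-\eta\lambda)\theta^j-\eta\nabla\cL(\theta^j)$. For $V_{\rm pre}$ this is exact. The map $\theta_L\mapsto\theta_L(x-x_i)$ is linear, so
\[
\theta_L^{j+1}(x-x_i)=(1-\eta\lambda)\theta_L^j(x-x_i)-\eta\nabla_{\theta_L}\cL(\theta^j)(x-x_i).
\]
Squaring and using $(1-\eta\lambda)^2\le 1-2\eta\lambda+\eta^2\lambda^2$ together with Young's inequality, the cross term is bounded by $\eta\|\nabla\cL\|$ times $\|\theta\|\,\diam(\Omega_i)$. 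Since $\|\theta\|\le R_\lambda$, this term is at most $C\eta\|\nabla\cL(\theta^j)\|$. Summing the geometric series $\sum_j(1-2\eta\lambda)^{k-1-j}\eta\le 1/(2\lambda)$ produces the $\frac{1}{2}\cdot\frac{C_{\rm grad}}{\lambda}\max_j\|\nabla\cL(\theta^j)\|$ piece of $E_{\rm PV,III}$.

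For $V_\ell$ I would write the change of $\theta_L\varphi_\ell(\theta_\ell;u_\ell(\theta_{\ell-1:0};x))$ over one step as a first-order Taylor expansion in all parameters $\theta_L,\theta_\ell,\theta_{\ell-1:0}$. The expansion has three kinds of terms.
\begin{itemize}
\item The decay direction $-\eta\lambda\theta$ contracts the layer contribution. By the local approximate homogeneity in $\theta_{\ell,\rm in},\theta_{\ell,\rm out}$ (part 2 of Assumption~\ref{assump:approximate_homogeneity}), the derivative along $\theta_\ell$ is $s\,\varphi_\ell$ up to $\epsilon_{\rm ah}$. The derivative along the earlier blocks passes through $u_\ell$; by part 1 (degree-one homogeneity in $u$) and induction over layers, it is a multiple of the layer contribution up to $\mathcal O(\ell\epsilon_{\rm ah})$. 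The linear factor $\theta_L$ contributes one more factor. Since the degrees are nonnegative, the total effective rescaling factor is at most $(1-\eta\lambda)^{\ge 2}$, which is at most $(1-2\eta\lambda)$ up to $\cO(\eta^2)$. The cellwise differences are uniform because the hidden-state diameters satisfy $D_u$, so the same local degree applies to $x$ and $x_i$ in one region. Multiplying the per-step $\eta\epsilon_{\rm ah}$ errors by the number of chain-rule terms, and then by $1/\lambda$ from the series with $\lambda$ absorbed, gives the $L(L+3)C_{\rm ah}\epsilon_{\rm ah}$ form of $E_{\rm PV,I}$.
\item The gradient direction $-\eta\nabla\cL$ gives cross terms bounded by $\eta\|\nabla\cL\|$ times polynomial bounds on $B_\bzero(R_\lambda)$. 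This uses the poly-smoothness in Assumption~\ref{assump:model_assumptions} and the fact, from Theorem~\ref{thm:general_convergence_loss_decay} and its dissipativity argument, that the trajectory stays in that ball. If the contraction rate for layer $\ell$ is effectively $(\ell+2)\eta\lambda$, the geometric sum gives $1/((\ell+2)\lambda)$, which matches the weights in $E_{\rm PV,III}$.
\item The second-order Taylor remainders are of size $\eta^2\|\nabla\cLwd(\theta^j)\|^2$ with constant $S_V$. The descent lemma gives $\eta(1-\eta\mathsf L_\lambda/2)\|\nabla\cLwd\|^2\le\cLwd(\theta^j)-\cLwd(\theta^{j+1})$, so these remainders telescope to $\frac{2\eta}{2-\eta\mathsf L_\lambda}\cLwd(\theta^{K_0})$. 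With $\eta\le1$ this yields $E_{\rm PV,II}$; the $(L+1)^2$ comes from the $L+1$ layers times the prefactor.
\end{itemize}

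The main obstacle is the first of these: showing that the decay direction acts on $V_\ell$ as a contraction with factor at most $1-2\eta\lambda$ when the block is only approximately and locally homogeneous. Two issues are involved. The local degree $s$ must be shared between $x$ and $x_i$ in a cell and between $\theta^j$ and $\theta^{j+1}$; one step has length $\le\epsilon_\eta$, so the step stays in a region of diameter $D_\theta$, provided $D_\theta\ge\epsilon_\eta$. In addition, the chain rule through $u_\ell$ must not lose the nonnegativity of the degrees. I would handle this with a one-dimensional integral along $t\mapsto(1-t\eta\lambda)\theta$, using the Euler residual to compare with exact scaling. This is the step where the constants $C_{\rm ah}$ and the factor $(L+3)$ are fixed.
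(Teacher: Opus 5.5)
Your outline is essentially the paper's proof. It has the same ingredients: a one-step descent inequality for each cellwise $V_{\ell,\Omega_i}$ and $V_{{\rm pre},\Omega_i}$ from poly-smoothness on $B_\bzero(R_\lambda)$; an Euler-type lower bound on $\ip{\nabla_\theta V_{\ell,\Omega_i}(\theta^j)}{\theta^j}$ for the weight-decay part of the step; Cauchy--Schwarz for the $\nabla\cL$ part; the quadratic terms summed through $\sum_j\eta^2\|\nabla\cLwd(\theta^j)\|^2\le\frac{2\eta}{2-\eta\mathsf{L}_\lambda}\cLwd(\theta^{K_0})$; unrolling at the per-layer rate $1-2(\sum_{r=0}^{\ell}s_{r,\rm in}+\ell+2)\eta\lambda$; and relaxing to $1-2\eta\lambda$ only at the end.

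Part of your ``main obstacle'' is not actually an obstacle. The Euler residual is only needed at $\theta^j$, and the rest of the step goes into the smoothness remainder. So the degree never has to be shared between $\theta^j$ and $\theta^{j+1}$, and no condition $D_\theta\ge\epsilon_\eta$ is needed. Only sharing across $x\in\Omega_i$ (including $x_i$) matters, and the $D_u$ condition provides that.

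The other part is a genuine gap. The $r<\ell$ cases of the paper's Lemma~\ref{lem:gen_nabla_var_theta_lower_bound} take exactly the same step, so following the paper does not close it. The blocks are residual, so $\nabla_{\theta_r}u_{r+1}\,\theta_r\approx(1+s_{r,\rm in})\varphi_r(\theta_r;u_r)$, which is not a multiple of $u_{r+1}=u_r+\varphi_r(\theta_r;u_r)$. Degree-one homogeneity in $u$ gives $\nabla_{u_{r+1}}u_\ell\,u_{r+1}\approx u_\ell$ and $\nabla_u\varphi_\ell(\theta_\ell;u_\ell)\,u_\ell\approx\varphi_\ell(\theta_\ell;u_\ell)$. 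It says nothing about $\theta_L\nabla_u\varphi_\ell\,\nabla_{u_{r+1}}u_\ell\,\varphi_r$, which is what the chain rule actually produces. Hence the layer contribution is not approximately homogeneous in $\theta_{\ell-1:0}$, and nonnegativity of the degrees is not inherited through the skip connections.

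Here is a concrete failure. Take $L\ge2$ and linear blocks $\varphi_\ell(\theta_\ell;u)=A_\ell B_\ell u$, so Assumption~\ref{assump:approximate_homogeneity} holds with $\epsilon_{\rm ah}=0$ and all degrees equal to $1$. Then $\theta_L\varphi_1(\theta_1;u_1)=\theta_LA_1B_1x+\theta_LA_1B_1A_0B_0x$ is the sum of a degree-$3$ and a degree-$5$ term in $\theta$. Choose $A_0B_0=-0.8I$ and set $\Delta=\theta_LA_1B_1(x-x_i)$.
\begin{itemize}
\item The cellwise difference $\theta_L\varphi_1(\theta_1;u_1(x))-\theta_L\varphi_1(\theta_1;u_1(x_i))$ equals $0.2\Delta$.
\item Its Euler derivative $\nabla_\theta(\cdot)\,\theta$ is $3\Delta-4\Delta=-\Delta$.
\item Hence $\ip{\nabla_\theta V_{1,\Omega_i}(\theta)}{\theta}=-0.4\,\EE_{\Omega_i}\|\Delta\|^2<0$, whereas the claimed lower bound is $10\,V_{1,\Omega_i}(\theta)=0.4\,\EE_{\Omega_i}\|\Delta\|^2$.
\item Indeed the pure decay step $\theta\mapsto(1-\eta\lambda)\theta$ increases $V_{1,\Omega_i}$, because $t\mapsto t^3-0.8t^5$ is decreasing at $t=1$.
\end{itemize}

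The defect is of order one. A slightly nonlinear analytic activation (for instance, to satisfy the rank condition) keeps the defect while $\epsilon_{\rm ah}$ stays arbitrarily small. Your integral along $t\mapsto(1-t\eta\lambda)\theta$ cannot repair it either, since $u_\ell(t\theta)$ has no single scaling degree to compare with.

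The Euler argument does go through for $V_{\rm pre}$ and $V_0$, whose block input $x$ carries no parameters. For $\ell\ge1$ you would need one of two things:
\begin{itemize}
\item an extra structural hypothesis making $u_\ell$ genuinely (approximately) homogeneous in $\theta_{\ell-1:0}$, which the identity branch generally prevents; or
\item dropping the upstream parameters from the degree count and treating $\eta\lambda\,|\ip{\nabla_{\theta_{\ell-1:0}}V_\ell}{\theta_{\ell-1:0}}|$ as an unsigned per-step error, which after unrolling leaves a remainder of order one rather than $\cO(\epsilon_{\rm ah})$.
\end{itemize}
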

Theorem~\ref{thm:gen_NN_error} characterizes the dynamics of the prediction variation error, and the complete version is Theorem~\ref{thm:gen_NN_error_full}. The first term is the geometric contraction generated by weight decay. The contraction rate of each layer is indeed distinct; see the complete version in Theorem~\ref{thm:gen_NN_error_full} and Section~\ref{subsubsec:layerwise_prediction_variation}. The three remaining terms quantify the gap from exact decay. This is natural because exact decay would imply that the learned function is eventually close to piecewise constant on the partition cells. The term $E_{\rm PV,I}$ is the error arising from (local) approximate homogeneity. As discussed above, its magnitude depends on the architecture and the data partition through $\epsilon_{\rm ah}$, and on the depth $L$ and some bounds over $B_\bzero(R_\lambda)$ encoded by $C_{\rm ah}$. The term $E_{\rm PV,II}$ is small when the regularized objective $\cLwd(\theta)$ has already decreased sufficiently by time $K_0$. The term $E_{\rm PV,III}$ is controlled primarily by the empirical loss gradient $\nabla\cL(\theta)$. Hence, it is small when the GD iterates are near a stationary point. However, making the optimization error small additionally requires proximity to the set of global minimizers. Therefore, both errors are small when the iterates are near a global minimizer whose norm of $\theta$ is small. Note that, unlike the regularized loss $\cLwd$, neither the prediction variation error $E_{\rm PV}$ nor the empirical loss $\cL$ is guaranteed to decay monotonically. This is consistent with behavior observed in practice; see also one-step characterization in the proof of Lemma~\ref{lem:individual_variance_decay}.

\subsubsection{Layerwise and cellwise characterization of prediction variation error}
\label{subsubsec:layerwise_prediction_variation}

As established in Theorem~\ref{thm:gen_NN_error}, the prediction variation error is controlled by the layerwise quantity $E_{\rm PV,layer}$. To understand how this bound is assembled, we now examine the prediction variation separately within each partition cell and at each layer. This refinement makes explicit how local approximate homogeneity and layer depth affect the decay, while the population cell masses determine how the local variations contribute to the global quantity. At the end of this section, we discuss two consequences of this refined analysis: the distinct dynamics of the bias and variance components, as well as the different roles of the inner and outer weights. 

To make this refinement precise, we begin by isolating the prediction variation associated with each cell--layer pair. For each $i=1,\cdots,N$ and $\ell=0,\cdots,L-1$, define
     \begin{align*}
        V_{\ell,\Omega_i}(\theta):=\EE_{\Omega_i}\|\theta_L\varphi_\ell(\theta_\ell;u_\ell(\theta_{\ell-1:0};x))-\theta_L\varphi_\ell(\theta_\ell;u_\ell(\theta_{\ell-1:0};x_i))\|^2.
     \end{align*}
The aggregated layerwise variation $V_{\ell}$ is then the population-mass-weighted sum of these cellwise quantities
\begin{align*}
    V_{\ell}(\theta)=\sum_{i=1}^N\left(\int_{\Omega_i}\pi(x)dx\right)\,  V_{\ell,\Omega_i}(\theta).
\end{align*}
With these cellwise and layerwise quantities, we can now characterize their evolution along training in the following lemma.

\begin{lemma}
    \label{lem:V_decay}
     Under Assumption~\ref{assump:model_assumptions}, \ref{assump:architecture_generalization}, and~\ref{assump:approximate_homogeneity}, with probability 1 over the joint distribution $\pi_N$, for $k\ge 0$,
	    \begin{align*}
	     V_{\ell,\Omega_i}(\theta^{K_0+k})\le&  \left(1-2\left(\sum_{r=0}^{\ell}s_{i,r,\rm in}+\ell+2\right)\eta\lambda \right)^{k}V_{\ell,\Omega_i}(\theta^{K_0})+2C_{{\rm ah},V_{\ell,\Omega_i}}(\ell+2)\epsilon_{\rm ah}\\
	    &\quad +\frac{C_{{\rm grad},V_{\ell,\Omega_i}}}{\lambda(\ell+2)}\max_{K_0\le j\le K_0+k}\|\nabla\cL(\theta^j)\| + \frac{S_{V_{\ell,\Omega_i}}}{2-\eta\,\mathsf{L}_\lambda}\cLwd(\theta^{K_0}), 
        \end{align*}
        where $s_{i,r,\rm in}\ge 0$ is the local approximate homogeneity degree on $\Omega_i$; $C_{{\rm ah},V_{\ell,\Omega_i}},C_{{\rm grad},V_{\ell,\Omega_i}},S_{V_{\ell,\Omega_i}}>0$ are some universal constants. 
    Moreover, let 
    \begin{align*}
        s_{r,\rm in}:=\min_{1\le i\le N}s_{i,r,\rm in}, \quad&C_{{\rm ah},V_\ell}:=\max_{1\le i\le N} C_{{\rm ah},V_{\ell,\Omega_i}},\\
         C_{{\rm grad},V_{\ell}}:=\max_{1\le i\le N}C_{{\rm grad},V_{\ell,\Omega_i}},\quad&S_{V_{\ell}}:=\max_{1\le i\le N}S_{V_{\ell,\Omega_i}}.
    \end{align*}
    Then
        \begin{align*}
	     V_{\ell}(\theta^{K_0+k})\le&  \left(1-2\left(\sum_{r=0}^{\ell}s_{r,\rm in}+\ell+2\right)\eta\lambda \right)^{k}V_{\ell}(\theta^{K_0}) +2C_{{\rm ah},V_\ell}(\ell+2)\,\epsilon_{\rm ah}\\
	    &\quad +\frac{C_{{\rm grad},V_{\ell}}}{\lambda(\ell+2)}\max_{K_0\le j\le K_0+k}\|\nabla\cL(\theta^j)\| + \frac{S_{V_{\ell}}}{2-\eta\,\mathsf{L}_\lambda}\cLwd(\theta^{K_0}).
	\end{align*}
	\end{lemma}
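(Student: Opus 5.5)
We will prove the cellwise bound by a one-step recursion for $V_{\ell,\Omega_i}$ along GD, unrolled from $K_0$. Write the layer-$\ell$ cell difference as $\Delta_{\ell,i}(\theta;x):=\theta_L\varphi_\ell(\theta_\ell;u_\ell(\theta_{\ell-1:0};x))-\theta_L\varphi_\ell(\theta_\ell;u_\ell(\theta_{\ell-1:0};x_i))$, so that $V_{\ell,\Omega_i}=\EE_{\Omega_i}\|\Delta_{\ell,i}\|^2$. The GD step is $\theta^{k+1}=(1-\eta\lambda)\theta^k-\eta\nabla\cL(\theta^k)$. Taylor-expanding $\Delta_{\ell,i}$ in $\theta$ gives
\[
\Delta_{\ell,i}(\theta^{k+1};x)=\Delta_{\ell,i}(\theta^k;x)-\eta\lambda\,\nabla_\theta\Delta_{\ell,i}(\theta^k;x)\theta^k-\eta\,\nabla_\theta\Delta_{\ell,i}(\theta^k;x)\nabla\cL(\theta^k)+R_k .
\]
The remainder $R_k$ is quadratic in the step $\eta\|\nabla\cLwd(\theta^k)\|$. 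It is bounded using poly-smoothness (Assumption~\ref{assump:model_assumptions}) on $B_\bzero(R_\lambda)$, where the trajectory remains by the dissipativity argument (Lemma~\ref{lem:ip_nabla_cLwd_theta_lower_bound_mainbody}) behind Theorem~\ref{thm:general_convergence_loss_decay}.

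The key step is to show that $\nabla_\theta\Delta_{\ell,i}\,\theta\approx c_{i,\ell}\,\Delta_{\ell,i}$ with $c_{i,\ell}=\sum_{r\le\ell}s_{i,r,\rm in}+\ell+2$. We split $\theta$ into blocks.
\begin{itemize}
\item The last-layer factor $\theta_L$ enters linearly, so it contributes exactly $1\cdot\Delta$.
\item By Assumption~\ref{assump:architecture_generalization}, $\theta_{\ell,\rm out}$ also enters linearly, contributing another $1\cdot\Delta$.
\item For $\theta_{\ell,\rm in}$, local approximate homogeneity (Assumption~\ref{assump:approximate_homogeneity}(2)) applies on the product of a $D_\theta$-neighbourhood and the hidden-state image of $\Omega_i$, whose diameter is at most $D_u$. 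It gives degree $s_{i,\ell,\rm in}$ up to $\epsilon_{\rm ah}$. Because the same degree is used at $x$ and at $x_i$ (both lie in the same region), subtracting yields $s_{i,\ell,\rm in}\Delta$ plus an $\cO(\epsilon_{\rm ah})$ error.
\item For the earlier blocks $\theta_{r}$, $r<\ell$, we apply the chain rule through $u_\ell$. The $u$-homogeneity of degree one (Assumption~\ref{assump:approximate_homogeneity}(1)) transports $\nabla_u\varphi_\ell\,\nabla_{\theta_r}u_\ell\,\theta_r$ into a scaling of $\varphi_\ell$. By induction on the residual recursion, $u_\ell$ is approximately homogeneous of degree $1+s_{i,r,\rm in}$ in each $\theta_r$ block, up to errors proportional to $\epsilon_{\rm ah}$. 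This is the source of the factors $(\ell+2)$ and $C_{{\rm ah},V_{\ell,\Omega_i}}$.
\end{itemize}

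Squaring and taking $\EE_{\Omega_i}$ gives the one-step bound
\[
V(\theta^{k+1})\le(1-2c\eta\lambda)V(\theta^k)+C\eta\lambda\epsilon_{\rm ah}+C'\eta\|\nabla\cL(\theta^k)\|+S\eta^2\|\nabla\cLwd(\theta^k)\|^2 .
\]
The cross and square terms are controlled by uniform bounds on $B_\bzero(R_\lambda)$, and the squared homogeneity error is absorbed using $\eta\le1/\lambda$.

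We then unroll this recursion. The geometric sums of the $\epsilon_{\rm ah}$ and gradient terms are at most $1/(2c\eta\lambda)$ times the per-step amounts, which yields $\cO(\epsilon_{\rm ah})$ and $\frac{C}{\lambda(\ell+2)}\max_j\|\nabla\cL(\theta^j)\|$, using $c\ge\ell+2$. For the $\eta^2$ terms we do not use the geometric factor. Instead we sum them and apply the descent lemma with $\mathsf{L}_\lambda$:
\[
\eta\Bigl(1-\tfrac{\eta\mathsf{L}_\lambda}{2}\Bigr)\sum_j\|\nabla\cLwd(\theta^j)\|^2\le\cLwd(\theta^{K_0}),
\]
so that $\eta^2\sum_j\|\nabla\cLwd(\theta^j)\|^2\le\frac{2\eta}{2-\eta\mathsf{L}_\lambda}\cLwd(\theta^{K_0})$, which produces the term $\frac{S}{2-\eta\mathsf{L}_\lambda}\cLwd(\theta^{K_0})$.

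The aggregated bound for $V_\ell$ follows by multiplying the cellwise bound by the cell masses and summing. We take $s_{r,\rm in}=\min_i s_{i,r,\rm in}$, so each cell's contraction factor is at most the common one (assuming $\eta\lambda$ is small enough that all factors are nonnegative), and replace the cellwise constants by their maxima; the masses sum to one.

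The main obstacle is the Euler-type identity for the earlier layers through the residual composition. Three issues must be handled there:
\begin{itemize}
\item ensuring that a single local degree $s_{i,r,\rm in}$ applies simultaneously at $x$ and at $x_i$ along the whole step, which requires the one-step displacement $\epsilon_\eta$ to be at most $D_\theta$;
\item tracking how the errors compound with depth to give the stated $(\ell+2)$ dependence;
\item keeping the sign of the cross term, which requires $s\ge0$ so that the contraction factor is $1-2c\eta\lambda$ rather than being weakened.
\end{itemize}
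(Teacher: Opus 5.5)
Your outline is the paper's proof. It has four parts:
\begin{enumerate}
\item a blockwise Euler-type bound $\ip{\nabla_\theta V_{\ell,\Omega_i}(\theta)}{\theta}\ge 2c_{i,\ell}V_{\ell,\Omega_i}(\theta)-\cO(\epsilon_{\rm ah})$ (Lemma~\ref{lem:gen_nabla_var_theta_lower_bound}, with Lemma~\ref{lem:order_of_constants} giving the $(\ell+2)$ factor);
\item a one-step descent inequality, unrolled with geometric sums, using Lemma~\ref{lem:upper_bound_theta_sum_nabla_cL_along_GD} for the $\eta^2$ terms (Lemma~\ref{lem:individual_variance_decay});
\item mass-weighted aggregation with $s_{r,\rm in}=\min_i s_{i,r,\rm in}$ (Lemma~\ref{lem:V_M_decay});
\item the same treatment of the remainder terms.
\end{enumerate}
The paper applies the descent lemma directly to the scalar $V_{\ell,\Omega_i}$, which is poly-smooth by Proposition~\ref{prop:poly_smooth_preserved_under_integration}, rather than expanding $\Delta_{\ell,i}$ and squaring. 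As a result the Euler inequality enters once, linearly, at $\theta^k$, and no squared homogeneity error appears. Your condition $\epsilon_\eta\le D_\theta$ is also unnecessary: take the singleton $\{\theta^k\}$ as the $\theta$-region and pass to the infimum of the degrees over $B_\bzero(R_\lambda)$, which only needs $V_{\ell,\Omega_i}\ge0$. Your caveat $1-2c_{i,\ell}\eta\lambda\ge0$ is genuinely needed, and the paper uses it tacitly.

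\textbf{The gap.} It is the earlier-block step you flagged, and your proposed induction does not close it, because its base case is false. Since $u_{r+1}=u_r+\varphi_r(\theta_r;u_r)$ and $u_r$ does not depend on $\theta_r$,
\[
\nabla_{\theta_r}u_{r+1}\,\theta_r=\nabla_{\theta_r}\varphi_r\,\theta_r\approx(1+s_{i,r,\rm in})\varphi_r .
\]
Degree $1+s_{i,r,\rm in}$ would instead require $(1+s_{i,r,\rm in})(u_r+\varphi_r)$. The discrepancy $(1+s_{i,r,\rm in})u_r$, coming from the identity branch, is of order one, not $\cO(\epsilon_{\rm ah})$. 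Euler's identity in $u$ constrains $\nabla_u\varphi_t$ only in the direction $u_t$, so it cannot turn $\nabla_{u_\ell}\varphi_\ell\,\nabla_{u_{r+1}}u_\ell\,\varphi_r$ into a multiple of $\varphi_\ell$.

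\textbf{A concrete check.} Take $d=m_t=1$, $\sigma_t(\theta_{t,\rm in};v)=\theta_{t,\rm in}v$ and $\tau_t=\mathrm{id}$, so $\varphi_t=w_tu$ with $w_t=\theta_{t,\rm out}\theta_{t,\rm in}$. Assumptions~\ref{assump:architecture_generalization} and~\ref{assump:approximate_homogeneity} then hold with $\epsilon_{\rm ah}=0$ and all degrees $1$; these are the only assumptions your Euler step uses. Here $\Delta_{\ell,i}=\theta_Lw_\ell\prod_{t<\ell}(1+w_t)(x-x_i)$, and a direct computation gives
\[
\ip{\nabla_\theta V_{\ell,\Omega_i}}{\theta}=\Bigl(6+\sum_{r<\ell}\frac{4w_r}{1+w_r}\Bigr)V_{\ell,\Omega_i}.
\]
For $\ell\ge1$ and $1+w_r>0$ this is strictly below the claimed $2(2\ell+3)V_{\ell,\Omega_i}$, and the earlier-block contribution is even negative when $-1<w_r<0$.

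So only $\theta_L$, $\theta_{\ell,\rm out}$ and $\theta_{\ell,\rm in}$ yield a sound contribution, namely $2(2+s_{i,\ell,\rm in})V_{\ell,\Omega_i}-\cO(\epsilon_{\rm ah})$. The blocks $r<\ell$ produce a term of indefinite sign. It would have to be bounded separately, and doing so changes the contraction exponent. The paper's proof of Lemma~\ref{lem:gen_nabla_var_theta_lower_bound} asserts exactly the bound you want for $\theta_{r,\rm in}$ and $\theta_{r,\rm out}$, $r<\ell$, citing only ``Assumption~\ref{assump:approximate_homogeneity} and Cauchy--Schwarz''. Following the paper will therefore not supply the missing argument.
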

Lemma~\ref{lem:V_decay} shows that each cellwise quantity $V_{\ell,\Omega_i}$ and the aggregated quantity $V_\ell$ contract at rates determined by the local homogeneity degrees and weight decay. The remaining three terms have clear origins similar to Theorem~\ref{thm:gen_NN_error}: $2C_{{\rm ah},V_\ell}(\ell+2)\epsilon_{\rm ah}$ is the approximate homogeneity error, the gradient term vanishes as GD approaches a stationary point of $\cL$, and the $\cL_\lambda(\theta^{K_0})$ term decays as the regularized objective decreases. See the complete version in Lemma~\ref{lem:V_M_decay}.

The cellwise quantity $V_{\ell,\Omega_i}$ also admits an exact bias--variance decomposition. By adding and subtracting the cellwise mean of the $\ell$th layer contribution, we obtain
\begin{align*}
    V_{\ell,\Omega_i}(\theta)
    =&\underbrace{\EE_{\Omega_i}\left\|\theta_L\varphi_\ell(\theta_\ell;u_\ell(\theta_{\ell-1:0};x))-\EE_{\Omega_i}\theta_L\varphi_\ell(\theta_\ell;u_\ell(\theta_{\ell-1:0};x))\right\|^2}_{\text{variance }V_{\ell,\Omega_i,\rm var}(\theta)}\\
    &+\underbrace{\left\|\EE_{\Omega_i}\theta_L\varphi_\ell(\theta_\ell;u_\ell(\theta_{\ell-1:0};x))-\theta_L\varphi_\ell(\theta_\ell;u_\ell(\theta_{\ell-1:0};x_i))\right\|^2}_{\text{bias }M_{\ell,\Omega_i}(\theta)},
\end{align*}
where $V_{\ell,\Omega_i,\rm var}$ measures fluctuations around the cellwise mean, and $M_{\ell,\Omega_i}$ measures the squared discrepancy between this mean and the layer contribution at the input $x_i$. Weighting the cellwise decomposition by the population cell masses yields the layerwise decomposition $V_\ell=V_{\ell,\rm var}+M_\ell$. At both the cellwise and layerwise levels, the variance and bias components exhibit different dynamics. Their bounds have the same geometric contraction rate, but the guaranteed approximate homogeneity remainder for $M_\ell$ is twice that for $V_{\ell,\rm var}$ because the centered variance benefits from cancellation around the cellwise mean. Consequently, the bias component can retain a larger residual error. See Section~\ref{subsubsec:bias_variance_decay} in the appendix for details.

Additionally, inner and outer weights play different roles in the layerwise dynamics. Since $\theta_{\ell,\rm out}$ acts linearly after the nonlinear node, it creates no direct approximate homogeneity error in $V_{\ell,\Omega_i}$, whereas $\theta_{\ell,\rm in}$ acts inside $\sigma_\ell$ and contributes to the error controlled by $\epsilon_{\rm ah}$. Nevertheless, an earlier outer weight $\theta_{r,\rm out}$ with $r<\ell$, can still influence $V_{\ell,\Omega_i}$ through the hidden nodes of later layers. See Appendix~\ref{subsubsec:inner_outer_weights} for details.

\subsubsection{Necessary condition for generalization: layerwise variation thresholds}
\label{subsubsec:necessary_generalization}

In this section, we extract a layerwise necessary condition from the decay estimate and discuss its implications for layerwise performance. The condition identifies when the bound can guarantee a fixed relative decay of the layerwise variation. Its dependence on depth also helps interpret how later layers learn more complex components of the target function and why the decay of later layers may occur later in training.

The following corollary formalizes the resulting necessary condition.

\begin{corollary}[Necessary condition for generalization]
    \label{cor:necessary_condi_generalization}
 Under Assumption~\ref{assump:model_assumptions}, \ref{assump:architecture_generalization}, and~\ref{assump:approximate_homogeneity}, let $0<c_{K_0}\le1$. With probability 1 over the joint distribution $\pi_N$, if, for some $K_2\ge K_0$,
 $$ V_{\ell}(\theta^{K_2})\le c_{K_0} V_{\ell}(\theta^{K_0}),$$
 then the following condition must be satisfied
    $$  V_{\ell}(\theta^{K_0}) > \frac{2C_{{\rm ah},V_\ell}}{c_{K_0}} (\ell+2) \epsilon_{\rm ah}. $$
\end{corollary}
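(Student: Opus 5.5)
The plan is to read the corollary through the decay estimate of Lemma~\ref{lem:V_decay}. The relative decay $V_\ell(\theta^{K_2})\le c_{K_0}V_\ell(\theta^{K_0})$ is to be certified by that bound, in keeping with the phrase ``for the layerwise bound to guarantee a relative decay'' in the introduction. So the hypothesis will be understood as the right-hand side of the aggregated estimate of Lemma~\ref{lem:V_decay}, evaluated at $k=K_2-K_0$, being at most $c_{K_0}V_\ell(\theta^{K_0})$. Under this reading the argument is a lower bound on that right-hand side. We then compare it with $c_{K_0}V_\ell(\theta^{K_0})$.

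First, I will invoke Lemma~\ref{lem:V_decay} with $k=K_2-K_0\ge 0$, which holds with probability $1$ over $\pi_N$. This gives $V_\ell(\theta^{K_2})\le B_\ell$, where $B_\ell$ is the sum of four terms: the contraction term $(1-2(\sum_{r\le\ell}s_{r,\rm in}+\ell+2)\eta\lambda)^{k}V_\ell(\theta^{K_0})$, the homogeneity floor $2C_{{\rm ah},V_\ell}(\ell+2)\epsilon_{\rm ah}$, the gradient term, and the $\cLwd(\theta^{K_0})$ term.

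Second, I will show that every term except the floor is nonnegative and that at least one is strictly positive. The gradient term is nonnegative because $C_{{\rm grad},V_\ell}>0$ and $\lambda>0$. The last term is nonnegative because $S_{V_\ell}>0$ and $\cLwd\ge 0$, and $2-\eta\mathsf{L}_\lambda\ge\delta>0$ by the learning-rate restriction. For the contraction term, I will note that the step-size restriction $\eta\le 1/\lambda$, possibly combined with the standing requirement that the contraction factor lie in $(0,1)$, makes the factor nonnegative. Strict positivity then comes from $\cLwd(\theta^{K_0})>0$, which holds because $\theta^{K_0}\neq\bzero$ on the trajectory considered, so $\frac{\lambda}{2}\|\theta^{K_0}\|^2>0$. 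Alternatively, when $V_\ell(\theta^{K_0})>0$ and the factor is positive, the contraction term itself is strictly positive. This yields $B_\ell>2C_{{\rm ah},V_\ell}(\ell+2)\epsilon_{\rm ah}$.

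Third, I will chain the inequalities: $c_{K_0}V_\ell(\theta^{K_0})\ge B_\ell>2C_{{\rm ah},V_\ell}(\ell+2)\epsilon_{\rm ah}$. Dividing by $c_{K_0}\in(0,1]$ gives the claimed threshold $V_\ell(\theta^{K_0})>\frac{2C_{{\rm ah},V_\ell}}{c_{K_0}}(\ell+2)\epsilon_{\rm ah}$. I will conclude by remarking on the role of $c_{K_0}$ and depth: a larger $\ell$ raises the threshold linearly, which is the interpretation drawn after the statement.

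The main obstacle is the logical gap between the actual variation and its upper bound. An upper bound alone cannot force $V_\ell(\theta^{K_0})$ to exceed the floor when the true $V_\ell(\theta^{K_2})$ happens to be small. So the proof must state explicitly that the hypothesized decay is the one certified by Lemma~\ref{lem:V_decay}, and I will make this interpretation explicit at the start. A secondary point is the strictness of the inequality. I will obtain it from strict positivity of the $\cLwd(\theta^{K_0})$ term, which is the cleanest source since weight decay with $\theta^{K_0}\neq\bzero$ makes $\cLwd(\theta^{K_0})>0$.
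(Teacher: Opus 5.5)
Your proposal is correct and follows the paper's own argument. The paper also reads the hypothesis as the decay certified by the bound of Lemma~\ref{lem:V_M_decay}, observes that this bound always contains the floor $2C_{{\rm ah},V_\ell}(\ell+2)\epsilon_{\rm ah}$, and concludes $c_{K_0}V_\ell(\theta^{K_0})>2C_{{\rm ah},V_\ell}(\ell+2)\epsilon_{\rm ah}$, so your nonnegativity and strictness checks are a more careful version of that one-line step.

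Two small corrections:
\begin{itemize}
\item $\eta\le 1/\lambda$ alone does not make the contraction factor nonnegative. That requires $2(\sum_{r=0}^{\ell}s_{r,\rm in}+\ell+2)\eta\lambda\le 1$, which the paper assumes tacitly when it calls the contraction factors nonnegative.
\item Getting strictness from $\cLwd(\theta^{K_0})>0$ needs $\theta^{K_0}\neq\bzero$. You should state this as an assumption rather than assert it; the paper does not address strictness at all.
\end{itemize}
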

The corollary should be read as a necessary condition for the estimate in Lemma~\ref{lem:V_decay} to guarantee layerwise generalization. If the initial variation $V_\ell(\theta^{K_0})$ is below the approximate homogeneity error, then the upper bound cannot prove a relative decay by the factor $c_{K_0}$. Since this error scales like $(\ell+2)$, the lower bound becomes larger for later layers, which comes from the larger gap from exact contraction in Lemma~\ref{lem:V_decay}.

The depth-dependent lower bounds also allow later layers to retain larger variation even after the optimization terms have become small. This is consistent with the observation that shallower layers tend to learn less complex components of the target function, which may have smaller variation, whereas deeper layers tend to learn more complex components, which could have larger variation~\citep{chen2023which}. Note that this necessary condition does not claim that every trained network must exhibit strictly ordered layerwise errors.

The threshold also affects when decay can be guaranteed. Before the estimate can guarantee a fixed relative decay, the initial oscillation $V_\ell(\theta^{K_0})$ must dominate the approximate homogeneity error proportional to $(\ell+2)\epsilon_{\rm ah}$. Since this error grows with $\ell$, later layers require larger initial oscillation. Consequently, the guaranteed decay of $V_\ell$ may occur at a later stage for deeper layers \citep{raghu2017svcca,chen2023which}. As discussed above, this result provides a general qualitative interpretation and does not imply strict conclusions about layerwise behavior in neural networks; finer bounds would be needed to study these phenomena separately.

\subsubsection{Sufficient condition for generalization: grokking}
\label{subsubsec:sufficient_generalization}

In this section, we derive a sufficient condition for improved generalization through the delayed decay of $V_\ell$ and relate it to grokking. We show that a small approximate homogeneity error, a small empirical gradient and a small regularized objective in the neighbourhood of global minimizers, and a sufficiently long time in that region together guarantee the decay of $V_\ell$, which supports improved generalization. These conditions also describe a mechanism underlying grokking.

The following corollary formalizes these requirements and quantifies the decay of layerwise prediction variation error.

\begin{corollary}[Sufficient condition for generalization]
    \label{cor:sufficient_condi_generalization}
    Consider the same assumptions as Theorem~\ref{thm:convergence_to_global_min} and Assumption~\ref{assump:approximate_homogeneity}. Suppose
    \begin{enumerate}
        \item There exists a small enough $\epsilon_{\rm ah}>0$, $0<c_{K_0}\le1$, and $K_1\le K_0\le K_2$, s.t.,
    \begin{align*}
        \epsilon_{K_0}:= c_{K_0} V_{\ell}(\theta^{K_0})-2C_{{\rm ah},V_\ell}(\ell+2)\,\epsilon_{\rm ah} -\frac{C_{{\rm grad},V_{\ell}}}{\lambda(\ell+2)}\mathsf{L}_{\cM_0}\epsilon_\eta - \frac{S_{V_{\ell}}}{2-\eta\,\mathsf{L}_\lambda}\left(1-\eta\left(1-\frac{\eta\, \mathsf{L}_\lambda}{2}\right)\frac{\lambda}{\mathsf{L}_\cM \zeta^2+2}\right)^{K_0}\cLwd(\theta^{0})>0.
    \end{align*}
    \item The exit time $K_2$ from the neighbourhood of the global minimizer set is sufficiently large, i.e., 
    \begin{align*}
        K_2-K_0\ge
        \frac{\log\left(\epsilon_{K_0}/V_{\ell}(\theta^{K_0})\right)}
        {\log\left(1-2\left(\sum_{r=0}^{\ell}s_{r,\rm in}+\ell+2\right)\eta\lambda\right)}=\Omega\left( \frac{\log\left(V_{\ell}(\theta^{K_0})/\epsilon_{K_0}\right)}{2\ell\eta\lambda} \right).
    \end{align*}
    \end{enumerate}
    Then we have
    $$ V_{\ell}(\theta^{K_2})\le c_{K_0} V_{\ell}(\theta^{K_0}).$$
\end{corollary}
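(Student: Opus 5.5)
The plan is to apply the aggregated estimate of Lemma~\ref{lem:V_decay} with starting index $K_0$ and horizon $k=K_2-K_0$. We then bound each of the three residual terms using the optimization guarantees of Theorem~\ref{thm:convergence_to_global_min}, which are available because $K_1\le K_0\le K_2$. Once these bounds are substituted, condition~1 is precisely the statement that the residuals occupy at most a budget $c_{K_0}V_\ell(\theta^{K_0})-\epsilon_{K_0}$. Condition~2 then makes the contraction term at most $\epsilon_{K_0}$, and adding the two gives the claim.

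\textbf{Step 1: the gradient residual.} For every $j$ with $K_0\le j\le K_2$ we also have $K_1\le j\le K_2$. The gradient bound of Theorem~\ref{thm:convergence_to_global_min} therefore gives $\|\nabla\cL(\theta^j)\|\le\mathsf{L}_{\cM_0}\epsilon_\eta$. Hence the maximum over $K_0\le j\le K_2$ in Lemma~\ref{lem:V_decay} is at most $\mathsf{L}_{\cM_0}\epsilon_\eta$, and the gradient residual is at most $\frac{C_{{\rm grad},V_\ell}}{\lambda(\ell+2)}\mathsf{L}_{\cM_0}\epsilon_\eta$.

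\textbf{Step 2: the objective residual.} Since $K_0\le K_2$, the first estimate of Theorem~\ref{thm:convergence_to_global_min} gives
\[
\cLwd(\theta^{K_0})\le\Bigl(1-\eta\bigl(1-\tfrac{\eta\mathsf{L}_\lambda}{2}\bigr)\tfrac{\lambda}{\mathsf{L}_\cM\zeta^2+2}\Bigr)^{K_0}\cLwd(\theta^0).
\]
Multiplying by $S_{V_\ell}/(2-\eta\mathsf{L}_\lambda)$ gives a bound on the objective residual. The factor $2-\eta\mathsf{L}_\lambda$ is positive because the learning-rate condition of Theorem~\ref{thm:general_convergence_loss_decay} gives $\eta\mathsf{L}_\lambda\le 2-\delta<2$.

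\textbf{Step 3: combine.} Substituting Steps 1 and 2 into Lemma~\ref{lem:V_decay}, and using the definition of $\epsilon_{K_0}$, yields
\[
V_\ell(\theta^{K_2})\le q^{K_2-K_0}V_\ell(\theta^{K_0})+c_{K_0}V_\ell(\theta^{K_0})-\epsilon_{K_0},
\qquad
q:=1-2\Bigl(\sum_{r=0}^{\ell}s_{r,\rm in}+\ell+2\Bigr)\eta\lambda.
\]
It then suffices to show $q^{K_2-K_0}V_\ell(\theta^{K_0})\le\epsilon_{K_0}$, which is equivalent to $(K_2-K_0)\log q\le\log(\epsilon_{K_0}/V_\ell(\theta^{K_0}))$.

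\textbf{Step 4: the sign check, which I expect to be the main obstacle.} Dividing by $\log q<0$ flips the inequality, and this is exactly condition~2. For this step I need $0<q<1$, that is, $2(\sum_r s_{r,\rm in}+\ell+2)\eta\lambda<1$. This does not follow literally from $\eta\le 1/\lambda$. I would try to extract it from the standing learning-rate and weight-decay restrictions, or else add it as an implicit smallness condition that is already needed for Lemma~\ref{lem:V_decay} to be meaningful.

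Two smaller points also need checking. First, $\epsilon_{K_0}\le c_{K_0}V_\ell(\theta^{K_0})\le V_\ell(\theta^{K_0})$, so the logarithm in condition~2 is nonpositive and the required step count is nonnegative. Second, the asymptotic form in condition~2 follows from $-\log q\ge 1-q=\Theta(\ell\eta\lambda)$.
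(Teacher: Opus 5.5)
Your proposal is correct and matches the paper's own proof. The paper likewise applies the aggregated bound for $V_\ell$ (Lemma~\ref{lem:V_M_decay}, identical to the second part of Lemma~\ref{lem:V_decay}) with $k=K_2-K_0$, bounds the gradient and objective residuals through Theorem~\ref{thm:convergence_to_global_min} using $K_1\le K_0\le K_2$, and absorbs the contraction term via condition~2. Your caveat is a real implicit assumption: the last step needs $0<q<1$, i.e.\ $2(\sum_{r=0}^{\ell}s_{r,\rm in}+\ell+2)\eta\lambda<1$, which does not follow from $\eta\le 1/\lambda$. The paper makes this assumption silently; its proof of Lemma~\ref{lem:V_M_decay} already uses that the contraction factors are nonnegative, and $\log q$ must be defined.

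One small slip is in your side remark. The inequality $-\log q\ge 1-q$ only shows that the threshold in condition~2 is \emph{at most} $\log(V_\ell(\theta^{K_0})/\epsilon_{K_0})/(1-q)$. That is an $\mathcal{O}$-type statement, and also a convenient simpler sufficient condition. The stated $\Omega$ form instead needs the reverse estimate $-\log q\le(1-q)/q$, with $q$ bounded away from $0$ and $\sum_r s_{r,\rm in}=\mathcal{O}(\ell)$. This does not affect the corollary's conclusion.
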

We first explain the terms in Corollary~\ref{cor:sufficient_condi_generalization}. The assumptions are inherited from the convergence theorem and the approximate homogeneity assumption: the convergence result ensures that, for $K_1\le K_0\le K_2$, GD stays in the neighbourhood of global minimizers and hence $\|\nabla\cL(\theta^j)\|$ is controlled by the scale $\mathsf{L}_{\cM_0}\epsilon_\eta$, while Assumption~\ref{assump:approximate_homogeneity} controls the architecture and partition error through $\epsilon_{\rm ah}$. The quantity $\epsilon_{K_0}$ is the remaining decay budget at time $K_0$: it starts from the target amount $c_{K_0}V_\ell(\theta^{K_0})$ and subtracts the three noncontractive terms in Lemma~\ref{lem:V_decay}, namely the approximate homogeneity error, the empirical-gradient contribution in the neighbourhood of global minimizers, and the residual contribution of the regularized objective after $K_0$ iterations. The condition $\epsilon_{K_0}>0$ means that these error terms are strictly smaller than the target reduction of $V_\ell$, or equivalently, the target reduction can not be smaller than these three errors. The second condition requires $K_2-K_0$ to be large enough so that the geometric contraction factor $\left(1-2(\sum_{r=0}^{\ell}s_{r,\rm in}+\ell+2)\eta\lambda\right)^{K_2-K_0}$ reduces the initial oscillation $V_\ell(\theta^{K_0})$ below this positive margin $\epsilon_{K_0}$. Therefore the conclusion $V_\ell(\theta^{K_2})\le c_{K_0}V_\ell(\theta^{K_0})$ follows by combining a positive margin at $K_0$ with enough subsequent contraction before the trajectory exits the neighbourhood of global minimizers.

This gives a sufficient condition for improved layerwise generalization. When it holds for every $V_\ell$ and for $V_{\rm pre}$, the overall prediction variation error also decreases. Once the data error is controlled by the partition and output data assumptions, and the optimization error by the convergence of the empirical objective, the network's within-cell oscillation is the remaining contribution. Corollary~\ref{cor:sufficient_condi_generalization} guarantees that this oscillation decreases by a prescribed factor, making the learned function more stable within each partition cell, which tightens the generalization bound.

We now relate this condition to grokking \citep{power2022grokking}. Grokking is a delayed generalization phenomenon in which the model first reaches a small training loss while the population or validation error remains large, and then generalizes sharply only after additional training. In the present decomposition, the optimization error has already become small, but the prediction variation error has not yet decayed sufficiently. In this sense, grokking is the ``worst case" of generalization within our framework: even after optimization succeeds, the oscillation terms require additional time to decay in the small-loss region. Easier generalization occurs when $V_\ell$ decays before or together with the optimization error, whereas grokking requires a longer time window after $K_0$.

Corollary~\ref{cor:sufficient_condi_generalization} can therefore be interpreted as a general characterization of the grokking phenomenon. More precisely, grokking occurs when optimization has already made $\cL$ and $\cL_\lambda$ small, but the network still needs a long time from $K_0$ to $K_2$ for the layerwise oscillation to contract. The sufficient condition identifies three mechanisms that make this delayed improvement possible: a small approximate homogeneity error, a small empirical gradient and a small regularized objective in the neighbourhood of global minimizers, and a sufficiently long residence time in that region.

\section{Discussions}

This paper develops a general framework for analyzing generalization dynamics by separating the effects of input data and their relationship to the true distribution through partition, the layerwise and blockwise architecture, and the optimization dynamics. Its main advantage is this generality: the decomposition identifies how these components enter the population error and can be adapted to a broad range of learning settings. The same generality also limits the specificity of its conclusions. In particular, the connections to loss reweighting, layerwise learning behavior, and grokking describe qualitative mechanisms and conditions rather than precise predictions for a given task or architecture. A natural extension is therefore to specialize the data partition, architectural assumptions, and local approximate homogeneity bounds to concrete settings, yielding sharper cellwise and layerwise estimates that can be compared directly with empirical behavior. The current framework provides a baseline and a starting point for such refined analyses.

\clearpage
\appendix
\section*{Appendix}
\section{Optimization}
In this section, we adapt the proof from \citet{wang2026convergencegradientdescentgeneral} to the GD dynamics with weight decay.

We first recall the setting defined in the main body. We consider the following family of neural networks:
\begin{align*}
    u_{0,i}&=x_i;\quad\\
    u_{\ell+1,i}&=u_{\ell,i}+{\varphi}_\ell(\theta_\ell;u_{\ell,i}),\forall\ell=0,\cdots,L-1;\quad \\
    f(\theta;x_i)&={\varphi}_L(\theta_L;u_{L,i}).
\end{align*}
and we consider the objective function
\begin{align*}
    \cL_{\lambda}(\theta)=\cL(\theta)+\frac{\lambda}{2}\|\theta\|^2=\frac{1}{N}\sum_{i=1}^N l(\theta;x_i,y_i)+\frac{\lambda}{2}\|\theta\|^2
\end{align*}
minimized by GD
\begin{align*}
    \theta^{k+1}=\theta^k-\eta\nabla_\theta\cLwd(\theta^k),
\end{align*}
where 
\begin{align*}
    \cL(\theta):=\cL(\theta;\{x_i,y_i\}_{i=1}^N)=\frac{1}{N}\sum_{i=1}^N l(\theta;x_i,y_i),\ \text{and }l(\theta;x_i,y_i)=\frac{1}{2}\|y_i-f(\theta;x_i)\|^2.
\end{align*}

\subsection{Dissipative condition on the regularized objective}
In this section, we prove the dissipativity of the regularized objective from the dissipative condition of the empirical loss in Lemma~\ref{lem:ip_nabla_cLwd_theta_lower_bound}.

\begin{lemma}
\label{lem:Lwd_dissipative}
    Under Assumption~\ref{assump:0_stationary_point}, for ${1-2\rho\lambda}>0$, $\cLwd(\theta)$ satisfies the following dissipative condition for all $\theta\in B_\bzero(R_{\lambda})\backslash\{\theta:\|\nabla\cL(\theta)\|\le\epsilon_\cL\}$,
\begin{align*}
    \ip{\nabla_\theta\cLwd(\theta)}{\theta}\ge -\rho_{\lambda}\|\nabla_\theta \cLwd(\theta)\|^2+\tau_{\lambda}\|\theta\|^2
\end{align*}
where $\rho_{\lambda}=\frac{\rho}{1-2\rho\lambda}$, $\tau_{\lambda}=\frac{1-\rho\lambda}{1-2\rho\lambda}\lambda$.
\end{lemma}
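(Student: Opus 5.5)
The plan is to rewrite the dissipative inequality for $\cL$ in terms of $\nabla\cLwd$ using the identity $\nabla_\theta\cLwd(\theta)=\nabla_\theta\cL(\theta)+\lambda\theta$, then solve for $\ip{\nabla_\theta\cLwd(\theta)}{\theta}$. Assumption~\ref{assump:0_stationary_point} is used only to make $\theta^*=\bzero$ an admissible stationary point. With that choice, the dissipative condition of Theorem~\ref{thm:general_convergence_loss_decay} says exactly that $\ip{\nabla_\theta\cL(\theta)}{\theta}\ge-\rho\|\nabla_\theta\cL(\theta)\|^2$ for every $\theta\in B_\bzero(R_\lambda)\backslash\{\theta:\|\nabla\cL(\theta)\|\le\epsilon_\cL\}$. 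This is precisely the region in the statement, so no case analysis is needed.

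Fix such a $\theta$ and write $G=\nabla_\theta\cLwd(\theta)$, so that $\nabla_\theta\cL(\theta)=G-\lambda\theta$. First, I will use
\begin{align*}
\ip{G}{\theta}=\ip{\nabla_\theta\cL(\theta)}{\theta}+\lambda\|\theta\|^2\ge-\rho\|\nabla_\theta\cL(\theta)\|^2+\lambda\|\theta\|^2 .
\end{align*}
Second, I will expand the gradient norm exactly:
\begin{align*}
\|\nabla_\theta\cL(\theta)\|^2=\|G\|^2-2\lambda\ip{G}{\theta}+\lambda^2\|\theta\|^2 .
\end{align*}
Substituting this into the first inequality gives
\begin{align*}
\ip{G}{\theta}\ge-\rho\|G\|^2+2\rho\lambda\ip{G}{\theta}+\lambda(1-\rho\lambda)\|\theta\|^2 .
\end{align*}

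Third, I will move $2\rho\lambda\ip{G}{\theta}$ to the left-hand side, which yields $(1-2\rho\lambda)\ip{G}{\theta}\ge-\rho\|G\|^2+\lambda(1-\rho\lambda)\|\theta\|^2$. Finally, I will divide by $1-2\rho\lambda$. This division preserves the direction of the inequality because of the hypothesis $1-2\rho\lambda>0$, and it produces exactly the constants $\rho_\lambda=\rho/(1-2\rho\lambda)$ and $\tau_\lambda=\lambda(1-\rho\lambda)/(1-2\rho\lambda)$.

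There is no genuine obstacle here. The one point needing care is not to bound $\|\nabla\cL\|^2$ by $\|G\|^2$ via a triangle inequality. Doing so would lose the cross term and give worse constants. Expanding the square exactly instead moves the cross term to the left-hand side, which is what produces the factor $1/(1-2\rho\lambda)$ and the positive coefficient $\tau_\lambda$; note that $\tau_\lambda>0$ follows from $\rho\lambda<1/2$.
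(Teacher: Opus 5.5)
Your proposal is correct and is essentially the paper's proof. The paper expands $\|\nabla\cLwd(\theta)\|^2=\|\nabla\cL(\theta)+\lambda\theta\|^2$ and applies the dissipativity of $\cL$ to the $-\rho\|\nabla\cL(\theta)\|^2$ term, while you expand $\|\nabla\cL(\theta)\|^2=\|G-\lambda\theta\|^2$, which is the same identity read in the opposite direction; both arguments then rearrange and divide by $1-2\rho\lambda>0$ to obtain $\rho_\lambda$ and $\tau_\lambda$.
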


\begin{proof}
Consider
    \begin{align*}
    \ip{\nabla \cLwd (\theta)}{\theta}=\ip{\nabla\cL(\theta)+\lambda\theta}{\theta}=\ip{\nabla\cL(\theta)}{\theta}+\lambda \|\theta\|^2.
\end{align*}
Then 
\begin{align*}
   - \rho\|\nabla \cLwd(\theta)\|^2&=-\rho\|\nabla\cL(\theta)+\lambda\theta\|^2=-\rho\|\nabla\cL(\theta)\|^2-2\rho\lambda\ip{\nabla\cL(\theta)}{\theta}-\rho\lambda^2\|\theta\|^2\\
    &\le (1-2\rho\lambda)\ip{\nabla\cL(\theta)}{\theta}-\rho\lambda^2\|\theta\|^2\\
    &= (1-2\rho\lambda)\ip{\nabla\cL(\theta)}{\theta}+(1-2\rho\lambda)\lambda\|\theta\|^2-(1-\rho\lambda)\lambda\|\theta\|^2\\
    &= (1-2\rho\lambda) \ip{\nabla \cLwd (\theta)}{\theta}-(1-\rho\lambda)\lambda\|\theta\|^2
\end{align*}
where the first inequality follows from the dissipative condition of $\cL(\theta)$. Rearranging the above inequality and dividing by $1-2\rho\lambda>0$, we have
\begin{align*}
    \ip{\nabla \cLwd (\theta)}{\theta}\ge -\frac{\rho}{1-2\rho\lambda}\|\nabla \cLwd(\theta)\|^2+\frac{(1-\rho\lambda)\lambda}{1-2\rho\lambda}\|\theta\|^2.
\end{align*}
\end{proof}

\begin{lemma}
    \label{lem:complement_dissipativity}
    When $\lambda>0$, $\|\nabla\cL(\theta)\|\le \epsilon_\cL$ and $\frac{\lambda}{2}\|\theta\|^2\ge \frac{2\epsilon_\cL^2}{\lambda}$,
    \begin{align*}
        \ip{\nabla\cL(\theta)}{\theta}\ge -\frac{\lambda}{2}\|\theta\|^2,
    \end{align*}
    and thus
    \begin{align*}
        \ip{\nabla\cLwd(\theta)}{\theta}\ge \frac{\lambda}{2}\|\theta\|^2.
    \end{align*}
\end{lemma}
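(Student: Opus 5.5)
The plan is to combine Cauchy--Schwarz with the two hypotheses to compare $\|\nabla\cL(\theta)\|\,\|\theta\|$ against $\frac{\lambda}{2}\|\theta\|^2$.

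First, Cauchy--Schwarz gives
\begin{align*}
\ip{\nabla\cL(\theta)}{\theta}\ge -\|\nabla\cL(\theta)\|\,\|\theta\|\ge -\epsilon_\cL\|\theta\|.
\end{align*}

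Second, we convert the norm hypothesis. The condition $\frac{\lambda}{2}\|\theta\|^2\ge\frac{2\epsilon_\cL^2}{\lambda}$ is equivalent to $\|\theta\|^2\ge 4\epsilon_\cL^2/\lambda^2$, that is, $\|\theta\|\ge 2\epsilon_\cL/\lambda$. This is exactly the threshold used in Lemma~\ref{lem:ip_nabla_cLwd_theta_lower_bound_mainbody}.

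Third, multiplying $\epsilon_\cL\le\frac{\lambda}{2}\|\theta\|$ by $\|\theta\|\ge0$ gives $\epsilon_\cL\|\theta\|\le\frac{\lambda}{2}\|\theta\|^2$. Combining this with the first step yields the first claimed inequality, $\ip{\nabla\cL(\theta)}{\theta}\ge-\frac{\lambda}{2}\|\theta\|^2$. The case $\theta=\bzero$ is trivial, and it is forced when $\epsilon_\cL=0$ only if we allow that degenerate case.

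Finally, for the regularized objective we expand as in the proof of Lemma~\ref{lem:Lwd_dissipative}:
\begin{align*}
\ip{\nabla\cLwd(\theta)}{\theta}=\ip{\nabla\cL(\theta)}{\theta}+\lambda\|\theta\|^2\ge -\frac{\lambda}{2}\|\theta\|^2+\lambda\|\theta\|^2=\frac{\lambda}{2}\|\theta\|^2.
\end{align*}

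There is no real obstacle. The only point needing care is correctly converting the hypothesis stated as $\frac{\lambda}{2}\|\theta\|^2\ge\frac{2\epsilon_\cL^2}{\lambda}$ into the linear bound $\epsilon_\cL\le\frac{\lambda}{2}\|\theta\|$. This uses $\lambda>0$ and the nonnegativity of the norms when taking square roots.
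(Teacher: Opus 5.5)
Your proof is correct and matches the paper's argument step for step: Cauchy--Schwarz gives $\ip{\nabla\cL(\theta)}{\theta}\ge-\epsilon_\cL\|\theta\|$, the hypothesis becomes $\|\theta\|\ge 2\epsilon_\cL/\lambda$, and the second claim follows from $\ip{\nabla\cLwd(\theta)}{\theta}=\ip{\nabla\cL(\theta)}{\theta}+\lambda\|\theta\|^2$. The only cosmetic difference is that you multiply $\epsilon_\cL\le\frac{\lambda}{2}\|\theta\|$ by $\|\theta\|$, whereas the paper writes $-\epsilon_\cL\|\theta\|=-\frac{\epsilon_\cL}{\|\theta\|}\|\theta\|^2$; your version covers $\theta=\bzero$ directly, so the separate remark about that case is unnecessary.
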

\begin{proof}
     When $\|\nabla\cL(\theta)\|\le \epsilon_\cL$, by Cauchy-Schwarz inequality,
    \begin{align*}
        \ip{\nabla\cL(\theta)}{\theta}\ge -\|\nabla\cL(\theta)\| \|\theta\|\ge -\epsilon_\cL\|\theta\|.
    \end{align*}
    Since $\frac{\lambda}{2}\|\theta\|^2\ge \frac{2\epsilon_\cL^2}{\lambda}$, we have
    \begin{align*}
        \|\theta\|\ge \frac{2\epsilon_\cL}{\lambda}. 
    \end{align*}
    Combining the above results, we have
    \begin{align*}
        \ip{\nabla\cL(\theta)}{\theta}\ge  -\epsilon_\cL\|\theta\|=-\frac{\epsilon_\cL}{\|\theta\|}\|\theta\|^2 \ge -\frac{\lambda}{2}\|\theta\|^2.
    \end{align*}
Thus
\begin{align*}
    \ip{\nabla\cLwd(\theta)}{\theta}=\ip{\nabla\cL(\theta)}{\theta}+\lambda\|\theta\|^2\ge \frac{\lambda}{2}\|\theta\|^2.
\end{align*}
\end{proof}

Combining the above two lemmas, we obtain the following dissipativity for all points in the ball of radius $R_\lambda$.
\begin{lemma}
    \label{lem:ip_nabla_cLwd_theta_lower_bound}
    Under Assumption~\ref{assump:0_stationary_point}, let $\lambda>0$, $1-2\rho\lambda>0$, and $\rho_\lambda\ge0$. Then for all $\theta\in B_\bzero(R_{\lambda})\backslash B_\bzero(2\epsilon_\cL/\lambda)$, we have
    \begin{align*}
     \ip{\nabla_\theta\cLwd(\theta)}{\theta}\ge -\rho_{\lambda}\|\nabla_\theta \cLwd(\theta)\|^2+\frac{\lambda}{2}\|\theta\|^2.
\end{align*}
\end{lemma}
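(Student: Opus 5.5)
The plan is to split the annulus $B_\bzero(R_{\lambda})\backslash B_\bzero(2\epsilon_\cL/\lambda)$ according to whether the empirical gradient is small, and to apply Lemma~\ref{lem:Lwd_dissipative} on one piece and Lemma~\ref{lem:complement_dissipativity} on the other. Fix $\theta$ with $2\epsilon_\cL/\lambda\le\|\theta\|<R_\lambda$.

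\emph{Case 1: $\|\nabla\cL(\theta)\|>\epsilon_\cL$.} Here $\theta$ lies in the region where the dissipative condition of $\cL$ is assumed, so Lemma~\ref{lem:Lwd_dissipative} gives
\begin{align*}
\ip{\nabla_\theta\cLwd(\theta)}{\theta}\ge -\rho_{\lambda}\|\nabla_\theta\cLwd(\theta)\|^2+\tau_\lambda\|\theta\|^2,
\qquad \tau_\lambda=\frac{1-\rho\lambda}{1-2\rho\lambda}\lambda.
\end{align*}
It then suffices to check $\tau_\lambda\ge\lambda/2$. Since $1-2\rho\lambda>0$ and $\rho\ge 0$, we have $1-\rho\lambda\ge 1-2\rho\lambda>0$. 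Hence $\tau_\lambda\ge\lambda\ge\lambda/2$, and the claimed inequality follows.

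\emph{Case 2: $\|\nabla\cL(\theta)\|\le\epsilon_\cL$.} The hypothesis $\|\theta\|\ge 2\epsilon_\cL/\lambda$ gives $\frac{\lambda}{2}\|\theta\|^2\ge\frac{\lambda}{2}\cdot\frac{4\epsilon_\cL^2}{\lambda^2}=\frac{2\epsilon_\cL^2}{\lambda}$. This is exactly the assumption of Lemma~\ref{lem:complement_dissipativity}, which therefore yields $\ip{\nabla\cLwd(\theta)}{\theta}\ge\frac{\lambda}{2}\|\theta\|^2$. Because $\rho_\lambda\ge 0$, the term $-\rho_\lambda\|\nabla\cLwd(\theta)\|^2$ is nonpositive, so adding it to the right-hand side preserves the inequality. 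This gives the claim in this case as well.

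There is no real obstacle; the only points needing care are bookkeeping ones. First, the comparison $\tau_\lambda\ge\lambda/2$ uses the sign condition on $\rho$ (here $\rho>0$). Second, Case 2 requires $\rho_\lambda\ge0$, which is why that hypothesis appears in the statement. Third, the two cases must together cover the whole annulus; they do, since every $\theta$ in the annulus satisfies exactly one of $\|\nabla\cL(\theta)\|>\epsilon_\cL$ or $\|\nabla\cL(\theta)\|\le\epsilon_\cL$. Assumption~\ref{assump:0_stationary_point} enters only through Lemma~\ref{lem:Lwd_dissipative}, where $\theta^*=\bzero$ is the stationary point in the dissipative condition.
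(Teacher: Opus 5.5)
Your proof is correct and follows the paper's argument. The paper also splits on whether $\|\nabla\cL(\theta)\|>\epsilon_\cL$, applies Lemma~\ref{lem:Lwd_dissipative} with $\tau_\lambda\ge\lambda/2$ in the first case, and applies Lemma~\ref{lem:complement_dissipativity} plus $\rho_\lambda\ge0$ in the second. A small aside: the bound $\tau_\lambda>\lambda/2$ that the paper uses needs no sign condition on $\rho$, because $2(1-\rho\lambda)>1-2\rho\lambda$ always holds; your use of $\rho\ge0$ is still legitimate, since it follows from $\rho_\lambda\ge0$ and $1-2\rho\lambda>0$.
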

\begin{proof}
    When $\|\nabla\cL(\theta)\|>\epsilon_\cL$, by Lemma~\ref{lem:Lwd_dissipative}, we have
\begin{align*}
    \ip{\nabla_\theta\cLwd(\theta)}{\theta}\ge -\rho_{\lambda}\|\nabla_\theta \cLwd(\theta)\|^2+\tau_{\lambda}\|\theta\|^2
\end{align*}
where $\rho_{\lambda}=\frac{\rho}{1-2\rho\lambda}$, $\tau_{\lambda}=\frac{1-\rho\lambda}{1-2\rho\lambda}\lambda$.

When $\|\nabla\cL(\theta)\|\le\epsilon_\cL$, by Lemma~\ref{lem:complement_dissipativity}, we have
\begin{align*}
     \ip{\nabla\cLwd(\theta)}{\theta}\ge \frac{\lambda}{2}\|\theta\|^2.
\end{align*}

Additionally, since $$\frac{1-\rho\lambda}{1-2\rho\lambda} >\frac{1}{2},$$ and $\rho_\lambda\ge0$,  
we have for all $\frac{\lambda}{2}\|\theta\|^2\ge \frac{2\epsilon_\cL^2}{\lambda}$,
\begin{align*}
     \ip{\nabla_\theta\cLwd(\theta)}{\theta}\ge -\rho_{\lambda}\|\nabla_\theta \cLwd(\theta)\|^2+\frac{\lambda}{2}\|\theta\|^2.
\end{align*}
\end{proof}

\subsection{Convergence under GD dynamics with weight decay}

\begin{proof}[Proof of Theorem~\ref{thm:general_convergence_loss_decay}]
    The proof follows from similar ideas as the proof of Theorem 1 in \citet{wang2026convergencegradientdescentgeneral}. The discussions of the measure-zero arguments and probability 1 argument remains the same. 

    We first note that the radius used in Lemma~\ref{lem:upper_bound_theta_sum_nabla_cL_along_GD} is the same as the simplified radius in the theorem statement. Since $\rho>0$, $0<\lambda<1/(2\rho)$, and $\delta>0$, we have $\rho_\lambda=\rho/(1-2\rho\lambda)>0$ and hence
    \begin{align*}
        \max\left\{\frac{4\rho_\lambda+2}{\delta},0\right\}
        =\frac{4\rho_\lambda+2}{\delta}.
    \end{align*}
    Therefore,
    \begin{align*}
        R_\lambda^2
        &=\|\theta^0\|^2+\frac{\frac{4\rho_{\lambda}+2}{\delta}}{1+\frac{\lambda}{2}\frac{4\rho_{\lambda}+2}{\delta}}\cL(\theta^0)\\
        &=\|\theta^0\|^2+\frac{4\rho_{\lambda}+2}{\delta+\frac{\lambda}{2}(4\rho_{\lambda}+2)}\cL(\theta^0)\\
        &=\|\theta^0\|^2+\frac{2(2\rho_{\lambda}+1)}{\delta+\lambda(2\rho_{\lambda}+1)}\cL(\theta^0).
    \end{align*}

    By Lemma~\ref{lem:lower_bound_gradient}, we have 
\begin{align*}
        \|\nabla\cLwd(\theta)\|^2\ge \frac{2\mu_{\lambda,\theta,X}}{N}\cLwd(\theta)
    \end{align*}

    By Lemma~\ref{lem:cLwd_poly_smooth}, $\cLwd(\theta)$ is poly-smooth. Therefore, 
     \begin{align*}
    \cLwd(\theta^{k+1})&\le \cLwd(\theta^{k})+\nabla \cLwd(\theta^{k})^\top(\theta^{k+1}-\theta^{k})+\frac{S_{\lambda,k}}{2}\|\theta^k-\theta^{k+1}\|^2\\
    &= \cLwd(\theta^{k})-\eta\left(1-\frac{\eta S_{\lambda,k}}{2}\right)\|\nabla\cLwd(\theta^k)\|^2\\
    &\le \cLwd(\theta^{k})-\eta\left(1-\frac{\eta \mathsf{L}_\lambda}{2}\right)\|\nabla\cLwd(\theta^k)\|^2\\
    & \le \left(1- \eta\left(1-\frac{\eta\, \mathsf{L}_\lambda}{2}\right)\frac{2\mu_{\lambda,k,X}}{N}\right)\cLwd(\theta^{k})\\
    &\le \prod_{s=0}^{k}\left(1- \eta\left(1-\frac{\eta\, \mathsf{L}_\lambda}{2}\right)\frac{2\mu_{\lambda,s,X}}{N}\right)\cLwd(\theta^{0})
\end{align*}
where the second inequality follows from Lemma~\ref{lem:upper_bound_theta_sum_nabla_cL_along_GD}.
\end{proof}

\subsubsection{Smoothness}

\begin{lemma}
    \label{lem:cLwd_poly_smooth}
    Under Assumption~\ref{assump:model_assumptions},
    \begin{enumerate}
        \item $\cL(\theta)$ is poly-smooth, i.e.,
    \begin{align*}
         \|\nabla_\theta\cL(\theta)-\nabla_\theta\cL(\theta')\|\le S(\|\theta_{\max,1}\|,\cdots,\|\theta_{\max,n_\theta}\|) \ \|\theta-\theta'\|
    \end{align*}
    where $S(\cdot)$ is the polynomial corresponding to the poly-smoothness of $\cL(\theta)$ and $\|\theta_{\max,j}\|=\max\{\|\theta_j\|,\|\theta_j'\|\}$. Additionally,
    \begin{align*}
        \cL(\theta')&\le \cL(\theta)+\nabla \cL(\theta)^\top(\theta'-\theta)+\frac{S(\|\theta_{\max,1}\|,\cdots,\|\theta_{\max,n_\theta}\|)}{2}\|\theta-\theta'\|^2.
    \end{align*}
    \item $\cLwd(\theta)$ is poly-smooth, i.e.,
    \begin{align*}
         \|\nabla_\theta\cLwd(\theta)-\nabla_\theta\cLwd(\theta')\|\le S_\lambda(\|\theta_{\max,1}\|,\cdots,\|\theta_{\max,n_\theta}\|) \ \|\theta-\theta'\|
    \end{align*}
    where 
    \begin{align*}
        S_\lambda(\|\theta_{\max,1}\|,\cdots,\|\theta_{\max,n_\theta}\|)=S(\|\theta_{\max,1}\|,\cdots,\|\theta_{\max,n_\theta}\|)+\lambda. 
    \end{align*}
    Similarly, 
    \begin{align*}
        \cLwd(\theta')&\le \cLwd(\theta)+\nabla \cLwd(\theta)^\top(\theta'-\theta)+\frac{S_\lambda (\|\theta_{\max,1}\|,\cdots,\|\theta_{\max,n_\theta}\|)}{2}\|\theta-\theta'\|^2.
    \end{align*}
    \end{enumerate} 
\end{lemma}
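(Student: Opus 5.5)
The plan is to derive both parts from the poly-smoothness of $\cL$, which Assumption~\ref{assump:model_assumptions} provides via the composition rules for poly-smooth maps in \citet{wang2026convergencegradientdescentgeneral}. The weight-decay term is then handled by hand.

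For item 1, the key observation is that $\cL$ is a finite average of squared residuals $\frac12\|y_i-f(\theta;x_i)\|^2$. Each $f(\theta;x_i)$ is built by composing the blocks $\varphi_\ell$, and each block is poly-bounded, poly-continuous and poly-smooth in $(\theta_\ell,u)$.

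The Jacobian of $f$ with respect to $\theta$ follows from the chain rule through the residual recursion $u_{\ell+1}=u_\ell+\varphi_\ell(\theta_\ell;u_\ell)$. The hidden states $u_\ell$, their Jacobians, and the differences of these Jacobians at $\theta$ and $\theta'$ are then bounded inductively in $\ell$, each by a polynomial in $\|\theta_{\max,j}\|$ times $\|\theta-\theta'\|$. Here I would use the standard decomposition $AB-A'B'=(A-A')B+A'(B-B')$.

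Multiplying by the residual $f-y_i$, which is itself poly-bounded and poly-Lipschitz, and averaging over $i$ gives a polynomial $S$ with positive coefficients. This is the main bookkeeping step, and I would cite \citet[Lemma on poly-smoothness of the loss]{wang2026convergencegradientdescentgeneral} rather than redo it. The $x_i$ enter only through fixed constants absorbed into the coefficients of $S$.

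The quadratic upper bound then follows from
$$\cL(\theta')-\cL(\theta)-\nabla\cL(\theta)^\top(\theta'-\theta)=\int_0^1(\nabla\cL(\theta+t(\theta'-\theta))-\nabla\cL(\theta))^\top(\theta'-\theta)\,dt.$$
For every $t$, the point $\theta_t=\theta+t(\theta'-\theta)$ satisfies $\|\theta_{t,j}\|\le\max\{\|\theta_j\|,\|\theta'_j\|\}$ by convexity of the norm. Because $S$ has positive coefficients it is monotone in each argument, so the integrand is at most $S(\|\theta_{\max,\cdot}\|)\,t\,\|\theta-\theta'\|^2$. Integrating over $t\in[0,1]$ gives the factor $\frac12$.

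For item 2, write $\nabla\cLwd=\nabla\cL+\lambda\theta$. The triangle inequality then gives the Lipschitz bound with $S+\lambda$, and $S+\lambda$ is again a polynomial with positive coefficients. The descent inequality follows either by repeating the integral argument with $S_\lambda$, or by adding to item 1 the exact identity
$$\frac\lambda2\|\theta'\|^2=\frac\lambda2\|\theta\|^2+\lambda\theta^\top(\theta'-\theta)+\frac\lambda2\|\theta'-\theta\|^2.$$

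The main obstacle is the only nontrivial step: the inductive propagation of poly-smoothness through the $L$ composed residual blocks. It needs care because the $u$-arguments of later blocks depend on earlier parameters, so the polynomial grows with depth, but it still stays polynomial in the parameter norms.
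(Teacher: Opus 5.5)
Your proposal is correct and follows essentially the same route as the paper, whose proof simply invokes Proposition 1, Lemma 2, and Lemma 12 of \cite{wang2025data}. Those results supply the poly-smoothness of $\cL$, propagated through the composed blocks, and the resulting descent inequality, with the $\lambda\theta$ term then added. Your integral-form derivation of the quadratic upper bound uses convexity of the blockwise norms along the segment and the monotonicity of $S$ on nonnegative arguments, and your triangle-inequality handling of $\lambda\theta$ is exactly the elementary content behind those cited results.
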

\begin{proof}
    The proof follows directly from Proposition 1, Lemma 2, and Lemma 12 in \cite{wang2025data}.
\end{proof}

Similar to Lemma 11 in \citet{wang2026convergencegradientdescentgeneral}, we obtain the following lemma to upper bound the sum of $\nabla\cLwd$ norm squares and $\theta^k$. 
\begin{lemma}
\label{lem:upper_bound_theta_sum_nabla_cL_along_GD}
Let $$\mathsf{L}_\lambda=S\left(\cdots,R_{\lambda}+\sup_{\theta\in B_\bzero(R_{\lambda})}\|\nabla_{\theta_i}\cLwd(\theta)\|,\cdots\right)+\lambda.$$
where $S(\cdot)$ is the polynomial corresponding to the poly-smoothness of $\cL(\theta)$ and
\begin{align*}
    R_{\lambda}=\sqrt{ \|\theta^0\|^2+\frac{\max\left\{ \frac{4\rho_{\lambda}+2}{\delta},0 \right\}}{1+\frac{\lambda}{2}\max\left\{ \frac{4\rho_{\lambda}+2}{\delta},0 \right\}}\cL(\theta^0)}.
\end{align*}
and let 
$$\eta\le\min\left\{ \frac{2-\delta}{\mathsf{L}_\lambda},1 \right\},\text{ for some }0<\delta<2.$$
We then have 
\begin{enumerate}
    \item The sum of gradient norm squares
    \begin{align}
\label{eqn:sum_grad_square}
    \sum_{j=0}^k\|\nabla\cLwd(\theta^j)\|^2\le \frac{2}{\eta(2-\eta\,\mathsf{L}_{\lambda})}(\cLwd(\theta^0)-\cLwd(\theta^{k+1})).
\end{align}
\item For all $k\ge 0$ and $\frac{\lambda}{2}\|\theta^k\|^2\ge \frac{2\epsilon_\cL^2}{\lambda}$, we have $$\theta^{k}\in  B_\bzero (R_{\lambda}), \text{ and }S_{\lambda,k}\le \mathsf{L}_\lambda,$$ 
where \begin{align*}
    S_{\lambda,j}&=S(\cdots,\max\{\|\theta_i^j\|,\|\theta_i^{j+1}\|\},\cdots)+\lambda,
\end{align*}
with $S(\cdot)$ being the polynomial corresponding to the poly-smoothness of $\cL(\theta)$.
    
\item Especially,
\begin{align*}
        \|\theta^{k+1}\|^2&\le \frac{\left(1-\eta\lambda\right)^{k+1}+\frac{\lambda}{2}\max\left\{ \frac{4\rho_{\lambda}+2\eta}{2-\eta\,\mathsf{L}_{\lambda}},0 \right\}}{1+\frac{\lambda}{2}\max\left\{ \frac{4\rho_{\lambda}+2\eta}{2-\eta\,\mathsf{L}_{\lambda}},0 \right\}}\|\theta^0\|^2+\frac{\max\left\{ \frac{4\rho_{\lambda}+2\eta}{2-\eta\,\mathsf{L}_{\lambda}},0 \right\}}{1+\frac{\lambda}{2}\max\left\{ \frac{4\rho_{\lambda}+2\eta}{2-\eta\,\mathsf{L}_{\lambda}},0 \right\}}\cL(\theta^0)
    \end{align*}
    
\end{enumerate}
   
\end{lemma}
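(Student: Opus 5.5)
The argument has two parts: a one-step descent estimate for $\cLwd$, and an induction showing the iterates stay in $B_\bzero(R_\lambda)$. Item~1 follows from the descent step, item~3 from the dissipativity of Lemma~\ref{lem:ip_nabla_cLwd_theta_lower_bound}, and item~2 from item~3 by induction.

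\textbf{Step 1 (descent and item~1).} Fix $j$ and suppose $\theta^j,\theta^{j+1}\in B_\bzero(R_\lambda+\epsilon_\eta)$. The coordinate bound used in $\mathsf{L}_\lambda$ then gives $S_{\lambda,j}\le\mathsf{L}_\lambda$. Apply the descent inequality of Lemma~\ref{lem:cLwd_poly_smooth} with $\theta^{j+1}-\theta^j=-\eta\nabla\cLwd(\theta^j)$. This yields
\[
\cLwd(\theta^{j+1})\le\cLwd(\theta^j)-\eta\left(1-\tfrac{\eta\mathsf{L}_\lambda}{2}\right)\|\nabla\cLwd(\theta^j)\|^2 .
\]
Summing over $j=0,\dots,k$ and telescoping gives \eqref{eqn:sum_grad_square}. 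The restriction $\eta\le(2-\delta)/\mathsf{L}_\lambda$ ensures $2-\eta\mathsf{L}_\lambda\ge\delta>0$. In particular $\cLwd(\theta^j)\le\cLwd(\theta^0)$ for every $j$.

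\textbf{Step 2 (norm recursion and item~3).} Expand the update:
\[
\|\theta^{j+1}\|^2=\|\theta^j\|^2-2\eta\ip{\nabla\cLwd(\theta^j)}{\theta^j}+\eta^2\|\nabla\cLwd(\theta^j)\|^2 .
\]
While $\|\theta^j\|\ge 2\epsilon_\cL/\lambda$ and $\theta^j\in B_\bzero(R_\lambda)$, Lemma~\ref{lem:ip_nabla_cLwd_theta_lower_bound} applies and gives
\[
\|\theta^{j+1}\|^2\le(1-\eta\lambda)\|\theta^j\|^2+\eta(2\rho_\lambda+\eta)\|\nabla\cLwd(\theta^j)\|^2 .
\]
Unroll this recursion. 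Since $1-\eta\lambda\le1$, the gradient terms may be bounded crudely by their plain sum, and Step~1 bounds that sum by $\frac{2}{\eta(2-\eta\mathsf{L}_\lambda)}\bigl(\cLwd(\theta^0)-\cLwd(\theta^{k+1})\bigr)$. This produces
\[
\|\theta^{k+1}\|^2\le(1-\eta\lambda)^{k+1}\|\theta^0\|^2+c\left(\cL(\theta^0)+\tfrac{\lambda}{2}\|\theta^0\|^2-\tfrac{\lambda}{2}\|\theta^{k+1}\|^2\right),
\]
where $c=\frac{4\rho_\lambda+2\eta}{2-\eta\mathsf{L}_\lambda}$. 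The negative term comes from $\cLwd(\theta^{k+1})\ge\frac{\lambda}{2}\|\theta^{k+1}\|^2$, using $\cL\ge0$. Moving $c\frac{\lambda}{2}\|\theta^{k+1}\|^2$ to the left and dividing by $1+\frac{\lambda}{2}c$ gives exactly the displayed bound in item~3. The $\max\{\cdot,0\}$ in the statement is harmless, since $c>0$.

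\textbf{Step 3 (confinement, item~2).} Use $\eta\le1$ and $2-\eta\mathsf{L}_\lambda\ge\delta$ to get $c\le\frac{4\rho_\lambda+2}{\delta}=:\bar c$. Then check that the item~3 bound is at most
\[
\|\theta^0\|^2+\frac{\bar c}{1+\lambda\bar c/2}\cL(\theta^0)=R_\lambda^2 .
\]
For the $\|\theta^0\|^2$ coefficient, $\frac{(1-\eta\lambda)^{k+1}+\lambda c/2}{1+\lambda c/2}\le1$. For the $\cL(\theta^0)$ coefficient, $x\mapsto x/(1+\lambda x/2)$ is increasing, so $c\le\bar c$ suffices. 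Hence $\theta^{k+1}\in B_\bzero(R_\lambda)$, which closes the induction. The base case is $\theta^0\in B_\bzero(R_\lambda)$.

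\textbf{Main obstacle.} The hard part is keeping the induction non-circular. The smoothness bound $S_{\lambda,j}\le\mathsf{L}_\lambda$ requires $\theta^{j+1}$ to lie within one step of the ball, which is why $\mathsf{L}_\lambda$ is evaluated at radius $R_\lambda+\sup\|\nabla\cLwd\|$; with $\eta\le1$ this holds automatically once $\theta^j\in B_\bzero(R_\lambda)$. I would order the induction as: $\theta^j\in B$, so $S_{\lambda,j}\le\mathsf{L}_\lambda$, so descent holds at step $j$, so the norm recursion applies, so $\theta^{j+1}\in B$.

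A second subtlety is that Lemma~\ref{lem:ip_nabla_cLwd_theta_lower_bound} requires $\|\theta^j\|\ge2\epsilon_\cL/\lambda$ at every earlier step. I would state the induction for the maximal initial run of iterates satisfying this, matching the phrase ``for all $k$ with $\frac{\lambda}{2}\|\theta^k\|^2\ge 2\epsilon_\cL^2/\lambda$'' in item~2.
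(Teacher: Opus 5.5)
Your proposal is correct and follows the paper's proof essentially step for step. It uses the same induction: $\theta^j\in B_\bzero(R_\lambda)$ gives $S_{\lambda,j}\le\mathsf{L}_\lambda$, hence descent and the telescoped gradient sum; then the dissipativity-based recursion for $\|\theta^{j+1}\|^2$ is unrolled with $(1-\eta\lambda)^{k-j}\le 1$, the term $\frac{\lambda}{2}c\|\theta^{k+1}\|^2$ is absorbed via $\cL\ge 0$, and the result is compared with $R_\lambda^2$ using $\eta\le 1$ and $2-\eta\mathsf{L}_\lambda\ge\delta$.

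The one slip is the hypothesis $\theta^{j+1}\in B_\bzero(R_\lambda+\epsilon_\eta)$ in Step~1. It does not by itself give the blockwise bound $\|\theta^{j+1}_i\|\le R_\lambda+\sup_{B_\bzero(R_\lambda)}\|\nabla_{\theta_i}\cLwd\|$ used in $\mathsf{L}_\lambda$, since $\epsilon_\eta$ involves the full gradient. The right justification is the one in your last paragraph, which is also the paper's: $\|\theta_i^{j+1}\|\le\|\theta_i^j\|+\eta\|\nabla_{\theta_i}\cLwd(\theta^j)\|$, with $\theta^j\in B_\bzero(R_\lambda)$ and $\eta\le 1$.
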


\begin{proof}

Obviously, we have $\theta^0\in  B_\bzero(R_{\lambda})$. Suppose $\theta^0,\cdots,\theta^k \in  B_\bzero(R_{\lambda})$. Then we have for all $j\le k$,
\begin{align*}
    S_{\lambda,j}&=S(\cdots,\max\{\|\theta_i^j\|,\|\theta_i^{j+1}\|\},\cdots)+\lambda\\
    &\le S(\cdots,\|\theta_i^j\|+\eta\|\nabla_{\theta_i}\cLwd(\theta^j)\|,\cdots)+\lambda\\
    &\le S\left(\cdots,R_{\lambda}+\sup_{\theta\in B_\bzero(R_{\lambda})}\|\nabla_{\theta_i}\cLwd(\theta)\|,\cdots\right)+\lambda=\mathsf{L}_{\lambda},
\end{align*}
where $S(\cdot)$ is the polynomial corresponding to the poly-smoothness of $\cL(\theta)$. Then we have
\begin{align*}
        \eta\le\frac{2-\delta}{\mathsf{L}_{\lambda}}\le\frac{2-\delta}{{S}_{\lambda,j}}.
    \end{align*}

We then would like to show that $\theta^{k+1}\in  B_\bzero(R_{\lambda})$.
    \begin{align*}
    0\le \cLwd(\theta^{k+1})&\le \cLwd(\theta^{k})+\nabla \cLwd(\theta^{k})^\top(\theta^{k+1}-\theta^{k})+\frac{S_{\lambda,k}}{2}\|\theta^k-\theta^{k+1}\|^2\\
    &= \cLwd(\theta^{k})-\eta\left(1-\frac{\eta S_{\lambda,k}}{2}\right)\|\nabla\cLwd(\theta^k)\|^2\\
    &\le \cLwd(\theta^0)-\eta\sum_{j=0}^k\left(1-\frac{\eta S_{\lambda,j}}{2}\right)\|\nabla\cLwd(\theta^j)\|^2\\
    &\le \cLwd(\theta^0)-\frac{(2-\eta\,\mathsf{L}_{\lambda})}{2}\eta\sum_{j=0}^k\|\nabla\cLwd(\theta^j)\|^2,
\end{align*}
namely, we have
\begin{align}
    \sum_{j=0}^k\|\nabla\cLwd(\theta^j)\|^2\le \frac{2}{\eta(2-\eta\,\mathsf{L}_{\lambda})}(\cLwd(\theta^0)-\cLwd(\theta^{k+1})).
\end{align}

Then for $\frac{\lambda}{2}\|\theta^j\|^2\ge \frac{2\epsilon_\cL^2}{\lambda}$ where $j=0,\cdots,k$, we have
 \begin{align*}
                \|\theta^{k+1}\|^2
                &=\|\theta^{k}-\eta\nabla \cLwd(\theta^k)\|^2\\
        &=\|\theta^{k}\|^2-2\eta\nabla \cLwd(\theta^k)^\top \theta^{k}+\eta^2\|\nabla \cLwd(\theta^k)\|^2\\
        &\le \left(1-\eta{\lambda}\right) \|\theta^{k}\|^2+\left(2\eta\rho_{\lambda}+\eta^2\right)\|\nabla \cLwd(\theta^k)\|^2\\
        &\le \left(1-\eta\lambda\right)^{k+1}\|\theta^0\|^2+\left(2\eta\rho_{\lambda}+\eta^2\right)\sum_{j=0}^k \left(1-\eta\lambda\right)^{k-j} \|\nabla\cLwd(\theta^j)\|^2\\
        & \le \left(1-\eta\lambda\right)^{k+1}\|\theta^0\|^2+\left(2\eta\rho_{\lambda}+\eta^2\right)\sum_{j=0}^k  \|\nabla\cLwd(\theta^j)\|^2\\
        &\le \left(1-\eta\lambda\right)^{k+1}\|\theta^0\|^2 + \max\left\{ \frac{4\rho_{\lambda}+2\eta}{2-\eta\,\mathsf{L}_{\lambda}},0 \right\}(\cLwd(\theta^0)-\cLwd(\theta^{k+1}))\\
        &\le\left(\left(1-\eta\lambda\right)^{k+1}+\frac{\lambda}{2}\max\left\{ \frac{4\rho_{\lambda}+2\eta}{2-\eta\,\mathsf{L}_{\lambda}},0 \right\}\right)\|\theta^0\|^2 \\
        &\qquad+ \max\left\{ \frac{4\rho_{\lambda}+2\eta}{2-\eta\,\mathsf{L}_{\lambda}},0 \right\}\cL(\theta^0)-\frac{\lambda}{2}\max\left\{ \frac{4\rho_{\lambda}+2\eta}{2-\eta\,\mathsf{L}_{\lambda}},0 \right\}\|\theta^{k+1}\|^2
    \end{align*}
    where the first inequality follows from Lemma~\ref{lem:ip_nabla_cLwd_theta_lower_bound}; the second inequality follows from Gronwall's inequality. Then, combining $\|\theta^{k+1}\|^2$, we have
    \begin{align*}
        \|\theta^{k+1}\|^2&\le \frac{\left(1-\eta\lambda\right)^{k+1}+\frac{\lambda}{2}\max\left\{ \frac{4\rho_{\lambda}+2\eta}{2-\eta\,\mathsf{L}_{\lambda}},0 \right\}}{1+\frac{\lambda}{2}\max\left\{ \frac{4\rho_{\lambda}+2\eta}{2-\eta\,\mathsf{L}_{\lambda}},0 \right\}}\|\theta^0\|^2+\frac{\max\left\{ \frac{4\rho_{\lambda}+2\eta}{2-\eta\,\mathsf{L}_{\lambda}},0 \right\}}{1+\frac{\lambda}{2}\max\left\{ \frac{4\rho_{\lambda}+2\eta}{2-\eta\,\mathsf{L}_{\lambda}},0 \right\}}\cL(\theta^0)\\
        &\le \left(1-\frac{\eta\lambda}{1+\frac{\lambda}{2}\max\left\{ \frac{4\rho_{\lambda}+2\eta}{2-\eta\,\mathsf{L}_{\lambda}},0 \right\}}\right)\|\theta^0\|^2+\frac{\max\left\{ \frac{4\rho_{\lambda}+2\eta}{2-\eta\,\mathsf{L}_{\lambda}},0 \right\}}{1+\frac{\lambda}{2}\max\left\{ \frac{4\rho_{\lambda}+2\eta}{2-\eta\,\mathsf{L}_{\lambda}},0 \right\}}\cL(\theta^0)\\
        &\le \|\theta^0\|^2+\frac{\max\left\{ \frac{4\rho_{\lambda}+2}{\delta},0 \right\}}{1+\frac{\lambda}{2}\max\left\{ \frac{4\rho_{\lambda}+2}{\delta},0 \right\}}\cL(\theta^0)\\
        &= R_{\lambda}^2
    \end{align*}
    where the second inequality follows from taking $k=0$, and the last inequality follows from $0<\eta\le 1$ and $2-\eta\mathsf{L}_\lambda\ge \delta$. 
\end{proof}

\subsubsection{Lower bound of gradient}

\begin{lemma}[lower bound of $\|\nabla\cLwd\|^2$] 
\label{lem:lower_bound_gradient}
Let $0<\lambda<\frac{1}{2\rho}$. Under Assumption~\ref{assump:data}, Assumption~\ref{assump:0_stationary_point}, and Assumption~\ref{assump:model_assumptions}, for all $\theta\in B_\bzero(R_{\lambda})$ satisfying either $\|\nabla\cL(\theta)\|>\epsilon_{\cL}$ or $\frac{\lambda}{2}\|\theta\|^2\ge \frac{2\epsilon_\cL^2}{\lambda}$,
    \begin{align*}
        \|\nabla\cLwd(\theta)\|^2\ge \frac{2\mu_{\lambda,\theta,X}}{N}\cLwd(\theta)
    \end{align*}
     where
     \begin{align*}
     \mu_{\lambda,\theta,X}:=
     \begin{cases}
     \min\left\{(1-2\rho\lambda){\mu_{\mathrm{low},\theta,X}},\lambda N\right\}, & \|\nabla\cL(\theta)\|>\epsilon_{\cL},\\
     \frac{N\epsilon_\cL^2}{2\sup_{\tilde\theta\in B_\bzero(R_{\lambda})}\cLwd(\tilde\theta)}, & \|\nabla\cL(\theta)\|\le\epsilon_{\cL}\text{ and }\frac{\lambda}{2}\|\theta\|^2\ge \frac{2\epsilon_\cL^2}{\lambda}.
     \end{cases}
     \end{align*}
     Additionally, when $\|\nabla\cL(\theta)\|>\epsilon_{\cL}$,	
 	     \begin{align*}
 	     \|\nabla\cLwd(\theta)\|^2\ge \lambda^2\|\theta\|^2 .
 	     \end{align*}
     When $\|\nabla\cL(\theta)\|\le\epsilon_{\cL}$ and $\frac{\lambda}{2}\|\theta\|^2\ge \frac{2\epsilon_\cL^2}{\lambda}$,
     \begin{align*}
     \|\nabla\cLwd(\theta)\|^2\ge \frac{\lambda^2}{4}\|\theta\|^2 .
     \end{align*}

\end{lemma}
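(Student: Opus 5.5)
The plan is to split into the two cases in the statement and, in each, expand
\[
\|\nabla\cLwd(\theta)\|^2=\|\nabla\cL(\theta)\|^2+2\lambda\ip{\nabla\cL(\theta)}{\theta}+\lambda^2\|\theta\|^2 .
\]
In the first case the cross term is handled by dissipativity of $\cL$. In the second it is handled by the triangle inequality and the large-norm condition.

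\textbf{Case $\|\nabla\cL(\theta)\|>\epsilon_\cL$.}
\begin{enumerate}
\item Since $\theta\in B_\bzero(R_\lambda)$ lies outside the small-gradient set, the dissipative condition of Theorem~\ref{thm:general_convergence_loss_decay} applies and gives $\ip{\nabla\cL(\theta)}{\theta}\ge-\rho\|\nabla\cL(\theta)\|^2$. Substituting into the expansion yields
\[
\|\nabla\cLwd(\theta)\|^2\ge(1-2\rho\lambda)\|\nabla\cL(\theta)\|^2+\lambda^2\|\theta\|^2 .
\]
Because $1-2\rho\lambda>0$, this already proves the auxiliary bound $\|\nabla\cLwd(\theta)\|^2\ge\lambda^2\|\theta\|^2$.
\item Next I invoke the iterate-dependent PL-type inequality of \citet[Lemma~7]{wang2026convergencegradientdescentgeneral}: $\|\nabla\cL(\theta)\|^2\ge\frac{2\mu_{\mathrm{low},\theta,X}}{N}\cL(\theta)$ with $\mu_{\mathrm{low},\theta,X}>0$. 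This is the step where Assumptions~\ref{assump:data} and~\ref{assump:model_assumptions} enter. Writing $\nabla\cL=\frac1N J^\top r$ with $r$ the residual vector, $\mu_{\mathrm{low}}$ is essentially the smallest eigenvalue of $JJ^\top/N$. Real-analyticity and the nonlinear-architecture condition make this Jacobian of full row rank off a measure-zero set of parameters and data, hence with probability one.
\item Combining the two bounds and writing $\lambda^2\|\theta\|^2=2\lambda\cdot\frac{\lambda}{2}\|\theta\|^2=\frac{2(\lambda N)}{N}\cdot\frac\lambda2\|\theta\|^2$ gives
\[
\|\nabla\cLwd\|^2\ge\frac2N\Big[(1-2\rho\lambda)\mu_{\mathrm{low}}\,\cL+\lambda N\,\tfrac\lambda2\|\theta\|^2\Big]\ge\frac{2}{N}\min\{(1-2\rho\lambda)\mu_{\mathrm{low}},\lambda N\}\,\cLwd(\theta).
\]
\end{enumerate}

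\textbf{Case $\|\nabla\cL(\theta)\|\le\epsilon_\cL$ and $\frac\lambda2\|\theta\|^2\ge\frac{2\epsilon_\cL^2}{\lambda}$.}
\begin{enumerate}
\item The second condition is equivalent to $\|\theta\|\ge 2\epsilon_\cL/\lambda$, so $\|\nabla\cL(\theta)\|\le\epsilon_\cL\le\frac\lambda2\|\theta\|$. The reverse triangle inequality then gives
\[
\|\nabla\cLwd(\theta)\|\ge\lambda\|\theta\|-\|\nabla\cL(\theta)\|\ge\frac\lambda2\|\theta\|,
\]
which is the second auxiliary bound.
\item Squaring and using $\|\theta\|\ge2\epsilon_\cL/\lambda$ again gives $\|\nabla\cLwd(\theta)\|^2\ge\epsilon_\cL^2$.
\item Finally,
\[
\epsilon_\cL^2=\frac{2}{N}\cdot\frac{N\epsilon_\cL^2}{2\sup_{B_\bzero(R_\lambda)}\cLwd}\cdot\sup_{B_\bzero(R_\lambda)}\cLwd\ge\frac{2\mu_{\lambda,\theta,X}}{N}\cLwd(\theta).
\]
The supremum is finite by continuity of $\cLwd$ on the bounded ball, and it is positive. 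If it were zero the claim would be trivial.
\end{enumerate}

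The main obstacle is the positivity and genericity of $\mu_{\mathrm{low},\theta,X}$ in the first case. This is the probability-one, measure-zero argument, and I would import it wholesale from \citet{wang2026convergencegradientdescentgeneral}. The weight decay does not change the network Jacobian, so that argument should carry over unchanged. The remaining steps are elementary algebra.
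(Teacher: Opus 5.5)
Your proof is correct and follows the paper's argument. In the case $\|\nabla\cL(\theta)\|>\epsilon_\cL$ you control the cross term by dissipativity and then apply the imported PL-type bound, exactly as the paper does. In the small-gradient case your reverse triangle inequality $\|\nabla\cL(\theta)+\lambda\theta\|\ge\lambda\|\theta\|-\epsilon_\cL\ge\frac{\lambda}{2}\|\theta\|$ is a slightly more direct version of the paper's route through Lemma~\ref{lem:complement_dissipativity} and Cauchy--Schwarz, and the final comparison with $\sup_{B_\bzero(R_\lambda)}\cLwd$ is the same.
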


\begin{proof}
We first consider the case $\|\nabla\cL(\theta)\|>\epsilon_\cL$. The existing lower bound gives
    \begin{align*}
        \|\nabla\cLwd(\theta)\|^2&=\|\nabla\cL(\theta)\|^2+2\lambda\ip{\nabla\cL(\theta)}{\theta}+\lambda^2\|\theta\|^2\\
        &\ge(1-2\rho\lambda) \|\nabla\cL(\theta)\|^2+\lambda^2\|\theta\|^2\\
        &\ge (1-2\rho\lambda)\frac{2\mu_{\mathrm{low},\theta,X}}{N} \cL(\theta)+2\lambda\cdot\frac{\lambda}{2}\|\theta\|^2\\
        &\ge \frac{2\mu_{\lambda,\theta,X}}{N}\cLwd(\theta)
    \end{align*}
    where $ \mu_{\lambda,\theta,X}=\min\left\{(1-2\rho\lambda){\mu_{\mathrm{low},\theta,X}} , \lambda N\right\}$. The first inequality follows from the dissipative condition of $\cL$; the second inequality follows from Lemma 12 in \citet{wang2026convergencegradientdescentgeneral}. Moreover, in the same case,
    \begin{align*}
		\|\nabla\cLwd(\theta)\|^2&=\|\nabla\cL(\theta)+\lambda\theta\|^2=\|\nabla\cL(\theta)\|^2+2\lambda\ip{\nabla\cL(\theta)}{\theta}+\lambda^2\|\theta\|^2\\
		&\ge (1-2\rho\lambda)\|\nabla\cL(\theta)\|^2+\lambda^2\|\theta\|^2\\
		&\ge \lambda^2\|\theta\|^2 
	\end{align*}
	where $1-2\rho\lambda>0$.
    
We next consider the case $\|\nabla\cL(\theta)\|\le\epsilon_\cL$ and $\frac{\lambda}{2}\|\theta\|^2\ge \frac{2\epsilon_\cL^2}{\lambda}$. By Lemma~\ref{lem:complement_dissipativity},
    \begin{align*}
        \ip{\nabla\cLwd(\theta)}{\theta}\ge \frac{\lambda}{2}\|\theta\|^2.
    \end{align*}
    Hence, by Cauchy-Schwarz,
    \begin{align*}
        \|\nabla\cLwd(\theta)\|\|\theta\|\ge \ip{\nabla\cLwd(\theta)}{\theta}\ge \frac{\lambda}{2}\|\theta\|^2,
    \end{align*}
    and therefore
    \begin{align*}
        \|\nabla\cLwd(\theta)\|^2\ge \frac{\lambda^2}{4}\|\theta\|^2\ge \epsilon_\cL^2.
    \end{align*}
    Since $\theta\in B_\bzero(R_\lambda)$, we also have $\cLwd(\theta)\le \sup_{\tilde\theta\in B_\bzero(R_{\lambda})}\cLwd(\tilde\theta)$. Thus
    \begin{align*}
        \|\nabla\cLwd(\theta)\|^2\ge \epsilon_\cL^2\ge \frac{\epsilon_\cL^2}{\sup_{\tilde\theta\in B_\bzero(R_{\lambda})}\cLwd(\tilde\theta)}\cLwd(\theta)=\frac{2}{N}\cdot \frac{N\epsilon_\cL^2}{2\sup_{\tilde\theta\in B_\bzero(R_{\lambda})}\cLwd(\tilde\theta)}\cLwd(\theta).
    \end{align*}
Combining the two cases gives the claimed lower bound for all such $\theta\in B_\bzero(R_\lambda)$.
\end{proof}

\subsection{Convergence to a neighbourhood of global minimum}

\begin{proof}[Proof of Theorem~\ref{thm:convergence_to_global_min}]

We only need to show that GD will enter the neighbourhood of the global minimizer.

From \begin{align*}
        \eta<\frac{\|\theta^0
        \|-2\epsilon_\cL/\lambda}{\sup_{\theta\in B_\bzero(R_{\lambda})}\|\nabla\cLwd(\theta)\|},
    \end{align*}
we know that
\begin{align*}
    \|\theta^0\|>2\epsilon_\cL/\lambda+\epsilon_\eta.
\end{align*}

    By Corollary~\ref{cor:near_global_min_grad_lower_bound}, when $\|\theta\|\ge\frac{2\epsilon_\cL}{\lambda}$, we have
        \begin{align*}
        \|\nabla\cLwd(\theta)\|^2&\ge \frac{\lambda}{\mathsf{L}_\cM \zeta^2+2}\cLwd(\theta).
    \end{align*}
    Indeed, by the definition of $\zeta$, every $\theta\in B_\bzero(R_\lambda)$ with $\|\theta\|\ge 2\epsilon_\cL/\lambda$ satisfies $\zeta^2\frac{\lambda}{2}\|\theta\|^2\ge\operatorname{dist}(\theta,\cM)^2$. Thus both conditions of Corollary~\ref{cor:near_global_min_grad_lower_bound} hold.
    Therefore, by Lemma~\ref{lem:cLwd_poly_smooth}, when $\|\theta\|\ge\frac{2\epsilon_\cL}{\lambda}$,
     \begin{align}
        \label{eqn:cLwd_exp_decay_global_min}
    \cLwd(\theta^{k+1})&\le \cLwd(\theta^{k})+\nabla \cLwd(\theta^{k})^\top(\theta^{k+1}-\theta^{k})+\frac{S_{\lambda,k}}{2}\|\theta^k-\theta^{k+1}\|^2\notag\\
    &= \cLwd(\theta^{k})-\eta\left(1-\frac{\eta S_{\lambda,k}}{2}\right)\|\nabla\cLwd(\theta^k)\|^2\notag\\
    &\le \cLwd(\theta^{k})-\eta\left(1-\frac{\eta \mathsf{L}_\lambda}{2}\right)\|\nabla\cLwd(\theta^k)\|^2\notag\\
    & \le \left(1- \eta\left(1-\frac{\eta\, \mathsf{L}_\lambda}{2}\right)\frac{\lambda}{\mathsf{L}_\cM \zeta^2+2}\right)\cLwd(\theta^{k})\notag\\
    &\le \left(1- \eta\left(1-\frac{\eta\, \mathsf{L}_\lambda}{2}\right)\frac{\lambda}{\mathsf{L}_\cM \zeta^2+2}\right)^{k+1}\cLwd(\theta^{0})
\end{align}

Next, consider an $\epsilon_\eta$ neighbourhood of $\cM_0$
\begin{align*}
    \cM_{0,\epsilon_\eta}=\{\theta\in B_\bzero(R_{\lambda}) |\operatorname{dist}(\theta,\cM_0)\le \epsilon_\eta\}.
\end{align*}

\textbf{Step 1:} we would like to show that $\theta^k$ is guaranteed to leave $\cB_1\backslash\cM_{0,\epsilon_\eta}$.

By Assumption~\ref{assump:global_min}, every $\theta^*\in\cM_0$ satisfies $$\|\theta^*\|\ge 2\epsilon_\cL/\lambda+\epsilon_\eta.$$ For any $\theta\in\cB_1\backslash\cM_{0,\epsilon_\eta}$, the radial path from $\bzero\in\cB_2$ to $\theta\in\cB_1$ crosses some $\theta^*\in\cM_0$. Since $\operatorname{dist}(\theta,\cM_0)>\epsilon_\eta$, we have $$\|\theta\|=\|\theta^*\|+\|\theta-\theta^*\|>2\epsilon_\cL/\lambda+2\epsilon_\eta.$$ Therefore, defining
\begin{align*}
    M_{\rm out}:=\inf_{\theta\in\cB_1\backslash\cM_{0,\epsilon_\eta}}\|\theta\|,
\end{align*}
we have $$M_{\rm out}\ge 2\epsilon_\cL/\lambda+2\epsilon_\eta.$$
Hence, consider~\eqref{eqn:cLwd_exp_decay_global_min}.
If $\theta^0\in\cM_{0,\epsilon_\eta}$, then GD has already entered the desired neighbourhood. Otherwise, suppose for contradiction that $\theta^j\in\cB_1\backslash\cM_{0,\epsilon_\eta}$ for $j=0,\ldots,K$, where
\begin{align*}
K:=1+\left\lceil\frac{\log\left(\frac{\lambda M_{\rm out}^2}{2\cLwd(\theta^0)}\right)}{\log\left(1-\eta\left(1-\frac{\eta\mathsf{L}_\lambda}{2}\right)\frac{\lambda}{\mathsf{L}_\cM\zeta^2+2}\right)}\right\rceil=\cO\left(1+\frac{\mathsf{L}_\cM\zeta^2+2}{\delta\eta\lambda}\log\left(\frac{2\cLwd(\theta^0)}{\lambda M_{\rm out}^2}\right)\right).
\end{align*}
Then~\eqref{eqn:cLwd_exp_decay_global_min} gives
\begin{align*}
    \cLwd(\theta^K)<\frac{\lambda}{2}M_{\rm out}^2.
\end{align*}
On the other hand, since $\cL(\theta^K)\ge0$ and $\theta^K\in\cB_1\backslash\cM_{0,\epsilon_\eta}$,
\begin{align*}
    \cLwd(\theta^K)\ge\frac{\lambda}{2}\|\theta^K\|^2\ge\frac{\lambda}{2}M_{\rm out}^2,
\end{align*}
which is a contradiction. Therefore, GD leaves $\cB_1\backslash\cM_{0,\epsilon_\eta}$ in finite time.

\textbf{Step 2:} we would like to show that $\theta$ will first enter $\cM_{0,\epsilon_\eta}$ before entering $\cB_2\backslash\cM_{0,\epsilon_\eta}$.

It suffices to show that the change in $\theta$ at each step does not exceed $\epsilon_\eta$
\begin{align*}
    \|\theta^{k+1}-\theta^k\|=\eta\|\nabla\cLwd(\theta^k)\|\le \eta \sup_{\theta\in B_\bzero(R_{\lambda})}\|\nabla\cLwd(\theta)\|
\end{align*}
Therefore, when $\epsilon_\eta\ge \eta \sup_{\theta\in B_\bzero(R_{\lambda})}\|\nabla\cLwd(\theta)\|$
\begin{align*}
     \|\theta^{k+1}-\theta^k\|\le\epsilon_\eta
\end{align*}
i.e., GD has to enter $\cM_{0,\epsilon_\eta}$ first in order to leave $\cB_1\backslash\cM_{0,\epsilon_\eta}$.
Indeed, suppose one GD step has endpoints $\theta^k\in\cB_1\backslash\cM_{0,\epsilon_\eta}$ and $\theta^{k+1}\in\cB_2\backslash\cM_{0,\epsilon_\eta}$. Since $\cB_1$ and $\cB_2$ form a separation of $B_\bzero(R_\lambda)\backslash\cM_0$, the line segment between $\theta^k$ and $\theta^{k+1}$ intersects $\cM_0$. Consequently,
\begin{align*}
    \operatorname{dist}(\theta^k,\cM_0)\le\|\theta^{k+1}-\theta^k\|\le\epsilon_\eta,
\end{align*}
which contradicts $\theta^k\notin\cM_{0,\epsilon_\eta}$. Therefore, the first iterate that leaves $\cB_1\backslash\cM_{0,\epsilon_\eta}$ must enter $\cM_{0,\epsilon_\eta}$.

Define
\begin{align*}
    K_1&:=\min\{k\ge0:\theta^k\in\cM_{0,\epsilon_\eta}\},\\
    K_2&:=\sup\{K\in\mathbb{N}:K\ge K_1,\ \theta^k\in\cM_{0,\epsilon_\eta}\text{ for every }K_1\le k\le K\}.
\end{align*}
The argument in Step 1 shows that $K_1<\infty$, while $K_2$ may be infinite. Moreover, every $\theta^*\in\cM_0$ satisfies $\|\theta^*\|\ge2\epsilon_\cL/\lambda+\epsilon_\eta$. For every $\theta\in\cM_{0,\epsilon_\eta}$, there exists some $\theta^*\in\cM_0$ such that $\|\theta-\theta^*\|\le\epsilon_\eta$. Hence,
\begin{align*}
    \|\theta\|\ge\|\theta^*\|-\|\theta-\theta^*\|\ge\frac{2\epsilon_\cL}{\lambda}.
\end{align*}
Together with the corresponding lower bound on $\|\theta^k\|$ before $K_1$, this shows that~\eqref{eqn:cLwd_exp_decay_global_min} holds for every $k\le K_2$.

For each $\theta\in\cM_{0,\epsilon_\eta}$, by the compactness of $\cM_0$, let $$\theta^*=\arg\min_{\theta'\in\cM_0} \|\theta-\theta'\|.$$ 
Then, by the poly-smoothness of $\cL$ in Lemma~\ref{lem:cLwd_poly_smooth}, for $\theta\in\cM_{0,\epsilon_\eta}$,
\begin{align*}
    \cL(\theta)\le \cL(\theta^*)+\frac{1}{2}S_{\theta,\theta^*}\ \|\theta-\theta^*\|^2=\frac{1}{2}S_{\theta,\theta^*}\operatorname{dist}(\theta,\cM_0)^2\le \frac{1}{2}\mathsf{L}_{\cM_{0}} \epsilon_\eta^2
\end{align*}
where $$\mathsf{L}_{\cM_{0}}=\sup_{\theta\in B_\bzero(R_{\lambda})}\max_{\theta^*\in\cM_0} S(\cdots,\max\{\|\theta_i\|,\|\theta^*_i\|\},\cdots).$$

Also, by Lemma~\ref{lem:cLwd_poly_smooth}, we have
\begin{align*}
     \|\nabla\cL(\theta)\|= \|\nabla\cL(\theta)-\nabla\cL(\theta^*)\|\le S_{\theta,\theta^*}\ \|\theta-\theta^*\|\le \mathsf{L}_{\cM_{0}}\epsilon_\eta.
\end{align*}
Since $\epsilon_\eta=\eta\sup_{\theta\in B_\bzero(R_\lambda)}\|\nabla\cLwd(\theta)\|$ and $\mathsf{L}_{\cM_0}\le\mathsf{L}_\cM$, where both suprema are independent of $\eta$, we conclude that $\cL(\theta^k)=\cO(\eta^2)$ and $\|\nabla\cL(\theta^k)\|=\cO(\eta)$ for every $K_1\le k\le K_2$.
\end{proof}

\subsubsection{Supplementary results}
    Below are some supplementary results used in the proof above, including the detailed upper bound and lower bound of $\nabla\cLwd$. 

\begin{proposition}
    \label{prop:nabla_upper_bounded_by_funct}
    If the function $\psi(w)$ is $\mathsf{L}$-Lipschitz smooth in $\Omega$, $w-\frac{1}{\mathsf{L}}\nabla\psi(w)\in\Omega$ for all $w\in\Omega$, and $w^*$ is a global minimizer of $\psi$ in $\Omega$ with $\nabla\psi(w^*)=0$, then
    \begin{align*}
        \|\nabla\psi(w)\|^2\le 2\mathsf{L} (\psi(w)-\psi(w^*)),\text{ for all }w\in\Omega.
    \end{align*}
\end{proposition}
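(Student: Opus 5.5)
The plan is the standard descent-lemma argument combined with the global-minimizer condition $\nabla\psi(w^*)=0$. Fix $w\in\Omega$ and set $w^+:=w-\frac{1}{\mathsf{L}}\nabla\psi(w)$. By hypothesis $w^+\in\Omega$, so $\mathsf{L}$-Lipschitz smoothness in $\Omega$ should give the quadratic upper bound
\begin{align*}
    \psi(w^+)\le \psi(w)+\nabla\psi(w)^\top(w^+-w)+\frac{\mathsf{L}}{2}\|w^+-w\|^2 .
\end{align*}
Substituting $w^+-w=-\frac{1}{\mathsf{L}}\nabla\psi(w)$, the right-hand side collapses to $\psi(w)-\frac{1}{2\mathsf{L}}\|\nabla\psi(w)\|^2$.

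Next I use that $w^*$ is a global minimizer of $\psi$ over $\Omega$. Since $w^+\in\Omega$, we have $\psi(w^*)\le\psi(w^+)$. Combining this with the previous inequality gives $\psi(w^*)\le \psi(w)-\frac{1}{2\mathsf{L}}\|\nabla\psi(w)\|^2$, and rearranging yields the claim.

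The only delicate step is justifying the quadratic upper bound. It requires the segment $[w,w^+]$ to lie in the region where the gradient is $\mathsf{L}$-Lipschitz, because the proof writes $\psi(w^+)-\psi(w)=\int_0^1\nabla\psi(w+t(w^+-w))^\top(w^+-w)\,dt$ and bounds the integrand difference by $\mathsf{L}t\|w^+-w\|^2$. If $\Omega$ is convex, or if "$\mathsf{L}$-Lipschitz smooth in $\Omega$" is read as already including the descent inequality, as in Lemma~\ref{lem:cLwd_poly_smooth}, this is immediate. Otherwise I would state the convexity of $\Omega$, or the segment condition, explicitly as the interpretation of the hypothesis. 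The assumption $\nabla\psi(w^*)=0$ is not actually needed beyond ensuring consistency; global minimality alone suffices.
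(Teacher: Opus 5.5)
Your proposal is correct and follows the paper's proof: apply the quadratic upper bound at $w^+=w-\frac{1}{\mathsf{L}}\nabla\psi(w)\in\Omega$ to get $\psi(w^+)\le\psi(w)-\frac{1}{2\mathsf{L}}\|\nabla\psi(w)\|^2$, then use $\psi(w^*)\le\psi(w^+)$ and rearrange. Two of your remarks go beyond the paper's write-up, which simply invokes ``Lipschitz smoothness'': the quadratic bound needs the segment $[w,w^+]$ to lie in the region where the gradient is $\mathsf{L}$-Lipschitz (for example, $\Omega$ convex), and the hypothesis $\nabla\psi(w^*)=0$ is never used.
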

\begin{proof}
    By Lipschitz smoothness, for any $w\in\Omega$, let $\tilde{w}=w-\frac{1}{\mathsf{L}}\nabla\psi(w)$. Then
    \begin{align*}
        \psi(\tilde{w})&\le \psi(w)+\nabla\psi(w)^\top(\tilde{w}-w)+\frac{\mathsf{L}}{2}\|\tilde{w}-w\|^2\\
        &=\psi(w)-\frac{1}{\mathsf{L}}\|\nabla\psi(w)\|^2+\frac{1}{2\mathsf{L}}\|\nabla\psi(w)\|^2\\
        &=\psi(w)-\frac{1}{2\mathsf{L}}\|\nabla\psi(w)\|^2.
    \end{align*}
    Since $w^*$ is a global minimizer of $\psi$ in $\Omega$, we have $\psi(w^*)\le \psi(\tilde{w})$. Therefore,
    \begin{align*}
        \psi(w^*)\le \psi(w)-\frac{1}{2\mathsf{L}}\|\nabla\psi(w)\|^2,
    \end{align*}
    which implies
    \begin{align*}
        \|\nabla\psi(w)\|^2\le 2\mathsf{L}(\psi(w)-\psi(w^*)).
    \end{align*}
\end{proof}

\begin{lemma}
\label{lem:near_local_min_grad_lower_bound}
        Suppose Assumption~\ref{assump:0_stationary_point}, Assumption~\ref{assump:model_assumptions}, and the dissipative condition in Theorem~\ref{thm:general_convergence_loss_decay} holds. Let $0<\lambda<\frac{1}{2\rho}$. Consider the case when $\theta\in B_\bzero(R_{\lambda})$ is close to some local minimizer $\theta^*_{\rm loc}\in B_\bzero(R_\lambda)$ of $\cL(\theta)$, i.e., $\nabla\cL(\theta^*_{\rm loc})=\bzero$. Define $$\mathsf{L}_{\theta^*_{\rm loc}}=\sup_{\theta\in B_\bzero(R_\lambda)} S(\cdots,\max\{\|\theta_i\|,\|\theta^*_{{\rm loc},i}\|\},\cdots).$$ 
        There exists some $\epsilon_{\theta^*_{\rm loc}}>\frac{\sqrt{2}\epsilon_\cL}{\sqrt{\lambda}}$, s.t., when $$\theta\in\left\{\theta\,\bigg\vert\,\frac{\lambda}{2}\|\theta\|^2\ge \epsilon_{\theta^*_{\rm loc}}^2\text{ and }\|\theta-\theta^*_{\rm loc}\|\le \zeta\epsilon_{\theta^*_{\rm loc}}\right\}$$ for some universal constant $\zeta>0$, we have:
    \begin{enumerate}
        \item $\|\nabla\cL(\theta)\|\le \zeta\mathsf{L}_{\theta^*_{\rm loc}}\epsilon_{\theta^*_{\rm loc}}$
        \item When $\|\nabla\cL(\theta)\|>\epsilon_\cL$,
        \begin{align*}
        \|\nabla\cLwd(\theta)\|^2&\ge \frac{2\lambda \epsilon_{\theta^*_{\rm loc}}^2}{{\cL(\theta^*_{\rm loc})+\frac{\mathsf{L}_{\theta^*_{\rm loc}}\zeta^2\epsilon_{\theta^*_{\rm loc}}^2}{2}+\epsilon_{\theta^*_{\rm loc}}^2}}\cLwd(\theta).
    \end{align*}
    \item When $\|\nabla\cL(\theta)\|\le\epsilon_\cL$,
    \begin{align*}
        \|\nabla\cLwd(\theta)\|^2&\ge   \frac{\frac{\lambda}{2}\epsilon_{\theta^*_{\rm loc}}^2}{{\cL(\theta^*_{\rm loc})+\frac{\mathsf{L}_{\theta^*_{\rm loc}}\zeta^2\epsilon_{\theta^*_{\rm loc}}^2}{2}+\epsilon_{\theta^*_{\rm loc}}^2}}\cLwd(\theta)
    \end{align*}
    \end{enumerate}
    
\end{lemma}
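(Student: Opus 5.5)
The plan is to read all three items off poly-smoothness (Lemma~\ref{lem:cLwd_poly_smooth}) together with the two-case gradient lower bound of Lemma~\ref{lem:lower_bound_gradient}. The existential quantifier on $\epsilon_{\theta^*_{\rm loc}}$ is harmless: I would show that \emph{any} $\epsilon_{\theta^*_{\rm loc}}>\sqrt{2}\epsilon_\cL/\sqrt{\lambda}$ works, with $\zeta>0$ arbitrary but fixed. Throughout, $\theta$ and $\theta^*_{\rm loc}$ lie in $B_\bzero(R_\lambda)$, so the poly-smoothness polynomial evaluated at $\theta,\theta^*_{\rm loc}$ is bounded by $\mathsf{L}_{\theta^*_{\rm loc}}$.

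\emph{Item 1.} Since $\nabla\cL(\theta^*_{\rm loc})=\bzero$, poly-smoothness gives
\[
\|\nabla\cL(\theta)\|=\|\nabla\cL(\theta)-\nabla\cL(\theta^*_{\rm loc})\|\le \mathsf{L}_{\theta^*_{\rm loc}}\|\theta-\theta^*_{\rm loc}\|\le \zeta\mathsf{L}_{\theta^*_{\rm loc}}\epsilon_{\theta^*_{\rm loc}}.
\]

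\emph{Upper bound on $\cLwd(\theta)$.} Write $b:=\frac{\lambda}{2}\|\theta\|^2$, so that $b\ge\epsilon_{\theta^*_{\rm loc}}^2$ on the set in question. The descent inequality in Lemma~\ref{lem:cLwd_poly_smooth}, applied at the stationary point $\theta^*_{\rm loc}$ (so the linear term vanishes), gives
\[
\cL(\theta)\le \cL(\theta^*_{\rm loc})+\tfrac12\mathsf{L}_{\theta^*_{\rm loc}}\zeta^2\epsilon_{\theta^*_{\rm loc}}^2=:A,
\qquad\text{hence}\qquad \cLwd(\theta)\le A+b.
\]

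\emph{Lower bounds on $\|\nabla\cLwd(\theta)\|^2$.} I would use Lemma~\ref{lem:lower_bound_gradient} in each case.
\begin{itemize}
\item If $\|\nabla\cL(\theta)\|>\epsilon_\cL$, it gives $\|\nabla\cLwd(\theta)\|^2\ge\lambda^2\|\theta\|^2=2\lambda b$.
\item If $\|\nabla\cL(\theta)\|\le\epsilon_\cL$, we first need its hypothesis $\frac{\lambda}{2}\|\theta\|^2\ge 2\epsilon_\cL^2/\lambda$. This holds because $b\ge\epsilon_{\theta^*_{\rm loc}}^2>2\epsilon_\cL^2/\lambda$, and this is exactly where the threshold on $\epsilon_{\theta^*_{\rm loc}}$ is used. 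The lemma then gives $\|\nabla\cLwd(\theta)\|^2\ge\frac{\lambda^2}{4}\|\theta\|^2=\frac{\lambda}{2}b$.
\end{itemize}
So in both cases $\|\nabla\cLwd(\theta)\|^2\ge c\,b$, with $c=2\lambda$ or $c=\lambda/2$.

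\emph{Items 2 and 3.} The one point needing care is that $b$ is bounded \emph{below} by $\epsilon_{\theta^*_{\rm loc}}^2$, not above, so we cannot simply replace $b$ by $\epsilon^2_{\theta^*_{\rm loc}}$ in $A+b$. Instead I would use that $t\mapsto t/(A+t)$ is nondecreasing for $A\ge0$. Since $b\ge\epsilon^2_{\theta^*_{\rm loc}}$, we have $bA\ge\epsilon^2_{\theta^*_{\rm loc}}A$, which rearranges to $b(A+\epsilon^2_{\theta^*_{\rm loc}})\ge\epsilon^2_{\theta^*_{\rm loc}}(A+b)$. Combining this with the two bounds above,
\[
\|\nabla\cLwd(\theta)\|^2\ge c\,b\ge \frac{c\,\epsilon_{\theta^*_{\rm loc}}^2}{A+\epsilon_{\theta^*_{\rm loc}}^2}(A+b)\ge \frac{c\,\epsilon_{\theta^*_{\rm loc}}^2}{A+\epsilon_{\theta^*_{\rm loc}}^2}\,\cLwd(\theta).
\]
Substituting $c=2\lambda$ gives item 2 and $c=\lambda/2$ gives item 3.
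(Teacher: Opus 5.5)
Your proposal is correct and follows essentially the same route as the paper. Like the paper, you use poly-smoothness at the stationary point $\theta^*_{\rm loc}$ to bound $\cL(\theta)$ and $\|\nabla\cL(\theta)\|$, invoke the two cases of Lemma~\ref{lem:lower_bound_gradient} to get $\|\nabla\cLwd(\theta)\|^2\ge 2\lambda b$ or $\frac{\lambda}{2}b$, and finish with the monotonicity of $t\mapsto t/(A+t)$, which the paper writes as the identity $\frac{cb}{A+b}=c-\frac{cA}{A+b}$ and which your cross-multiplication renders slightly more cleanly.
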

\begin{proof}

By Lemma~\ref{lem:cLwd_poly_smooth},
\begin{align*}
    \cL(\theta)\le \cL(\theta^*_{\rm loc})+\frac{1}{2}S_{\theta,\theta^*_{\rm loc}}\ \|\theta-\theta^*_{\rm loc}\|^2
\end{align*}
where $S_{\theta,\theta^*_{\rm loc}}=S(\|\theta_{\max,1}\|,\cdots,\|\theta_{\max,n_\theta}\|) $ with $\|\theta_{\max,i}\|=\max\{\|\theta_i\|,\|\theta^*_{{\rm loc},i}\|\}$. 

To get rid of the dependency on $\theta$, for all $\theta\in B_\bzero(R_{\lambda})$, we have
    \begin{align*}
        S_{\theta,\theta^*_{\rm loc}}=S(\cdots,\max\{\|\theta_i\|,\|\theta^*_{{\rm loc},i}\|\},\cdots)\le \mathsf{L}_{\theta^*_{\rm loc}}=\sup_{\theta\in B_\bzero(R_\lambda)} S(\cdots,\max\{\|\theta_i\|,\|\theta^*_{{\rm loc},i}\|\},\cdots).
    \end{align*}

Then consider 
$$\theta\in \cA_\epsilon=\{\theta\,|\,\|\theta-\theta^*_{\rm loc}\|\le \epsilon\}.$$

Then we have
\begin{align}
    &\cL(\theta)\le \cL(\theta^*_{\rm loc})+\frac{\mathsf{L}_{\theta^*_{\rm loc}}\epsilon^2}{2}, \label{eqn_proof:cL_bound_min_value_epsilon}
\end{align}
By Lemma~\ref{lem:cLwd_poly_smooth}, we also have
\begin{align*}
    \|\nabla\cL(\theta)\|= \|\nabla\cL(\theta)-\nabla\cL(\theta^*_{\rm loc})\|\le S_{\theta,\theta^*_{\rm loc}}\ \|\theta-\theta^*_{\rm loc}\|\le \mathsf{L}_{\theta^*_{\rm loc}}\epsilon.
\end{align*}

\textbf{Case 1: when $\|\nabla\cL\|>\epsilon_\cL$.}

By Lemma~\ref{lem:lower_bound_gradient}, the gradient can be lower bounded
    \begin{align*}
        \|\nabla \cLwd(\theta)\|^2\ge {\lambda^2}\|\theta\|^2.
    \end{align*}

Then consider
    \begin{align*}
        \frac{{\lambda^2}\|\theta\|^2}{{\cL(\theta^*_{\rm loc})+\frac{\mathsf{L}_{\theta^*_{\rm loc}}\epsilon^2}{2}+\frac{\lambda}{2}\|\theta\|^2}}=2{\lambda}-\frac{2{\lambda}\left(\cL(\theta^*_{\rm loc})+\frac{\mathsf{L}_{\theta^*_{\rm loc}}\epsilon^2}{2}\right)}{{\cL(\theta^*_{\rm loc})+\frac{\mathsf{L}_{\theta^*_{\rm loc}}\epsilon^2}{2}+\frac{\lambda}{2}\|\theta\|^2}}.
    \end{align*}

    Then for all $\frac{\lambda}{2}\|\theta\|^2\ge \epsilon_{\theta^*_{\rm loc}}^2$, and for some $0<\epsilon\le \zeta\epsilon_{\theta^*_{\rm loc}}$ with $\zeta>0$,
    \begin{align*}
         \frac{{\lambda^2}\|\theta\|^2}{{\cL(\theta^*_{\rm loc})+\frac{\mathsf{L}_{\theta^*_{\rm loc}}\epsilon^2}{2}+\frac{\lambda}{2}\|\theta\|^2}}&=2{\lambda}-\frac{2{\lambda}\left(\cL(\theta^*_{\rm loc})+\frac{\mathsf{L}_{\theta^*_{\rm loc}}\epsilon^2}{2}\right)}{{\cL(\theta^*_{\rm loc})+\frac{\mathsf{L}_{\theta^*_{\rm loc}}\epsilon^2}{2}+\frac{\lambda}{2}\|\theta\|^2}}\\
        &\ge2{\lambda}-\frac{2{\lambda}\left(\cL(\theta^*_{\rm loc})+\frac{\mathsf{L}_{\theta^*_{\rm loc}}\epsilon^2}{2}\right)}{{\cL(\theta^*_{\rm loc})+\frac{\mathsf{L}_{\theta^*_{\rm loc}}\epsilon^2}{2}+\epsilon_{\theta^*_{\rm loc}}^2}}\\
        &\ge\frac{2\lambda \epsilon_{\theta^*_{\rm loc}}^2}{{\cL(\theta^*_{\rm loc})+\frac{\mathsf{L}_{\theta^*_{\rm loc}}\zeta^2\epsilon_{\theta^*_{\rm loc}}^2}{2}+\epsilon_{\theta^*_{\rm loc}}^2}}
    \end{align*}
    where the first inequality follows from $\frac{\lambda}{2}\|\theta\|^2\ge \epsilon_{\theta^*_{\rm loc}}^2$, and the last inequality follows from $\epsilon\le \zeta\epsilon_{\theta^*_{\rm loc}}$.
    Hence combining the above inequalities, we have, for all $\theta\in\cA_\epsilon$ and $\frac{\lambda}{2}\|\theta\|^2\ge \epsilon_{\theta^*_{\rm loc}}^2$,
    \begin{align*}
        \|\nabla\cLwd(\theta)\|^2&\ge  \frac{2\lambda \epsilon_{\theta^*_{\rm loc}}^2}{{\cL(\theta^*_{\rm loc})+\frac{\mathsf{L}_{\theta^*_{\rm loc}}\zeta^2\epsilon_{\theta^*_{\rm loc}}^2}{2}+\epsilon_{\theta^*_{\rm loc}}^2}}\left( \cL(\theta^*_{\rm loc})+\frac{\mathsf{L}_{\theta^*_{\rm loc}}\epsilon^2}{2}+\frac{\lambda}{2}\|\theta\|^2 \right)\\
        &\ge \frac{2\lambda \epsilon_{\theta^*_{\rm loc}}^2}{{\cL(\theta^*_{\rm loc})+\frac{\mathsf{L}_{\theta^*_{\rm loc}}\zeta^2\epsilon_{\theta^*_{\rm loc}}^2}{2}+\epsilon_{\theta^*_{\rm loc}}^2}}\left( \cL(\theta)+\frac{\lambda}{2}\|\theta\|^2 \right)\\
        &= \frac{2\lambda \epsilon_{\theta^*_{\rm loc}}^2}{{\cL(\theta^*_{\rm loc})+\frac{\mathsf{L}_{\theta^*_{\rm loc}}\zeta^2\epsilon_{\theta^*_{\rm loc}}^2}{2}+\epsilon_{\theta^*_{\rm loc}}^2}}\cLwd(\theta)
    \end{align*}
    where the second inequality follows from \eqref{eqn_proof:cL_bound_min_value_epsilon}.

    \textbf{Case 2: when $\|\nabla\cL\|\le\epsilon_\cL$.}

    In this case, we no longer have the dissipative condition. However, by Lemma~\ref{lem:lower_bound_gradient}, we can still lower bound the gradient
\begin{align*}
    \|\nabla\cLwd(\theta)\|^2\ge \frac{\lambda^2}{4}\|\theta\|^2
\end{align*}
Then consider $\frac{\lambda}{2}\|\theta\|^2\ge \epsilon_{\theta^*_{\rm loc}}^2> \frac{2\epsilon_\cL^2}{\lambda}$ and  $0<\epsilon\le \zeta\epsilon_{\theta^*_{\rm loc}}$ with $\zeta>0$. Similar to the case above, we have
    \begin{align*}
         \frac{\frac{\lambda^2}{4}\|\theta\|^2}{{\cL(\theta^*_{\rm loc})+\frac{\mathsf{L}_{\theta^*_{\rm loc}}\epsilon^2}{2}+\frac{\lambda}{2}\|\theta\|^2}}\ge \frac{\frac{\lambda}{2}\epsilon_{\theta^*_{\rm loc}}^2}{{\cL(\theta^*_{\rm loc})+\frac{\mathsf{L}_{\theta^*_{\rm loc}}\zeta^2\epsilon_{\theta^*_{\rm loc}}^2}{2}+\epsilon_{\theta^*_{\rm loc}}^2}}
    \end{align*}
    Thus 
    \begin{align*}
        \|\nabla\cLwd(\theta)\|^2&\ge   \frac{\frac{\lambda}{2}\epsilon_{\theta^*_{\rm loc}}^2}{{\cL(\theta^*_{\rm loc})+\frac{\mathsf{L}_{\theta^*_{\rm loc}}\zeta^2\epsilon_{\theta^*_{\rm loc}}^2}{2}+\epsilon_{\theta^*_{\rm loc}}^2}}\cLwd(\theta)
    \end{align*}
\end{proof}

\begin{corollary}
    \label{cor:near_global_min_grad_lower_bound}
      Let $\zeta>0$ be some constant. Under the same assumptions as Lemma~\ref{lem:near_local_min_grad_lower_bound}, when $$\zeta^2\frac{\lambda}{2}\|\theta\|^2\ge \operatorname{dist}(\theta,\cM)^2,\text{ and }\frac{\lambda}{2}\|\theta\|^2\ge 2\frac{\epsilon_\cL^2}{\lambda}$$ we have:
      \begin{enumerate}
        \item When $\|\nabla\cL\|>\epsilon_\cL$
        \begin{align*}
        \|\nabla\cLwd(\theta)\|^2&\ge \frac{4\lambda }{{\mathsf{L}_\cM \zeta^2}+2}\cLwd(\theta)
    \end{align*}
    where $\mathsf{L}_\cM =\sup_{\theta^*\in\cM}\sup_{\theta\in B_\bzero(R_\lambda)} S(\cdots,\max\{\|\theta_i\|,\|\theta^*_i\|\},\cdots)$.
        \item When $\|\nabla\cL\|\le\epsilon_\cL$,
        \begin{align*}
        \|\nabla\cLwd(\theta)\|^2&\ge \frac{\lambda}{\mathsf{L}_\cM \zeta^2+2}\cLwd(\theta)
    \end{align*}
      \end{enumerate}
    
\end{corollary}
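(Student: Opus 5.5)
The plan is to specialize Lemma~\ref{lem:near_local_min_grad_lower_bound} to a nearest global minimizer and then absorb the scale $\epsilon_{\theta^*_{\rm loc}}$ into $\|\theta\|$. Fix $\theta\in B_\bzero(R_\lambda)$ satisfying both hypotheses. The key choice is the scale
\[
\epsilon_{\theta^*}^2:=\frac{\lambda}{2}\|\theta\|^2 .
\]
The first hypothesis then says exactly that $\operatorname{dist}(\theta,\cM)\le\zeta\epsilon_{\theta^*}$. The second gives $\epsilon_{\theta^*}^2\ge 2\epsilon_\cL^2/\lambda$, which is the threshold used in Lemma~\ref{lem:lower_bound_gradient}.

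Next I pick the minimizer. If $\cM$ is not closed, I take $\theta^*\in\cM$ with $\|\theta-\theta^*\|\le\operatorname{dist}(\theta,\cM)+\varepsilon$ and let $\varepsilon\to0$ at the end. Any such $\theta^*$ satisfies $\cL(\theta^*)=0$, and $\nabla\cL(\theta^*)=\bzero$ because $\cL\ge0$. By Lemma~\ref{lem:cLwd_poly_smooth},
\[
\cL(\theta)\le\tfrac12\mathsf{L}_\cM\|\theta-\theta^*\|^2\le\tfrac12\mathsf{L}_\cM\zeta^2\epsilon_{\theta^*}^2,
\]
using $S_{\theta,\theta^*}\le\mathsf{L}_{\theta^*}\le\mathsf{L}_\cM$. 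It follows that
\[
\cLwd(\theta)\le\Bigl(\tfrac12\mathsf{L}_\cM\zeta^2+1\Bigr)\epsilon_{\theta^*}^2=\frac{\mathsf{L}_\cM\zeta^2+2}{2}\cdot\frac{\lambda}{2}\|\theta\|^2 .
\]

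The gradient lower bounds come from Lemma~\ref{lem:lower_bound_gradient}, in two cases. If $\|\nabla\cL(\theta)\|>\epsilon_\cL$, then $\|\nabla\cLwd(\theta)\|^2\ge\lambda^2\|\theta\|^2=4\lambda\cdot\frac{\lambda}{2}\|\theta\|^2$. Combining with the upper bound on $\cLwd(\theta)$ gives
\[
\|\nabla\cLwd(\theta)\|^2\ge\frac{8\lambda}{\mathsf{L}_\cM\zeta^2+2}\cLwd(\theta),
\]
which is stronger than the stated $\frac{4\lambda}{\mathsf{L}_\cM\zeta^2+2}$. If $\|\nabla\cL(\theta)\|\le\epsilon_\cL$, the second hypothesis is exactly what Lemma~\ref{lem:lower_bound_gradient} needs to give $\|\nabla\cLwd(\theta)\|^2\ge\frac{\lambda^2}{4}\|\theta\|^2=\frac{\lambda}{2}\cdot\frac{\lambda}{2}\|\theta\|^2$. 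The same division then yields $\frac{\lambda}{\mathsf{L}_\cM\zeta^2+2}\cLwd(\theta)$, as claimed. Equivalently, both cases are the right-hand sides of Lemma~\ref{lem:near_local_min_grad_lower_bound} with $\cL(\theta^*_{\rm loc})=0$, up to constants.

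I expect the main obstacle to be technical rather than conceptual. Lemma~\ref{lem:near_local_min_grad_lower_bound} is stated with a fixed minimizer and a fixed scale, but here both $\epsilon_{\theta^*}$ and $\theta^*$ depend on $\theta$. The fix is to work directly from Lemma~\ref{lem:lower_bound_gradient} and the smoothness bound, as above, rather than citing Lemma~\ref{lem:near_local_min_grad_lower_bound} verbatim. The constant $\mathsf{L}_\cM$ is uniform over $\theta^*\in\cM$ and $\theta\in B_\bzero(R_\lambda)$, so no constant depends on the particular $\theta^*$. The limiting argument in $\varepsilon$ is harmless because every constant is continuous in $\|\theta-\theta^*\|$.
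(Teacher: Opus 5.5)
Your argument is essentially the paper's. The paper applies Lemma~\ref{lem:near_local_min_grad_lower_bound} with $\theta^*_{\rm loc}=\theta^*\in\cM$, uses $\cL(\theta^*)=0$ and $\mathsf{L}_{\theta^*}\le\mathsf{L}_\cM$, implicitly takes $\epsilon_{\theta^*}^2=\frac{\lambda}{2}\|\theta\|^2$, and then passes to $\operatorname{dist}(\theta,\cM)$. You instead inline that lemma's proof (Lemma~\ref{lem:lower_bound_gradient} plus the descent-lemma bound on $\cL(\theta)$), which handles the $\theta$-dependent scale and the infimum somewhat more carefully.

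There is one arithmetic slip in Case~1: $\lambda^2\|\theta\|^2=2\lambda\cdot\frac{\lambda}{2}\|\theta\|^2$, not $4\lambda\cdot\frac{\lambda}{2}\|\theta\|^2$. So Case~1 gives exactly $\|\nabla\cLwd(\theta)\|^2\ge\frac{4\lambda}{\mathsf{L}_\cM\zeta^2+2}\cLwd(\theta)$, which is the stated constant and matches the lemma with $\cL(\theta^*)=0$. Your claimed improvement to $8\lambda$ should be dropped.
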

\begin{proof}
    By Lemma~\ref{lem:near_local_min_grad_lower_bound}, we know that for some global minimum point $\theta^*$, when $$\theta\in\left\{\theta\,\bigg\vert\,\frac{\lambda}{2}\|\theta\|^2\ge \epsilon_{\theta^*}^2\text{ and }\|\theta-\theta^*\|\le \zeta\epsilon_{\theta^*}\right\}$$
    namely
    $$\zeta^2\frac{\lambda}{2}\|\theta\|^2\ge \|\theta-\theta^*\|^2,$$
    we have for $\|\nabla\cL\|>\epsilon_\cL$
    \begin{align*}
        \|\nabla\cLwd(\theta)\|^2&\ge   \frac{2\lambda \epsilon_{\theta^*}^2}{{\cL(\theta^*)+\frac{\mathsf{L}_{\theta^*}\zeta^2\epsilon_{\theta^*}^2}{2}+\epsilon_{\theta^*}^2}}\cLwd(\theta).
    \end{align*}
    Since $\cL(\theta^*)=0$, we have
    \begin{align*}
        \|\nabla\cLwd(\theta)\|^2&\ge   \frac{2\lambda \epsilon_{\theta^*}^2}{{\frac{\mathsf{L}_{\theta^*}\zeta^2\epsilon_{\theta^*}^2}{2}+\epsilon_{\theta^*}^2}}\cLwd(\theta)\\
        &=\frac{4\lambda }{{\mathsf{L}_{\theta^*}\zeta^2}+2}\cLwd(\theta)\\
        &\ge \frac{4\lambda }{{\mathsf{L}_\cM \zeta^2}+2}\cLwd(\theta)
    \end{align*}
    where 
    $$\mathsf{L}_\cM =\sup_{\theta^*\in\cM}\sup_{\theta\in B_\bzero(R_\lambda)} S(\cdots,\max\{\|\theta_i\|,\|\theta^*_i\|\},\cdots)=\sup_{\theta^*\in\cM} \mathsf{L}_{\theta^*}\ge \mathsf{L}_{\theta^*}.$$
    Similarly, for $\|\nabla\cL\|\le\epsilon_\cL$, we have $\lambda\epsilon_{\theta^*}^2\ge 2\epsilon_\cL^2$, and thus
    \begin{align*}
        \|\nabla\cLwd(\theta)\|^2&\ge  \frac{\lambda }{{\mathsf{L}_\cM \zeta^2}+2}\cLwd(\theta)
    \end{align*}

    In the end, the above inequalities hold for any $\theta^*\in\cM$; hence the condition can be replaced by
    $$\zeta^2\frac{\lambda}{2}\|\theta\|^2\ge \operatorname{dist}(\theta,\cM)^2.$$
    \end{proof}

\section{Generalization}

In this section, we prove all results related to generalization.

First, we prove Lemma~\ref{lem:non_overlapping_input}, which justifies partitioning the input domain.
\begin{proof}[Proof of Lemma~\ref{lem:non_overlapping_input}]
For each pair $1\le i<j\le N$, define the collision set
\[
    D_{ij}=\{(z_1,\cdots,z_N)\in\Omega^N: z_i=z_j\}.
\]
The set $D_{ij}$ is contained in a lower-dimensional affine subspace of
$\RR^{Nd}$, and therefore $\Leb_{Nd}(D_{ij})=0$. Since Assumption~\ref{assump:data}
gives $\pi_N\ll\Leb_{Nd}$, we have $\pi_N(D_{ij})=0$ for every $i<j$. Hence
\[
    \pi_N\left(\bigcup_{1\le i<j\le N}D_{ij}\right)
    \le \sum_{1\le i<j\le N}\pi_N(D_{ij})=0.
\]
Therefore, outside a $\pi_N$-measure-zero set, no two training inputs coincide,
which proves $x_i\ne x_j$ for all $i\ne j$ with probability 1.
\end{proof}

We next recall the decompositions and definitions introduced in Lemma~\ref{lem:prediction_variation_layer_decomposition}. For the neural network structure in~\eqref{eqn:NN}, under Assumption~\ref{assump:architecture_generalization},
\begin{align*}
    f(\theta;x)=\theta_L\left(x+\sum_{\ell=0}^{L-1}\varphi_\ell(\theta_\ell;u_\ell(\theta_{\ell-1:0};x))\right).
\end{align*}
Consequently, the prediction variation error satisfies
\begin{align*}
     E_{\rm PV}(\theta)
     =&3\sum_{i=1}^N\left(\int_{\Omega_i}\pi(x)dx\right)\EE_{\Omega_i}\|f(\theta;x)-f(\theta;x_i)\|^2\\
     \le&3(L+1)\left(V_{\rm pre}(\theta)+\sum_{\ell=0}^{L-1}V_\ell(\theta)\right).
\end{align*}
\begin{proof}[Proof of Lemma~\ref{lem:prediction_variation_layer_decomposition}]
    The network identity follows directly from the neural network form in~\eqref{eqn:NN}. The decomposition bound follows from Jensen's inequality.
\end{proof}

Here
\begin{align*}
    V_{\rm pre}(\theta)&:=\EE_x\left\|\theta_Lx-\sum_{i=1}^N\theta_Lx_i\bbone_{x\in\Omega_i}\right\|^2,\\
    V_{\ell}(\theta)&:=\EE_x\left\|\theta_L\varphi_\ell(\theta_\ell;u_\ell(\theta_{\ell-1:0};x))-\sum_{i=1}^N\theta_L\varphi_\ell(\theta_\ell;u_\ell(\theta_{\ell-1:0};x_i))\bbone_{x\in\Omega_i}\right\|^2,
\end{align*}
for $\ell=0,\cdots,L-1$. To express these quantities cellwise, define
\begin{align*}
    V_{{\rm pre},\Omega_i}(\theta)&:=\EE_{\Omega_i}\|\theta_Lx-\theta_Lx_i\|^2,\\
    V_{\ell,\Omega_i}(\theta)&:=\EE_{\Omega_i}\|\theta_L\varphi_\ell(\theta_\ell;u_\ell(\theta_{\ell-1:0};x))-\theta_L\varphi_\ell(\theta_\ell;u_\ell(\theta_{\ell-1:0};x_i))\|^2.
\end{align*}
The global quantities are the corresponding population-mass-weighted sums:
\begin{align*}
    V_{\rm pre}(\theta)&=\sum_{i=1}^N\left(\int_{\Omega_i}\pi(x)dx\right)V_{{\rm pre},\Omega_i}(\theta),\\
    V_{\ell}(\theta)&=\sum_{i=1}^N\left(\int_{\Omega_i}\pi(x)dx\right)V_{\ell,\Omega_i}(\theta).
\end{align*}
We therefore define the layerwise prediction variation error by
\begin{align*}
    E_{{\rm PV,layer}}(\theta):=3(L+1)\left(V_{\rm pre}(\theta)+\sum_{\ell=0}^{L-1}V_\ell(\theta)\right),
\end{align*}
so that $E_{\rm PV}(\theta)\le E_{{\rm PV,layer}}(\theta)$.

We also use a refined decomposition that separates the within-cell variance of each layer contribution from the discrepancy between its cellwise mean and its value at the representative input. The bias--variance identity gives
\begin{align*}
    E_{\rm PV}(\theta)\le& 3(L+1)\bigg[V_{\rm pre}(\theta)\\
    &+\sum_{\ell=0}^{L-1}\sum_{i=1}^N\left(\int_{\Omega_i}\pi(x)dx\right)\bigg(\underbrace{\EE_{\Omega_i}\left\|\theta_L\varphi_\ell(\theta_\ell;u_\ell(\theta_{\ell-1:0};x))-\EE_{\Omega_i}\theta_L\varphi_\ell(\theta_\ell;u_\ell(\theta_{\ell-1:0};x))\right\|^2}_{V_{\ell,\Omega_i,\rm var}(\theta)}\\
    &+\underbrace{\left\|\EE_{\Omega_i}\theta_L\varphi_\ell(\theta_\ell;u_\ell(\theta_{\ell-1:0};x))-\theta_L\varphi_\ell(\theta_\ell;u_\ell(\theta_{\ell-1:0};x_i))\right\|^2}_{M_{\ell,\Omega_i}(\theta)}\bigg)\bigg].
\end{align*}
Accordingly, define
\begin{align*}
    V_{\ell,\Omega_i,\rm var}(\theta)&:=\EE_{\Omega_i}\left\|\theta_L\varphi_\ell(\theta_\ell;u_\ell(\theta_{\ell-1:0};x))-\EE_{\Omega_i}\theta_L\varphi_\ell(\theta_\ell;u_\ell(\theta_{\ell-1:0};x))\right\|^2,\\
    M_{\ell,\Omega_i}(\theta)&:=\left\|\EE_{\Omega_i}\theta_L\varphi_\ell(\theta_\ell;u_\ell(\theta_{\ell-1:0};x))-\theta_L\varphi_\ell(\theta_\ell;u_\ell(\theta_{\ell-1:0};x_i))\right\|^2.
\end{align*}
Thus
\begin{align*}
    V_{\ell,\Omega_i}(\theta)&=V_{\ell,\Omega_i,\rm var}(\theta)+M_{\ell,\Omega_i}(\theta),\\
    V_{\ell,\rm var}(\theta)&:=\sum_{i=1}^N\left(\int_{\Omega_i}\pi(x)dx\right)V_{\ell,\Omega_i,\rm var}(\theta),\\
    M_{\ell}(\theta)&:=\sum_{i=1}^N\left(\int_{\Omega_i}\pi(x)dx\right)M_{\ell,\Omega_i}(\theta),\\
    V_{\ell}(\theta)&=V_{\ell,\rm var}(\theta)+M_{\ell}(\theta).
\end{align*}

\subsection{Data}

\begin{proof}[Proof of Theorem~\ref{thm:gen_data_error}]
In this proof, we work on the probability-one event from Lemma~\ref{lem:non_overlapping_input}, where the training inputs are pairwise distinct and the partition adapted to them is well defined. Conditioned on these inputs and the resulting partition, the cell masses are fixed and the remaining randomness comes from the independent Gaussian variables $\xi_1,\cdots,\xi_N$. We have
    \begin{align*}
   E_{\rm data}&=3 \sum_{i=1}^N\int_{\Omega_i}\pi(x)dx\cdot \EE_{\Omega_i}\|y_i-g(x)\|^2\\
   &\le 6 \sum_{i=1}^N\int_{\Omega_i}\pi(x)dx\cdot \left(\|y_i-g(x_i)\|^2+\EE_{\Omega_i}\|g(x)-g(x_i)\|^2\right)\\
   &=6\,\sigma_{\rm data}^2 \sum_{i=1}^N\int_{\Omega_i}\pi(x)dx\cdot \|\xi_i\|^2+6\,\EE_{x\sim\pi(x)}\left\|\sum_{i=1}^N g(x_i) \bbone_{x\in\Omega_i}-g(x)\right\|^2
\end{align*}

Since $\xi_i\overset{i.i.d.}{\sim}\cN(0,I)$, then $$\EE \sum_{i=1}^N\int_{\Omega_i}\pi(x)dx\cdot \|\xi_i\|^2 =d.$$
By \citet{laurent2000adaptive}, we have
\begin{align*}
    \PP\left( \sum_{i=1}^N\int_{\Omega_i}\pi(x)dx\cdot \|\xi_i\|^2-d\ge 2 \sqrt{d\sum_{i=1}^N\left(\int_{\Omega_i}\pi(x)dx\right)^2t}+ 2t\max_{1\le i\le N}\int_{\Omega_i}\pi(x)dx  \right)\le \exp(-t).
\end{align*}
Therefore, conditioned on the probability one input event, with probability at least $1-\delta$ over the output noise, we have
\begin{align*}
    E_{\rm data}\le& 6\,\sigma_{\rm data}^2 \left(d+ 2 \sqrt{d\sum_{i=1}^N\left(\int_{\Omega_i}\pi(x)dx\right)^2\log\frac{1}{\delta}}+ 2\log\frac{1}{\delta}\max_{1\le i\le N}\int_{\Omega_i}\pi(x)dx \right)\\ 
    &\qquad+6\,\EE_{x\sim\pi(x)}\left\|\sum_{i=1}^N g(x_i) \bbone_{x\in\Omega_i}-g(x)\right\|^2.
\end{align*}
\end{proof}

\subsection{Optimization error}

\begin{proof}[Proof of Corollary~\ref{cor:gen_optimization_error}]
    Here in this proof, we also work on the probability-one event from Lemma~\ref{lem:non_overlapping_input}, where the training inputs are pairwise distinct and the partition adapted to them is well defined. This event can be intersected with, and hence absorbed into, the probability-one event underlying Theorem~\ref{thm:convergence_to_global_min}. For every $K_1\le K\le K_2$, that theorem and the definition of $E_{\rm opt}$ give
     \begin{align*}
    E_{{\rm opt},K}&=3\sum_{i=1}^N\int_{\Omega_i}\pi(x)dx\cdot\|f(\theta^K;x_i)-y_i\|^2\\
    &\le 6N \max_{1\le i\le N}\int_{\Omega_i}\pi(x)dx\cdot\cL(\theta^K)\\ 
    &\le 6N \max_{1\le i\le N}\int_{\Omega_i}\pi(x)dx\cdot  \frac{1}{2}\mathsf{L}_{\cM_{0}} \epsilon_\eta^2.
\end{align*}
\end{proof}

\subsection{Prediction variation error}

We provide the full version of Theorem~\ref{thm:gen_NN_error} in the following:
\begin{theorem}
        \label{thm:gen_NN_error_full}
       Under the assumptions of Theorem~\ref{thm:general_convergence_loss_decay}, Assumption~\ref{assump:architecture_generalization}, and~\ref{assump:approximate_homogeneity}, for $k\ge0$,
        \begin{align*}
            E_{\rm PV}(\theta^{K_0+k})\le& E_{\rm PV,layer}(\theta^{K_0+k})\\ 
            \le&3(L+1)\bigg[ \left(1-2\eta\lambda \right)^{k} V_{{\rm pre}}(\theta^{K_0})+\sum_{\ell=0}^{L-1} \left(1-2\left(\sum_{r=0}^{\ell}s_{r,\rm in}+\ell+2\right)\eta\lambda \right)^{k} V_{\ell}(\theta^{K_0})\bigg]\\
            &+6(L+1)\sum_{\ell=0}^{L-1}C_{{\rm ah},V_\ell}(\ell+2)\epsilon_{\rm ah}\\
            &+3(L+1)\left(\frac{C_{{\rm grad},V_{\rm pre}}}{2\lambda}+\sum_{\ell=0}^{L-1}\frac{C_{{\rm grad},V_{\ell}}}{\lambda(\ell+2)}\right)\max_{K_0\le j\le K_0+k}\|\nabla\cL(\theta^j)\|\\
            &+\frac{3(L+1)}{2-\eta\,\mathsf{L}_\lambda}\left(S_{V_{\rm pre}}+\sum_{\ell=0}^{L-1}S_{V_{\ell}}\right)\cLwd(\theta^{K_0}).
    \end{align*}
    where $C_{{\rm ah},V_\ell}, C_{{\rm grad},V_{\ell}}, C_{{\rm grad},V_{\rm pre}}, S_{V_{\ell}}, S_{V_{\rm pre}}>0$ are constants defined in Lemma~\ref{lem:V_M_decay}. Consequently,
    \begin{align*}
            E_{\rm PV,layer}(\theta^{K_0+k})
            \le& \left(1-2\eta\lambda \right)^{k} E_{\rm PV,layer}(\theta^{K_0})+\underbrace{3L(L+1)(L+3)C_{\rm ah}\epsilon_{\rm ah}}_{E_{\rm PV,I}}+\underbrace{\frac{3(L+1)^2S_V}{2-\eta\,\mathsf{L}_\lambda}\cLwd(\theta^{K_0})}_{E_{\rm PV,II}}\\
            &+\underbrace{\frac{3(L+1)C_{\rm grad}}{\lambda}\left(\frac{1}{2}+\sum_{\ell=0}^{L-1}\frac{1}{\ell+2}\right)\max_{K_0\le j\le K_0+k}\|\nabla\cL(\theta^j)\|}_{E_{\rm PV,III}}.
    \end{align*}
\end{theorem}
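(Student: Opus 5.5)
The first inequality $E_{\rm PV}(\theta^{K_0+k})\le E_{\rm PV,layer}(\theta^{K_0+k})$ is Lemma~\ref{lem:prediction_variation_layer_decomposition} applied at $\theta=\theta^{K_0+k}$. For the main estimate, I would bound the $L+1$ summands of $E_{\rm PV,layer}=3(L+1)\bigl(V_{\rm pre}+\sum_\ell V_\ell\bigr)$ one at a time and then add them. For each $V_\ell$, $\ell=0,\dots,L-1$, I would invoke the aggregated bound of Lemma~\ref{lem:V_decay} (its complete form, Lemma~\ref{lem:V_M_decay}). It gives the contraction factor $\bigl(1-2(\sum_{r\le\ell}s_{r,\rm in}+\ell+2)\eta\lambda\bigr)^k$ together with three residuals: $2C_{{\rm ah},V_\ell}(\ell+2)\epsilon_{\rm ah}$, a gradient term $\frac{C_{{\rm grad},V_\ell}}{\lambda(\ell+2)}\max_j\|\nabla\cL(\theta^j)\|$, and an objective term $\frac{S_{V_\ell}}{2-\eta\mathsf{L}_\lambda}\cLwd(\theta^{K_0})$. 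All of these are valid along the trajectory because Theorem~\ref{thm:general_convergence_loss_decay} and Lemma~\ref{lem:upper_bound_theta_sum_nabla_cL_along_GD} keep $\theta^k\in B_\bzero(R_\lambda)$, which is where Assumption~\ref{assump:approximate_homogeneity} is imposed.

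The only summand not covered by Lemma~\ref{lem:V_decay} is $V_{\rm pre}$, and I would treat it directly since it involves only the linear last layer. Under Assumption~\ref{assump:architecture_generalization} the GD step gives
\[
\theta_L^{k+1}(x-x_i)=(1-\eta\lambda)\,\theta_L^k(x-x_i)-\eta\,\nabla_{\theta_L}\cL(\theta^k)(x-x_i).
\]
Squaring this and taking $\EE_{\Omega_i}$ produces three contributions:
\begin{itemize}
\item the leading term $(1-\eta\lambda)^2 V_{{\rm pre},\Omega_i}\le(1-2\eta\lambda+\eta^2\lambda^2)V_{{\rm pre},\Omega_i}$;
\item a cross term bounded by $2\eta\|\nabla\cL(\theta^k)\|$ times constants depending on $R_\lambda$ and $\diam(\Omega_i)$;
\item an $\eta^2\|\nabla\cL(\theta^k)\|^2$ term.
\end{itemize}
The extra $\eta^2\lambda^2 V_{{\rm pre},\Omega_i}$ would be absorbed using the uniform bound of $V_{\rm pre}$ on the ball and $\eta\le 1/\lambda$; if a clean absorption fails, I would accept a slightly smaller rate constant. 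I would then unroll over $k$ steps. The cross terms sum geometrically to $\frac{C_{{\rm grad},V_{\rm pre}}}{2\lambda}\max_j\|\nabla\cL(\theta^j)\|$. The squared-gradient terms are controlled by \eqref{eqn:sum_grad_square} applied from $K_0$, which gives $\frac{S_{V_{\rm pre}}}{2-\eta\mathsf{L}_\lambda}\cLwd(\theta^{K_0})$ after using $\|\nabla\cL\|\le\|\nabla\cLwd\|+\lambda\|\theta\|$ on the ball. Finally I would weight by the cell masses and sum over $i$. Multiplying each layer's bound by $3(L+1)$ and summing over $\ell$ then yields the first displayed inequality.

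For the consequence, I would compare the contraction factors. Since $s_{r,\rm in}\ge0$ and $\ell+2\ge2$, each layerwise factor satisfies $0\le 1-2(\sum_{r\le\ell}s_{r,\rm in}+\ell+2)\eta\lambda\le 1-2\eta\lambda$, where nonnegativity needs $\eta\lambda$ small enough; this is implicit in Lemma~\ref{lem:V_decay} and I would state it. Every contracted term is therefore at most $(1-2\eta\lambda)^k$ times its value at $K_0$, and regrouping gives $(1-2\eta\lambda)^kE_{\rm PV,layer}(\theta^{K_0})$. For the residuals, set $C_{\rm ah}=\max_\ell C_{{\rm ah},V_\ell}$ and use $\sum_{\ell=0}^{L-1}(\ell+2)=L(L+3)/2$, so that $6(L+1)\cdot\frac{L(L+3)}2C_{\rm ah}\epsilon_{\rm ah}=E_{\rm PV,I}$. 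With $S_V$ the maximum of $S_{V_{\rm pre}},S_{V_\ell}$, summing $L+1$ terms gives $E_{\rm PV,II}$. With $C_{\rm grad}$ the maximum of the gradient constants, the sum gives $E_{\rm PV,III}$.

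I expect the main obstacle to be the $V_{\rm pre}$ recursion. Specifically, the $\eta^2\lambda^2$ correction and the cross term must be handled so that the rate stays exactly $1-2\eta\lambda$ and the gradient residual comes out with the $1/(2\lambda)$ factor. The rest is bookkeeping on top of Lemma~\ref{lem:V_decay}.
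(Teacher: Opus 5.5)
Outside of $V_{\rm pre}$, your plan matches the paper's proof. You use Lemma~\ref{lem:prediction_variation_layer_decomposition}, apply Lemma~\ref{lem:V_M_decay} to each $V_\ell$, multiply by $3(L+1)$ and sum, then compare with $(1-2\eta\lambda)^k$ using nonnegative contraction factors. The final bookkeeping with $C_{\rm ah},S_V,C_{\rm grad}$ and $\sum_{\ell=0}^{L-1}(\ell+2)=L(L+3)/2$ is also the paper's.

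The gap is your treatment of $V_{\rm pre}$, in two respects.
\begin{enumerate}
\item It is not uncovered. Lemma~\ref{lem:V_M_decay} already contains
\[
V_{\rm pre}(\theta^{K_0+k})\le(1-2\eta\lambda)^kV_{\rm pre}(\theta^{K_0})+\frac{C_{{\rm grad},V_{\rm pre}}}{2\lambda}\max_{K_0\le j\le K_0+k}\|\nabla\cL(\theta^j)\|+\frac{S_{V_{\rm pre}}}{2-\eta\mathsf{L}_\lambda}\cLwd(\theta^{K_0}).
\]
This is where $C_{{\rm grad},V_{\rm pre}}$ and $S_{V_{\rm pre}}$ are defined, as the statement says, and the paper simply cites it.
\item The direct recursion you sketch does not give the stated estimate. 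Writing the step as $(1-\eta\lambda)\theta_L^k-\eta\nabla_{\theta_L}\cL(\theta^k)$ gives leading coefficient $(1-\eta\lambda)^2>1-2\eta\lambda$. If you bound the surplus $\eta^2\lambda^2V_{{\rm pre},\Omega_i}(\theta^k)$ uniformly on the ball, the geometric sum leaves an extra residual of order $\frac{\eta\lambda}{2}\sup_{B_\bzero(R_\lambda)}V_{{\rm pre},\Omega_i}$ that is not in the theorem. Your fallback of a smaller rate proves a weaker statement. Likewise, going through $\|\nabla\cL\|\le\|\nabla\cLwd\|+\lambda\|\theta\|$ leaves summands $\eta^2\lambda^2\|\theta^j\|^2$, which \eqref{eqn:sum_grad_square} does not control.
\end{enumerate}

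If you want the direct computation, keep the weight decay inside the full gradient, $\theta_L^{k+1}=\theta_L^k-\eta\nabla_{\theta_L}\cLwd(\theta^k)$ with $\nabla_{\theta_L}\cLwd=\nabla_{\theta_L}\cL+\lambda\theta_L$. Then the exact expansion is
\begin{align*}
V_{{\rm pre},\Omega_i}(\theta^{k+1})=(1-2\eta\lambda)V_{{\rm pre},\Omega_i}(\theta^k)-\eta\ip{\nabla_{\theta_L}V_{{\rm pre},\Omega_i}(\theta^k)}{\nabla_{\theta_L}\cL(\theta^k)}+\eta^2\,\EE_{\Omega_i}\|\nabla_{\theta_L}\cLwd(\theta^k)(x-x_i)\|^2 .
\end{align*}
Three pieces of your split recombine into the single last term:
\begin{itemize}
\item the $\eta^2\lambda^2$ surplus;
\item the $\eta^2\lambda$ part of your cross term;
\item your $\eta^2\|\nabla_{\theta_L}\cL(\theta^k)(x-x_i)\|^2$ term.
\end{itemize}
That term is bounded by $\eta^2\|\nabla\cLwd(\theta^k)\|^2\,\EE_{\Omega_i}\|x-x_i\|^2$. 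The rate is then exactly $1-2\eta\lambda$. The middle term sums geometrically to $\frac{1}{2\lambda}\sup_{B_\bzero(R_\lambda)}\|\nabla_\theta V_{{\rm pre},\Omega_i}\|\max_j\|\nabla\cL(\theta^j)\|$. The quadratic term is summed by \eqref{eqn:sum_grad_square} from $K_0$ together with $\eta\le1$. This is what Lemma~\ref{lem:individual_variance_decay} does, via the descent lemma and the Euler identity $\ip{\nabla_\theta V_{{\rm pre},\Omega_i}(\theta)}{\theta}=2V_{{\rm pre},\Omega_i}(\theta)$. With this correction, or by citing Lemma~\ref{lem:V_M_decay}, the rest of your argument goes through as in the paper.
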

\begin{proof}
In this proof, we also work on the probability-one event from Lemma~\ref{lem:non_overlapping_input}, where the training inputs are pairwise distinct and the partition adapted to them is well defined. This event can be intersected with, and hence absorbed into, the probability-one event underlying Theorem~\ref{thm:convergence_to_global_min}.

By Lemma~\ref{lem:prediction_variation_layer_decomposition},
\begin{align*}
     E_{\rm PV}(\theta)
    \le E_{\rm PV,layer}(\theta)
            =&3(L+1)\left( V_{\rm pre}(\theta)+\sum_{\ell=0}^{L-1}V_{\ell}(\theta)\right).
\end{align*}
Applying Lemma~\ref{lem:V_M_decay} to $V_{\rm pre}$ and $V_\ell$ yields
\begin{align*}
     E_{\rm PV}(\theta^{K_0+k})
    \le&E_{\rm PV,layer}(\theta^{K_0+k})\\ 
            \le&3(L+1)\bigg[ \left(1-2\eta\lambda \right)^{k} V_{{\rm pre}}(\theta^{K_0})+\sum_{\ell=0}^{L-1} \left(1-2\left(\sum_{r=0}^{\ell}s_{r,\rm in}+\ell+2\right)\eta\lambda \right)^{k} V_{\ell}(\theta^{K_0})\bigg]\\
    &+6(L+1)\sum_{\ell=0}^{L-1}C_{{\rm ah},V_\ell}(\ell+2)\epsilon_{\rm ah}\\
    &+3(L+1)\left(\frac{C_{{\rm grad},V_{\rm pre}}}{2\lambda}+\sum_{\ell=0}^{L-1}\frac{C_{{\rm grad},V_{\ell}}}{\lambda(\ell+2)}\right)\max_{K_0\le j\le K_0+k}\|\nabla\cL(\theta^j)\|\\
    &+\frac{3(L+1)}{2-\eta\,\mathsf{L}_\lambda}\left(S_{V_{\rm pre}}+\sum_{\ell=0}^{L-1}S_{V_{\ell}}\right)\cLwd(\theta^{K_0}).
\end{align*}
Let
        \begin{align*}
            C_{\rm ah}&=\max_{0\le \ell\le L-1}C_{{\rm ah},V_\ell},\\
            C_{\rm grad}&=\max\left\{C_{{\rm grad},V_{\rm pre}},\max_{0\le \ell\le L-1}C_{{\rm grad},V_{\ell}}\right\},\\
            S_V&=\max\left\{S_{V_{\rm pre}},\max_{0\le \ell\le L-1}S_{V_{\ell}}\right\}.
        \end{align*}
        We then have
        \begin{align*}
            E_{\rm PV}(\theta^{K_0+k})\le&E_{\rm PV,layer}(\theta^{K_0+k})\\ 
            \le&3(L+1)\bigg[ \left(1-2\eta\lambda \right)^{k} V_{{\rm pre}}(\theta^{K_0})+\sum_{\ell=0}^{L-1} \left(1-2\left(\sum_{r=0}^{\ell}s_{r,\rm in}+\ell+2\right)\eta\lambda \right)^{k} V_{\ell}(\theta^{K_0})\bigg]\\
            &+3L(L+1)(L+3)C_{\rm ah}\epsilon_{\rm ah}\\
            &+\frac{3(L+1)C_{\rm grad}}{\lambda}\left(\frac{1}{2}+\sum_{\ell=0}^{L-1}\frac{1}{\ell+2}\right)\max_{K_0\le j\le K_0+k}\|\nabla\cL(\theta^j)\|\\
            &+\frac{3(L+1)^2S_V}{2-\eta\,\mathsf{L}_\lambda}\cLwd(\theta^{K_0}).
    \end{align*}
    It remains only to simplify this right hand side. Since $s_{r,\rm in}\ge0$ and the contraction factors are nonnegative,
\begin{align*}
    \left(1-2\left(\sum_{r=0}^{\ell}s_{r,\rm in}+\ell+2\right)\eta\lambda\right)^k\le (1-2\eta\lambda)^k.
\end{align*}
Consequently, the first terms are bounded by $(1-2\eta\lambda)^kE_{\rm PV,layer}(\theta^{K_0})$.
\end{proof}

\subsection{Necessary/sufficient conditions of generalization}

\begin{proof}[Proof of Corollary~\ref{cor:necessary_condi_generalization}]
    By Lemma~\ref{lem:V_M_decay}, the estimate for $V_\ell$ contains the approximate homogeneity error
    \begin{align*}
        2C_{{\rm ah},V_\ell}(\ell+2)\epsilon_{\rm ah}.
    \end{align*}
    Therefore, in order to have
    \begin{align*}
        V_{\ell}(\theta^{K_2})\le c_{K_0}V_{\ell}(\theta^{K_0}),
    \end{align*}
    we need
    \begin{align*}
        c_{K_0}V_{\ell}(\theta^{K_0})>2C_{{\rm ah},V_\ell}(\ell+2)\epsilon_{\rm ah}.
    \end{align*}
    Hence
    \begin{align*}
        V_{\ell}(\theta^{K_0})>\frac{2C_{{\rm ah},V_\ell}}{c_{K_0}}(\ell+2)\epsilon_{\rm ah}.
    \end{align*}
\end{proof}

\begin{proof}[Proof of Corollary~\ref{cor:sufficient_condi_generalization}]
    Let $k=K_2-K_0$. By Lemma~\ref{lem:V_M_decay},
    \begin{align*}
        V_{\ell}(\theta^{K_2})
        \le& \left(1-2\left(\sum_{r=0}^{\ell}s_{r,\rm in}+\ell+2\right)\eta\lambda \right)^{K_2-K_0}V_{\ell}(\theta^{K_0})\\
        &\quad+2C_{{\rm ah},V_\ell}(\ell+2)\epsilon_{\rm ah}
        +\frac{C_{{\rm grad},V_{\ell}}}{\lambda(\ell+2)}\max_{K_0\le j\le K_2}\|\nabla\cL(\theta^j)\|\\
        &\quad+\frac{S_{V_{\ell}}}{2-\eta\,\mathsf{L}_\lambda}\cLwd(\theta^{K_0}).
    \end{align*}
    Since $K_1\le K_0\le K_2$, Theorem~\ref{thm:convergence_to_global_min} implies
    \begin{align*}
        \max_{K_0\le j\le K_2}\|\nabla\cL(\theta^j)\|\le \mathsf{L}_{\cM_0}\epsilon_\eta
    \end{align*}
    and
    \begin{align*}
        \cLwd(\theta^{K_0})\le
        \left(1-\eta\left(1-\frac{\eta\, \mathsf{L}_\lambda}{2}\right)\frac{\lambda}{\mathsf{L}_\cM \zeta^2+2}\right)^{K_0}\cLwd(\theta^{0}).
    \end{align*}
    Hence
    \begin{align*}
        V_{\ell}(\theta^{K_2})
        \le& \left(1-2\left(\sum_{r=0}^{\ell}s_{r,\rm in}+\ell+2\right)\eta\lambda \right)^{K_2-K_0}V_{\ell}(\theta^{K_0})\\
        &\quad+c_{K_0}V_{\ell}(\theta^{K_0})-\epsilon_{K_0}.
    \end{align*}
    By the lower bound on $K_2-K_0$, the first term on the right-hand side is at most $\epsilon_{K_0}$. Therefore
    \begin{align*}
        V_{\ell}(\theta^{K_2})\le c_{K_0}V_{\ell}(\theta^{K_0}).
    \end{align*}
\end{proof}

\subsection{Further interpretations of prediction variation}
\label{subsec:further_prediction_variation_interpretations}

This subsection records two further consequences of the layerwise prediction variation analysis.

\subsubsection{Different decay scales of variance and bias}
\label{subsubsec:bias_variance_decay}

The preceding analysis further decomposes each layerwise prediction variation into a within-cell variance term and a squared bias-type term:
\begin{align*}
    V_{\ell,\Omega_i}(\theta)&=V_{\ell,\Omega_i,\rm var}(\theta)+M_{\ell,\Omega_i}(\theta),\\
    V_\ell(\theta)&=V_{\ell,\rm var}(\theta)+M_\ell(\theta).
\end{align*}
Lemma~\ref{lem:V_M_decay} shows that these two terms have the same geometric contraction factor but different approximate homogeneity remainders:
\begin{align*}
V_{\ell,\rm var}(\theta^{K_0+k})\le&\left(1-2\left(\sum_{r=0}^{\ell}s_{r,\rm in}+\ell+2\right)\eta\lambda\right)^kV_{\ell,\rm var}(\theta^{K_0})+C_{{\rm ah},V_{\ell,\rm var}}(\ell+2)\epsilon_{\rm ah}\\
&+\frac{C_{{\rm grad},V_{\ell,\rm var}}}{\lambda(\ell+2)}\max_{K_0\le j\le K_0+k}\|\nabla\cL(\theta^j)\|+\frac{S_{V_{\ell,\rm var}}}{2-\eta\,\mathsf{L}_\lambda}\cLwd(\theta^{K_0}),\\
M_\ell(\theta^{K_0+k})\le&\left(1-2\left(\sum_{r=0}^{\ell}s_{r,\rm in}+\ell+2\right)\eta\lambda\right)^kM_\ell(\theta^{K_0})+2C_{{\rm ah},V_{\ell,\rm var}}(\ell+2)\epsilon_{\rm ah}\\
&+\frac{C_{{\rm grad},M_\ell}}{\lambda(\ell+2)}\max_{K_0\le j\le K_0+k}\|\nabla\cL(\theta^j)\|+\frac{S_{M_\ell}}{2-\eta\,\mathsf{L}_\lambda}\cLwd(\theta^{K_0}).
\end{align*}
In particular, the ratio between the two approximate homogeneity contributions is
\begin{align*}
    \frac{2C_{{\rm ah},V_{\ell,\rm var}}(\ell+2)\epsilon_{\rm ah}}{C_{{\rm ah},V_{\ell,\rm var}}(\ell+2)\epsilon_{\rm ah}}=2.
\end{align*}
Thus the bias term has a larger guaranteed approximation error, whereas the variance term has the smaller error. More precisely, consider a regime in which all the remaining contributions are asymptotically smaller than the approximate homogeneity scale, namely,
\begin{align*}
&\frac{C_{{\rm grad},V_{\ell,\rm var}}}{\lambda(\ell+2)}\sup_{j\ge K_0}\|\nabla\cL(\theta^j)\|+\frac{S_{V_{\ell,\rm var}}}{2-\eta\,\mathsf{L}_\lambda}\cLwd(\theta^{K_0})=o(\epsilon_{\rm ah}),\\
&\frac{C_{{\rm grad},M_\ell}}{\lambda(\ell+2)}\sup_{j\ge K_0}\|\nabla\cL(\theta^j)\|+\frac{S_{M_\ell}}{2-\eta\,\mathsf{L}_\lambda}\cLwd(\theta^{K_0})=o(\epsilon_{\rm ah}).
\end{align*}
The geometrically decaying contraction terms then vanish as $k\to\infty$, and the estimates give
\begin{align*}
    \limsup_{k\to\infty}V_{\ell,\rm var}(\theta^{K_0+k})&\le C_{{\rm ah},V_{\ell,\rm var}}(\ell+2)\epsilon_{\rm ah}+o(\epsilon_{\rm ah}),\\
    \limsup_{k\to\infty}M_\ell(\theta^{K_0+k})&\le 2C_{{\rm ah},V_{\ell,\rm var}}(\ell+2)\epsilon_{\rm ah}+o(\epsilon_{\rm ah}).
\end{align*}
Therefore, at the leading order in $\epsilon_{\rm ah}$, the upper bound for the bias-type component is larger by one additional copy of $C_{{\rm ah},V_{\ell,\rm var}}(\ell+2)\epsilon_{\rm ah}$. This asymmetry arises because the bias compares the cellwise mean with the representative value, so the approximate homogeneity remainder enters through both quantities, whereas the centered variance benefits from cancellation around the cellwise mean.

\subsubsection{Weights inside and outside the nonlinear nodes}
\label{subsubsec:inner_outer_weights}

For the architecture
\begin{align*}
    \varphi_\ell(\theta_\ell;u)=\theta_{\ell,\rm out}\sigma(\theta_{\ell,\rm in};\tau(u)),
\end{align*}
the outer weight $\theta_{\ell,\rm out}$ enters its own block linearly. Consequently, Corollary~\ref{cor:difference_theta_in_and_out} gives the exact identity
\begin{align*}
    \ip{\nabla_{\theta_{\ell,\rm out}}V_{\ell,\Omega_i}(\theta)}{\theta_{\ell,\rm out}}=2V_{\ell,\Omega_i}(\theta),
\end{align*}
which contains no approximate homogeneity remainder. In contrast, an inner weight appears inside the nonlinear map and satisfies, for $0\le r\le\ell$,
\begin{align*}
    \ip{\nabla_{\theta_{r,\rm in}}V_{\ell,\Omega_i}(\theta)}{\theta_{r,\rm in}}\ge 2s_{r,\rm in}V_{\ell,\Omega_i}(\theta)-4{C_{i,\ell,r,\rm in}}\epsilon_{\rm ah}.
\end{align*}
Thus inner weights contribute directly to the approximate homogeneity error.

This distinction is local to the current layer. If $r<\ell$, the earlier outer weight $\theta_{r,\rm out}$ affects the hidden state entering layer $\ell$ through
\begin{align*}
    \nabla_{\theta_{r,\rm out}}u_\ell(x)=\nabla_{u_{\ell-1}}u_\ell(x)\cdots\nabla_{u_r}u_{r+1}(x)\nabla_{\theta_{r,\rm out}}u_r(x),
\end{align*}
and therefore generally $\nabla_{\theta_{r,\rm out}}V_{\ell,\Omega_i}(\theta)\ne0$. For example, the variance calculation in the appendix yields
\begin{align*}
    \ip{\nabla_{\theta_{r,\rm out}}V_{\ell,\Omega_i,\rm var}(\theta)}{\theta_{r,\rm out}}\ge 2V_{\ell,\Omega_i,\rm var}(\theta)-2{C_{i,\ell,r,\rm out}}\epsilon_{\rm ah},\qquad r<\ell.
\end{align*}
Hence an outer weight creates no direct approximate homogeneity error for the variation of its own layer, but it can contribute to the error bounds of every later layer through the composition of Jacobians. In this sense, the direct contribution is asymmetric between inner and outer weights, while depth propagates the effect of both types of weights forward.

\subsection{Supplementary lemmas for generalization}

\begin{proposition}
    \label{prop:poly_smooth_preserved_under_integration}
    If $\psi(w,x)$ is poly-smooth w.r.t. $w$, then $\int_x \psi(w,x)dx$ is also poly-smooth w.r.t. $w$.
\end{proposition}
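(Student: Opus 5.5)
We need to show that if $\psi(w,x)$ is poly-smooth in $w$, then $\Psi(w):=\int_x\psi(w,x)\,dx$ is poly-smooth in $w$. The plan is to differentiate under the integral sign and then integrate the pointwise Lipschitz bound on the gradient. Concretely, we interpret the hypothesis as holding for each fixed $x$ with a polynomial $S_x$ whose coefficients may depend on $x$:
\begin{align*}
\|\nabla_w\psi(w,x)-\nabla_w\psi(w',x)\|\le S_x(\|w_{\max,1}\|,\cdots,\|w_{\max,n_w}\|)\,\|w-w'\|.
\end{align*}
We also impose the implicit integrability conditions needed for $\int_x$ to make sense. These are that $\psi(w,\cdot)$ and $\nabla_w\psi(w,\cdot)$ are integrable, or equivalently that the integral is a finite sum or an integral against a probability measure such as $\EE_{\Omega_i}$. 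We further require that the coefficients of $S_x$ are integrable in $x$, which holds in the intended application because the inputs range over the bounded cells $\Omega_i$ and the bounds come from polynomially bounded blocks.

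The first step is to justify $\nabla_w\Psi(w)=\int_x\nabla_w\psi(w,x)\,dx$. Fix $w$ and a direction $v$ and take difference quotients. By the mean value theorem applied to the $C^1$ map $t\mapsto\psi(w+tv,x)$, each quotient is bounded by $\sup_{|t|\le1}\|\nabla_w\psi(w+tv,x)\|\,\|v\|$. Poly-smoothness gives
\begin{align*}
\|\nabla_w\psi(w+tv,x)\|\le\|\nabla_w\psi(w,x)\|+S_x(\cdots)\,\|v\|,
\end{align*}
and this is integrable in $x$ with the arguments of $S_x$ ranging over a bounded set. Dominated convergence then allows differentiation under the integral and shows that $\nabla_w\Psi$ is continuous, so $\Psi\in C^1$.

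The second step is the estimate itself. By Minkowski's inequality for integrals,
\begin{align*}
\|\nabla\Psi(w)-\nabla\Psi(w')\|\le\int_x\|\nabla_w\psi(w,x)-\nabla_w\psi(w',x)\|\,dx\le\Big(\int_x S_x(\cdots)\,dx\Big)\|w-w'\|.
\end{align*}
Since $S_x=\sum_\alpha c_\alpha(x)\,\|w_{\max}\|^\alpha$ with $c_\alpha(x)\ge0$, the integral $\int_x S_x\,dx$ equals $\sum_\alpha\big(\int c_\alpha\big)\|w_{\max}\|^\alpha$. This is again a polynomial with nonnegative coefficients, and we can make them positive by adding an arbitrary small constant if needed.

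The main obstacle is the domination or integrability hypothesis in the first step, since the abstract statement is silent about it. I would state it explicitly as part of the meaning of "poly-smooth w.r.t. $w$ uniformly in $x$": either $S$ is independent of $x$, with a finite measure, or its coefficients are integrable. In the paper's uses the integrals are conditional expectations over cells, so taking $S$ uniform over $x\in\Omega_i$ makes both steps immediate.
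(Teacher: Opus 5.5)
Your proof is correct and takes essentially the same approach as the paper: exchange $\nabla_w$ and $\int_x$, bound the norm of the integral by the integral of the norm, and note that $\int_x S_x\,dx$ is again a polynomial with nonnegative coefficients. The paper does the exchange and integrates $S_x$ without comment, so your dominated-convergence argument and your explicit integrability hypotheses (integrable coefficients over a common finite set of monomials) supply details the paper leaves implicit.
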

\begin{proof}
    Suppose $\psi(w,x)$ is poly-smooth w.r.t. $w$, i.e.,
    \begin{align*}
        \|\nabla_w \psi(w,x)-\nabla_w\psi(w',x)\|\le S_x(\cdots,\max\{\|w_i\|,\|w_i'\|,\cdots\})\|w-w'\|
    \end{align*}
    where $S_x(\cdot)$ is a polynomial.
    Then 
    \begin{align*}
         \|\nabla_w \int_x \psi(w,x)dx-\nabla_w\int_x \psi(w',x)dx\|&=\|\int_x \nabla_w \psi(w,x)dx-\int_x \nabla_w \psi(w',x)dx\|\\
         &\le \int_x\|\nabla_w \psi(w,x)-\nabla_w\psi(w',x)\|dx\\ 
         &\le \int_x S_x(\cdots,\max\{\|w_i\|,\|w_i'\|,\cdots\})dx\, \|w-w'\|
    \end{align*}
    where $\int_x S_x(\cdots,\max\{\|w_i\|,\|w_i'\|,\cdots\})dx$ is also a polynomial.

    Hence $\int_x \psi(w,x)dx$ is also poly-smooth.
\end{proof}

\begin{lemma}
    \label{lem:gen_nabla_var_theta_lower_bound}
    Consider the vectorized version of all parameters, i.e., $\theta,\theta_1,\cdots,\theta_L,\theta_{1,\rm out},\cdots,\theta_{L-1,\rm out}$ are all vectors. Under Assumption~\ref{assump:architecture_generalization} and~\ref{assump:approximate_homogeneity}, we have
    \begin{align*}
    \ip{\nabla_{\theta}V_{\ell,\Omega_i,\rm var}(\theta)}{\theta}
    \ge& 2\left(\sum_{r=0}^{\ell}s_{i,r,\rm in}+\ell+2\right)V_{\ell,\Omega_i,\rm var}(\theta)-2\left({C_{i,\ell}}+\sum_{r=0}^{\ell-1}{C_{i,\ell,r,\rm in}}+{C_{i,\ell,r,\rm out}}\right)\,\epsilon_{\rm ah},
\end{align*}
and
    \begin{align*}
    \ip{\nabla_\theta M_{\ell,\Omega_i}(\theta)}{\theta}
    \ge& 2\left(\sum_{r=0}^{\ell}s_{i,r,\rm in}+\ell+2\right)M_{\ell,\Omega_i}(\theta)-4\left({C_{i,\ell}}+\sum_{r=0}^{\ell-1}{C_{i,\ell,r,\rm in}}+{C_{i,\ell,r,\rm out}}\right)\,\epsilon_{\rm ah}
\end{align*}
where 
\begin{align*}
     &{C_{i,\ell}}=\sup_{\theta\in B_\bzero(R_{\lambda})}\sup_{x\in\Omega_i}\|(\theta_L\varphi_\ell (\theta_\ell;u_\ell)-\EE_{\Omega_i}\theta_L\varphi_\ell(\theta_\ell;u_\ell))^\top\theta_L\theta_{\ell,\rm out}\| \\ 
     &{C_{i,\ell,r,\rm in}}=\\ 
     &\sup_{\theta\in B_\bzero(R_{\lambda})}\sup_{x\in\Omega_i}\Big( \sum_{t=r}^{\ell-1}\| (\theta_L\varphi_\ell (\theta_\ell;u_\ell)-\EE_{\Omega_i}\theta_L\varphi_\ell(\theta_\ell;u_\ell))^\top \theta_L\| \|\nabla_{u_{\ell}}\varphi_\ell (\theta_\ell;u_\ell)\|\prod_{s=0}^{\ell-1-t}\|\nabla_{u_{\ell-1-s}}u_{\ell-s} \| \\
    &\qquad+ \|(\theta_L\varphi_\ell (\theta_\ell;u_\ell)-\EE_{\Omega_i}\theta_L\varphi_\ell(\theta_\ell;u_\ell))^\top \theta_L\| \|\nabla_{u_{\ell}}\varphi_\ell (\theta_\ell;u_\ell) \|\\
    &\qquad+\| (\theta_L\varphi_\ell (\theta_\ell;u_\ell)-\EE_{\Omega_i}\theta_L\varphi_\ell(\theta_\ell;u_\ell))^\top \theta_L \| \Big) \\ 
    &{C_{i,\ell,r,\rm out}}=\\ 
    &\sup_{\theta\in B_\bzero(R_{\lambda})}\sup_{x\in\Omega_i}\Big( \sum_{t=r}^{\ell-1}\| (\theta_L\varphi_\ell (\theta_\ell;u_\ell)-\EE_{\Omega_i}\theta_L\varphi_\ell(\theta_\ell;u_\ell))^\top \theta_L\| \|\nabla_{u_{\ell}}\varphi_\ell (\theta_\ell;u_\ell)\|\prod_{s=0}^{\ell-1-t}\|\nabla_{u_{\ell-1-s}}u_{\ell-s} \| \\
    &\qquad+ \|(\theta_L\varphi_\ell (\theta_\ell;u_\ell)-\EE_{\Omega_i}\theta_L\varphi_\ell(\theta_\ell;u_\ell))^\top \theta_L\| \|\nabla_{u_{\ell}}\varphi_\ell (\theta_\ell;u_\ell) \|\\
    &\qquad+\| (\theta_L\varphi_\ell (\theta_\ell;u_\ell)-\EE_{\Omega_i}\theta_L\varphi_\ell(\theta_\ell;u_\ell))^\top \theta_L \| \Big)
\end{align*}
and $$s_{i,r,\rm in}=\inf_{\theta\in B_\bzero(R_{\lambda})} s_{i,r,\theta,\rm in}$$
with $s_{i,r,\theta,\rm in}$ being the degree of local approximate homogeneity of $\varphi_{r}(\theta_{r,\rm in};\theta_{r,\rm out};u_r(x))$ w.r.t. $\theta_{r,\rm in}$ in $x\in\Omega_i$ at some $\theta$.

For the combined version, we also have
\begin{align*}
    \ip{\nabla_{\theta}V_{\ell,\Omega_i}(\theta)}{\theta}
    \ge& 2\left(\sum_{r=0}^{\ell}s_{i,r,\rm in}+\ell+2\right)V_{\ell,\Omega_i}(\theta)-4\left({C^{\rm comb}_{i,\ell}}+\sum_{r=0}^{\ell-1}\left({C^{\rm comb}_{i,\ell,r,\rm in}}+{C^{\rm comb}_{i,\ell,r,\rm out}}\right)\right)\,\epsilon_{\rm ah},
\end{align*}
where
\begin{align*}
     &{C^{\rm comb}_{i,\ell}}=\sup_{\theta\in B_\bzero(R_{\lambda})}\sup_{x\in\Omega_i}\|\left(\theta_L\varphi_\ell(\theta_\ell;u_\ell(x))-\theta_L\varphi_\ell(\theta_\ell;u_{\ell,i})\right)^\top\theta_L\theta_{\ell,\rm out}\|,\\
     &{C^{\rm comb}_{i,\ell,r,\rm in}}= \sup_{\theta\in B_\bzero(R_{\lambda})}\sup_{x,z\in\Omega_i}\\ 
     &\Big( \sum_{t=r}^{\ell-1}\|\left(\theta_L\varphi_\ell(\theta_\ell;u_\ell(x))-\theta_L\varphi_\ell(\theta_\ell;u_{\ell,i})\right)^\top \theta_L\| \|\nabla_{u_{\ell}}\varphi_\ell (\theta_\ell;u_\ell(z))\|\prod_{s=0}^{\ell-1-t}\|\nabla_{u_{\ell-1-s}}u_{\ell-s}(z) \| \\
    &\qquad+ \|\left(\theta_L\varphi_\ell(\theta_\ell;u_\ell(x))-\theta_L\varphi_\ell(\theta_\ell;u_{\ell,i})\right)^\top \theta_L\| \|\nabla_{u_{\ell}}\varphi_\ell (\theta_\ell;u_\ell(z)) \|
    \\ 
    &\qquad+\|\left(\theta_L\varphi_\ell(\theta_\ell;u_\ell(x))-\theta_L\varphi_\ell(\theta_\ell;u_{\ell,i})\right)^\top \theta_L \| \Big),\\
     &{C^{\rm comb}_{i,\ell,r,\rm out}}=\sup_{\theta\in B_\bzero(R_{\lambda})}\sup_{x,z\in\Omega_i}\\ 
     &\Big( \sum_{t=r}^{\ell-1}\|\left(\theta_L\varphi_\ell(\theta_\ell;u_\ell(x))-\theta_L\varphi_\ell(\theta_\ell;u_{\ell,i})\right)^\top \theta_L\| \|\nabla_{u_{\ell}}\varphi_\ell (\theta_\ell;u_\ell(z))\|\prod_{s=0}^{\ell-1-t}\|\nabla_{u_{\ell-1-s}}u_{\ell-s}(z) \| \\
    &\qquad+ \|\left(\theta_L\varphi_\ell(\theta_\ell;u_\ell(x))-\theta_L\varphi_\ell(\theta_\ell;u_{\ell,i})\right)^\top \theta_L\| \|\nabla_{u_{\ell}}\varphi_\ell (\theta_\ell;u_\ell(z)) \|
    \\
    &\qquad+\|\left(\theta_L\varphi_\ell(\theta_\ell;u_\ell(x))-\theta_L\varphi_\ell(\theta_\ell;u_{\ell,i})\right)^\top \theta_L \| \Big).
\end{align*}

Additionally, we have
\begin{align*}
   \ip{\nabla_\theta V_{{\rm pre},\Omega_i}(\theta)}{\theta}&=2V_{{\rm pre},\Omega_i}(\theta)
\end{align*}

\end{lemma}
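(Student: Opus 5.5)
The plan is to compute $\ip{\nabla_\theta V_{\ell,\Omega_i,\rm var}(\theta)}{\theta}$ blockwise, splitting $\theta$ into $\theta_L$, $\theta_{\ell,\rm out}$, $\theta_{\ell,\rm in}$, and the earlier blocks $\theta_{r,\rm in},\theta_{r,\rm out}$ for $r<\ell$. Blocks $\theta_r$ with $r>\ell$ do not enter $V_{\ell,\Omega_i}$, so they contribute nothing. Write $h(x)=\theta_L\varphi_\ell(\theta_\ell;u_\ell(\theta_{\ell-1:0};x))$ and $\bar h=\EE_{\Omega_i}h$. Then $V_{\ell,\Omega_i,\rm var}=\EE_{\Omega_i}\|h-\bar h\|^2$, and for any block $w$,
\begin{align*}
\ip{\nabla_w V_{\ell,\Omega_i,\rm var}}{w}=2\,\EE_{\Omega_i}\ip{h-\bar h}{\nabla_w h\, w-\EE_{\Omega_i}\nabla_w h\, w},
\end{align*}
which reduces everything to evaluating the Euler-type directional derivatives $\nabla_w h\,w$. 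The linear blocks give exact identities. Since $h$ is linear in $\theta_L$ and in $\theta_{\ell,\rm out}$, each contributes exactly $2V_{\ell,\Omega_i,\rm var}$, which accounts for the ``$+2$'' in $\ell+2$. The same linearity in $\theta_L$ immediately gives $\ip{\nabla V_{{\rm pre},\Omega_i}}{\theta}=2V_{{\rm pre},\Omega_i}$.

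For $\theta_{\ell,\rm in}$, I will apply Assumption~\ref{assump:approximate_homogeneity}(2) on the product region of $\theta$-values near the current iterate and hidden states $u_\ell(\Omega_i)$; its diameter is at most $D_u$ by the assumption. This gives $\nabla_{\theta_{\ell,\rm in}}\varphi_\ell\,\theta_{\ell,\rm in}=s_{i,\ell,\rm in}\varphi_\ell+e(x)$ with $\|e\|\le\epsilon_{\rm ah}$. Substituting, I get $2s_{i,\ell,\rm in}V_{\ell,\Omega_i,\rm var}$ plus a remainder bounded by $2\sup\|(h-\bar h)^\top\theta_L\|\epsilon_{\rm ah}$, which is the kind of term collected in $C_{i,\ell}$ and $C_{i,\ell,r,\rm in}$. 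Replacing $s_{i,\ell,\theta,\rm in}$ by its infimum over $B_\bzero(R_\lambda)$ is harmless because the leading term is nonnegative.

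For earlier blocks $r<\ell$, the dependence passes through $u_\ell$ via the chain rule:
\begin{align*}
\nabla_{\theta_r}h\,\theta_r=\theta_L\nabla_u\varphi_\ell\,\nabla_{u_{\ell-1}}u_\ell\cdots\nabla_{u_{r+1}}u_{r+2}\,\nabla_{\theta_r}\varphi_r\,\theta_r.
\end{align*}
Here I use local homogeneity of $\varphi_r$ in $\theta_{r,\rm in}$ (degree $s_{i,r,\rm in}$) and exact degree-1 homogeneity in $\theta_{r,\rm out}$, which together give $\nabla_{\theta_r}\varphi_r\,\theta_r\approx(1+s_{i,r,\rm in})\varphi_r$. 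The resulting vector is then propagated forward. The key device is Assumption~\ref{assump:approximate_homogeneity}(1): degree-1 approximate homogeneity in $u$ says $\nabla_u\varphi_t\,u_t\approx\varphi_t$, which, combined with the residual structure $u_{t+1}=u_t+\varphi_t$, lets me telescope the pushed-forward perturbation into a multiple of $\varphi_\ell$ itself. Each use of an approximate identity costs an $\epsilon_{\rm ah}$ error multiplied by the Jacobian products $\prod\|\nabla_{u}u\|$ and $\|\nabla_u\varphi_\ell\|$, which yields the sums over $t=r,\dots,\ell-1$ in $C_{i,\ell,r,\rm in}$ and $C_{i,\ell,r,\rm out}$. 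Summing the in-block degrees over $r\le\ell$ produces $\sum_{r\le\ell}s_{i,r,\rm in}$.

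The main obstacle is this last step: showing rigorously that the forward-propagated directional derivative is, up to $\epsilon_{\rm ah}$, a scalar multiple of $h-\bar h$, with exactly the coefficients claimed. I expect the degree bookkeeping, in particular that each earlier layer adds only its $s_{i,r,\rm in}$ rather than an additional $+1$, to need care. I would try induction on $\ell$, tracking the quantity $\nabla_{\theta_{0:\ell-1}}u_\ell\,\theta_{0:\ell-1}-c_\ell u_\ell$.

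For $M_{\ell,\Omega_i}$ the computation is the same, with $h-\bar h$ replaced by $\bar h-h(x_i)$. Errors now arise both from the mean and from the representative point, and their sum gives the factor $4$ in place of $2$. For the combined $V_{\ell,\Omega_i}$, I will use $h(x)-h(x_i)$ directly, taking the suprema over pairs $x,z\in\Omega_i$, which gives the $C^{\rm comb}$ constants.
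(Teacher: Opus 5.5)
Your treatment of $\theta_L$, $\theta_{\ell,\rm out}$, $\theta_{\ell,\rm in}$ and $V_{{\rm pre},\Omega_i}$ matches the paper's and is fine, so the case $\ell=0$ goes through. The gap is the step you flagged for the earlier blocks $r<\ell$, and it cannot be closed.

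Your telescoping device does give $\nabla_{u_t}u_{t+1}\,u_t=u_t+\nabla_u\varphi_t\,u_t=u_{t+1}+O(\epsilon_{\rm ah})$. So it carries $u_{r+1}$ forward to $u_\ell$, and then $\theta_L\nabla_u\varphi_\ell\,u_\ell\approx h$. But the parameter derivative at layer $r$ produces $(1+s_{i,r,\rm in})\varphi_r$, and $\varphi_r=u_{r+1}-u_r$. You are therefore left with $-(1+s_{i,r,\rm in})\,\theta_L\nabla_u\varphi_\ell\,J_{\ell\leftarrow r+1}u_r$, where $J_{\ell\leftarrow r+1}$ is the Jacobian of $u_\ell$ with respect to $u_{r+1}$. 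This leftover is of order one, not $O(\epsilon_{\rm ah})$.

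The skip connection makes $h$ a sum of pieces of different degrees in $\theta_r$. So $\nabla_{\theta_r}h\,\theta_r$ is not close to a fixed multiple of $h$, and your induction quantity $\nabla_{\theta_{0:\ell-1}}u_\ell\,\theta_{0:\ell-1}-c_\ell u_\ell$ is not small for any constant $c_\ell$. Telescoping would work for a plain composition $u_{t+1}=\varphi_t(u_t)$, where the derivative produces $u_{r+1}$ itself.

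Your bookkeeping worry, by contrast, is backwards. The $\ell$ in $\ell+2$ is exactly the $+1$ from each $\theta_{r,\rm out}$ with $r<\ell$, so your count of $1+s_{i,r,\rm in}$ per earlier layer is what the formula needs. What fails is proportionality to $h-\bar h$, not the coefficient.

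A scalar example shows the inequality itself is false for $\ell=1$. Take $L=2$, $d=m_t=1$, $\tau_t=\mathrm{id}$ and $\sigma_t(\theta_{t,\rm in};v)=\theta_{t,\rm in}v$. Then Assumptions~\ref{assump:architecture_generalization} and~\ref{assump:approximate_homogeneity} hold with $\epsilon_{\rm ah}=0$. Set $a=\theta_{0,\rm out}\theta_{0,\rm in}$ and $b=\theta_L\theta_{1,\rm out}\theta_{1,\rm in}$. One gets $h(x)=b(1+a)x$ and $\ip{\nabla_\theta h}{\theta}=b(3+5a)x$, hence
\[
\ip{\nabla_\theta V_{1,\Omega_i,\rm var}(\theta)}{\theta}=\frac{2(3+5a)}{1+a}\,V_{1,\Omega_i,\rm var}(\theta).
\]
At $a=-1/2$ this equals $2V_{1,\Omega_i,\rm var}$, and for $a\in(-1,-3/5)$ it is even negative. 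The lemma, however, requires at least $2(s_{i,0,\rm in}+s_{i,1,\rm in}+3)V_{1,\Omega_i,\rm var}\ge 6V_{1,\Omega_i,\rm var}$. The same scaling identity holds for $M_{1,\Omega_i}$ and $V_{1,\Omega_i}$, so those bounds fail too.

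The paper does not supply the missing propagation argument either. For $r<\ell$, its proof asserts the per-block bounds $\ge 2s_{i,r,\rm in}V_{\ell,\Omega_i,\rm var}-2C_{i,\ell,r,\rm in}\epsilon_{\rm ah}$ and $\ge 2V_{\ell,\Omega_i,\rm var}-2C_{i,\ell,r,\rm out}\epsilon_{\rm ah}$ directly from Assumption~\ref{assump:approximate_homogeneity} and Cauchy--Schwarz. In this example the $\theta_{0,\rm out}$ block equals $\frac{2a}{1+a}V_{1,\Omega_i,\rm var}$, which is $<2V_{1,\Omega_i,\rm var}$ for every $a>-1$ whenever $V_{1,\Omega_i,\rm var}>0$. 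So no refinement of your forward-propagation step can establish the statement for $\ell\ge1$. It needs a non-residual architecture or a weaker conclusion.
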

\begin{proof}

First, consider the gradient w.r.t. the weights of the $\ell$th layer $\theta_{\ell}$, where $\theta_{\ell,\rm in}$ represents the parameters inside the activation and $\theta_{\ell,\rm out}$ is the weights outside the activation, and $\theta_L$ is the weight of the last layer. 
    \begin{align*}
    &\nabla_{\theta_{\ell,\rm in}}\EE_{\Omega_i} \|\theta_L\varphi_\ell (\theta_\ell;u_\ell)-\EE_{\Omega_i}\theta_L\varphi_\ell(\theta_\ell;u_\ell)\|^2\\
    =&2\EE_{\Omega_i} (\theta_L\theta_{\ell,\rm out} \sigma_{\ell}(\theta_{\ell,\rm in};\bar u_{\ell})-\EE_{\Omega_i}\theta_L\varphi_\ell(\theta_\ell;u_\ell))^\top\sum_{r=1}^m(\theta_L\theta_{\ell,\rm out})_{\mathrm{col}\,r}\nabla_{\theta_{\ell,\rm in}}\sigma_{\ell,r}(\theta_{\ell,\rm in};\bar u_{\ell})\\
    &-2\EE_{\Omega_i} (\theta_L\theta_{\ell,\rm out} \sigma_{\ell}(\theta_{\ell,\rm in};\bar u_{\ell})-\EE_{\Omega_i}\theta_L\varphi_\ell(\theta_\ell;u_\ell))^\top \nabla_{\theta_{\ell,\rm in}}\EE_{\Omega_i}\theta_L\varphi_\ell(\theta_\ell;u_\ell)\\
    =&2\EE_{\Omega_i} (\theta_L\theta_{\ell,\rm out} \sigma_{\ell}(\theta_{\ell,\rm in};\bar u_{\ell})-\EE_{\Omega_i}\theta_L\varphi_\ell(\theta_\ell;u_\ell))^\top\sum_{r=1}^m(\theta_L\theta_{\ell,\rm out})_{\mathrm{col}\,r}\nabla_{\theta_{\ell,\rm in}}\sigma_{\ell,r}(\theta_{\ell,\rm in};\bar u_{\ell})
\end{align*}
where $ \sigma_{\ell,r}$ is the $r$th element of the vector valued function $\sigma_{\ell}$.
\begin{align*}
    &\EE_{\Omega_i} (\theta_L\theta_{\ell,\rm out} \sigma_{\ell}(\theta_{\ell,\rm in};\bar u_{\ell})-\EE_{\Omega_i}\theta_L\varphi_\ell(\theta_\ell;u_\ell))^\top \nabla_{\theta_{\ell,\rm in}}\EE_{\Omega_i}\theta_L\varphi_\ell(\theta_\ell;u_\ell)\\
    =&(\EE_{\Omega_i} \theta_L\varphi_\ell(\theta_\ell;u_\ell)-\EE_{\Omega_i}\theta_L\varphi_\ell(\theta_\ell;u_\ell))^\top \nabla_{\theta_{\ell,\rm in}}\EE_{\Omega_i}\theta_L\varphi_\ell(\theta_\ell;u_\ell)=0.
\end{align*}

Similarly, 
\begin{align*}
    &\nabla_{\theta_{\ell,2,\mathrm{row}\,j}}\EE_{\Omega_i} \|\theta_L\varphi_\ell (\theta_\ell;u_\ell)-\EE_{\Omega_i}\theta_L\varphi_\ell(\theta_\ell;u_\ell)\|^2\\
    &=2\EE_{\Omega_i}\left[(\theta_L\theta_{\ell,\rm out} \sigma_{\ell}(\theta_{\ell,\rm in};\bar u_{\ell})-\EE_{\Omega_i}\theta_L\varphi_\ell(\theta_\ell;u_\ell))^\top \theta_L\right]_j \sigma_{\ell}(\theta_{\ell,\rm in};\bar u_{\ell})^\top
\end{align*}
and 
\begin{align*}
     &\nabla_{\theta_{L,\mathrm{row}\,j}}\EE_{\Omega_i} \|\theta_L\varphi_\ell (\theta_\ell;u_\ell)-\EE_{\Omega_i}\theta_L\varphi_\ell(\theta_\ell;u_\ell)\|^2\\ &=2\EE_{\Omega_i}\left[\theta_L\theta_{\ell,\rm out} \sigma_{\ell}(\theta_{\ell,\rm in};\bar u_{\ell})-\EE_{\Omega_i}\theta_L\varphi_\ell(\theta_\ell;u_\ell) \right]_j(\theta_{\ell,\rm out} \sigma_{\ell}(\theta_{\ell,\rm in};\bar u_{\ell}))^\top
\end{align*}

Next, consider the inner product of the gradient w.r.t. $\theta_{\ell,\rm out}$ and $\theta_{\ell,\rm out}$,
\begin{align*}
     &\ip{ \nabla_{\theta_{\ell,\rm out}}\EE_{\Omega_i} \|\theta_L\varphi_\ell (\theta_\ell;u_\ell)-\EE_{\Omega_i}\theta_L\varphi_\ell(\theta_\ell;u_\ell)\|^2}{\theta_{\ell,\rm out}}\\
     =&2\Tr(\EE_{\Omega_i}\theta_L^\top(\theta_L\theta_{\ell,\rm out} \sigma_{\ell}(\theta_{\ell,\rm in};\bar u_{\ell})-\EE_{\Omega_i}\theta_L\varphi_\ell(\theta_\ell;u_\ell))  \sigma_{\ell}(\theta_{\ell,\rm in};\bar u_{\ell})^\top\theta_{\ell,\rm out}^\top)\\
     =&2\EE_{\Omega_i}\Tr((\theta_L\theta_{\ell,\rm out} \sigma_{\ell}(\theta_{\ell,\rm in};\bar u_{\ell})-\EE_{\Omega_i}\theta_L\varphi_\ell(\theta_\ell;u_\ell))  \sigma_{\ell}(\theta_{\ell,\rm in};\bar u_{\ell})^\top\theta_{\ell,\rm out}^\top\theta_L^\top)\\
     =&2\Tr(\var_{\Omega_i}(\theta_L\varphi_\ell(\theta_\ell;u_\ell)))\\
     =&2\EE_{\Omega_i} \|\theta_L\varphi_\ell (\theta_\ell;u_\ell)-\EE_{\Omega_i}\theta_L\varphi_\ell(\theta_\ell;u_\ell)\|^2
\end{align*}
where $\var(v)=\EE(vv^\top)$ is the covariance matrix.

For the gradient w.r.t. $\theta_L$,
\begin{align*}
    &\ip{ \nabla_{\theta_{L}}\EE_{\Omega_i} \|\theta_L\varphi_\ell (\theta_\ell;u_\ell)-\EE_{\Omega_i}\theta_L\varphi_\ell(\theta_\ell;u_\ell)\|^2}{\theta_{L}}\\
     =&2\EE_{\Omega_i}\Tr((\theta_L\theta_{\ell,\rm out} \sigma_{\ell}(\theta_{\ell,\rm in};\bar u_{\ell})-\EE_{\Omega_i}\theta_L\varphi_\ell(\theta_\ell;u_\ell))  \sigma_{\ell}(\theta_{\ell,\rm in};\bar u_{\ell})^\top\theta_{\ell,\rm out}^\top\theta_L^\top)\\
     =&2\Tr(\var_{\Omega_i}(\theta_L\varphi_\ell(\theta_\ell; u_\ell)))\\
     =&2\EE_{\Omega_i} \|\theta_L\varphi_\ell (\theta_\ell;u_\ell)-\EE_{\Omega_i}\theta_L\varphi_\ell(\theta_\ell;u_\ell)\|^2
\end{align*}

For the gradient w.r.t. $\theta_{\ell,\rm in}$,
\begin{align*}
    &\ip{ \nabla_{\theta_{\ell,\rm in}}\EE_{\Omega_i} \|\theta_L\varphi_\ell (\theta_\ell;u_\ell)-\EE_{\Omega_i}\theta_L\varphi_\ell(\theta_\ell;u_\ell)\|^2}{\theta_{\ell,\rm in}}\\
    =& 2\EE_{\Omega_i}(\theta_L\varphi_\ell (\theta_\ell;u_\ell)-\EE_{\Omega_i}\theta_L\varphi_\ell(\theta_\ell;u_\ell))^\top\sum_{r=1}^m(\theta_L\theta_{\ell,\rm out})_{\mathrm{col}\,r}\nabla_{\theta_{\ell,\rm in}}\sigma_{\ell,r}(\theta_{\ell,\rm in};\bar u_{\ell})\ve(\theta_{\ell,\rm in})\\
    =&2\EE_{\Omega_i}(\theta_L\varphi_\ell (\theta_\ell;u_\ell)-\EE_{\Omega_i}\theta_L\varphi_\ell(\theta_\ell;u_\ell))^\top\theta_L\theta_{\ell,\rm out}\nabla_{\theta_{\ell,\rm in}} \sigma_{\ell}(\theta_{\ell,\rm in};\bar u_{\ell})\ve(\theta_{\ell,\rm in})\\
    = &2s_{i,\ell,{\rm in}}\EE_{\Omega_i}(\theta_L\varphi_\ell (\theta_\ell;u_\ell)-\EE_{\Omega_i}\theta_L\varphi_\ell(\theta_\ell;u_\ell))^\top\theta_L\theta_{\ell,\rm out} \sigma_{\ell}(\theta_{\ell,\rm in};\bar u_{\ell})\\
    &\quad+2\EE_{\Omega_i}(\theta_L\varphi_\ell (\theta_\ell;u_\ell)-\EE_{\Omega_i}\theta_L\varphi_\ell(\theta_\ell;u_\ell))^\top\theta_L\theta_{\ell,\rm out}\\
    &\qquad \times\left(\nabla_{\theta_{\ell,\rm in}} \sigma_{\ell}(\theta_{\ell,\rm in};\bar u_{\ell})\ve(\theta_{\ell,\rm in})-s_{i,\ell,{\rm in}}\sigma_{\ell}(\theta_{\ell,\rm in};\bar u_{\ell})\right)\\
    \ge&2s_{i,\ell,{\rm in}}\EE_{\Omega_i} \|\theta_L\varphi_\ell (\theta_\ell;u_\ell)-\EE_{\Omega_i}\theta_L\varphi_\ell(\theta_\ell;u_\ell)\|^2-2 {C_{i,\ell}}\,\epsilon_{\rm ah}
\end{align*}
where $${C_{i,\ell}}=\sup_{\theta\in B_\bzero(R_{\lambda})}\sup_{x\in\Omega_i}\|(\theta_L\varphi_\ell (\theta_\ell;u_\ell)-\EE_{\Omega_i}\theta_L\varphi_\ell(\theta_\ell;u_\ell))^\top\theta_L\theta_{\ell,\rm out}\|,$$ and the last inequality follows from Assumption~\ref{assump:approximate_homogeneity} and Cauchy-Schwarz inequality.

For the weight $\theta_r$ that is not in the $\ell$th layer,
\begin{align*}
     &\nabla_{\theta_{r}}\EE_{\Omega_i} \|\theta_L\varphi_\ell (\theta_\ell;u_\ell)-\EE_{\Omega_i}\theta_L\varphi_\ell(\theta_\ell;u_\ell)\|^2\\
     =& 2\EE_{\Omega_i} (\theta_L\varphi_\ell (\theta_\ell;u_\ell)-\EE_{\Omega_i}\theta_L\varphi_\ell(\theta_\ell;u_\ell))^\top \theta_L\nabla_{u_{\ell}}\varphi_\ell (\theta_\ell;u_\ell)\nabla_{u_r}u_\ell \nabla_{\theta_r}u_r\\
     =&2\EE_{\Omega_i} (\theta_L\varphi_\ell (\theta_\ell;u_\ell)-\EE_{\Omega_i}\theta_L\varphi_\ell(\theta_\ell;u_\ell))^\top \theta_L\nabla_{u_{\ell}}\varphi_\ell (\theta_\ell;u_\ell)\nabla_{u_{\ell-1}}u_\ell\cdots \nabla_{u_r}u_{r+1} \nabla_{\theta_r}u_r
\end{align*}
Then
\begin{align*}
    &\ip{\nabla_{\theta_{r,\rm in}}\EE_{\Omega_i} \|\theta_L\varphi_\ell (\theta_\ell;u_\ell)-\EE_{\Omega_i}\theta_L\varphi_\ell(\theta_\ell;u_\ell)\|^2}{\theta_{r,\rm in}}\\
    =&2\EE_{\Omega_i} (\theta_L\varphi_\ell (\theta_\ell;u_\ell)-\EE_{\Omega_i}\theta_L\varphi_\ell(\theta_\ell;u_\ell))^\top \theta_L\nabla_{u_{\ell}}\varphi_\ell (\theta_\ell;u_\ell)\nabla_{u_{\ell-1}}u_\ell\cdots \nabla_{u_r}u_{r+1} \nabla_{\theta_{r,\rm in}}u_r\ve(\theta_{r,\rm in})\\
    \ge & 2s_{i,r,\rm in}\EE_{\Omega_i} \|\theta_L\varphi_\ell (\theta_\ell;u_\ell)-\EE_{\Omega_i}\theta_L\varphi_\ell(\theta_\ell;u_\ell)\|^2 - 2{C_{i,\ell,r,\rm in}}\,\epsilon_{\rm ah} 
\end{align*}
where the inequality follows from Assumption~\ref{assump:approximate_homogeneity} and Cauchy-Schwarz inequality, and 
\begin{align*}
    &{C_{i,\ell,r,\rm in}}=\\ &\sup_{\theta\in B_\bzero(R_{\lambda})}\sup_{x\in\Omega_i}\Big( \sum_{t=r}^{\ell-1}\| (\theta_L\varphi_\ell (\theta_\ell;u_\ell)-\EE_{\Omega_i}\theta_L\varphi_\ell(\theta_\ell;u_\ell))^\top \theta_L\| \|\nabla_{u_{\ell}}\varphi_\ell (\theta_\ell;u_\ell)\|\prod_{s=0}^{\ell-1-t}\|\nabla_{u_{\ell-1-s}}u_{\ell-s} \| \\
    &\qquad+ \|(\theta_L\varphi_\ell (\theta_\ell;u_\ell)-\EE_{\Omega_i}\theta_L\varphi_\ell(\theta_\ell;u_\ell))^\top \theta_L\| \|\nabla_{u_{\ell}}\varphi_\ell (\theta_\ell;u_\ell) \|\\
    &\qquad+\| (\theta_L\varphi_\ell (\theta_\ell;u_\ell)-\EE_{\Omega_i}\theta_L\varphi_\ell(\theta_\ell;u_\ell))^\top \theta_L \| \Big)
\end{align*}
Also,
\begin{align*}
    &\ip{\nabla_{\theta_{r,\rm out}}\EE_{\Omega_i} \|\theta_L\varphi_\ell (\theta_\ell;u_\ell)-\EE_{\Omega_i}\theta_L\varphi_\ell(\theta_\ell;u_\ell)\|^2}{\theta_{r,\rm out}}\\
    =&2\EE_{\Omega_i} (\theta_L\varphi_\ell (\theta_\ell;u_\ell)-\EE_{\Omega_i}\theta_L\varphi_\ell(\theta_\ell;u_\ell))^\top \theta_L\nabla_{u_{\ell}}\varphi_\ell (\theta_\ell;u_\ell)\nabla_{u_{\ell-1}}u_\ell\cdots \nabla_{u_r}u_{r+1} \nabla_{\theta_{r,\rm out}}u_r\ve(\theta_{r,\rm out})\\
    \ge & 2\EE_{\Omega_i} \|\theta_L\varphi_\ell (\theta_\ell;u_\ell)-\EE_{\Omega_i}\theta_L\varphi_\ell(\theta_\ell;u_\ell)\|^2 - 2{C_{i,\ell,r,\rm out}}\,\epsilon_{\rm ah}
\end{align*}
where 
\begin{align*}
    &{C_{i,\ell,r,\rm out}}=\sup_{\theta\in B_\bzero(R_{\lambda})}\sup_{x\in\Omega_i}\\ &\Big( \sum_{t=r}^{\ell-1}\| (\theta_L\varphi_\ell (\theta_\ell;u_\ell)-\EE_{\Omega_i}\theta_L\varphi_\ell(\theta_\ell;u_\ell))^\top \theta_L\| \|\nabla_{u_{\ell}}\varphi_\ell (\theta_\ell;u_\ell)\|\prod_{s=0}^{\ell-1-t}\|\nabla_{u_{\ell-1-s}}u_{\ell-s} \| \\
    &\qquad+ \|(\theta_L\varphi_\ell (\theta_\ell;u_\ell)-\EE_{\Omega_i}\theta_L\varphi_\ell(\theta_\ell;u_\ell))^\top \theta_L\| \|\nabla_{u_{\ell}}\varphi_\ell (\theta_\ell;u_\ell) \|\\
    &\qquad+\| (\theta_L\varphi_\ell (\theta_\ell;u_\ell)-\EE_{\Omega_i}\theta_L\varphi_\ell(\theta_\ell;u_\ell))^\top \theta_L \| \Big)
\end{align*}

Summing over all the above inner product, we have
\begin{align*}
    &\ip{\nabla_{\theta}\EE_{\Omega_i} \|\theta_L\varphi_\ell (\theta_\ell;u_\ell)-\EE_{\Omega_i}\theta_L\varphi_\ell(\theta_\ell;u_\ell)\|^2}{\theta}\\
    \ge& 2\left(\sum_{r=0}^{\ell}s_{i,r,\rm in}+\ell+2\right)\EE_{\Omega_i} \|\theta_L\varphi_\ell (\theta_\ell;u_\ell)-\EE_{\Omega_i}\theta_L\varphi_\ell(\theta_\ell;u_\ell)\|^2-2\left({C_{i,\ell}}+\sum_{r=0}^{\ell-1}{C_{i,\ell,r,\rm in}}+{C_{i,\ell,r,\rm out}}\right)\,\epsilon_{\rm ah}
\end{align*}

Similarly, 
\begin{align*}
    &\nabla_\theta\|\EE_{\Omega_i}\theta_L\varphi_\ell(\theta_\ell;u_\ell)-\theta_L\varphi_\ell(\theta_\ell;u_{\ell,i})\|^2\\
    =&2(\EE_{\Omega_i}\theta_L\varphi_\ell(\theta_\ell;u_\ell)-\theta_L\varphi_\ell(\theta_\ell;u_{\ell,i}))^\top \nabla_\theta\EE_{\Omega_i}\theta_L\varphi_\ell(\theta_\ell;u_\ell)\\
    &\qquad- 2(\EE_{\Omega_i}\theta_L\varphi_\ell(\theta_\ell;u_\ell)-\theta_L\varphi_\ell(\theta_\ell;u_{\ell,i}))^\top \nabla_\theta \theta_L\varphi_\ell(\theta_\ell;u_{\ell,i})\\
    =&2(\EE_{\Omega_i}\theta_L\varphi_\ell(\theta_\ell;u_\ell)-\theta_L\varphi_\ell(\theta_\ell;u_{\ell,i}))^\top \EE_{\Omega_i}\nabla_\theta\theta_L\varphi_\ell(\theta_\ell;u_\ell)\\
    &\qquad- 2(\EE_{\Omega_i}\theta_L\varphi_\ell(\theta_\ell;u_\ell)-\theta_L\varphi_\ell(\theta_\ell;u_{\ell,i}))^\top \nabla_\theta \theta_L\varphi_\ell(\theta_\ell;u_{\ell,i})
\end{align*}
where the second inequality follows from exchanging the integration and gradient.

Then
\begin{align*}
     &\ip{ \nabla_{\theta_{\ell,\rm out}}\|\EE_{\Omega_i}\theta_L\varphi_\ell(\theta_\ell;u_\ell)-\theta_L\varphi_\ell(\theta_\ell;u_{\ell,i})\|^2}{\theta_{\ell,\rm out}}=2\|\EE_{\Omega_i}\theta_L\varphi_\ell(\theta_\ell;u_\ell)-\theta_L\varphi_\ell(\theta_\ell;u_{\ell,i})\|^2,\\
     & \ip{ \nabla_{\theta_{L}}\|\EE_{\Omega_i}\theta_L\varphi_\ell(\theta_\ell;u_\ell)-\theta_L\varphi_\ell(\theta_\ell;u_{\ell,i})\|^2}{\theta_{L}}=2\|\EE_{\Omega_i}\theta_L\varphi_\ell(\theta_\ell;u_\ell)-\theta_L\varphi_\ell(\theta_\ell;u_{\ell,i})\|^2.
\end{align*}
Also
\begin{align*}
    &\ip{ \nabla_{\theta_{\ell,\rm in}}\|\EE_{\Omega_i}\theta_L\varphi_\ell(\theta_\ell;u_\ell)-\theta_L\varphi_\ell(\theta_\ell;u_{\ell,i})\|^2}{\theta_{\ell,\rm in}}\\ &\ge 2s_{i,\ell,\rm in}\|\EE_{\Omega_i}\theta_L\varphi_\ell(\theta_\ell;u_\ell)-\theta_L\varphi_\ell(\theta_\ell;u_{\ell,i})\|^2-4 {C_{i,\ell}}\,\epsilon_{\rm ah}
\end{align*}
where the error $4 {C_{i,\ell}}\,\epsilon_{\rm ah}$ doubles compared to that of $\EE_{\Omega_i} \|\theta_L\varphi_\ell (\theta_\ell;u_\ell)-\EE_{\Omega_i}\theta_L\varphi_\ell(\theta_\ell;u_\ell)\|^2$. We also have
\begin{align*}
     &\ip{\nabla_{\theta_{r,\rm in}}\|\EE_{\Omega_i}\theta_L\varphi_\ell(\theta_\ell;u_\ell)-\theta_L\varphi_\ell(\theta_\ell;u_{\ell,i})\|^2}{\theta_{r,\rm in}}
    \\ &\ge  2s_{i,r,\rm in}\|\EE_{\Omega_i}\theta_L\varphi_\ell(\theta_\ell;u_\ell)-\theta_L\varphi_\ell(\theta_\ell;u_{\ell,i})\|^2 - 4{C_{i,\ell,r,\rm in}}\,\epsilon_{\rm ah} 
\end{align*}

Thus
\begin{align*}
    &\ip{\nabla_\theta\|\EE_{\Omega_i}\theta_L\varphi_\ell(\theta_\ell;u_\ell)-\theta_L\varphi_\ell(\theta_\ell;u_{\ell,i})\|^2}{\theta}\\
    \ge& 2\left(\sum_{r=0}^{\ell}s_{i,r,\rm in}+\ell+2\right)\|\EE_{\Omega_i}\theta_L\varphi_\ell(\theta_\ell;u_\ell)-\theta_L\varphi_\ell(\theta_\ell;u_{\ell,i})\|^2-4\left({C_{i,\ell}}+\sum_{r=0}^{\ell-1}{C_{i,\ell,r,\rm in}}+{C_{i,\ell,r,\rm out}}\right)\,\epsilon_{\rm ah}
\end{align*}

For the combined quantity, apply the same calculation directly to
\begin{align*}
    D_{\ell,i}(x;\theta)=\theta_L\varphi_\ell(\theta_\ell;u_\ell(x))-\theta_L\varphi_\ell(\theta_\ell;u_{\ell,i}).
\end{align*}
Both terms in $D_{\ell,i}$ have the same homogeneity degrees in the parameters $\theta_L,\theta_{\ell,\rm out},\theta_{\ell,\rm in}$ and in the preceding layers. Hence the homogeneous part gives
\begin{align*}
    2\left(\sum_{r=0}^{\ell}s_{i,r,\rm in}+\ell+2\right)\EE_{\Omega_i}\|D_{\ell,i}(x;\theta)\|^2.
\end{align*}
The approximate homogeneity remainder appears once at $x$ and once at $x_i$, so
\begin{align*}
    &\ip{\nabla_{\theta}\EE_{\Omega_i}\|\theta_L\varphi_\ell(\theta_\ell;u_\ell)-\theta_L\varphi_\ell(\theta_\ell;u_{\ell,i})\|^2}{\theta}
    \\ &\ge 2\left(\sum_{r=0}^{\ell}s_{i,r,\rm in}+\ell+2\right)\EE_{\Omega_i}\|\theta_L\varphi_\ell(\theta_\ell;u_\ell)-\theta_L\varphi_\ell(\theta_\ell;u_{\ell,i})\|^2\\
    &\quad-4\left({C^{\rm comb}_{i,\ell}}+\sum_{r=0}^{\ell-1}\left({C^{\rm comb}_{i,\ell,r,\rm in}}+{C^{\rm comb}_{i,\ell,r,\rm out}}\right)\right)\epsilon_{\rm ah}.
\end{align*}

Also,
\begin{align*}
   \ip{\nabla_\theta\EE_{\Omega_i}\|\theta_Lx-\theta_L x_i\|^2}{\theta}&= \ip{\nabla_{\theta_L}\EE_{\Omega_i}\|\theta_Lx-\theta_L x_i\|^2}{\theta_L}\\
   &=2\EE_{\Omega_i}\|\theta_Lx-\theta_L x_i\|^2
\end{align*}
\end{proof}

\begin{corollary}
    \label{cor:difference_theta_in_and_out}
    Following the same assumptions as Lemma~\ref{lem:gen_nabla_var_theta_lower_bound}, we have
    \begin{align*}
        \ip{\nabla_{\theta_{\ell,\rm out}}V_{\ell,\Omega_i}(\theta)}{\theta_{\ell,\rm out}}=2V_{\ell,\Omega_i}(\theta)
    \end{align*}
    and
    \begin{align*}
        \ip{\nabla_{\theta_{r,\rm in}}V_{\ell,\Omega_i}(\theta)}{\theta_{r,\rm in}}
    \ge  2s_{r,\rm in}V_{\ell,\Omega_i}(\theta) - 4{C_{i,\ell,r,\rm in}}\,\epsilon_{\rm ah} 
    \end{align*}
\end{corollary}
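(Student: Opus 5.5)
The plan is to read both claims off the term-by-term computation already carried out in the proof of Lemma~\ref{lem:gen_nabla_var_theta_lower_bound}. There, the inner product $\ip{\nabla_\theta V_{\ell,\Omega_i}}{\theta}$ was split into contributions from $\theta_L$, $\theta_{\ell,\rm out}$, $\theta_{\ell,\rm in}$, and the earlier blocks $\theta_{r,\rm in},\theta_{r,\rm out}$ with $r<\ell$. Here we keep only the single block in each claim and work with the combined quantity
\[
D_{\ell,i}(x;\theta)=\theta_L\varphi_\ell(\theta_\ell;u_\ell(x))-\theta_L\varphi_\ell(\theta_\ell;u_{\ell,i}),
\]
so that $V_{\ell,\Omega_i}(\theta)=\EE_{\Omega_i}\|D_{\ell,i}(x;\theta)\|^2$.

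\textbf{Outer weight.} By Assumption~\ref{assump:architecture_generalization}, $\varphi_\ell(\theta_\ell;u)=\theta_{\ell,\rm out}\sigma_\ell(\theta_{\ell,\rm in};\tau_\ell(u))$. The hidden states $u_\ell(x)$ and $u_{\ell,i}$ depend only on $\theta_{\ell-1:0}$, so neither depends on $\theta_{\ell,\rm out}$. It follows that $D_{\ell,i}$ is linear in $\theta_{\ell,\rm out}$: $D_{\ell,i}=\theta_L\theta_{\ell,\rm out}\bigl(\sigma_\ell(\theta_{\ell,\rm in};\tau_\ell(u_\ell(x)))-\sigma_\ell(\theta_{\ell,\rm in};\tau_\ell(u_{\ell,i}))\bigr)$, which is homogeneous of degree $1$ in $\theta_{\ell,\rm out}$. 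Hence $\|D_{\ell,i}\|^2$ is homogeneous of degree $2$, and Euler's theorem gives $\ip{\nabla_{\theta_{\ell,\rm out}}\|D_{\ell,i}\|^2}{\theta_{\ell,\rm out}}=2\|D_{\ell,i}\|^2$ pointwise. The derivative can be exchanged with $\EE_{\Omega_i}$ by the argument of Proposition~\ref{prop:poly_smooth_preserved_under_integration}, using polynomial boundedness on $B_\bzero(R_\lambda)$. Integrating then yields the exact identity, with no remainder. Equivalently, this is the trace computation already done for $\theta_{\ell,\rm out}$ in the proof of Lemma~\ref{lem:gen_nabla_var_theta_lower_bound}, with the centred term replaced by $D_{\ell,i}$.

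\textbf{Inner weight.} The plan is to write
\[
\ip{\nabla_{\theta_{r,\rm in}}V_{\ell,\Omega_i}}{\theta_{r,\rm in}}=2\,\EE_{\Omega_i}\bigl\langle D_{\ell,i},\nabla_{\theta_{r,\rm in}}D_{\ell,i}\,\theta_{r,\rm in}\bigr\rangle .
\]

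For $r=\ell$, we have $\nabla_{\theta_{\ell,\rm in}}D_{\ell,i}\,\theta_{\ell,\rm in}=\theta_L\theta_{\ell,\rm out}\bigl(\nabla_{\theta_{\ell,\rm in}}\sigma_\ell\,\theta_{\ell,\rm in}\big|_x-\nabla_{\theta_{\ell,\rm in}}\sigma_\ell\,\theta_{\ell,\rm in}\big|_{x_i}\bigr)$. We replace each Euler term by $s_{i,\ell,\rm in}\sigma_\ell$ plus a residual of norm at most $\epsilon_{\rm ah}$; this is valid by part~2 of Assumption~\ref{assump:approximate_homogeneity}, applied on a product set containing both $u_\ell(x)$ and $u_{\ell,i}$, which has diameter at most $D_u$. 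The main term becomes $2s_{i,\ell,\rm in}\|D_{\ell,i}\|^2$. The two residuals, one evaluated at $x$ and one at $x_i$, are bounded by Cauchy--Schwarz. This is why the constant is $4$ rather than the $2$ that appears in the variance case.

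For $r<\ell$, the chain rule through $\nabla_{u_\ell}\varphi_\ell\,\nabla_{u_{\ell-1}}u_\ell\cdots\nabla_{\theta_{r,\rm in}}u_r$ reduces the same step to the homogeneity of block $r$ in $\theta_{r,\rm in}$. The residual is then propagated through the Jacobian products, which produces exactly the factors collected in $C_{i,\ell,r,\rm in}$ (in its combined form). Finally, we lower bound the local degree at cell $i$ by the cell-independent minimum, $s_{i,r,\rm in}\ge s_{r,\rm in}$, using $V_{\ell,\Omega_i}\ge0$.

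\textbf{Main obstacle.} The delicate step is the $r<\ell$ inner-weight case. The homogeneity of block $r$ gives $\nabla_{\theta_{r,\rm in}}u_r\,\theta_{r,\rm in}\approx s\,\varphi_r$, and this is not directly proportional to $D_{\ell,i}$. Converting it into $s_{r,\rm in}D_{\ell,i}$ requires pushing the scaling through the later layers, using approximate degree-one homogeneity in $u$ (part~1 of the assumption) at each intermediate layer. Each such use contributes additional $\epsilon_{\rm ah}$ terms weighted by Jacobian norms. I would first follow the bookkeeping of Lemma~\ref{lem:gen_nabla_var_theta_lower_bound} and absorb all of these terms into $C_{i,\ell,r,\rm in}$, taking suprema over $B_\bzero(R_\lambda)$ and over $x,z\in\Omega_i$.
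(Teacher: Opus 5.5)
Your treatment of $\theta_{\ell,\rm out}$ is correct: $u_\ell$ does not depend on $\theta_{\ell,\rm out}$, so $\|D_{\ell,i}\|^2$ is exactly homogeneous of degree $2$ in it. Your treatment of $\theta_{\ell,\rm in}$ for $r=\ell$ is also correct: the same degree applies at $x$ and at $x_i$ by the $D_u$ condition, the two residuals give the factor $4$, and you finish with $s_{i,\ell,\rm in}\ge s_{\ell,\rm in}$ and $V_{\ell,\Omega_i}\ge0$. This is the paper's route, since it just cites Lemma~\ref{lem:gen_nabla_var_theta_lower_bound}. The constant you obtain there is of $C^{\rm comb}_{i,\ell}$ type rather than literally $C_{i,\ell,\ell,\rm in}$, which is only bookkeeping.

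The genuine gap is the step you flagged for $r<\ell$, and it cannot be closed in the residual architecture \eqref{eqn:NN}. Rescaling $\theta_{r,\rm in}$ rescales only the increment $\varphi_r(\theta_r;u_r)=u_{r+1}-u_r$, not the skip path $u_r$. Degree-one homogeneity in $u$ does let you push $u_{r+1}$ forward, because $\nabla_{u_t}u_{t+1}\,u_t=u_t+\nabla_u\varphi_t\,u_t\approx u_{t+1}$, and at layer $\ell$ it turns $\theta_L\nabla_u\varphi_\ell\,u_\ell$ into $\theta_L\varphi_\ell$. But the vector you must push is $s\,(u_{r+1}-u_r)$, not $s\,u_{r+1}$.

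Write $J(x):=\nabla_{u_{\ell-1}}u_\ell\cdots\nabla_{u_{r+1}}u_{r+2}$ along $x$ (the identity if $r=\ell-1$) and $G(x):=\theta_L\nabla_u\varphi_\ell(\theta_\ell;u_\ell(x))\,J(x)\,u_r(x)$. What you actually get is
\[
\nabla_{\theta_{r,\rm in}}D_{\ell,i}\,\theta_{r,\rm in}=s\,D_{\ell,i}-s\bigl(G(x)-G(x_i)\bigr)+\cO(\epsilon_{\rm ah}).
\]
The middle term is of order one. It does not vanish as $\epsilon_{\rm ah}\to0$, so it cannot be absorbed into $C_{i,\ell,r,\rm in}\,\epsilon_{\rm ah}$.

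Here is a concrete counterexample. Take $d=m_0=m_1=1$, $L=2$, $\tau_\ell=\mathrm{id}$ and $\sigma_\ell(w;v)=wv$. Then $\varphi_\ell=a_\ell b_\ell u$ with $a_\ell=\theta_{\ell,\rm out}$, $b_\ell=\theta_{\ell,\rm in}$, and set $\theta_L=c$. Both parts of Assumption~\ref{assump:approximate_homogeneity} hold with $\epsilon_{\rm ah}=0$. For $\ell=1$ and $r=0$,
\[
V_{1,\Omega_i}=c^2a_1^2b_1^2(1+a_0b_0)^2\,\EE_{\Omega_i}|x-x_i|^2,\qquad \ip{\nabla_{b_0}V_{1,\Omega_i}}{b_0}=2c^2a_1^2b_1^2(1+a_0b_0)\,a_0b_0\,\EE_{\Omega_i}|x-x_i|^2 .
\]
With $a_0b_0=-1/2$ and $ca_1b_1\neq0$, the left-hand side is strictly negative, while $2s_{0,\rm in}V_{1,\Omega_i}\ge0$ for every admissible degree. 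A small analytic nonlinear perturbation of $\sigma_\ell$ keeps this gap while making $\epsilon_{\rm ah}$ small but positive.

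So the second inequality is false for $r<\ell$ under the stated assumptions. The paper's proof inherits the same unjustified step, which Lemma~\ref{lem:gen_nabla_var_theta_lower_bound} attributes to the assumption plus Cauchy--Schwarz without addressing the skip connection. Your push-through argument works only for a plain composition $u_{r+1}=\varphi_r(\theta_r;u_r)$, where $u_{r+1}$ itself is approximately homogeneous in $\theta_{r,\rm in}$. In the residual setting you must either restrict the claim to $r=\ell$ or carry the extra term $-2s\,\EE_{\Omega_i}\ip{D_{\ell,i}}{G(x)-G(x_i)}$ explicitly.
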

\begin{proof}
    The proof is a direct results of Lemma~\ref{lem:gen_nabla_var_theta_lower_bound}.
\end{proof}

\begin{lemma}
    \label{lem:individual_variance_decay}
Under Assumption~\ref{assump:model_assumptions}, \ref{assump:architecture_generalization}, and~\ref{assump:approximate_homogeneity},
    \begin{align*}
    &V_{\ell,\Omega_i,\rm var}(\theta^{K_0+k+1})\\ \le&  \left(1-2\left(\sum_{r=0}^{\ell}s_{i,r,\rm in}+\ell+2\right)\eta\lambda \right)^{k+1}V_{\ell,\Omega_i,\rm var}(\theta^{K_0}) +\frac{{C_{i,\ell}}+\sum_{r=0}^{\ell-1}{C_{i,\ell,r,\rm in}}+{C_{i,\ell,r,\rm out}}}{\sum_{r=0}^{\ell}s_{i,r,\rm in}+\ell+2}\epsilon_{\rm ah}\\
    &\quad +\frac{\sup_{\theta\in B_\bzero(R_{\lambda})}\|{\nabla_\theta V_{\ell,\Omega_i,\rm var}(\theta)}\|}{2\lambda\left(\sum_{r=0}^{\ell}s_{i,r,\rm in}+\ell+2\right)}\max_{K_0\le j\le K_0+k}\|\nabla\cL(\theta^j)\| + \frac{S_{V_{\ell,\Omega_i,\rm var}}}{2-\eta\,\mathsf{L}_\lambda}\cLwd(\theta^{K_0})
\end{align*}
    \begin{align*}
    &M_{\ell,\Omega_i}(\theta^{K_0+k+1}) 
    \\ &\le  \left(1-2\left(\sum_{r=0}^{\ell}s_{i,r,\rm in}+\ell+2\right)\eta\lambda \right)^{k+1}M_{\ell,\Omega_i}(\theta^{K_0}) +2\frac{{C_{i,\ell}}+\sum_{r=0}^{\ell-1}{C_{i,\ell,r,\rm in}}+{C_{i,\ell,r,\rm out}}}{\sum_{r=0}^{\ell}s_{i,r,\rm in}+\ell+2}\epsilon_{\rm ah}\\
    &\quad +\frac{\sup_{\theta\in B_\bzero(R_{\lambda})}\|{\nabla_\theta M_{\ell,\Omega_i}(\theta)}\|}{2\lambda\left(\sum_{r=0}^{\ell}s_{i,r,\rm in}+\ell+2\right)}\max_{K_0\le j\le K_0+k}\|\nabla\cL(\theta^j)\| + \frac{S_{M_{\ell,\Omega_i}}}{2-\eta\,\mathsf{L}_\lambda}\cLwd(\theta^{K_0})
\end{align*}
    \begin{align*}
    & V_{\ell,\Omega_i}(\theta^{K_0+k+1})\\ \le&  \left(1-2\left(\sum_{r=0}^{\ell}s_{i,r,\rm in}+\ell+2\right)\eta\lambda \right)^{k+1}V_{\ell,\Omega_i}(\theta^{K_0}) +2\frac{{C^{\rm comb}_{i,\ell}}+\sum_{r=0}^{\ell-1}\left({C^{\rm comb}_{i,\ell,r,\rm in}}+{C^{\rm comb}_{i,\ell,r,\rm out}}\right)}{\sum_{r=0}^{\ell}s_{i,r,\rm in}+\ell+2}\epsilon_{\rm ah}\\
    &\quad +\frac{\sup_{\theta\in B_\bzero(R_{\lambda})}\|{\nabla_\theta V_{\ell,\Omega_i}(\theta)}\|}{2\lambda\left(\sum_{r=0}^{\ell}s_{i,r,\rm in}+\ell+2\right)}\max_{K_0\le j\le K_0+k}\|\nabla\cL(\theta^j)\| + \frac{S_{V_{\ell,\Omega_i}}}{2-\eta\,\mathsf{L}_\lambda}\cLwd(\theta^{K_0})
\end{align*}
    \begin{align*}
    & V_{{\rm pre},\Omega_i}(\theta^{K_0+k+1}) 
    \\ &\le  \left(1-2\eta\lambda \right)^{k+1}V_{{\rm pre},\Omega_i}(\theta^{K_0})\\
    &\quad +\frac{\sup_{\theta\in B_\bzero(R_{\lambda})}\|{\nabla_\theta V_{{\rm pre},\Omega_i}(\theta)}\|}{2\lambda}\max_{K_0\le j\le K_0+k}\|\nabla\cL(\theta^j)\| + \frac{S_{V_{{\rm pre},\Omega_i}}}{2-\eta\,\mathsf{L}_\lambda}\cLwd(\theta^{K_0})
\end{align*}
for some constants $S_{V_{\ell,\Omega_i,\rm var}},S_{V_{\ell,\Omega_i}},S_{V_{{\rm pre},\Omega_i}}, S_{M_{\ell,\Omega_i}}>0$.
\end{lemma}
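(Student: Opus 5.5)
The plan is a one-step descent inequality for each functional $Q\in\{V_{\ell,\Omega_i,\rm var},M_{\ell,\Omega_i},V_{\ell,\Omega_i},V_{{\rm pre},\Omega_i}\}$ along the GD update, followed by unrolling. Write $a:=\sum_{r=0}^{\ell}s_{i,r,\rm in}+\ell+2$, so that $a=1$ for $V_{\rm pre}$, and write $\epsilon_Q$ for the corresponding approximate homogeneity remainder from Lemma~\ref{lem:gen_nabla_var_theta_lower_bound}. First, by Proposition~\ref{prop:poly_smooth_preserved_under_integration} and Assumption~\ref{assump:model_assumptions}, each $Q$ is poly-smooth in $\theta$. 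Lemma~\ref{lem:upper_bound_theta_sum_nabla_cL_along_GD} keeps the iterates and the segments between them inside $B_\bzero(R_\lambda)$, so $Q$ has a uniform smoothness constant $S_Q$ along the trajectory. The descent lemma with $\theta^{j+1}-\theta^j=-\eta(\nabla\cL(\theta^j)+\lambda\theta^j)$ then gives
\begin{align*}
Q(\theta^{j+1})\le Q(\theta^j)-\eta\lambda\ip{\nabla Q(\theta^j)}{\theta^j}-\eta\ip{\nabla Q(\theta^j)}{\nabla\cL(\theta^j)}+\frac{S_Q\eta^2}{2}\|\nabla\cLwd(\theta^j)\|^2 .
\end{align*}

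Second, I bound the radial term with Lemma~\ref{lem:gen_nabla_var_theta_lower_bound}: $\ip{\nabla Q}{\theta}\ge 2aQ-\epsilon_Q$. Here $\epsilon_Q$ is $2(C_{i,\ell}+\cdots)\epsilon_{\rm ah}$ for the variance, twice that for $M$, the comb analogue for $V_{\ell,\Omega_i}$, and $0$ for $V_{\rm pre}$. The cross term is bounded by Cauchy--Schwarz, $\eta\sup_{B_\bzero(R_\lambda)}\|\nabla Q\|\,\|\nabla\cL(\theta^j)\|$. Together these give the recursion
\begin{align*}
Q(\theta^{j+1})\le(1-2a\eta\lambda)Q(\theta^j)+\eta\lambda\epsilon_Q+\eta G_Q\|\nabla\cL(\theta^j)\|+\frac{S_Q\eta^2}{2}\|\nabla\cLwd(\theta^j)\|^2,
\end{align*}
with $G_Q=\sup_{B_\bzero(R_\lambda)}\|\nabla Q\|$.

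Third, I unroll from $K_0$ to $K_0+k$. The homogeneity and gradient terms form geometric sums bounded by $\sum_{m\ge0}(1-2a\eta\lambda)^m\le 1/(2a\eta\lambda)$. This yields $\epsilon_Q/(2a)$, which matches the stated $\frac{C_{i,\ell}+\cdots}{a}\epsilon_{\rm ah}$ (doubled for $M$ and the comb version). It also yields $\frac{G_Q}{2\lambda a}\max_j\|\nabla\cL(\theta^j)\|$. For the last term I drop the contraction factors, which are at most $1$, and use \eqref{eqn:sum_grad_square} started at $K_0$:
\begin{align*}
\frac{S_Q\eta^2}{2}\sum_{j=K_0}^{K_0+k}\|\nabla\cLwd(\theta^j)\|^2\le \frac{S_Q\eta}{2-\eta\mathsf{L}_\lambda}\cLwd(\theta^{K_0}) .
\end{align*}
Since $\eta\le1$, this is bounded by $\frac{S_Q}{2-\eta\mathsf{L}_\lambda}\cLwd(\theta^{K_0})$, and I absorb constants into $S_Q$.

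The main obstacle is the validity of the radial lower bound. Lemma~\ref{lem:gen_nabla_var_theta_lower_bound} uses the degrees $s_{i,r,\rm in}$ as infima over $B_\bzero(R_\lambda)$, so I must check that the iterates remain in the ball for every $j\ge K_0$. This holds by Lemma~\ref{lem:upper_bound_theta_sum_nabla_cL_along_GD} while $\|\theta^j\|\ge 2\epsilon_\cL/\lambda$. I also need $1-2a\eta\lambda\ge0$, which requires $\eta\le 1/(2a\lambda)$; if the standing bound $\eta\le1/\lambda$ is not enough, I would add this as an explicit learning-rate restriction. For $V_{\rm pre}$ the identity $\ip{\nabla V_{\rm pre}}{\theta}=2V_{\rm pre}$ is exact, so no homogeneity remainder appears.
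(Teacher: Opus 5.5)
Your proposal is correct and follows the paper's proof essentially step for step. The paper also uses poly-smoothness of each functional via Proposition~\ref{prop:poly_smooth_preserved_under_integration}, the descent lemma with $\nabla\cLwd=\nabla\cL+\lambda\theta$, the radial lower bound from Lemma~\ref{lem:gen_nabla_var_theta_lower_bound}, Cauchy--Schwarz on the cross term, and unrolling with geometric sums plus \eqref{eqn:sum_grad_square} started at $K_0$. There $\eta\le 1$ gives the stated $S_Q/(2-\eta\mathsf{L}_\lambda)$ form directly, so no absorption of constants is needed. The two side conditions you flag, $1-2a\eta\lambda\ge 0$ and the iterates staying in $B_\bzero(R_\lambda)$ with $\|\theta^j\|\ge 2\epsilon_\cL/\lambda$, are taken for granted in the paper; the former is later simply asserted as ``the contraction factors are nonnegative.'' You are right that the standing restriction $\eta\le 1/\lambda$ does not by itself guarantee $1-2a\eta\lambda\ge 0$.
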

\begin{proof}

By Assumption~\ref{assump:model_assumptions}, Proposition 1 in\citet{wang2026convergencegradientdescentgeneral}, and Proposition~\ref{prop:poly_smooth_preserved_under_integration}, we have that $ V_{\ell,\Omega_i,\rm var}(\theta)$, $V_{\ell,\Omega_i}(\theta)$, $V_{{\rm pre},\Omega_i}(\theta)$, and $M_{\ell,\Omega_i}(\theta)$ are all poly-smooth with polynomial $S_{1,\ell},S_{4,\ell},S_2,S_{3,\ell}$. Then define
\begin{align*}
    S_{V_{\ell,\Omega_i,\rm var}}&=S_{1,\ell}\left(\cdots,R_{\lambda}+\sup_{\theta\in B_\bzero(R_{\lambda})}\|\nabla_{\theta_i}\cLwd(\theta)\|,\cdots\right)\\
    S_{V_{\ell,\Omega_i}}&=S_{4,\ell}\left(\cdots,R_{\lambda}+\sup_{\theta\in B_\bzero(R_{\lambda})}\|\nabla_{\theta_i}\cLwd(\theta)\|,\cdots\right)\\
    S_{V_{{\rm pre},\Omega_i}}&=S_2\left(\cdots,R_{\lambda}+\sup_{\theta\in B_\bzero(R_{\lambda})}\|\nabla_{\theta_i}\cLwd(\theta)\|,\cdots\right)\\
    S_{M_{\ell,\Omega_i}}&=S_{3,\ell}\left(\cdots,R_{\lambda}+\sup_{\theta\in B_\bzero(R_{\lambda})}\|\nabla_{\theta_i}\cLwd(\theta)\|,\cdots\right)
\end{align*}
which are the local Lipschitz smooth constants of the four functions in $\theta\in B_\bzero(R_{\lambda})$.
    
Then consider $ V_{\ell,\Omega_i,\rm var}(\theta)$,
\begin{align*}
    &V_{\ell,\Omega_i,\rm var}(\theta^{K_0+k+1})\\ \le& V_{\ell,\Omega_i,\rm var}(\theta^{K_0+k})+\nabla_\theta V_{\ell,\Omega_i,\rm var}(\theta^{K_0+k})^\top (\theta^{K_0+k+1}-\theta^{K_0+k})+\frac{S_{V_{\ell,\Omega_i,\rm var}}}{2}\|\theta^{K_0+k+1}-\theta^{K_0+k}\|^2\\
    =& V_{\ell,\Omega_i,\rm var}(\theta^{K_0+k})-\eta\ip{\nabla_\theta V_{\ell,\Omega_i,\rm var}(\theta^{K_0+k})}{\nabla \cLwd(\theta^{K_0+k})}+\frac{S_{V_{\ell,\Omega_i,\rm var}}}{2}\eta^2\|\nabla \cLwd(\theta^{K_0+k})\|^2\\
    =& V_{\ell,\Omega_i,\rm var}(\theta^{K_0+k})-\eta\lambda\ip{\nabla_\theta V_{\ell,\Omega_i,\rm var}(\theta^{K_0+k})}{\theta^{K_0+k}}-\eta\ip{\nabla_\theta V_{\ell,\Omega_i,\rm var}(\theta^{K_0+k})}{\nabla \cL(\theta^{K_0+k})}\\
    &\quad+\frac{S_{V_{\ell,\Omega_i,\rm var}}}{2}\eta^2\|\nabla \cLwd(\theta^{K_0+k})\|^2\\
    &\le \left(1-2\left(\sum_{r=0}^{\ell}s_{i,r,\rm in}+\ell+2\right)\eta\lambda \right)V_{\ell,\Omega_i,\rm var}(\theta^{K_0+k})+2\eta\lambda\left({C_{i,\ell}}+\sum_{r=0}^{\ell-1}{C_{i,\ell,r,\rm in}}+{C_{i,\ell,r,\rm out}}\right)\,\epsilon_{\rm ah}\\
    &\quad+\eta\|{\nabla_\theta V_{\ell,\Omega_i,\rm var}(\theta^{K_0+k})}\| \|{\nabla \cL(\theta^{K_0+k})}\|+\frac{S_{V_{\ell,\Omega_i,\rm var}}}{2}\eta^2\|\nabla \cLwd(\theta^{K_0+k})\|^2\\
    &\le \left(1-2\left(\sum_{r=0}^{\ell}s_{i,r,\rm in}+\ell+2\right)\eta\lambda \right)V_{\ell,\Omega_i,\rm var}(\theta^{K_0+k})+2\eta\lambda\left({C_{i,\ell}}+\sum_{r=0}^{\ell-1}{C_{i,\ell,r,\rm in}}+{C_{i,\ell,r,\rm out}}\right)\,\epsilon_{\rm ah}\\
    &\quad+\eta\sup_{\theta\in B_\bzero(R_{\lambda})}\|{\nabla_\theta V_{\ell,\Omega_i,\rm var}(\theta)}\| \|{\nabla \cL(\theta^{K_0+k})}\|+\frac{S_{V_{\ell,\Omega_i,\rm var}}}{2}\eta^2\|\nabla \cLwd(\theta^{K_0+k})\|^2
\end{align*}

By Gronwall's inequality,
\begin{align*}
    &V_{\ell,\Omega_i,\rm var}(\theta^{K_0+k+1})\\ &\le \left(1-2\left(\sum_{r=0}^{\ell}s_{i,r,\rm in}+\ell+2\right)\eta\lambda \right)^{k+1}V_{\ell,\Omega_i,\rm var}(\theta^{K_0})\\
    &\quad+\sum_{j=0}^{k}\left(1-2\left(\sum_{r=0}^{\ell}s_{i,r,\rm in}+\ell+2\right)\eta\lambda \right)^{k-j}\bigg[ 2\eta\lambda\left({C_{i,\ell}}+\sum_{r=0}^{\ell-1}{C_{i,\ell,r,\rm in}}+{C_{i,\ell,r,\rm out}}\right)\,\epsilon_{\rm ah}\\
    &\quad+\eta\sup_{\theta\in B_\bzero(R_{\lambda})}\|{\nabla_\theta V_{\ell,\Omega_i,\rm var}(\theta)}\| \|{\nabla \cL(\theta^{K_0+j})}\|+\frac{S_{V_{\ell,\Omega_i,\rm var}}}{2}\eta^2\|\nabla \cLwd(\theta^{K_0+j})\|^2\bigg]\\
    &\le  \left(1-2\left(\sum_{r=0}^{\ell}s_{i,r,\rm in}+\ell+2\right)\eta\lambda \right)^{k+1}V_{\ell,\Omega_i,\rm var}(\theta^{K_0}) +\frac{{C_{i,\ell}}+\sum_{r=0}^{\ell-1}{C_{i,\ell,r,\rm in}}+{C_{i,\ell,r,\rm out}}}{\sum_{r=0}^{\ell}s_{i,r,\rm in}+\ell+2}\epsilon_{\rm ah}\\
    &\quad +\frac{\sup_{\theta\in B_\bzero(R_{\lambda})}\|{\nabla_\theta V_{\ell,\Omega_i,\rm var}(\theta)}\|}{2\lambda\left(\sum_{r=0}^{\ell}s_{i,r,\rm in}+\ell+2\right)}\max_{K_0\le j\le K_0+k}\|\nabla\cL(\theta^j)\| + \frac{S_{V_{\ell,\Omega_i,\rm var}}}{2-\eta\,\mathsf{L}_\lambda}\cLwd(\theta^{K_0})
\end{align*}
where the last inequality follows from Lemma~\ref{lem:upper_bound_theta_sum_nabla_cL_along_GD}.

 Similarly, we have
    \begin{align*}
       & M_{\ell,\Omega_i}(\theta^{K_0+k+1})
   \\ \le &\left(1-2\left(\sum_{r=0}^{\ell}s_{i,r,\rm in}+\ell+2\right)\eta\lambda \right)M_{\ell,\Omega_i}(\theta^{K_0+k})+4\eta\lambda\left({C_{i,\ell}}+\sum_{r=0}^{\ell-1}{C_{i,\ell,r,\rm in}}+{C_{i,\ell,r,\rm out}}\right)\,\epsilon_{\rm ah}\\
    &\quad+\eta\sup_{\theta\in B_\bzero(R_{\lambda})}\|{\nabla_\theta M_{\ell,\Omega_i}(\theta)}\| \|{\nabla \cL(\theta^{K_0+k})}\|+\frac{S_{M_{\ell,\Omega_i}}}{2}\eta^2\|\nabla \cLwd(\theta^{K_0+k})\|^2
\end{align*}
and by Gronwall's inequality, we have
\begin{align*}
    & M_{\ell,\Omega_i}(\theta^{K_0+k+1}) 
    \\ &\le  \left(1-2\left(\sum_{r=0}^{\ell}s_{i,r,\rm in}+\ell+2\right)\eta\lambda \right)^{k+1}M_{\ell,\Omega_i}(\theta^{K_0}) +2\frac{{C_{i,\ell}}+\sum_{r=0}^{\ell-1}{C_{i,\ell,r,\rm in}}+{C_{i,\ell,r,\rm out}}}{\sum_{r=0}^{\ell}s_{i,r,\rm in}+\ell+2}\epsilon_{\rm ah}\\
    &\quad +\frac{\sup_{\theta\in B_\bzero(R_{\lambda})}\|{\nabla_\theta M_{\ell,\Omega_i}(\theta)}\|}{2\lambda\left(\sum_{r=0}^{\ell}s_{i,r,\rm in}+\ell+2\right)}\max_{K_0\le j\le K_0+k}\|\nabla\cL(\theta^j)\| + \frac{S_{M_{\ell,\Omega_i}}}{2-\eta\,\mathsf{L}_\lambda}\cLwd(\theta^{K_0})
\end{align*}

For the combined quantity $V_{\ell,\Omega_i}$, Lemma~\ref{lem:gen_nabla_var_theta_lower_bound} gives
\begin{align*}
    V_{\ell,\Omega_i}(\theta^{K_0+k+1})\le& V_{\ell,\Omega_i}(\theta^{K_0+k})-\eta\ip{\nabla_\theta V_{\ell,\Omega_i}(\theta^{K_0+k})}{\nabla \cLwd(\theta^{K_0+k})}\\
    &\quad+\frac{S_{V_{\ell,\Omega_i}}}{2}\eta^2\|\nabla \cLwd(\theta^{K_0+k})\|^2\\
    =& V_{\ell,\Omega_i}(\theta^{K_0+k})-\eta\lambda\ip{\nabla_\theta V_{\ell,\Omega_i}(\theta^{K_0+k})}{\theta^{K_0+k}}\\
    &\quad-\eta\ip{\nabla_\theta V_{\ell,\Omega_i}(\theta^{K_0+k})}{\nabla \cL(\theta^{K_0+k})}+\frac{S_{V_{\ell,\Omega_i}}}{2}\eta^2\|\nabla \cLwd(\theta^{K_0+k})\|^2\\
    \le& \left(1-2\left(\sum_{r=0}^{\ell}s_{i,r,\rm in}+\ell+2\right)\eta\lambda \right)V_{\ell,\Omega_i}(\theta^{K_0+k})\\
    &\quad+4\eta\lambda\left({C^{\rm comb}_{i,\ell}}+\sum_{r=0}^{\ell-1}\left({C^{\rm comb}_{i,\ell,r,\rm in}}+{C^{\rm comb}_{i,\ell,r,\rm out}}\right)\right)\epsilon_{\rm ah}\\
    &\quad+\eta\sup_{\theta\in B_\bzero(R_{\lambda})}\|{\nabla_\theta V_{\ell,\Omega_i}(\theta)}\| \|{\nabla \cL(\theta^{K_0+k})}\|+\frac{S_{V_{\ell,\Omega_i}}}{2}\eta^2\|\nabla \cLwd(\theta^{K_0+k})\|^2.
\end{align*}
Applying Gronwall's inequality and Lemma~\ref{lem:upper_bound_theta_sum_nabla_cL_along_GD} gives
\begin{align*}
    V_{\ell,\Omega_i}(\theta^{K_0+k+1})\le&  \left(1-2\left(\sum_{r=0}^{\ell}s_{i,r,\rm in}+\ell+2\right)\eta\lambda \right)^{k+1}V_{\ell,\Omega_i}(\theta^{K_0})\\
    &\quad +2\frac{{C^{\rm comb}_{i,\ell}}+\sum_{r=0}^{\ell-1}\left({C^{\rm comb}_{i,\ell,r,\rm in}}+{C^{\rm comb}_{i,\ell,r,\rm out}}\right)}{\sum_{r=0}^{\ell}s_{i,r,\rm in}+\ell+2}\epsilon_{\rm ah}\\
    &\quad +\frac{\sup_{\theta\in B_\bzero(R_{\lambda})}\|{\nabla_\theta V_{\ell,\Omega_i}(\theta)}\|}{2\lambda\left(\sum_{r=0}^{\ell}s_{i,r,\rm in}+\ell+2\right)}\max_{K_0\le j\le K_0+k}\|\nabla\cL(\theta^j)\| + \frac{S_{V_{\ell,\Omega_i}}}{2-\eta\,\mathsf{L}_\lambda}\cLwd(\theta^{K_0}).
\end{align*}

For $V_{{\rm pre},\Omega_i}$, we have
\begin{align*}
     & V_{{\rm pre},\Omega_i}(\theta^{K_0+k+1})\\ \le& V_{{\rm pre},\Omega_i}(\theta^{K_0+k})+\nabla_\theta V_{{\rm pre},\Omega_i}(\theta^{K_0+k})^\top (\theta^{K_0+k+1}-\theta^{K_0+k})+\frac{S_{V_{{\rm pre},\Omega_i}}}{2}\|\theta^{K_0+k+1}-\theta^{K_0+k}\|^2\\
    =& V_{{\rm pre},\Omega_i}(\theta^{K_0+k})-\eta\ip{\nabla_\theta V_{{\rm pre},\Omega_i}(\theta^{K_0+k})}{\nabla \cLwd(\theta^{K_0+k})}+\frac{S_{V_{{\rm pre},\Omega_i}}}{2}\eta^2\|\nabla \cLwd(\theta^{K_0+k})\|^2\\
    \le &(1-2\eta\lambda)V_{{\rm pre},\Omega_i}(\theta^{K_0+k})+\eta\sup_{\theta\in B_\bzero(R_{\lambda})}\|{\nabla_\theta V_{{\rm pre},\Omega_i}(\theta)}\| \|{\nabla \cL(\theta^{K_0+k})}\|\\
    &\quad+\frac{S_{V_{{\rm pre},\Omega_i}}}{2}\eta^2\|\nabla \cLwd(\theta^{K_0+k})\|^2
\end{align*}
and by Gronwall's inequality,
\begin{align*}
    V_{{\rm pre},\Omega_i}(\theta^{K_0+k+1}) 
    &\le  \left(1-2\eta\lambda \right)^{k+1}V_{{\rm pre},\Omega_i}(\theta^{K_0})\\
    &\quad +\frac{\sup_{\theta\in B_\bzero(R_{\lambda})}\|{\nabla_\theta V_{{\rm pre},\Omega_i}(\theta)}\|}{2\lambda}\max_{K_0\le j\le K_0+k}\|\nabla\cL(\theta^j)\| + \frac{S_{V_{{\rm pre},\Omega_i}}}{2-\eta\,\mathsf{L}_\lambda}\cLwd(\theta^{K_0})
\end{align*}

\end{proof}

\begin{lemma}
    \label{lem:order_of_constants}
    There exist constants $C_{{\rm ah},V_\ell},C_{{\rm ah},V_{\ell,\rm var}}>0$, independent of the layer index $\ell$, such that the following quantities have at most quadratic dependence on $\ell$, i.e.,
    \begin{align*}
        {C^{\rm comb}_{i,\ell}}+\sum_{r=0}^{\ell-1}\left({C^{\rm comb}_{i,\ell,r,\rm in}}+{C^{\rm comb}_{i,\ell,r,\rm out}}\right)&\le C_{{\rm ah},V_\ell}(\ell+2)^2,\\
        {C_{i,\ell}}+\sum_{r=0}^{\ell-1}\left({C_{i,\ell,r,\rm in}}+{C_{i,\ell,r,\rm out}}\right)&\le C_{{\rm ah},V_{\ell,\rm var}}(\ell+2)^2.
    \end{align*}
    Consequently, 
    \begin{align*}
        \frac{{C^{\rm comb}_{i,\ell}}+\sum_{r=0}^{\ell-1}\left({C^{\rm comb}_{i,\ell,r,\rm in}}+{C^{\rm comb}_{i,\ell,r,\rm out}}\right)}{\sum_{r=0}^{\ell}s_{i,r,\rm in}+\ell+2}
        &\le C_{{\rm ah},V_\ell}(\ell+2),\\
        \frac{{C_{i,\ell}}+\sum_{r=0}^{\ell-1}\left({C_{i,\ell,r,\rm in}}+{C_{i,\ell,r,\rm out}}\right)}{\sum_{r=0}^{\ell}s_{i,r,\rm in}+\ell+2}
        &\le C_{{\rm ah},V_{\ell,\rm var}}(\ell+2).
    \end{align*}
\end{lemma}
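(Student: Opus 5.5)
The plan is to bound each constant defined in Lemma~\ref{lem:gen_nabla_var_theta_lower_bound} by a product of $\ell$-independent suprema over $B_\bzero(R_\lambda)\times\Omega_i$, and then count how many such products appear. Concretely, I would introduce three uniform quantities. The first is $A:=\sup_{\theta,x,z}\|(\theta_L\varphi_\ell(\theta_\ell;u_\ell(x))-\theta_L\varphi_\ell(\theta_\ell;u_\ell(z)))^\top\theta_L\|\cdot\max\{1,\|\theta_{\ell,\rm out}\|\}$, taken as a maximum over all layers (the centered version appearing in $C_{i,\ell}$ is bounded by the same quantity by Jensen). The second is $B:=\max_\ell\sup\|\nabla_{u_\ell}\varphi_\ell(\theta_\ell;u_\ell)\|$. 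The third is $J:=\max_\ell\sup\|\nabla_{u_{\ell-1}}u_\ell\|=\max_\ell\sup\|I+\nabla_u\varphi_{\ell-1}\|$. All three are finite: $\theta$ ranges over the bounded ball $B_\bzero(R_\lambda)$, the hidden states $u_\ell$ over bounded images, and Assumption~\ref{assump:model_assumptions} (polynomial boundedness of $\varphi_\ell$ and $\nabla_u\varphi_\ell$) applies, with the cell $\Omega_i$ taken bounded, or at least with the relevant suprema finite as implicitly assumed when the constants were defined.

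Next, I would count terms. $C^{\rm comb}_{i,\ell}\le A$. For each $r\le\ell-1$, $C^{\rm comb}_{i,\ell,r,\rm in}$ consists of a sum over $t=r,\dots,\ell-1$ of at most $\ell-r$ terms, each bounded by $AB\,J^{\ell-t}$, plus two further terms bounded by $AB+A$. The same count applies to the out-constant. Summing over $r$, the number of summands is $\sum_{r=0}^{\ell-1}(\ell-r+2)=O(\ell^2)$, which is the source of the quadratic dependence.

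The obstacle is the product $\prod_s\|\nabla u\|\le J^{\ell-t}$, which is exponential in depth unless $J\le1$. I would handle this in one of two ways. The first option is to note that under the scale adjustment already used in Theorem~\ref{thm:general_convergence_loss_decay} (arbitrarily small rescaling of $\varphi_\ell$), together with the fact that $L$ is fixed, one can set $\bar J:=\max(1,J)^L$, which is independent of $\ell$. This absorbs the product into a constant. The second option, if that reading is disallowed, is to assume the residual Jacobians are contractive on the ball so that $J\le1$. I would adopt the first: since $L$ is finite, $J^{\ell-t}\le\bar J$ uniformly. Then
\[
C^{\rm comb}_{i,\ell}+\sum_{r=0}^{\ell-1}\bigl(C^{\rm comb}_{i,\ell,r,\rm in}+C^{\rm comb}_{i,\ell,r,\rm out}\bigr)\le A+2\sum_{r=0}^{\ell-1}\bigl((\ell-r)AB\bar J+AB+A\bigr)\le C_{{\rm ah},V_\ell}(\ell+2)^2,
\]
with $C_{{\rm ah},V_\ell}:=2A(1+B\bar J+B)$, using $\sum_{r=0}^{\ell-1}(\ell-r+1)\le(\ell+2)^2/2$. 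The argument for the $C_{i,\ell}$ family is identical, giving $C_{{\rm ah},V_{\ell,\rm var}}$.

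Finally, for the consequence, I would use $s_{i,r,\rm in}\ge0$, which gives $\sum_{r=0}^{\ell}s_{i,r,\rm in}+\ell+2\ge\ell+2$. Dividing the quadratic bound by this denominator yields the linear bounds $C_{{\rm ah},V_\ell}(\ell+2)$ and $C_{{\rm ah},V_{\ell,\rm var}}(\ell+2)$. These are exactly the bounds fed into Lemma~\ref{lem:V_decay} and Theorem~\ref{thm:gen_NN_error}, where they combine with the factor $2$ from Lemma~\ref{lem:individual_variance_decay}.
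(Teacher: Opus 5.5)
Your proposal is correct and follows essentially the same route as the paper: bound each factor in the constants by uniform suprema over the ball and the cell, count the $\sum_{r=0}^{\ell-1}(\ell-r+1)=\cO(\ell^2)$ summands, divide by $\sum_{r=0}^{\ell}s_{i,r,\rm in}+\ell+2\ge\ell+2$, and absorb the depth-dependent Jacobian product by taking a maximum over the finitely many layers, which is exactly the paper's closing step (your bound $\max(1,J)^L$ just makes this explicit). The appeal to the scale adjustment from Theorem~\ref{thm:general_convergence_loss_decay} is unnecessary, since finiteness of $L$ alone suffices.
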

\begin{proof}
    Let $D_{\ell,i}(x;\theta)=\theta_L\varphi_\ell(\theta_\ell;u_\ell(x))-\theta_L\varphi_\ell(\theta_\ell;u_{\ell,i})$. Define
    \begin{align*}
        M_0&=\max_{1\le i\le N}\sup_{\theta\in B_\bzero(R_{\lambda})}\sup_{x\in\Omega_i}\|D_{\ell,i}(x;\theta)^\top\theta_L\theta_{\ell,\rm out}\|,\\
        M_1&=\max_{1\le i\le N}\sup_{\theta\in B_\bzero(R_{\lambda})}\sup_{x\in\Omega_i}\|D_{\ell,i}(x;\theta)^\top\theta_L\|,\\
        M_2&=\max_{1\le i\le N}\sup_{\theta\in B_\bzero(R_{\lambda})}\sup_{z\in\Omega_i}\|\nabla_{u_\ell}\varphi_\ell(\theta_\ell;u_\ell(z))\|,\\
        M_3&=\max_{1\le i\le N}\max_{0\le r\le t\le \ell-1}\sup_{\theta\in B_\bzero(R_{\lambda})}\sup_{z\in\Omega_i}\prod_{s=0}^{\ell-1-t}\|\nabla_{u_{\ell-1-s}}u_{\ell-s}(z)\|.
    \end{align*}
    These constants upper bound the corresponding factors in the definitions of ${C^{\rm comb}_{i,\ell}}$, ${C^{\rm comb}_{i,\ell,r,\rm in}}$, and ${C^{\rm comb}_{i,\ell,r,\rm out}}$. Hence
    \begin{align*}
        {C^{\rm comb}_{i,\ell}}\le M_0.
    \end{align*}
    Moreover, for each $r=0,\cdots,\ell-1$, each of ${C^{\rm comb}_{i,\ell,r,\rm in}}$ and ${C^{\rm comb}_{i,\ell,r,\rm out}}$ is bounded by
    \begin{align*}
        M_1\left(\sum_{t=r}^{\ell-1}M_2M_3+M_2+1\right)
        \le M_1(M_2M_3+M_2+1)(\ell-r+1).
    \end{align*}
    Therefore, for $A=M_1(M_2M_3+M_2+1)$,
    \begin{align*}
        {C^{\rm comb}_{i,\ell}}+\sum_{r=0}^{\ell-1}\left({C^{\rm comb}_{i,\ell,r,\rm in}}+{C^{\rm comb}_{i,\ell,r,\rm out}}\right)
        &\le M_0+2A\sum_{r=0}^{\ell-1}(\ell-r+1)\\
        &=M_0+2A\sum_{q=1}^{\ell}(q+1)\\
        &\le C_{{\rm ah},V_\ell}(\ell+2)^2
    \end{align*}
    for a constant $C_{{\rm ah},V_\ell}>0$. Since $s_{i,r,\rm in}\ge0$,
    \begin{align*}
        \sum_{r=0}^{\ell}s_{i,r,\rm in}+\ell+2\ge \ell+2.
    \end{align*}
    Thus
    \begin{align*}
        \frac{{C^{\rm comb}_{i,\ell}}+\sum_{r=0}^{\ell-1}\left({C^{\rm comb}_{i,\ell,r,\rm in}}+{C^{\rm comb}_{i,\ell,r,\rm out}}\right)}{\sum_{r=0}^{\ell}s_{i,r,\rm in}+\ell+2}
        \le \frac{C_{{\rm ah},V_\ell}(\ell+2)^2}{\ell+2}
        = C_{{\rm ah},V_\ell}(\ell+2),
    \end{align*}
    
    The proof for the non-combined constants is the same. Let
    \begin{align*}
        \widetilde D_{\ell,i}(x;\theta)=\theta_L\varphi_\ell(\theta_\ell;u_\ell(x))-\EE_{\Omega_i}\theta_L\varphi_\ell(\theta_\ell;u_\ell).
    \end{align*}
    Define $\widetilde M_0,\widetilde M_1,\widetilde M_2,\widetilde M_3$ as the same suprema as $M_0,M_1,M_2,M_3$, with $D_{\ell,i}$ replaced by $\widetilde D_{\ell,i}$. Then
    \begin{align*}
        {C_{i,\ell}}\le \widetilde M_0,
    \end{align*}
    and for each $r=0,\cdots,\ell-1$, each of ${C_{i,\ell,r,\rm in}}$ and ${C_{i,\ell,r,\rm out}}$ is bounded by
    \begin{align*}
        \widetilde M_1(\widetilde M_2\widetilde M_3+\widetilde M_2+1)(\ell-r+1).
    \end{align*}
    Therefore, for some constant $C_{{\rm ah},V_{\ell,\rm var}}>0$,
    \begin{align*}
        {C_{i,\ell}}+\sum_{r=0}^{\ell-1}\left({C_{i,\ell,r,\rm in}}+{C_{i,\ell,r,\rm out}}\right)
        \le C_{{\rm ah},V_{\ell,\rm var}}(\ell+2)^2.
    \end{align*}
    Dividing by $\sum_{r=0}^{\ell}s_{i,r,\rm in}+\ell+2\ge \ell+2$ gives
    \begin{align*}
        \frac{{C_{i,\ell}}+\sum_{r=0}^{\ell-1}\left({C_{i,\ell,r,\rm in}}+{C_{i,\ell,r,\rm out}}\right)}{\sum_{r=0}^{\ell}s_{i,r,\rm in}+\ell+2}
        \le C_{{\rm ah},V_{\ell,\rm var}}(\ell+2),
    \end{align*}
    Since there are finitely many layers, the constants $C_{{\rm ah},V_\ell}$ and $C_{{\rm ah},V_{\ell,\rm var}}$ can be chosen uniformly over $0\le \ell\le L-1$, and hence independently of $\ell$.
\end{proof}

\begin{lemma}
    \label{lem:V_M_decay}
    Under Assumption~\ref{assump:model_assumptions}, \ref{assump:architecture_generalization}, and~\ref{assump:approximate_homogeneity}, let $s_{r,\rm in}:=\min_{1\le i\le N}s_{i,r,\rm in}$. Then, for every $k\ge0$,
    \begin{align*}
     V_{\ell,\rm var}(\theta^{K_0+k})\le&  \left(1-2\left(\sum_{r=0}^{\ell}s_{r,\rm in}+\ell+2\right)\eta\lambda \right)^{k}V_{\ell,\rm var}(\theta^{K_0}) +C_{{\rm ah},V_{\ell,\rm var}}(\ell+2)\,\epsilon_{\rm ah}\\
    &\quad +\frac{C_{{\rm grad},V_{\ell,\rm var}}}{\lambda(\ell+2)}\max_{K_0\le j\le K_0+k}\|\nabla\cL(\theta^j)\| + \frac{S_{V_{\ell,\rm var}}}{2-\eta\,\mathsf{L}_\lambda}\cLwd(\theta^{K_0}),
\end{align*}
\begin{align*}
     M_{\ell}(\theta^{K_0+k})\le&  \left(1-2\left(\sum_{r=0}^{\ell}s_{r,\rm in}+\ell+2\right)\eta\lambda \right)^{k}M_{\ell}(\theta^{K_0}) +2C_{{\rm ah},V_{\ell,\rm var}}(\ell+2)\,\epsilon_{\rm ah}\\
    &\quad +\frac{C_{{\rm grad},M_{\ell}}}{\lambda(\ell+2)}\max_{K_0\le j\le K_0+k}\|\nabla\cL(\theta^j)\| + \frac{S_{M_{\ell}}}{2-\eta\,\mathsf{L}_\lambda}\cLwd(\theta^{K_0}),
\end{align*}
\begin{align*}
     V_{\ell}(\theta^{K_0+k})\le&  \left(1-2\left(\sum_{r=0}^{\ell}s_{r,\rm in}+\ell+2\right)\eta\lambda \right)^{k}V_{\ell}(\theta^{K_0}) +2C_{{\rm ah},V_\ell}(\ell+2)\,\epsilon_{\rm ah}\\
    &\quad +\frac{C_{{\rm grad},V_{\ell}}}{\lambda(\ell+2)}\max_{K_0\le j\le K_0+k}\|\nabla\cL(\theta^j)\| + \frac{S_{V_{\ell}}}{2-\eta\,\mathsf{L}_\lambda}\cLwd(\theta^{K_0}),
\end{align*}
and
\begin{align*}
     V_{\rm pre}(\theta^{K_0+k})\le&  \left(1-2\eta\lambda \right)^{k}V_{\rm pre}(\theta^{K_0})\\
    &\quad +\frac{C_{{\rm grad},V_{\rm pre}}}{2\lambda}\max_{K_0\le j\le K_0+k}\|\nabla\cL(\theta^j)\| + \frac{S_{V_{\rm pre}}}{2-\eta\,\mathsf{L}_\lambda}\cLwd(\theta^{K_0}).
\end{align*}
where $C_{{\rm ah},V_{\ell,\rm var}}, C_{{\rm ah},V_\ell}, C_{{\rm grad},V_{\ell,\rm var}}, C_{{\rm grad},M_{\ell}}, C_{{\rm grad},V_{\ell}}, C_{{\rm grad},V_{\rm pre}}, S_{V_{\ell,\rm var}}, S_{M_{\ell}}, S_{V_{\ell}}, S_{V_{\rm pre}}>0$ are universal constants.
\end{lemma}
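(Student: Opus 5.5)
The plan is to aggregate the cellwise estimates of Lemma~\ref{lem:individual_variance_decay} over the partition cells, using the population cell masses as weights, and then simplify the constants with Lemma~\ref{lem:order_of_constants}. Throughout I work on the probability-one event of Lemma~\ref{lem:non_overlapping_input}, so the partition and the cellwise quantities are well defined.

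\textbf{Cellwise bounds.} Fix $\ell$ and $k\ge0$. The case $k=0$ is trivial, since every remainder term is nonnegative. For $k\ge1$, I apply Lemma~\ref{lem:individual_variance_decay} with $k-1$ in place of $k$ to each of $V_{\ell,\Omega_i,\rm var}$, $M_{\ell,\Omega_i}$, $V_{\ell,\Omega_i}$ and $V_{{\rm pre},\Omega_i}$.

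\textbf{Uniform contraction factor.} Since $s_{r,\rm in}=\min_i s_{i,r,\rm in}\le s_{i,r,\rm in}$, each cellwise factor satisfies
\[
\left(1-2\left(\sum_{r}s_{i,r,\rm in}+\ell+2\right)\eta\lambda\right)^k\le\left(1-2\left(\sum_r s_{r,\rm in}+\ell+2\right)\eta\lambda\right)^k .
\]
This uses only that the bases lie in $[0,1]$. I would justify that from $\eta\lambda$ being small, noting that $\eta\le 1/\lambda$ alone is not enough. This is the one technical point: I would either impose $2(\sum_r s_{i,r,\rm in}+\ell+2)\eta\lambda\le1$ implicitly, as the paper does, or note that the cellwise recursion already requires this. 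I expect this sign/nonnegativity issue to be the main (minor) obstacle.

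\textbf{Approximate-homogeneity remainder.} By Lemma~\ref{lem:order_of_constants}, the cellwise remainder $\frac{C_{i,\ell}+\sum_r(\cdots)}{\sum_r s_{i,r,\rm in}+\ell+2}\,\epsilon_{\rm ah}$ is at most $C_{{\rm ah},V_{\ell,\rm var}}(\ell+2)\epsilon_{\rm ah}$. It is doubled for $M_\ell$, and for $V_\ell$ the combined constants give $2C_{{\rm ah},V_\ell}(\ell+2)\epsilon_{\rm ah}$.

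\textbf{Gradient remainder.} I set $C_{{\rm grad},V_{\ell,\rm var}}:=\tfrac12\max_i\sup_{B_\bzero(R_\lambda)}\|\nabla_\theta V_{\ell,\Omega_i,\rm var}\|$, which is finite because $V_{\ell,\Omega_i,\rm var}$ is continuous on the compact closed ball. Using $\sum_r s_{i,r,\rm in}+\ell+2\ge\ell+2$ then gives the term $\frac{C_{{\rm grad}}}{\lambda(\ell+2)}\max_j\|\nabla\cL(\theta^j)\|$. For $V_{\rm pre}$ the denominator is $2\lambda$ directly. Similarly, I take $S_{V_\ell}:=\max_i S_{V_{\ell,\Omega_i}}$, and likewise for the other three quantities.

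\textbf{Aggregation.} With every cellwise bound now in a form uniform in $i$, I multiply by $\int_{\Omega_i}\pi(x)dx$ and sum over $i$. The weights sum to one, which is the partition property of Definition~\ref{def:partition_pi_null}. The contraction terms recombine into $V_{\ell,\rm var}(\theta^{K_0})$, $M_\ell(\theta^{K_0})$, $V_\ell(\theta^{K_0})$ and $V_{\rm pre}(\theta^{K_0})$, while the $i$-independent remainders pass through unchanged. This yields all four inequalities.
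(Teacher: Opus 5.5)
Your proposal is correct and follows the paper's proof step for step. You apply Lemma~\ref{lem:individual_variance_decay} with $k-1$, replace $s_{i,r,\rm in}$ by $s_{r,\rm in}$ using nonnegativity of the contraction bases, bound the homogeneity remainder with Lemma~\ref{lem:order_of_constants} and $\sum_r s_{i,r,\rm in}+\ell+2\ge\ell+2$, and sum against $w_i=\int_{\Omega_i}\pi(x)dx$. The paper merely asserts that the bases are nonnegative, so your remark that an implicit condition such as $2(\sum_r s_{i,r,\rm in}+\ell+2)\eta\lambda\le 1$ is needed is fair.

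The only difference is cosmetic. The paper defines $C_{\rm grad}$ and $S$ as mass-weighted sums such as $\sum_i w_i\sup_{B_\bzero(R_\lambda)}\|\nabla_\theta V_{m,\Omega_i}\|$, maximized over layers $m$, rather than as your maxima over cells. This is immaterial since $\sum_i w_i=1$.
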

\begin{proof}
    Let $w_i=\int_{\Omega_i}\pi(x)dx$. Since the $\Omega_i$ form a partition with respect to the density $\pi$, $\sum_{i=1}^Nw_i=1$. Define
    \begin{align*}
        C_{{\rm grad},V_{\ell,\rm var}}&=\max_{0\le m\le L-1}\sum_{i=1}^N w_i\sup_{\theta\in B_\bzero(R_{\lambda})}\|{\nabla_\theta V_{m,\Omega_i,\rm var}(\theta)}\|,\\
        C_{{\rm grad},M_{\ell}}&=\max_{0\le m\le L-1}\sum_{i=1}^N w_i\sup_{\theta\in B_\bzero(R_{\lambda})}\|{\nabla_\theta M_{m,\Omega_i}(\theta)}\|,\\
        C_{{\rm grad},V_{\ell}}&=\max_{0\le m\le L-1}\sum_{i=1}^N w_i\sup_{\theta\in B_\bzero(R_{\lambda})}\|{\nabla_\theta V_{m,\Omega_i}(\theta)}\|,\\
        C_{{\rm grad},V_{\rm pre}}&=\sum_{i=1}^N w_i\sup_{\theta\in B_\bzero(R_{\lambda})}\|{\nabla_\theta V_{{\rm pre},\Omega_i}(\theta)}\|,
    \end{align*}
    and
    \begin{align*}
        S_{V_{\ell,\rm var}}&=\max_{0\le m\le L-1}\sum_{i=1}^Nw_iS_{V_{m,\Omega_i,\rm var}},&
        S_{M_{\ell}}&=\max_{0\le m\le L-1}\sum_{i=1}^Nw_iS_{M_{m,\Omega_i}},\\
        S_{V_{\ell}}&=\max_{0\le m\le L-1}\sum_{i=1}^Nw_iS_{V_{m,\Omega_i}},&
        S_{V_{\rm pre}}&=\sum_{i=1}^Nw_iS_{V_{{\rm pre},\Omega_i}}.
    \end{align*}
    The case $k=0$ is immediate. For $k\ge1$, applying Lemma~\ref{lem:individual_variance_decay} with $k-1$, multiplying by $w_i$, and summing over $i$ gives
    \begin{align*}
        V_{\ell,\rm var}(\theta^{K_0+k})
        \le& \sum_{i=1}^Nw_i\left(1-2\left(\sum_{r=0}^{\ell}s_{i,r,\rm in}+\ell+2\right)\eta\lambda \right)^kV_{\ell,\Omega_i,\rm var}(\theta^{K_0})\\
        &\quad+\sum_{i=1}^Nw_i\frac{{C_{i,\ell}}+\sum_{r=0}^{\ell-1}\left({C_{i,\ell,r,\rm in}}+{C_{i,\ell,r,\rm out}}\right)}{\sum_{r=0}^{\ell}s_{i,r,\rm in}+\ell+2}\epsilon_{\rm ah}\\
        &\quad+\sum_{i=1}^Nw_i\frac{\sup_{\theta\in B_\bzero(R_{\lambda})}\|{\nabla_\theta V_{\ell,\Omega_i,\rm var}(\theta)}\|}{2\lambda\left(\sum_{r=0}^{\ell}s_{i,r,\rm in}+\ell+2\right)}\max_{K_0\le j\le K_0+k}\|\nabla\cL(\theta^j)\|\\
        &\quad+\frac{S_{V_{\ell,\rm var}}}{2-\eta\,\mathsf{L}_\lambda}\cLwd(\theta^{K_0}).
    \end{align*}
    Since $s_{r,\rm in}\le s_{i,r,\rm in}$ and the contraction factors are nonnegative,
    \begin{align*}
        \left(1-2\left(\sum_{r=0}^{\ell}s_{i,r,\rm in}+\ell+2\right)\eta\lambda \right)^k
        \le \left(1-2\left(\sum_{r=0}^{\ell}s_{r,\rm in}+\ell+2\right)\eta\lambda \right)^k.
    \end{align*}
    Moreover, Lemma~\ref{lem:order_of_constants} implies
    \begin{align*}
        \frac{{C_{i,\ell}}+\sum_{r=0}^{\ell-1}\left({C_{i,\ell,r,\rm in}}+{C_{i,\ell,r,\rm out}}\right)}{\sum_{r=0}^{\ell}s_{i,r,\rm in}+\ell+2}
        \le C_{{\rm ah},V_{\ell,\rm var}}(\ell+2),
    \end{align*}
    and $\sum_{r=0}^{\ell}s_{i,r,\rm in}+\ell+2\ge \ell+2$. Therefore
    \begin{align*}
        \sum_{i=1}^Nw_i\frac{\sup_{\theta\in B_\bzero(R_{\lambda})}\|{\nabla_\theta V_{\ell,\Omega_i,\rm var}(\theta)}\|}{2\lambda\left(\sum_{r=0}^{\ell}s_{i,r,\rm in}+\ell+2\right)}
        \le \frac{C_{{\rm grad},V_{\ell,\rm var}}}{\lambda(\ell+2)}.
    \end{align*}
    This proves the bound for $V_{\ell,\rm var}$.

    The proof for $M_{\ell}$ is the same, except that the approximate homogeneity term in Lemma~\ref{lem:individual_variance_decay} has the additional factor $2$. This gives the bound with $2C_{{\rm ah},V_{\ell,\rm var}}(\ell+2)\epsilon_{\rm ah}$ and the constants $C_{{\rm grad},M_{\ell}}$ and $S_{M_{\ell}}$.
    
    For $V_{\ell}$, the same summation argument uses the combined part of Lemma~\ref{lem:individual_variance_decay}. Lemma~\ref{lem:order_of_constants} gives
    \begin{align*}
        \frac{{C^{\rm comb}_{i,\ell}}+\sum_{r=0}^{\ell-1}\left({C^{\rm comb}_{i,\ell,r,\rm in}}+{C^{\rm comb}_{i,\ell,r,\rm out}}\right)}{\sum_{r=0}^{\ell}s_{i,r,\rm in}+\ell+2}
        \le C_{{\rm ah},V_\ell}(\ell+2),
    \end{align*}
    and hence the factor $2$ in Lemma~\ref{lem:individual_variance_decay} yields $2C_{{\rm ah},V_\ell}(\ell+2)\epsilon_{\rm ah}$.
    
    Finally, multiplying the $V_{{\rm pre},\Omega_i}$ estimate from Lemma~\ref{lem:individual_variance_decay} by $w_i$ and summing over $i$ gives the stated bound for $V_{\rm pre}$ directly.
\end{proof}

\bibliographystyle{plainnat}
\bibliography{ref}

@article{wang2025data,
  title={Data Uniformity Improves Training Efficiency and More, with a Convergence Framework Beyond the NTK Regime},
  author={Wang, Yuqing and Gu, Shangding},
  journal={arXiv preprint arXiv:2506.24120},
  year={2025}
}

@article{laurent2000adaptive,
  title={Adaptive estimation of a quadratic functional by model selection},
  author={Laurent, Beatrice and Massart, Pascal},
  journal={Annals of statistics},
  pages={1302--1338},
  year={2000},
  publisher={JSTOR}
}

@article{kawaguchi2018generalization,
  title={Generalization in machine learning via analytical learning theory},
  author={Kawaguchi, Kenji and Bengio, Yoshua and Verma, Vikas and Kaelbling, Leslie Pack},
  journal={arXiv preprint arXiv:1802.07426},
  year={2018}
}

@inproceedings{
qiao2024stable,
title={Stable Minima Cannot Overfit in Univariate Re{LU} Networks: Generalization by Large Step Sizes},
author={Dan Qiao and Kaiqi Zhang and Esha Singh and Daniel Soudry and Yu-Xiang Wang},
booktitle={The Thirty-eighth Annual Conference on Neural Information Processing Systems},
year={2024},
}

@misc{schliserman2025flatminimageneralizationinsights,
      title={Flat Minima and Generalization: Insights from Stochastic Convex Optimization}, 
      author={Matan Schliserman and Shira Vansover-Hager and Tomer Koren},
      year={2025},
      eprint={2511.03548},
      archivePrefix={arXiv},
      primaryClass={cs.LG},
}

@inproceedings{DBLP:conf/iclr/ForetKMN21,
  author       = {Pierre Foret and
                  Ariel Kleiner and
                  Hossein Mobahi and
                  Behnam Neyshabur},
  title        = {Sharpness-aware Minimization for Efficiently Improving Generalization},
  booktitle    = {9th International Conference on Learning Representations, {ICLR} 2021,
                  Virtual Event, Austria, May 3-7, 2021},
  publisher    = {OpenReview.net},
  year         = {2021},
}

@inproceedings{
Jiang*2020Fantastic,
title={Fantastic Generalization Measures and Where to Find Them},
author={Yiding Jiang and Behnam Neyshabur and Hossein Mobahi and Dilip Krishnan and Samy Bengio},
booktitle={International Conference on Learning Representations},
year={2020},
}

@InProceedings{pmlr-v202-andriushchenko23a,
  title = 	 {A Modern Look at the Relationship between Sharpness and Generalization},
  author =       {Andriushchenko, Maksym and Croce, Francesco and M\"{u}ller, Maximilian and Hein, Matthias and Flammarion, Nicolas},
  booktitle = 	 {Proceedings of the 40th International Conference on Machine Learning},
  pages = 	 {840--902},
  year = 	 {2023},
  volume = 	 {202},
  series = 	 {Proceedings of Machine Learning Research},
  month = 	 {23--29 Jul},
  publisher =    {PMLR},
  }

@inproceedings{
wen2023how,
title={How Sharpness-Aware Minimization Minimizes Sharpness?},
author={Kaiyue Wen and Tengyu Ma and Zhiyuan Li},
booktitle={The Eleventh International Conference on Learning Representations },
year={2023},
}

@InProceedings{pmlr-v70-dinh17b,
  title = 	 {Sharp Minima Can Generalize For Deep Nets},
  author =       {Laurent Dinh and Razvan Pascanu and Samy Bengio and Yoshua Bengio},
  booktitle = 	 {Proceedings of the 34th International Conference on Machine Learning},
  pages = 	 {1019--1028},
  year = 	 {2017},
  volume = 	 {70},
  series = 	 {Proceedings of Machine Learning Research},
  month = 	 {06--11 Aug},
  publisher =    {PMLR},
}

@inproceedings{
keskar2017on,
title={On Large-Batch Training for Deep Learning: Generalization Gap and Sharp Minima},
author={Nitish Shirish Keskar and Dheevatsa Mudigere and Jorge Nocedal and Mikhail Smelyanskiy and Ping Tak Peter Tang},
booktitle={International Conference on Learning Representations},
year={2017},
}

@inproceedings{hochreiter,
author = {Hochreiter, Sepp and Schmidhuber, J\"{u}rgen},
title = {Simplifying neural nets by discovering flat minima},
year = {1994},
publisher = {MIT Press},
address = {Cambridge, MA, USA},
booktitle = {Proceedings of the 8th International Conference on Neural Information Processing Systems},
pages = {529–536},
numpages = {8},
location = {Denver, Colorado},
series = {NIPS'94}
}

@inproceedings{
chiang2023loss,
title={Loss Landscapes are All You Need: Neural Network Generalization Can Be Explained Without the Implicit Bias of Gradient Descent},
author={Ping-yeh Chiang and Renkun Ni and David Yu Miller and Arpit Bansal and Jonas Geiping and Micah Goldblum and Tom Goldstein},
booktitle={The Eleventh International Conference on Learning Representations },
year={2023},
}

@inproceedings{blanc2020implicit,
  title={Implicit regularization for deep neural networks driven by an ornstein-uhlenbeck like process},
  author={Blanc, Guy and Gupta, Neha and Valiant, Gregory and Valiant, Paul},
  booktitle={Conference on Learning Theory},
  pages={483--513},
  year={2020}
}

@inproceedings{du2019gradient,
  title={Gradient descent finds global minima of deep neural networks},
  author={Du, Simon and Lee, Jason and Li, Haochuan and Wang, Liwei and Zhai, Xiyu},
  booktitle={International Conference on Machine Learning},
  pages={1675--1685},
  year={2019}
}

@article{jacot2018neural,
  title={Neural tangent kernel: Convergence and generalization in neural networks},
  author={Jacot, Arthur and Gabriel, Franck and Hongler, Cl{\'e}ment},
  journal={Advances in Neural Information Processing Systems},
  volume={31},
  year={2018}
}

@article{lee2019wide,
  title={Wide neural networks of any depth evolve as linear models under gradient descent},
  author={Lee, Jaehoon and Xiao, Lechao and Schoenholz, Samuel and Bahri, Yasaman and Novak, Roman and Sohl-Dickstein, Jascha and Pennington, Jeffrey},
  journal={Advances in Neural Information Processing Systems},
  volume={32},
  year={2019}
}

@article{bartlett2002rademacher,
  title={Rademacher and Gaussian complexities: Risk bounds and structural results},
  author={Bartlett, Peter L and Mendelson, Shahar},
  journal={Journal of Machine Learning Research},
  volume={3},
  number={Nov},
  pages={463--482},
  year={2002}
}

@inproceedings{neyshabur2015norm,
  title={Norm-based capacity control in neural networks},
  author={Neyshabur, Behnam and Tomioka, Ryota and Srebro, Nathan},
  booktitle={Conference on Learning Theory},
  pages={1376--1401},
  year={2015}
}

@article{bartlett2017spectrally,
  title={Spectrally-normalized margin bounds for neural networks},
  author={Bartlett, Peter L and Foster, Dylan J and Telgarsky, Matus J},
  journal={Advances in Neural Information Processing Systems},
  volume={30},
  year={2017}
}

@article{lyu2023dichotomy,
  title={Dichotomy of early and late phase implicit biases can provably induce grokking},
  author={Lyu, Kaifeng and Jin, Jikai and Li, Zhiyuan and Du, Simon S and Lee, Jason D and Hu, Wei},
  journal={The Twelfth International Conference on Learning Representations},
  year={2024}
}

@inproceedings{mohamadi2024you,
  title={Why Do You Grok? A Theoretical Analysis on Grokking Modular Addition},
  author={Mohamadi, Mohamad Amin and Li, Zhiyuan and Wu, Lei and Sutherland, Danica J},
  booktitle={International Conference on Machine Learning},
  pages={35934--35967},
  year={2024}
}

@article{power2022grokking,
  title={Grokking: Generalization beyond overfitting on small algorithmic datasets},
  author={Power, Alethea and Burda, Yuri and Edwards, Harri and Babuschkin, Igor and Misra, Vedant},
  journal={arXiv preprint arXiv:2201.02177},
  year={2022}
}

@article{xu2023benign,
  title={Benign Overfitting and Grokking in ReLU Networks for XOR Cluster Data},
  author={Xu, Zhiwei and Wang, Yutong and Frei, Spencer and Vardi, Gal and Hu, Wei},
  journal={The Twelfth International Conference on Learning Representations},
  year={2024}
}

@inproceedings{
    kumar2024grokking,
    title={Grokking as the transition from lazy to rich training dynamics},
    author={Tanishq Kumar and Blake Bordelon and Samuel J. Gershman and Cengiz Pehlevan},
    booktitle={The Twelfth International Conference on Learning Representations},
    year={2024}
}

@article{merrill2023tale,
  title={A tale of two circuits: Grokking as competition of sparse and dense subnetworks},
  author={Merrill, William and Tsilivis, Nikolaos and Shukla, Aman},
  journal={arXiv preprint arXiv:2303.11873},
  year={2023}
}

@inproceedings{humayun2024deep,
  title={Deep Networks Always Grok and Here is Why},
  author={Humayun, Ahmed Imtiaz and Balestriero, Randall and Baraniuk, Richard},
  booktitle={International Conference on Machine Learning},
  pages={20722--20745},
  year={2024}
}

@inproceedings{chughtai2023toy,
  title={A toy model of universality: Reverse engineering how networks learn group operations},
  author={Chughtai, Bilal and Chan, Lawrence and Nanda, Neel},
  booktitle={International Conference on Machine Learning},
  pages={6243--6267},
  year={2023}
}

@article{tan2023understanding,
  title={Understanding grokking through a robustness viewpoint},
  author={Tan, Zhiquan and Huang, Weiran},
  journal={arXiv preprint arXiv:2311.06597},
  year={2023}
}

@article{notsawo2023predicting,
  title={Predicting grokking long before it happens: A look into the loss landscape of models which grok},
  author={Notsawo Jr, Pascal and Zhou, Hattie and Pezeshki, Mohammad and Rish, Irina and Dumas, Guillaume and others},
  journal={arXiv preprint arXiv:2306.13253},
  year={2023}
}

@article{belkin2019reconciling,
  title={Reconciling modern machine-learning practice and the classical bias--variance trade-off},
  author={Belkin, Mikhail and Hsu, Daniel and Ma, Siyuan and Mandal, Soumik},
  journal={Proceedings of the National Academy of Sciences},
  volume={116},
  number={32},
  pages={15849--15854},
  year={2019},
  publisher={National Academy of Sciences}
}

@article{boursier2025theoretical,
  title={A Theoretical Framework for Grokking: Interpolation followed by Riemannian Norm Minimisation},
  author={Boursier, Etienne and Pesme, Scott and Dragomir, Radu-Alexandru},
  journal={Advances in Neural Information Processing Systems},
  year={2025}
}

@article{fan2024deep,
  title={Deep grokking: Would deep neural networks generalize better?},
  author={Fan, Simin and Pascanu, Razvan and Jaggi, Martin},
  journal={arXiv preprint arXiv:2405.19454},
  year={2024}
}

@article{zhu2024critical,
  title={Critical data size of language models from a grokking perspective},
  author={Zhu, Xuekai and Fu, Yao and Zhou, Bowen and Lin, Zhouhan},
  journal={arXiv preprint arXiv:2401.10463},
  year={2024}
}

@article{wang2024grokking,
  title={Grokking of implicit reasoning in transformers: A mechanistic journey to the edge of generalization},
  author={Wang, Boshi and Yue, Xiang and Su, Yu and Sun, Huan},
  journal={Advances in Neural Information Processing Systems},
  volume={37},
  pages={95238--95265},
  year={2024}
}

@article{xulet,
  title={Let Me Grok for You: Accelerating Grokking via Embedding Transfer from a Weaker Model},
  author={Xu, Zhiwei and Ni, Zhiyu and Wang, Yixin and Hu, Wei},
  journal={The Thirteenth International Conference on Learning Representations},
  year={2025}
}

@inproceedings{cui2019class,
  title={Class-balanced loss based on effective number of samples},
  author={Cui, Yin and Jia, Menglin and Lin, Tsung-Yi and Song, Yang and Belongie, Serge},
  booktitle={Proceedings of the IEEE/CVF conference on computer vision and pattern recognition},
  pages={9268--9277},
  year={2019}
}

@article{cao2019learning,
  title={Learning imbalanced datasets with label-distribution-aware margin loss},
  author={Cao, Kaidi and Wei, Colin and Gaidon, Adrien and Arechiga, Nikos and Ma, Tengyu},
  journal={Advances in neural information processing systems},
  volume={32},
  year={2019}
}

@article{menon2020long,
  title={Long-tail learning via logit adjustment},
  author={Menon, Aditya Krishna and Jayasumana, Sadeep and Rawat, Ankit Singh and Jain, Himanshu and Veit, Andreas and Kumar, Sanjiv},
  journal={arXiv preprint arXiv:2007.07314},
  year={2020}
}

@article{li2021autobalance,
  title={Autobalance: Optimized loss functions for imbalanced data},
  author={Li, Mingchen and Zhang, Xuechen and Thrampoulidis, Christos and Chen, Jiasi and Oymak, Samet},
  journal={Advances in Neural Information Processing Systems},
  volume={34},
  pages={3163--3177},
  year={2021}
}

@inproceedings{byrd2019effect,
  title={What is the effect of importance weighting in deep learning?},
  author={Byrd, Jonathon and Lipton, Zachary},
  booktitle={International conference on machine learning},
  pages={872--881},
  year={2019},
  organization={PMLR}
}

@inproceedings{sagawa2020investigation,
  title={An investigation of why overparameterization exacerbates spurious correlations},
  author={Sagawa, Shiori and Raghunathan, Aditi and Koh, Pang Wei and Liang, Percy},
  booktitle={International Conference on Machine Learning},
  pages={8346--8356},
  year={2020},
  organization={PMLR}
}

@article{kini2021label,
  title={Label-imbalanced and group-sensitive classification under overparameterization},
  author={Kini, Ganesh Ramachandra and Paraskevas, Orestis and Oymak, Samet and Thrampoulidis, Christos},
  journal={Advances in Neural Information Processing Systems},
  volume={34},
  pages={18970--18983},
  year={2021}
}

@article{xu2021understanding,
  title={Understanding the role of importance weighting for deep learning},
  author={Xu, Da and Ye, Yuting and Ruan, Chuanwei},
  journal={arXiv preprint arXiv:2103.15209},
  year={2021}
}

@inproceedings{lin2017focal,
  title={Focal loss for dense object detection},
  author={Lin, Tsung-Yi and Goyal, Priya and Girshick, Ross and He, Kaiming and Doll{\'a}r, Piotr},
  booktitle={Proceedings of the IEEE international conference on computer vision},
  pages={2980--2988},
  year={2017}
}

@inproceedings{park2021influence,
  title={Influence-balanced loss for imbalanced visual classification},
  author={Park, Seulki and Lim, Jongin and Jeon, Younghan and Choi, Jin Young},
  booktitle={Proceedings of the IEEE/CVF international conference on computer vision},
  pages={735--744},
  year={2021}
}

@article{cimpean2011hyers,
  title={Hyers--Ulam stability of Euler’s equation},
  author={Cimpean, Dalia Sabina and Popa, Dorian},
  journal={Applied Mathematics Letters},
  volume={24},
  number={9},
  pages={1539--1543},
  year={2011},
  publisher={Elsevier}
}

@misc{wang2026convergencegradientdescentgeneral,
      title={Convergence of Gradient Descent for General Neural Network Architectures Beyond the NTK Regime}, 
      author={Yuqing Wang},
      year={2026},
      eprint={2606.23364},
      archivePrefix={arXiv},
      primaryClass={cs.LG},
      url={https://arxiv.org/abs/2606.23364}, 
}

@inproceedings{
chen2023which,
title={Which Layer is Learning Faster? A Systematic Exploration of Layer-wise Convergence Rate for Deep Neural Networks},
author={Yixiong Chen and Alan Yuille and Zongwei Zhou},
booktitle={The Eleventh International Conference on Learning Representations },
year={2023},
url={https://openreview.net/forum?id=wlMDF1jQF86}
}

@article{raghu2017svcca,
  title={Svcca: Singular vector canonical correlation analysis for deep learning dynamics and interpretability},
  author={Raghu, Maithra and Gilmer, Justin and Yosinski, Jason and Sohl-Dickstein, Jascha},
  journal={Advances in neural information processing systems},
  volume={30},
  year={2017}
}

@article{xu2026grok,
  title={To Grok Grokking: Provable Grokking in Ridge Regression},
  author={Xu, Mingyue and Vardi, Gal and Safran, Itay},
  journal={arXiv preprint arXiv:2601.19791},
  year={2026}
}

@inproceedings{arora2019fine,
  author    = {Arora, Sanjeev and Du, Simon S. and Hu, Wei and Li, Zhiyuan and Wang, Ruosong},
  title     = {{Fine-Grained} {Analysis} of {Optimization} and {Generalization} for {Overparameterized} {Two-Layer} {Neural} {Networks}},
  booktitle = {International Conference on Machine Learning},
  pages     = {322--332},
  year      = {2019}
}

@article{mcallester1999pac,
  author  = {McAllester, David A.},
  title   = {Some {PAC-Bayesian} {Theorems}},
  journal = {Machine Learning},
  volume  = {37},
  number  = {3},
  pages   = {355--363},
  year    = {1999}
}

@inproceedings{dziugaite2017computing,
  author    = {Dziugaite, Gintare Karolina and Roy, Daniel M.},
  title     = {{Computing} {Nonvacuous} {Generalization} {Bounds} for {Deep} ({Stochastic}) {Neural} {Networks} with {Many} {More} {Parameters} than {Training} {Data}},
  booktitle = {Uncertainty in Artificial Intelligence},
  year      = {2017}
}

@article{bartlett1998sample,
  author  = {Bartlett, Peter L.},
  title   = {The {Sample} {Complexity} of {Pattern} {Classification} with {Neural} {Networks}: {The} {Size} of the {Weights} is {More} {Important} than the {Size} of the {Network}},
  journal = {IEEE Transactions on Information Theory},
  volume  = {44},
  number  = {2},
  pages   = {525--536},
  year    = {1998}
}

@article{bartlett2020benign,
  author  = {Bartlett, Peter L. and Long, Philip M. and Lugosi, G{\'a}bor and Tsigler, Alexander},
  title   = {{Benign} {Overfitting} in {Linear} {Regression}},
  journal = {Proceedings of the National Academy of Sciences},
  volume  = {117},
  number  = {48},
  pages   = {30063--30070},
  year    = {2020}
}

@article{bousquet2002stability,
  author  = {Bousquet, Olivier and Elisseeff, Andr{\'e}},
  title   = {{Stability} and {Generalization}},
  journal = {Journal of Machine Learning Research},
  volume  = {2},
  pages   = {499--526},
  year    = {2002}
}

@article{dudley1967sizes,
  author  = {Dudley, Richard M.},
  title   = {The {Sizes} of {Compact} {Subsets} of {Hilbert} {Space} and {Continuity} of {Gaussian} {Processes}},
  journal = {Journal of Functional Analysis},
  volume  = {1},
  number  = {3},
  pages   = {290--330},
  year    = {1967}
}

@inproceedings{hardt2016train,
  author    = {Hardt, Moritz and Recht, Benjamin and Singer, Yoram},
  title     = {{Train} {Faster}, {Generalize} {Better}: {Stability} of {Stochastic} {Gradient} {Descent}},
  booktitle = {International Conference on Machine Learning},
  pages     = {1225--1234},
  year      = {2016}
}

@article{tsigler2023benign,
  author  = {Tsigler, Alexander and Bartlett, Peter L.},
  title   = {{Benign} {Overfitting} in {Ridge} {Regression}},
  journal = {Journal of Machine Learning Research},
  volume  = {24},
  number  = {123},
  pages   = {1--76},
  year    = {2023}
}

@article{vapnik1971uniform,
  author  = {Vapnik, Vladimir N. and Chervonenkis, Alexey Ya.},
  title   = {On the {Uniform} {Convergence} of {Relative} {Frequencies} of {Events} to {Their} {Probabilities}},
  journal = {Theory of Probability and Its Applications},
  volume  = {16},
  number  = {2},
  pages   = {264--280},
  year    = {1971}
}

@inproceedings{zhang2017understanding,
  author    = {Zhang, Chiyuan and Bengio, Samy and Hardt, Moritz and Recht, Benjamin and Vinyals, Oriol},
  title     = {{Understanding} {Deep} {Learning} {Requires} {Rethinking} {Generalization}},
  booktitle = {International Conference on Learning Representations},
  year      = {2017}
}

@article{hastie2022surprises,
  author  = {Hastie, Trevor and Montanari, Andrea and Rosset, Saharon and Tibshirani, Ryan J.},
  title   = {{Surprises} in {High-Dimensional} {Ridgeless} {Least} {Squares} {Interpolation}},
  journal = {Annals of Statistics},
  volume  = {50},
  number  = {2},
  pages   = {949--986},
  year    = {2022}
}

@inproceedings{nagarajan2019uniform,
  author    = {Nagarajan, Vaishnavh and Kolter, J. Zico},
  title     = {{Uniform} {Convergence} {May} {Be} {Unable} to {Explain} {Generalization} in {Deep} {Learning}},
  booktitle = {Advances in Neural Information Processing Systems},
  year      = {2019}
}

@inproceedings{nakkiran2020deep,
  author    = {Nakkiran, Preetum and Kaplun, Gal and Bansal, Yamini and Yang, Tristan and Barak, Boaz and Sutskever, Ilya},
  title     = {{Deep} {Double} {Descent}: {Where} {Bigger} {Models} and {More} {Data} {Hurt}},
  booktitle = {International Conference on Learning Representations},
  year      = {2020}
}

@inproceedings{rahimi2008uniform,
  title={Uniform approximation of functions with random bases},
  author={Rahimi, Ali and Recht, Benjamin},
  booktitle={2008 46th annual allerton conference on communication, control, and computing},
  pages={555--561},
  year={2008},
  organization={IEEE}
}

@article{ba2022high,
  title={High-dimensional asymptotics of feature learning: How one gradient step improves the representation},
  author={Ba, Jimmy and Erdogdu, Murat A and Suzuki, Taiji and Wang, Zhichao and Wu, Denny and Yang, Greg},
  journal={Advances in Neural Information Processing Systems},
  volume={35},
  pages={37932--37946},
  year={2022}
}

@article{arous2021online,
  title={Online stochastic gradient descent on non-convex losses from high-dimensional inference},
  author={Arous, Gerard Ben and Gheissari, Reza and Jagannath, Aukosh},
  journal={Journal of Machine Learning Research},
  volume={22},
  number={106},
  pages={1--51},
  year={2021}
}

@article{damian2024computational,
  title={Computational-statistical gaps in gaussian single-index models},
  author={Damian, Alex and Pillaud-Vivien, Loucas and Lee, Jason D and Bruna, Joan},
  journal={arXiv preprint arXiv:2403.05529},
  year={2024}
}

@article{defilippis2026noise,
  title={A Noise Sensitivity Exponent Controls Large Statistical-to-Computational Gaps in Single-and Multi-Index Models},
  author={Defilippis, Leonardo and Krzakala, Florent and Loureiro, Bruno and Maillard, Antoine},
  journal={arXiv preprint arXiv:2603.17896},
  year={2026}
}

@article{Bou02, author = {Bousquet, Olivier and Elisseeff, Andre}, title = {Stability and Generalization}, journal = {Journal of Machine Learning Research}, year = {2002} }

@inproceedings{Gol17, author = {Golowich, Noah and Rakhlin, Alexander and Shamir, Ohad}, title = {Size-Independent Sample Complexity of Neural Networks}, booktitle = {Conference on Learning Theory}, year = {2017} }

@article{Ney17b, author = {Neyshabur, Behnam and others}, title = {A PAC-Bayesian Approach to Spectrally-Normalized Margin Bounds for Neural Networks}, journal = {arXiv preprint arXiv}, year = {2017} }

@inproceedings{Aro18, author = {Arora, Sanjeev and others}, title = {Stronger Generalization Bounds for Deep Nets via a Compression Approach}, booktitle = {International Conference on Machine Learning}, year = {2018} }

@article{Per20, author = {P{'e}rez, Guillermo Valle and Louis, Annie}, title = {Generalization Bounds for Deep Learning}, journal = {arXiv preprint arXiv}, year = {2020} }

@article{Bie19, author = {Bietti, Alberto and Mairal, Julien}, title = {On the Inductive Bias of Neural Tangent Kernels}, journal = {arXiv preprint arXiv}, year = {2019} }

@inproceedings{Bor20, author = {Bordelon, Blake and Canatar, Abdulkadir and Pehlevan, Cengiz}, title = {Spectrum Dependent Learning Curves in Kernel Regression and Wide Neural Networks}, booktitle = {International Conference on Machine Learning}, year = {2020} }

@inproceedings{Cao19b, author = {Cao, Yuan and Gu, Quanquan}, title = {Generalization Bounds of Stochastic Gradient Descent for Wide and Deep Neural Networks}, booktitle = {Advances in Neural Information Processing Systems}, year = {2019} }

@article{Mei18, author = {Mei, Song and Montanari, Andrea and Nguyen, Phan-Minh}, title = {A Mean Field View of the Landscape of Two-Layer Neural Networks}, journal = {Proceedings of the National Academy of Sciences}, year = {2018} }

@article{Sir18, author = {Sirignano, Justin A. and Spiliopoulos, Konstantinos}, title = {Mean Field Analysis of Neural Networks: A Law of Large Numbers}, journal = {SIAM Journal on Applied Mathematics}, year = {2018} }

@inproceedings{Chi18, author = {Chizat, L{'e}na{"\i}c and Bach, Francis}, title = {On the Global Convergence of Gradient Descent for Over-parameterized Models using Optimal Transport}, booktitle = {Advances in Neural Information Processing Systems}, year = {2018} }

@inproceedings{Mei19b, author = {Mei, Song and Misiakiewicz, Theodor and Montanari, Andrea}, title = {Mean-field Theory of Two-layers Neural Networks: Dimension-free Bounds and Kernel Limit}, booktitle = {Conference on Learning Theory}, year = {2019} }

@article{Mei19, author = {Mei, Song and Montanari, Andrea}, title = {The Generalization Error of Random Features Regression: Precise Asymptotics and the Double Descent Curve}, journal = {Communications on Pure and Applied Mathematics}, year = {2019} }

@inproceedings{Tis15, author = {Tishby, Naftali and Zaslavsky, Noga}, title = {Deep Learning and the Information Bottleneck Principle}, booktitle = {IEEE Information Theory Workshop}, year = {2015} }

@article{Shw17, author = {Shwartz-Ziv, Ravid and Tishby, Naftali}, title = {Opening the Black Box of Deep Neural Networks via Information}, journal = {arXiv preprint arXiv}, year = {2017} }

@article{Xu17, author = {Xu, Aolin and Raginsky, Maxim}, title = {Information-theoretic Analysis of Generalization Capability of Learning Algorithms}, journal = {arXiv preprint arXiv}, year = {2017} }

\end{document}